\newcommand{\first}[1]{\textbf{\textcolor{red}{#1}}}
\newcommand{\second}[1]{\textbf{\textcolor{violet}{#1}}}
\newcommand{\third}[1]{\textbf{\textcolor{black}{#1}}}
\definecolor{Gray}{gray}{0.9}
\newcommand{\revision}[1]{{#1}} %
\newtheorem{theorem}{Theorem}
\newtheorem{definition}{Definition}
\newtheorem{proposition}{Proposition}
\newtheorem{corollary}{Corollary}
\newtheorem{lemma}{Lemma}
\def\eqref#1{equation~\ref{#1}}
\def\Eqref#1{Equation~\ref{#1}}
\def\1{\bm{1}}
\DeclareMathAlphabet{\mathsfit}{\encodingdefault}{\sfdefault}{m}{sl}
\SetMathAlphabet{\mathsfit}{bold}{\encodingdefault}{\sfdefault}{bx}{n}
\newcommand*{\ldblbrace}{\{\mskip-6mu\{}
\newcommand*{\rdblbrace}{\}\mskip-6mu\}}
\newcommand{\mc}{\mathcal}
\newcommand{\mrm}{\mathrm}
\algnewcommand{\TRUE}{\textbf{true}}
\algnewcommand{\FALSE}{\textbf{false}}
\def \ind{\mathds{1}} %
\title{Equivariant Subgraph Aggregation Networks}
\author{Beatrice Bevilacqua\thanks{Equal contribution, authors are in alphabetical order.} \\
Purdue University\\
\texttt{bbevilac@purdue.edu} \\
\And
Fabrizio Frasca$^{*}$ \\
Imperial College London \& Twitter \\
\texttt{ffrasca@twitter.com} \\
\And
Derek Lim$^{*}$ \\
MIT CSAIL \\
\texttt{dereklim@mit.edu}
\AND
Balasubramaniam Srinivasan
\\ Purdue University
\\ \texttt{bsriniv@purdue.edu}
\And
Chen Cai
\\UCSD CSE
\\ \texttt{c1cai@ucsd.edu}
\And
Gopinath Balamurugan
\\University of Tuebingen
\\ \texttt{gbm0998@gmail.com}
\AND
Michael M. Bronstein
\\ Imperial College London \& Twitter \\
\texttt{mbronstein@twitter.com}
\And
Haggai Maron
\\ NVIDIA Research
\\ \texttt{hmaron@nvidia.com}
}
\begin{document}

\maketitle

\vspace{-15pt}
\begin{abstract}
Message-passing neural networks (MPNNs) are the leading architecture for deep learning on graph-structured data, in large part due to their simplicity and scalability. Unfortunately, it was shown that these architectures are limited in their expressive power. This paper proposes a novel framework called Equivariant Subgraph Aggregation Networks (ESAN) to address this issue. Our main observation is that while two graphs may not be distinguishable by an MPNN, they often contain distinguishable subgraphs. Thus, we propose to represent each graph as a set of subgraphs derived by some predefined policy, and to process it using a suitable equivariant architecture.
We develop novel variants of the 1-dimensional Weisfeiler-Leman (1-WL) test for graph isomorphism, and prove lower bounds on the expressiveness of ESAN in terms of these new WL variants.
We further prove that our approach increases the expressive power of both MPNNs and more expressive architectures. Moreover, we provide theoretical results that describe how design choices such as the subgraph selection policy and equivariant neural architecture affect our architecture's expressive power.  
To deal with the increased computational cost, we propose a subgraph sampling scheme, which can be viewed as a stochastic version of our framework. A comprehensive set of experiments on real and synthetic datasets demonstrates that our framework improves the expressive power and overall performance of popular GNN architectures.
\end{abstract}
\vspace{-5pt}

\section{Introduction}

Owing to their scalability and simplicity, Message-Passing Neural Networks (MPNNs) are the leading Graph Neural Network (GNN) architecture for deep learning on graph-structured data. However,  \citet{morris2019weisfeiler,xu2018how} have shown that these architectures are at most as expressive as the Weisfeiler-Lehman (WL) graph isomorphism test \citep{weisfeiler1968reduction}. As a consequence, MPNNs cannot distinguish between very simple graphs (See Figure \ref{fig:teaser}). %
In light of this limitation, a question naturally arises: \textbf{is it possible to improve the expressiveness of MPNNs?}

Several recent works have proposed more powerful architectures. One of the main approaches involves higher-order GNNs equivalent to the hierarchy of the $k$-WL tests \citep{morris2019weisfeiler,morris2019weisfeiler2,maron2018invariant,maron2019provably,keriven2019universal,azizian2020expressive,geerts2020expressive}, offering a trade-off between expressivity and space-time complexity. Unfortunately, it is already difficult to implement 3rd order networks (which offer the expressive power of 3-WL). %
Alternative approaches use standard MPNNs enriched with structural encodings for nodes and edges (e.g. based on cycle or clique counting) \citep{bouritsas2020improving, thiede2021autobahn} or lift graphs into simplicial- \citep{bodnar2021weisfeiler} or cell complexes \citep{bodnar2021weisfeiler2}, extending the message passing mechanism to these higher-order structures. 
Both types of approaches require a precomputation stage that, though reasonable in practice, might be expensive in the worst case.

\textbf{Our approach.}
In an effort to devise simple, intuitive and more flexible provably expressive graph architectures, we develop a novel framework, dubbed {\em Equivariant Subgraph Aggregation Networks} (ESAN), to enhance the expressive power of existing GNNs. Our solution emerges from the observation that while two graphs may not be distinguishable by an MPNN, it may be easy to find distinguishable subgraphs. More generally, instead of encoding multisets of {\em node} colors as done in MPNNs and the WL test, we opt for encoding bags (multisets) of subgraphs and show that such an encoding can lead to a better expressive power. Following that observation, we advocate representing each graph as a {\em bag of subgraphs} chosen according to some predefined policy, e.g., all graphs that can be obtained by removing one edge from the original graph. Figure \ref{fig:teaser} illustrates this idea.

Bags of subgraphs are highly structured objects whose symmetry arises from both the structure of each constituent graph as well as the multiset on the whole. 
We propose an equivariant architecture specifically tailored to capture this object’s symmetry group. Specifically, we first formulate the symmetry group for a set of graphs as the direct product of the symmetry groups for sets and graphs. We then construct a neural network comprising layers that are equivariant to this group. Motivated by \cite{maron2020learning}, these layers employ two \emph{base graph encoders} as subroutines: The first encoder implements a Siamese network processing each subgraph independently; The second acts as an information sharing module by processing the aggregation of the subgraphs. After being processed by several such layers, a set learning module aggregates the obtained subgraph representations into an invariant representation of the original graph that is used in downstream tasks.

An integral component of our method, with major impacts on its complexity and expressivity, is the subgraph selection policy: a function that maps a graph to a bag of subgraphs, which is then processed by our equivariant neural network. %
In this paper, we explore four simple --- yet powerful --- subgraph selection policies: node-deleted subgraphs, edge-deleted subgraphs, and two variants of ego-networks.  
To alleviate the possible computational burden, we also introduce an efficient  stochastic version of our method implemented by random sampling of subgraphs according to  the aforementioned policies. %
\begin{figure}
    \vspace{-20pt}
    \centering
    \includegraphics[scale=0.40]{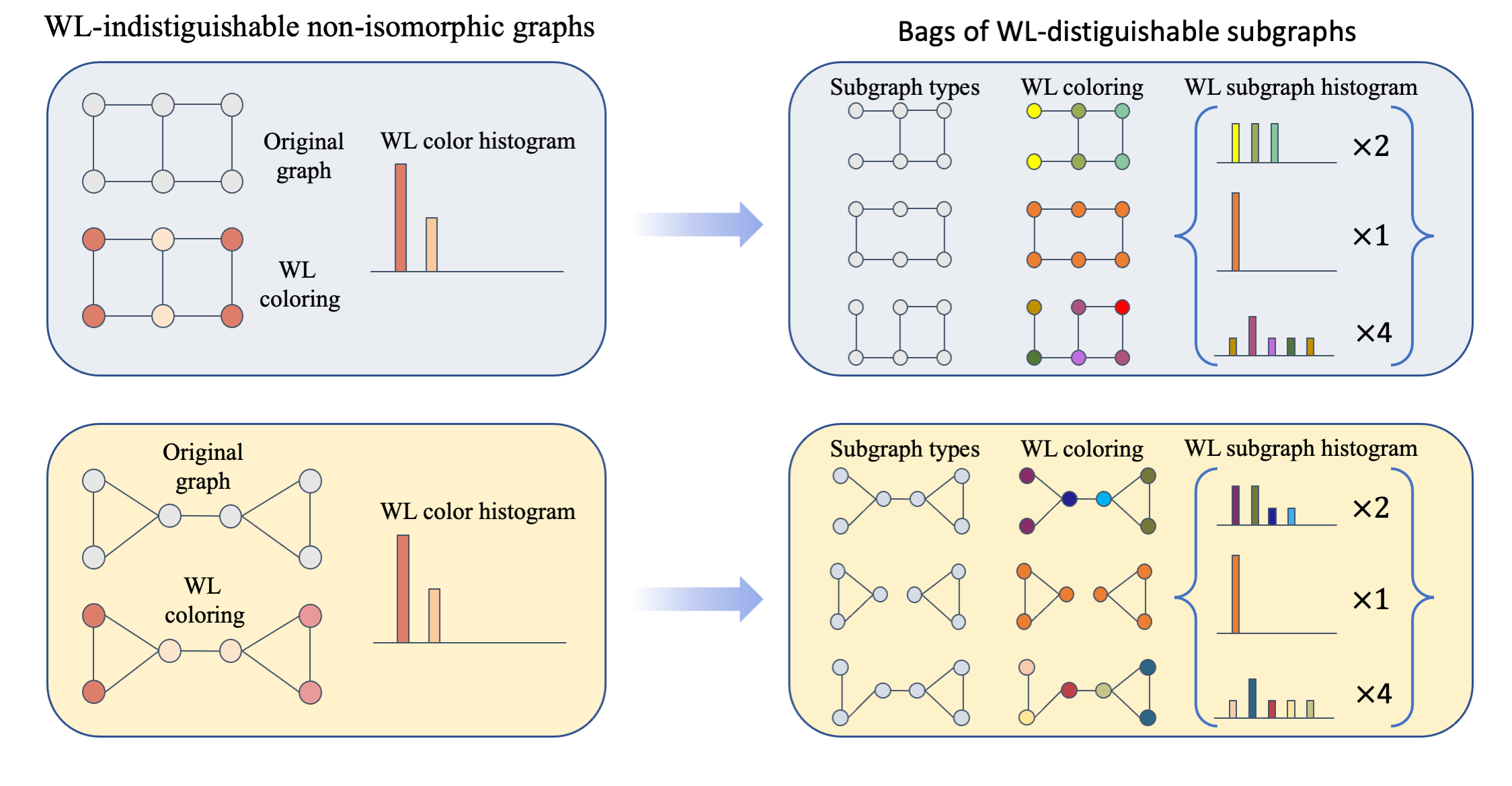}
    \caption{We present a provably expressive graph learning framework based on representing graphs as bags of subgraphs and processing them with an equivariant architecture composed of GNNs and set networks. {\bf Left:} A pair of graphs not distinguishable by the WL test. {\bf Right:} The corresponding bags (multisets) of edge-deleted subgraphs, which can be distinguished by our framework.}
    \label{fig:teaser}
    \vspace{-20pt}
\end{figure}

We provide a thorough theoretical analysis of our approach. We first prove that our architecture can implement novel and provably stronger variants of the well-known WL test, capable of encoding the multiset of subgraphs according to the base graph encoder (e.g., WL for MPNNs). %
Furthermore, %
we study how the expressive power of our architecture depends on different main design choices like the underlying base graph encoder or the subgraph selection policy. Notably, we prove that our framework can separate 3-WL indistinguishable graphs using only a 1-WL graph encoder, and that it can enhance the expressive power of stronger architectures such as PPGN \citep{maron2019provably}. %

We then present empirical results on a wide range of synthetic and real datasets, using several existing GNNs as base encoders. %
Firstly, we study the expressive power of our approach using the synthetic datasets introduced by \cite{abboud2020surprising} and show that it achieves perfect accuracy. We then evaluate ESAN on popular graph benchmarks and show that they consistently outperform their base GNN architectures, and perform better or on-par with state of the art methods. 

\textbf{Main contributions.}
This paper offers the following main contributions: (1) A general formulation of learning on graphs as learning on bags of subgraphs; (2) An equivariant framework for generating and processing bags of subgraphs; (3) A comprehensive theoretical analysis of the proposed framework, subgraph selection policies, and their expressive power; (4) An in-depth experimental evaluation of the proposed approach, showing noticeable improvements on real and synthetic data. We believe that our approach is a step forward in the development of simple and provably powerful GNN architectures, and hope that it will inspire both theoretical and practical future research efforts.

\section{Equivariant Subgraph Aggregation Networks (ESAN)}
In this section, we introduce the ESAN framework. It consists of (1) Neural network architectures for processing bags of subgraphs (DSS-GNN and DS-GNN), and (2) Subgraph selection policies. We refer the reader to Appendix \ref{app:prelim} for a brief introduction to GNNs, set learning, and equivariance.

\textbf{Setup and overview.} We assume a standard graph classification/regression setting.\footnote{The setup can be easily changed to support node and edge prediction tasks.}
We represent a graph with $n$ nodes as a tuple $G=(A,X)$ where $A\in \mathbb{R}^{n \times n}$ is the graph's adjacency matrix and $X\in \mathbb{R}^{n\times d}$ is the node feature matrix. 
The main idea behind our approach is to represent the graph $G$ as a {\em bag} (multiset) $S_G=\ldblbrace G_1,\dots,G_m \rdblbrace$ of its subgraphs, and to make a prediction on a graph based on this subset, namely $F(G):=F(S_G)$.
Two crucial questions pertain to this approach: (1) Which architecture should be used to process bags of graphs, i.e., \emph{How to define $F(S_G)$?}, and (2) Which subgraph selection policy should be used, i.e., \emph{How to define $S_G$}?

\subsection{Bag-of-Graphs Encoder Architecture}\label{sec:bag_architecture}
To address the first question, 
we start from the natural symmetry group of a bag of graphs. We first devise an equivariant architecture called DSS-GNN based on this group, and then propose a particularly interesting variant (DS-GNN) obtained by disabling one of the components in DSS-GNN.      
\begin{wrapfigure}[18]{r}{0.4\textwidth}
    \vspace{-8pt}
    \centering
    \includegraphics[scale=0.2]{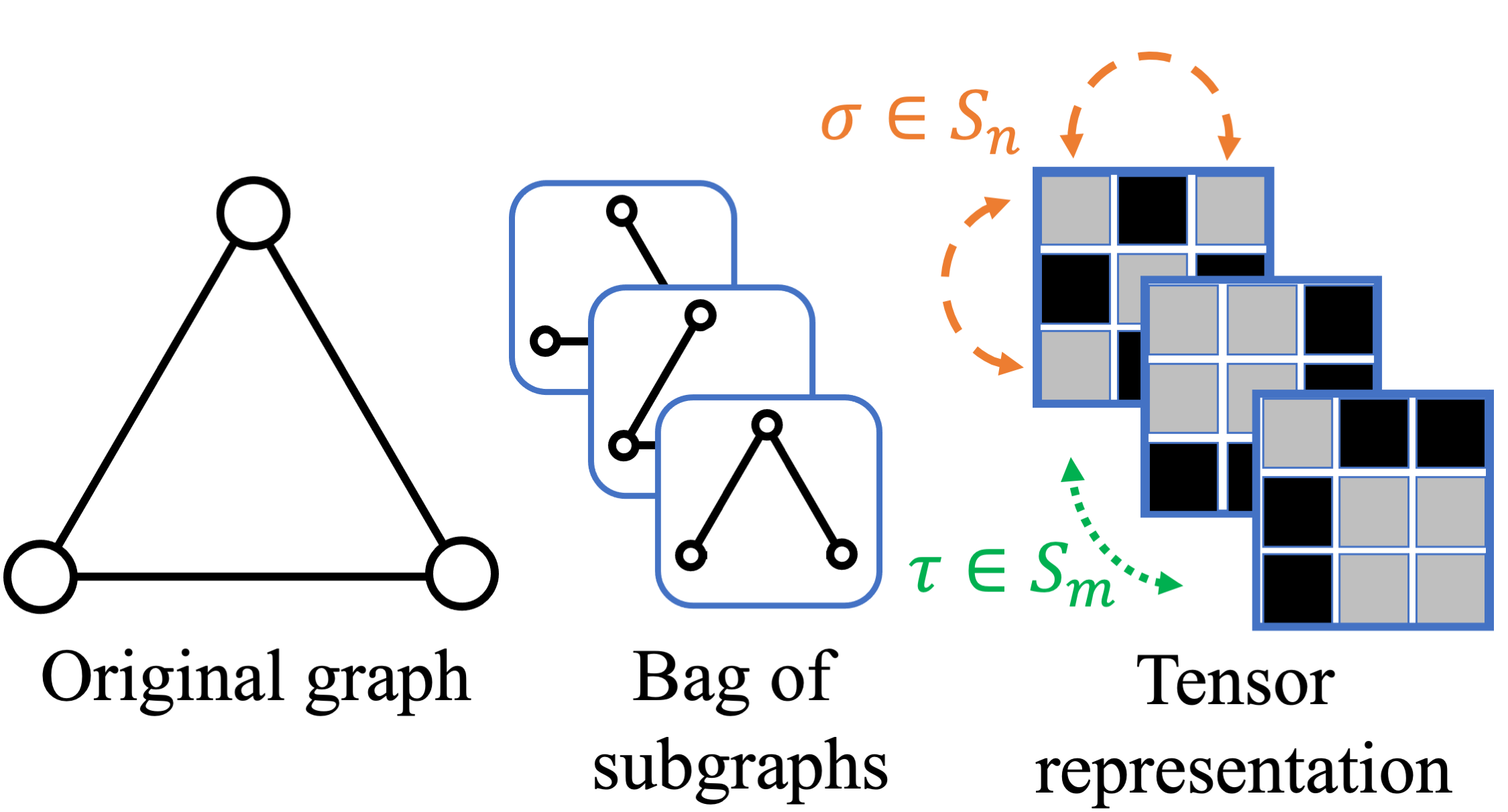}
    \caption{The symmetry structure of a bag of subgraphs, in this case the set of all $m=3$ edge-deleted subgraphs. This set of subgraphs is represented as an $m\times n\times n$ tensor $\mathcal{A}$ (and additional node features that are not illustrated here). $(\tau,\sigma)\in S_m\times S_n$ acts on the tensor $\mathcal{A}$  by permuting the subgraphs ($\tau$) and the nodes in the subgraphs ($\sigma$), which are assumed to be ordered consistently.}
    \label{fig:sym}
\end{wrapfigure}

\textbf{Symmetry group for sets of graphs.} The bag  $S_G=\ldblbrace G_1,\dots,G_m \rdblbrace$ of subgraphs of $G$  can be represented as tensor $(\mathcal{A},\mathcal{X})\in\mathbb{R}^{n\times n \times m}\times \mathbb{R}^{n\times d \times m}$, assuming the number of nodes $n$ and the number of subgraphs $m$ are fixed. Here, $\mathcal{A} \in \mathbb{R}^{n\times n \times m}$ represents a set of $m$ adjacency matrices, and $\mathcal{X}\in \mathbb{R}^{n\times d \times m}$ represents a set of $m$ node feature matrices. 
Since the order of nodes in each subgraph, as well as the order of these subgraphs in $S_G$, are arbitrary, our architecture must be equivariant to changes of both of these orders. 
{\em Subgraph symmetries}, or node permutations, are represented by the symmetric group $S_n$ that acts on a graph via $(\sigma \cdot A)_{ij}=A_{\sigma^{-1}(i)\sigma^{-1}(j)}$  and $(\sigma \cdot X)_{il}=X_{\sigma^{-1}(i)l}$. {\em Set symmetries} are the permutations of  the subgraphs in $S_G$, represented by the symmetric group $S_m$ that acts on sets of graphs via  $(\tau \cdot \mathcal{A})_{ijk}=\mathcal{A}_{ij\tau^{-1}(k)}$ and $(\tau\cdot \mathcal{X})_{ilk}=\mathcal{X}_{il\tau^{-1}(k)}$. These two groups can be combined using a direct product into a single group $H=S_m\times S_n$ that acts on $(\mathcal{A},\mathcal{X})$
in the following way:
$$((\tau,\sigma) \cdot \mathcal{A})_{ijk}=\mathcal{A}_{\sigma^{-1}(i)\sigma^{-1}(j)\tau^{-1}(k)}, \quad ((\tau,\sigma) \cdot \mathcal{X})_{ilk}=\mathcal{X}_{\sigma^{-1}(i)l\tau^{-1}(k)}$$

In other words, $\tau$ permutes the subgraphs in the set and $\sigma$ permutes the nodes in the subgraphs, as depicted in Figure \ref{fig:sym}. Importantly, since all the subgraphs originate from the same graph, we can order their nodes consistently  throughout, i.e., the $i$-th node in all subgraphs represents the $i$-th node in the original graph.\footnote{This consistency assumption justifies the fact that we apply the same permutation $\sigma$  to all subgraphs, rather than having a different node permutation $(\sigma_1,\dots,\sigma_m)$ for each subgraph. } This setting can be seen as a particular instance of the DSS framework  \citep{maron2020learning} applied to a bag of graphs. 

\textbf{$H$-equivariant layers.} Our goal is to propose an $H$-equivariant architecture that can process bags of subgraphs accounting for their natural symmetry (the product of node and subgraph permutations). The main building blocks of such equivariant architectures are $H$-equivariant layers. Unfortunately, characterizing the set of all equivariant functions can be a difficult task, and most works limit themselves to characterizing the spaces of \emph{linear} equivariant layers. In our case,  \cite{maron2020learning} characterized the space of linear equivariant layers for sets of symmetric elements (such as graphs) with a permutation symmetry group $P$ ($S_n$ in our case). Their study shows that each such layer is composed of a Siamese component applying a linear $P$-equivariant layer to each set element independently and an information sharing component applying a different linear $P$-equivariant layer to the aggregation of all the elements in the set (see Appendix \ref{app:prelim}). %

\begin{figure}
    \vspace{-35pt}
    \centering
    \includegraphics[scale=0.35]{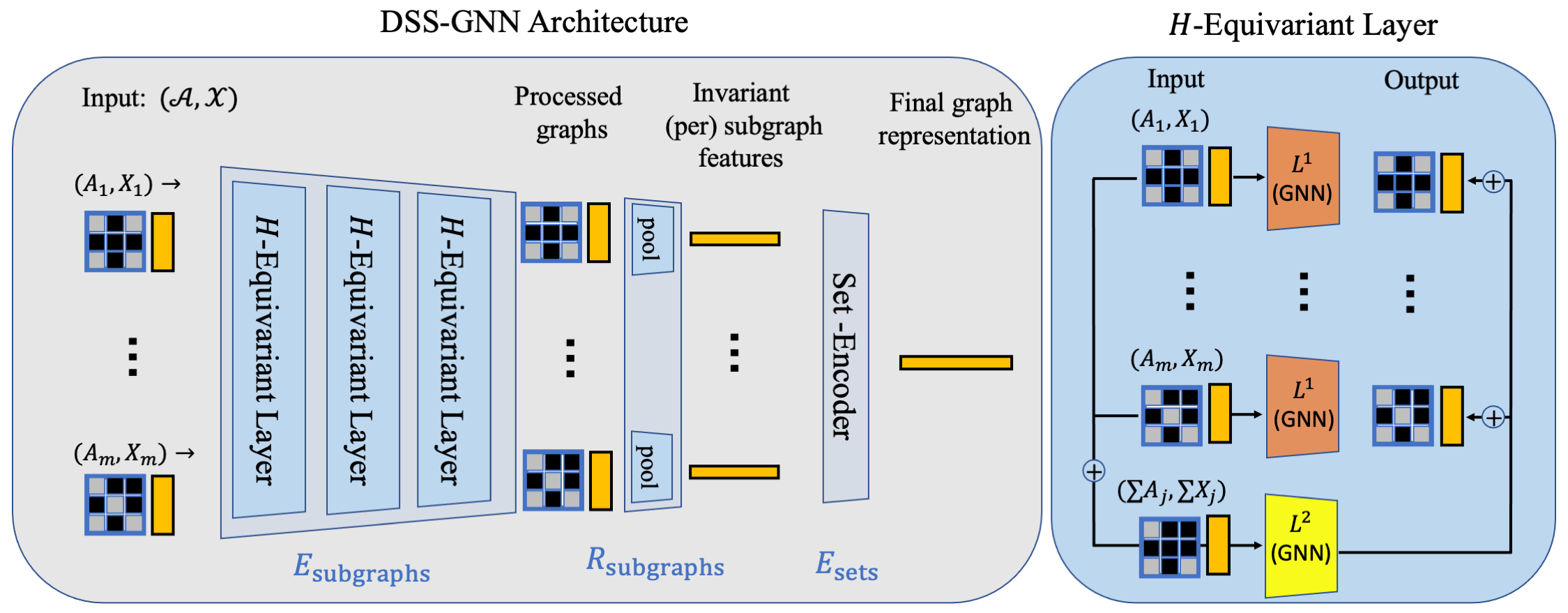}
    \caption{DSS-GNN layers and architecture. \textbf{Left panel}: the DSS- GNN architecture is composed of three blocks: a Feature Encoder, a Readout Layer and a Set Encoder. \textbf{Right panel}: a DSS-GNN layer is constructed from a Siamese part (orange) and an information-sharing part (yellow).}
    \label{fig:arch}
    \vspace{-10pt}
\end{figure}

Motivated by the linear characterization, we adopt the same layer structure with Siamese \& information sharing components, and parameterize these ones using any equivariant GNN layer such as MPNNs. These layers $L:\mathbb{R}^{n\times n \times m}\times \mathbb{R}^{n\times d \times m} \rightarrow \mathbb{R}^{n\times n \times m}\times \mathbb{R}^{n\times d' \times m}$ map bags of subgraphs to bags of subgraphs, as follows: 
\begin{equation}\label{eq:dss-gnn-layer}
    (L(\mathcal{A},\mathcal{X}))_i= L^1(A_i,X_i) + L^2\left(\textstyle\sum _{j=1}^m A_j,\textstyle\sum _{j=1}^m X_j\right).
\end{equation}
Here, $A_j, X_j$ are the adjacency and feature matrices of the $j$-th subgraph (the $j$-th components of the tensors $\mathcal{A}, \mathcal{X}$, respectively), and $(L(\mathcal{A},\mathcal{X}))_i$ denotes the output of the layer on the $i$-th subgraph. $L^1, L^2:\mathbb{R}^{n\times n}\times \mathbb{R}^{n\times d} \rightarrow \mathbb{R}^{n\times n}\times \mathbb{R}^{n\times d'}$ represent two graph encoders and can be any type of GNN layer. We refer to $L^1,L^2$ as the base graph encoders. %
When MPNN encoders parameterise $L^1, L^2$, the adjacency matrices of the subgraphs do not change, i.e., the $H$-equivariant layer outputs the processed node features with the same adjacency matrix. %
These layers can also transform the adjacency structure of the subgraphs, e.g., when using \cite{maron2019provably} as the base encoder.

While the encoder $L^1$ acts on every subgraph independently, $L^2$ allows the sharing of information across all the subgraphs in $S_G$.
In particular, the $L^2$ module operates on the sum of adjacency and feature matrices across the bag. This is a meaningful operation because the nodes in all the subgraphs of a particular graph are consistently ordered.
As in the original DSS paper, this sum aggregator %
for the adjacency and feature matrices ($\sum _{j=1}^m A_j$ and $\sum _{j=1}^m X_j$) can be replaced with (1) Sums that exclude the current subgraph ($\sum _{j\neq i}^m A_j$ and $\sum _{j\neq i}^m X_j$)
or with (2) Other aggregators such as max and mean. 
We note that for the subgraph selection policies we consider in Subsection \ref{subsec:policy}, an entrywise max aggregation over the subgraph adjacency matrices, $A=\max_{j}A_j$, recovers the original graph connectivity. This is a convenient choice in practice and we use it throughout the paper.
Finally, the $H$-equivariance of the layer follows directly from the $S_n$-equivariance of the base-encoder, and the $S_m$-equivariance of the aggregation. Figure \ref{fig:arch} (right panel) illustrates our suggested $H$-equivariant layer.

\textbf{DSS-GNN} comprises three components: 
\begin{equation} \label{eq:dss-gnn-arch}
    F_{\text{DSS-GNN}} = E_{\text{sets}}\circ R_{\text{subgraphs}} \circ E_{\text{subgraphs}}
\end{equation}

The first component is an {\em Equivariant Feature Encoder}, 
$E_{\text{subgraphs}}:\mathbb{R}^{n\times n \times m}\times \mathbb{R}^{n\times d \times m} \rightarrow \mathbb{R}^{n\times n \times m}\times \mathbb{R}^{n\times d' \times m}$ 
implemented as a composition of several $H$-equivariant layers (\Eqref{eq:dss-gnn-layer}). The purpose of the subgraph encoder is to learn useful node features for all the nodes in all subgraphs. 
On top of the graph encoder, we apply a {\em Subgraph Readout Layer} 
$R_{\text{subgraphs}}:\mathbb{R}^{n\times n \times m}\times \mathbb{R}^{n\times d' \times m} \rightarrow \mathbb{R}^{d'\times m}$ 
that generates an invariant feature vector for each subgraph independently by aggregating the graph node (and/or edge) data, for example by summing the node features. 
The last layer, 
$E_{\text{sets}}: \mathbb{R}^{d'\times m} \rightarrow \mathbb{R}^{d''}$, 
is a universal {\em Set Encoder}, for example DeepSets \citep{zaheer2017deep} or PointNet \citep{qi2017pointnet}. Figure \ref{fig:arch} (left panel) illustrates the DSS-GNN architecture.

Intuitively, DSS-GNN encodes the subgraphs (while taking into account the other subgraphs) into a set of $S_n$-invariant representations ($E_{\text{subgraphs}}\circ R_{\text{subgraphs}}$) and then encodes the resulting set of subgraph representations into a single $H$-invariant representation $E_{\text{sets}}$ of the original graph $G$. %

\textbf{DS-GNN} is a variant of DSS-GNN obtained by disabling the information sharing component, i.e., by setting $L^2=0$ in \Eqref{eq:dss-gnn-layer} for each layer in $E_{\text{subgraphs}}$. As a result, the subgraphs are encoded independently, and $E_{\text{subgraphs}}$ is effectively a Siamese network. 
DS-GNN can be interpreted as a Siamese network that encodes each subgraph independently, followed by a set encoder that encodes the set of subgraph representations.
DS-GNN is invariant to a larger symmetry group obtained from the {\em wreath product} $S_n \wr S_m$.\footnote{The wreath product models a setup in which the subgraphs are not aligned, see \cite{maron2020learning,wang2020equivariant} for further discussion.}

In Section~\ref{sec:theory}, we will show that both DS-GNN and DSS-GNN are powerful architectures and that in certain cases DSS-GNN has superior expressive power.%

\subsection{Subgraph selection policies}\label{subsec:policy}
The second question, {\em How to select the subgraphs?}, has a direct effect on the expressive power of our architecture. 
Here, we discuss subgraph selection policies and present simple but powerful schemes that will be analyzed in Section~\ref{sec:theory}.

Let $\mathcal{G}$ be the set of all graphs with $n$ nodes or less, and let $\mathbb{P}(\mathcal{G})$ be its power set, i.e. the set of all subsets $S\subseteq \mathcal{G}$. A subgraph selection policy is a function $\pi:\mathcal{G} \rightarrow \mathbb{P}(\mathcal{G})$ that assigns to each graph $G$ a subset of the set of its subgraphs $\pi(G)$. 
We require that $\pi$ is invariant to permutations of the nodes in the graphs, namely that $\pi(G)=\pi(\sigma\cdot G)$,
where $\sigma\in S_n$ is a node permutation, and $\sigma\cdot G$ is the graph obtained after applying $\sigma$.  We will use the following notation: $\pi(G):=S_G^{\pi}$. As before,  $S_G^{\pi}$ can be represented in tensor form, where we stack subgraphs in arbitrary order while making sure that the order of the nodes in the subgraphs is consistent.

In this paper, we explore four simple subgraph selection policies that prove to strike a good balance between complexity (the number of subgraphs) and the resulting expressive power: node-deleted subgraphs (ND), edge-deleted subgraphs (ED), and ego-networks (EGO, EGO+), as described next. In the \textbf{node-deleted policy}, a graph is mapped to the set containing all subgraphs that can be obtained from the original graph by removing a single node.\footnote{For all policies, node removal is implemented by removing all the edges to the node, so that all the subgraphs are kept aligned and have $n$ nodes.} Similarly, the \textbf{edge-deleted policy} is defined by removing a single edge. %
The \textbf{ego-networks policy} EGO maps each graph to a set of ego-networks of some specified depth, one for each node in the graph (a $k$-Ego-network of a node is its $k$-hop neighbourhood with the induced connectivity). We also consider a variant of the ego-networks policy where the root node holds an identifying feature (EGO+). %
For DS-GNN, in order to guarantee at least the expressive power of the basic graph encoder used, %
one can add the original graph $G$ to $S_G^{\pi}$ for any policy $\pi$, obtaining the augmented version $\widehat{\pi}$. For example, $\mathrm{\widehat{EGO}}$ is a new policy that outputs for each graph the standard EGO policy described above and the original graph. 

\textbf{Stochastic sampling.} For larger graphs, using the entire bag of subgraphs for some policies can be expensive. To counteract this, we sometimes employ a stochastic subgraph sub-sampling scheme which, in every training epoch, selects a small subset of subgraphs $\overline{S}_G^{\pi} \subset {S}_G^{\pi}$ %
independently and uniformly at random.
In practice, we use $|\overline{S}_G^{\pi}|/|{S}_G^{\pi}| \in \{0.05, 0.2, 0.5\}$.
Besides enabling running on larger graphs, sub-sampling also allows larger batch sizes, resulting in a faster runtime per epoch. 
During inference, we combine $\ell$ such subsets of subgraphs (we use $\ell=5$) and a majority voting scheme to make predictions. 
While the resulting model is no longer invariant, we show that this efficient variant of our method increases the expressive power of the base encoder and performs well empirically.  

\subsection{Related work}
Broadly speaking, the efforts in enhancing the expressive power of GNNs can be clustered along three main directions: (1) Aligning to the k-WL hierarchy~\citep{morris2019weisfeiler, morris2019weisfeiler2, maron2018invariant, maron2019provably}; (2) Augmenting node features with exogenous identifiers~\citep{sato2021random, dasoulas2019coloring, abboud2020surprising}; (3) Leveraging on structural information that cannot provably be captured by the WL test~\citep{bouritsas2020improving, thiede2021autobahn, de2020natural, bodnar2021weisfeiler, bodnar2021weisfeiler2}. 
Our approach falls into the third category. It describes an architecture that uncovers discriminative structural information within subgraphs of the original graph, while preserving equivariance -- typically lacking in (2) -- and locality and sparsity of computation -- generally absent in (1).
Prominent works in category (3) count or list substructures from fixed, predefined banks.
This preprocessing step has an intractable worst-case complexity and the choice of substructures is generally task dependent. In contrast, as we show in \Cref{sec:theory}, our framework is provably expressive with scalable and non-domain specific policies.
Lastly, ESAN can be combined with these related methods, as they can parameterize the $E_{\text{subgraphs}}$ module whenever the application domain suggests so.
We refer readers to Appendix \ref{sec:related} for a more detailed comparison of our work to relevant previous work, and to Appendix~\ref{sec:complexity} for an in-depth analysis of its computational complexity.

\section{Theoretical analysis}\label{sec:theory}
In this section, we study the expressive power of our architecture by its ability to provably separate non-isomorphic graphs. We start by introducing WL analogues for ESAN and then study how different design choices affect the expressive power of our architectures. 

\subsection{A WL analogue for ESAN}\label{sec:wl}

We introduce two color-refinement procedures inspired by the WL isomorphism test~\citep{weisfeiler1968reduction}. These variants, which we refer to as DSS-WL and DS-WL variants, are inspired by DSS-GNN and DS-GNN with a 1-WL equivalent base graph encoder (e.g., an MPNN), accordingly. The goal is to formalize our intuition that graphs may be more easily separated by properly encoding (a subset of) their contained subgraphs. Our variants are designed in a way to extract and leverage this discriminative source of information. 
Details and proofs are provided in Appendix~\ref{app:wl_proofs}. 

\emph{(Init.)} Inputs to the DSS-WL test are two input graphs $G^1, G^2$ and a subgraph selection policy $\pi$. The algorithm generates the subgraph bags over the input graph by applying $\pi$. If initial colors are provided, each node in each subgraph is assigned its original label in the original graph. Otherwise, an initial, constant color is assigned to each node in each subgraph, independent of the bag.

\emph{(Refinement)} On subgraph $S$, the color of node $v$ is refined according to the rule: $ c^{t+1}_{v,S} \gets \mathrm{HASH}( c^{t}_{v,S}, N^{t}_{v,S}, C^{t}_{v}, M^{t}_{v} ) $. Here, $N^{t}_{v,S}$ denotes the multiset of colors in $v$'s neighborhood over subgraph $S$; $C^{t}_{v}$ represents the multiset of $v$'s colors across subgraphs; %
$M^{t}_{v}$ is the multiset collecting these cross-bag aggregated colors for $v$'s neighbors according to the original graph connectivity.

\emph{(Termination)} At any step, each subgraph $S$ is associated with color $c_S$, representing the multiset of node colors therein. Input graphs are described by the multisets of subgraph colors in the corresponding bag. The algorithm terminates as soon as the graph representations diverge, deeming the inputs non-isomorphic. The test is inconclusive if the two graphs are assigned the same representations at convergence, i.e., when the color refinement procedure converges on each subgraph.

DS-WL is obtained as a special setting of DSS-WL whereby the refinement rule neglects inputs $C^{t}_{v}$ and $M^{t}_{v}$. These inputs account for information sharing across subgraphs, and by disabling them, the procedure runs the standard WL on each subgraph independently. We note that since DS-WL is a special instantiation of DSS-WL, we expect the latter to be as powerful as the former; we formally prove this in Appendix~\ref{app:wl_proofs}. Finally, both DSS- and DS-WL variants reduce to trivial equivalents of the standard WL test with policy $\pi: G \mapsto \{ G \}$.

\subsection{WL analogues and expressive power}\label{sec:wl_express}

Our first result confirms our intuition on the discriminative information contained in subgraphs:
\begin{theorem}[DS(S)-WL strictly more powerful than 1-WL]\label{thm:variants_vs_wl}
    There exist selection policies such that DS(S)-WL is \emph{strictly more powerful} than 1-WL in distinguishing between non-isomorphic graphs.
\end{theorem}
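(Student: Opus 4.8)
The plan is to exhibit one concrete policy and verify two things for the resulting DS-WL with a 1-WL base encoder: that it never loses power compared to 1-WL, and that on at least one pair of non-isomorphic graphs it strictly gains. A convenient choice is the \emph{augmented edge-deleted policy} $\widehat{\mathrm{ED}}$, i.e.\ the bag of all single-edge-deleted subgraphs of $G$ together with a copy of $G$ itself. Since the excerpt already records (and proves in Appendix~\ref{app:wl_proofs}) that DSS-WL is at least as powerful as DS-WL, it is enough to argue everything for DS-WL, and the statement for DSS-WL then follows.

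\textbf{Step 1: no loss of power.} Under DS-WL the representation of $G$ is the bag $\ldblbrace\, c_S : S \in \widehat{\mathrm{ED}}(G) \,\rdblbrace$, where $c_S$ is the stable WL color-multiset of $S$ --- exactly the 1-WL signature of $S$. Because $G \in \widehat{\mathrm{ED}}(G)$, this bag contains $c_G$, the 1-WL signature of $G$; this is precisely the design rationale for augmenting a policy. What remains is to show $c_G$ is recoverable from the whole bag, so that equal DS-WL representations force $c_{G^1}=c_{G^2}$. I would argue this by edge-count bookkeeping: the first WL refinement round distinguishes nodes of different degree (the neighbor-color multiset has cardinality equal to the degree), so every stable WL color records its node's degree, hence a WL color-multiset determines the degree sequence and thus the edge count of the graph it came from. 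Every subgraph in $\widehat{\mathrm{ED}}(G)$ other than $G$ has $|E(G)|-1$ edges while $G$ has $|E(G)|$, so $c_G$ is the unique element of the bag associated with the maximum edge count; hence the bag determines $c_G$. Therefore, whenever 1-WL separates $G^1$ and $G^2$, so does DS-WL with policy $\widehat{\mathrm{ED}}$.

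\textbf{Step 2: a strict gain.} For the separating pair I would take $G^1 = C_6$ (the $6$-cycle) and $G^2 = C_3 \sqcup C_3$ (two disjoint triangles), essentially the pair in Figure~\ref{fig:teaser}. Both are $2$-regular on six vertices, so 1-WL stabilizes to identical monochromatic colorings and fails to separate them. However, deleting any edge turns $C_6$ into the path $P_6$ and turns $C_3 \sqcup C_3$ into $P_3 \sqcup C_3$, and a direct finite check shows these two graphs already have distinct 1-WL signatures: after the second refinement round the color of a degree-$1$ node differs between them, since its degree-$2$ neighbor is adjacent to a degree-$1$ and a degree-$2$ node in $P_6$ but to two degree-$1$ nodes in $P_3 \sqcup C_3$. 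Hence the two bags differ, so DS-WL (and therefore DSS-WL) with policy $\widehat{\mathrm{ED}}$ separates $C_6$ from $C_3 \sqcup C_3$, while 1-WL does not. I expect the only delicate point to be the recoverability claim in Step 1 --- ruling out the possibility that the auxiliary edge-deleted subgraphs in the two bags conspire to cancel the contribution of $c_G$ --- and the degree/edge-count argument is exactly what closes that gap; Step 2 is then routine.
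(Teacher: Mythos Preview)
Your argument follows essentially the same route as the paper: for DS-WL you use an augmented policy and the edge-count observation (the paper's Lemma~\ref{lem:dswl_refines_wl} proves refinement for any $\widehat{\pi}$ by exactly this device), and for the strict gain you use $C_6$ versus $2\times C_3$ under edge deletion (the paper's Lemma~\ref{lem:c6_2xc3}); the transfer to DSS-WL via Proposition~\ref{prop:dsswl_refines_dswl} is also what the paper does. The only difference is scope: the paper exhibits several policies at once, whereas you (legitimately, since the statement is existential) work with $\widehat{\mathrm{ED}}$ alone.

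One small bookkeeping slip in Step~2: after two refinement rounds the degree-$1$ nodes in $P_6$ and $P_3\sqcup C_3$ still carry the \emph{same} color, because a node's round-$2$ color depends only on its neighbors' round-$1$ colors, and in both graphs the unique neighbor of a degree-$1$ node has degree~$2$. The distinction you describe---the degree-$2$ neighbor seeing $\{1,2\}$ in $P_6$ versus $\{1,1\}$ in $P_3\sqcup C_3$---manifests at the degree-$2$ nodes at round~$2$ (this is precisely how the paper argues it), or at the degree-$1$ nodes only at round~$3$. This does not affect correctness: the two edge-deleted subgraphs are WL-separated either way, so your Step~2 stands.
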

The idea is that it is possible to find policies\footnote{In the case of DS-WL it is required that the original graph is included in the bag.} such that (i) our variants refine WL, i.e. any graph pair distinguished by WL is also distinguished by our variants, and (ii) there exist pairs of graphs indistinguishable by WL but separated by our variants. These exemplary pairs include graphs from a specific family, i.e. Circulant Skip Link (CSL) graphs \citep{murphy2019relational, dwivedi2020benchmarking}. We show that DS-WL and DSS-WL can distinguish pairs of these graphs:
\begin{lemma}\label{lem:cslk2}
    $\mathrm{CSL}(n,2)$ can be distinguished from any $\mathrm{CSL}(n,k)$ with $k \in [3, n/2 - 1]$ by DS-WL and DSS-WL with either the $\mathrm{ND}$, $\mathrm{EGO}$, or $\mathrm{EGO+}$ policy.
\end{lemma}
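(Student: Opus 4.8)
The plan is to exploit the cyclic structure of CSL graphs together with one of the local policies (ND, EGO, or EGO+) so that the resulting bags of subgraphs become distinguishable by DS-WL (and hence DSS-WL, which is at least as powerful). Recall that $\mathrm{CSL}(n,k)$ is the $4$-regular circulant graph on $2n$ vertices $\{0,1,\dots,2n-1\}$ with edges forming the $2n$-cycle $i \sim i+1$ plus the ``skip links'' $i \sim i+k$ (indices mod $2n$). The key structural observation is that $k=2$ is special: the skip links $i \sim i+2$ connect vertices of the same parity, so they split into two cycles on the even and odd vertices respectively, creating many short cycles (triangles: $i, i+1, i+2$ via edges $i\sim i+1$, $i+1\sim i+2$, $i\sim i+2$), whereas for $k \in [3, n/2-1]$ the skip links create longer cycles and no triangles through the ``$+1$'' and ``$+k$'' edges in the same way. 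First I would pin down a local invariant --- e.g. the number of triangles through a vertex, or more robustly the isomorphism type of the $1$- or $2$-hop neighbourhood --- that differs between $\mathrm{CSL}(2n,2)$ and $\mathrm{CSL}(2n,k)$ for the stated range of $k$.

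Next I would show that this local difference survives the chosen policy. For the EGO (or EGO+) policy this is most direct: each subgraph in the bag is a $\ell$-ego-network of a vertex, and by vertex-transitivity of circulant graphs all these ego-nets are isomorphic within a single $\mathrm{CSL}$ graph. So the bag of $\mathrm{CSL}(2n,2)$ consists of $2n$ copies of one ego-net type $T_2$, and the bag of $\mathrm{CSL}(2n,k)$ consists of $2n$ copies of another type $T_k$. It then suffices to check that $1$-WL (the base encoder) distinguishes $T_2$ from $T_k$ --- and since the ego-network of $\mathrm{CSL}(2n,2)$ contains triangles while that of $\mathrm{CSL}(2n,k)$ is triangle-free (for $k$ in range and $\ell$ small, say $\ell = 1$), $1$-WL separates them via the neighbourhood-degree refinement, or one simply observes the ego-nets have different numbers of edges. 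Since DS-WL applied to a bag of identical subgraphs reduces to $1$-WL on a single representative, DS-WL assigns different (multisets of) colours and terminates declaring the graphs non-isomorphic. The ND policy is handled analogously: deleting one vertex from a vertex-transitive graph yields, up to isomorphism, a single subgraph type (again by transitivity), and the node-deleted subgraph of $\mathrm{CSL}(2n,2)$ retains triangles while that of $\mathrm{CSL}(2n,k)$ does not, so $1$-WL separates the two subgraph types.

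For DSS-WL the statement then follows immediately, since DSS-WL is at least as powerful as DS-WL (proven in Appendix \ref{app:wl_proofs}); any pair separated by DS-WL is separated by DSS-WL.

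I expect the main obstacle to be the careful verification that $1$-WL --- not just a count of edges or triangles, but the actual colour-refinement procedure --- genuinely separates the two subgraph types $T_2$ and $T_k$, and that this holds \emph{uniformly} over the whole range $k \in [3, n/2-1]$ rather than for a few small cases. One has to rule out the coincidence that, say, a node-deleted or ego subgraph of $\mathrm{CSL}(2n,2)$ and of $\mathrm{CSL}(2n,k)$ happen to be $1$-WL-equivalent despite being non-isomorphic; the cleanest way around this is to use a $1$-WL-\emph{invariant} feature that is manifestly different (number of edges, or presence of a vertex lying on a triangle), so that no delicate analysis of the refinement dynamics is needed. A secondary, minor point is to handle the boundary/parity conditions on $n$ and $k$ (e.g. ensuring $\mathrm{CSL}(n,k)$ is well-defined and $4$-regular, i.e. $k \neq n/2$), which is exactly why the range is restricted to $k \le n/2 - 1$.
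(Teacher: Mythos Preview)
Your overall strategy---use vertex-transitivity to reduce each bag to a single subgraph type, then show $1$-WL separates the two representatives---is exactly what the paper does, and your EGO/EGO+ argument is essentially correct (the depth-$1$ ego-net of $\mathrm{CSL}(n,2)$ has $7$ edges while that of $\mathrm{CSL}(n,k)$ has only $4$, so the degree sequences already differ). Two points, however, need fixing.

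First, a definitional slip: $\mathrm{CSL}(n,k)$ has $n$ vertices, not $2n$; your indices should be taken mod $n$.

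Second, and more substantively, your ND argument does not go through. Both invariants you propose fail:
\begin{itemize}
    \item \emph{Edge count.} Deleting one vertex from a $4$-regular $n$-vertex graph always leaves $2n-4$ edges, so the node-deleted subgraphs of $\mathrm{CSL}(n,2)$ and $\mathrm{CSL}(n,k)$ have identical edge counts.
    \item \emph{Triangles.} ``A vertex lies on a triangle'' is \emph{not} a $1$-WL invariant (witness $C_6$ versus $2\times C_3$). Moreover, $\mathrm{CSL}(n,k)$ is not triangle-free for all $k\ge 3$ in the stated range: e.g.\ $\mathrm{CSL}(9,3)$ contains the triangle $\{0,3,6\}$.
\end{itemize}
So neither gives you a $1$-WL separation of the two node-deleted subgraphs, and the ND case is left unproved.

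The paper closes this gap with a genuinely $1$-WL-visible feature: after deleting $v_0$, the four degree-$3$ vertices in $\mathrm{CSL}(n,2)-v_0$ are $\{v_1,v_2,v_{n-2},v_{n-1}\}$, and each has at least one degree-$3$ neighbour (e.g.\ $v_1\sim v_2$). In $\mathrm{CSL}(n,k)-v_0$ the degree-$3$ vertices are $\{v_1,v_{n-1},v_k,v_{n-k}\}$, and a short index computation using $3\le k\le n/2-1$ shows none of them are adjacent to each other. Hence the multiset ``degrees of my neighbours'' differs for degree-$3$ nodes across the two graphs, and $1$-WL separates them after two rounds. You should replace your triangle argument with this (or an equivalent) degree-of-neighbours analysis.
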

It is possible to also find exemplary pairs for the ED policy: one is included is Figure~\ref{fig:teaser}, while another is found in Appendix~\ref{app:wl_proofs}. Further examples to the expressive power attainable with our framework are reported in the next subsection.
Next, the following result guarantees that the expressive power of our variants is attainable by our proposed architectures.
\begin{theorem}[DS(S)-GNN at least as powerful as DS(S)-WL; DS-GNN at most as powerful as DS-WL]\label{thm:dssgnn_refines_dsswl}
	Let $\mathcal{F}$ be any family of bounded-sized graphs endowed with node labels from a finite set. There exist selection policies such that, for any two graphs $G^1, G^2$ in $\mathcal{F}$, distinguished by DS(S)-WL, there is a DS(S)-GNN model in the form of~\Cref{eq:dss-gnn-arch} %
	assigning $G^1, G^2$ distinct representations. %
	Also, DS-GNN with MPNN base graph encoder is at most as powerful as DS-WL.
\end{theorem}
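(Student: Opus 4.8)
Both parts follow the classical message-passing/$1$-WL correspondence (\citet{xu2018how,morris2019weisfeiler}), lifted to bags of subgraphs: the first part is the ``hard'' direction --- a network realizing the WL refinement --- and the second is the ``easy'' one --- WL bounding the network --- stated only for DS-GNN with MPNN base encoders. For the first part I would fix one of the policies of \Cref{subsec:policy} and the finite family $\mathcal{F}$, and build a single DS(S)-GNN model that simulates DS(S)-WL on all of $\mathcal{F}$. Finiteness of $\mathcal{F}$ (bounded graph size, bounded number of bounded-size subgraphs, finite node-label set) gives two things: the set of colors appearing in any round of DS(S)-WL over $\mathcal{F}$ is finite, and there is a uniform $T$ after which refinement stabilizes on every member of $\mathcal{F}$; hence it suffices to realize $T$ refinement rounds followed by the two pooling steps with finitely many parameters, and every injective multiset map used below is realizable as a sum-pooling MLP (\citet{xu2018how}).

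The core step is that one $H$-equivariant layer of the form \Cref{eq:dss-gnn-layer} can realize one round of the update $c^{t+1}_{v,S}\gets\mathrm{HASH}(c^{t}_{v,S},N^{t}_{v,S},C^{t}_{v},M^{t}_{v})$. Instantiate $L^1$ as an MPNN layer: on subgraph $S$ it computes an injective encoding of $(c^{t}_{v,S},N^{t}_{v,S})$. For $L^2$: the row of $\sum_{j=1}^m X_j$ indexed by $v$ is the histogram of $v$'s colors across the bag, hence (since $m$ is bounded) an injective encoding of the multiset $C^{t}_{v}$; and, taking the max aggregator for the adjacency --- which recovers the original connectivity for these policies, as done throughout the paper --- the MPNN layer $L^2$ applied to $(\max_j A_j,\sum_{j=1}^m X_j)$ produces at $v$ an injective encoding of $(C^{t}_{v},M^{t}_{v})$. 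Choosing the MLPs inside $L^1,L^2$ so that the two summands of \Cref{eq:dss-gnn-layer} occupy disjoint coordinate blocks, the sum is an injective encoding of the full tuple and therefore refines the DS(S)-WL coloring. Composing $T$ such layers, the node features injectively refine the DS(S)-WL colors; a sum-pooling readout $R_{\text{subgraphs}}$ then encodes each subgraph's node-color multiset $c_S$ injectively, and a universal set encoder $E_{\text{sets}}$ (e.g.\ DeepSets, \citet{zaheer2017deep}) encodes the bag $\ldblbrace c_S\rdblbrace$ injectively --- i.e.\ the DS(S)-WL representation of the graph. Since DS(S)-WL separates $G^1,G^2$, so does this encoding. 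The DS-GNN case is the same construction with $L^2=0$, where only $N^{t}_{v,S}$ feeds the update, so we simulate DS-WL.

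For the second part I would prove the converse inclusion by induction on the layer index: for any MPNN and any fixed graph, two nodes with the same $1$-WL color at round $t$ have equal embeddings after $t$ layers (the standard fact that $1$-WL refines any MPNN coloring, since the MPNN update is a --- not necessarily injective --- function of exactly the arguments $1$-WL hashes). With $L^2=0$ the encoder $E_{\text{subgraphs}}$ treats subgraphs independently, so the multiset of node embeddings produced on each subgraph $S$ is a function of its DS-WL color $c_S$; hence the bag fed to $E_{\text{sets}}$ is a function of $\ldblbrace c_S\rdblbrace$, and so is the final invariant output. Thus whenever DS-WL does not separate two graphs, neither does DS-GNN. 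The main obstacle is the single-layer simulation in the first part: one must check at once that the Siamese term, the cross-bag aggregation, and the original-connectivity neighbour term all carry the right information \emph{and} that their sum in \Cref{eq:dss-gnn-layer} can be made injective in $(c^{t}_{v,S},N^{t}_{v,S},C^{t}_{v},M^{t}_{v})$ without the two base encoders interfering; the remaining pieces (universality of the set encoder, the WL-refines-MPNN induction, the finiteness bookkeeping) are routine.
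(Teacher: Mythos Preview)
Your overall strategy matches the paper's: build an $E_{\text{subgraphs}}$ that simulates the DS(S)-WL refinement on node colors, then use injective subgraph and bag readouts; for the upper bound, reduce subgraph-by-subgraph to the standard $1$-WL-bounds-MPNN fact. The second part and the readout/set-encoder portion of the first part are essentially as in the paper.

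The gap is your claim that \emph{one} $H$-equivariant layer realizes one DSS-WL round. Your argument for $M^t_v$ says that $L^2$ applied to $(\max_j A_j,\sum_j X_j)$ produces an injective encoding of $(C^t_v,M^t_v)$. But the input to $L^2$ at node $w$ is already the \emph{histogram} $\sum_j (X_j)_w$ encoding $C^t_w$; with an aggregate-then-transform base layer (GIN, GraphConv --- the form the paper actually uses), the neighbour aggregation inside $L^2$ computes $\sum_{w\sim v}\sum_j (X_j)_w$, and summing histograms is \emph{not} injective on the multiset $M^t_v=\ldblbrace C^t_w:w\sim v\rdblbrace$ (e.g.\ $[1,1,0,0]+[0,0,1,1]=[1,0,1,0]+[0,1,0,1]$). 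So one such layer cannot recover $M^t_v$. Relatedly, your iteration tacitly assumes the output of each layer is again one-hot (so that the next $\sum_j X_j$ is a histogram), which you do not arrange. The paper addresses exactly these two points by spending \emph{ten} GraphConv layers per DSS-WL round (five per DS-WL round): one layer to form $(c^t_{v,S},N^t_{v,S},C^t_v)$, four layers to re-one-hot-encode the $C^t_v$ block via a finite-dataset-fitting MLP, one layer to aggregate these one-hots over the original neighbourhood and obtain $M^t_v$, and four more layers to one-hot-encode the full tuple so the next round again starts from one-hots. Your route is repairable --- either insert the same re-encoding layers, or explicitly take $L^2$ to have a transform-before-aggregate message function that one-hot-encodes each $C^t_w$ prior to the neighbour sum --- but as written the single-layer claim does not go through.
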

In particular, this theorem applies to policies such that each edge in the original graph appears at least once in the bag. This is the case of the policies in~\Cref{subsec:policy} and their augmented versions. The theorem states that, under the aforementioned assumptions, DSS-GNNs (DS-GNNs) are at least as powerful as the DSS-WL (DS-WL) test, and transitively, in view of~\Cref{thm:variants_vs_wl}, that there exist policies making ESAN more powerful than the WL test. This result is important, because it gives provable guarantees on the representation power of our architecture w.r.t.\ standard MPNNs:
\begin{corollary}[DS(S)-GNN is strictly more powerful than MPNNs]\label{cor:ds_dss_stronger}
    There exists subgraph selection policies such that DSS-GNN and DS-GNN architectures are strictly more powerful than standard MPNN models in distinguishing non-isomorphic graphs.
\end{corollary}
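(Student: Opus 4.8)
The plan is to combine the WL-level separation results of this section with the GNN/WL correspondence. Call an architecture \emph{at least as powerful} as another if every pair of non-isomorphic graphs separated by some instance of the latter is also separated by some instance of the former, and \emph{strictly more powerful} if, in addition, some pair is separated by an instance of the former but by no instance of the latter. Two facts will be used as black boxes: MPNNs are upper-bounded by $1$-WL \citep{xu2018how,morris2019weisfeiler}; and, by \Cref{thm:dssgnn_refines_dsswl}, for any policy in which every edge of $G$ appears in the bag --- in particular for the augmented policies $\widehat{\pi}$ of \Cref{subsec:policy}, which contain $G$ itself --- any pair of (bounded-size, finitely-labelled) graphs separated by DS(S)-WL with $\widehat{\pi}$ is separated by some DS(S)-GNN of the form \Cref{eq:dss-gnn-arch}.

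First I would fix an augmented policy, say $\widehat{\mathrm{EGO}}$, for which \Cref{thm:variants_vs_wl} applies, so that DS(S)-WL with $\widehat{\mathrm{EGO}}$ is strictly more powerful than $1$-WL. Unpacking that statement gives exactly the two ingredients we need: (i) DS(S)-WL with $\widehat{\mathrm{EGO}}$ refines $1$-WL, i.e.\ separates every $1$-WL-distinguishable pair; and (ii) there is a pair --- concretely $\mathrm{CSL}(n,2)$ versus $\mathrm{CSL}(n,k)$, via \Cref{lem:cslk2} --- that DS(S)-WL separates but $1$-WL does not. Part (i) for DS-WL is precisely why the original graph must sit in the bag, hence the restriction to augmented policies; and augmenting the policies of \Cref{lem:cslk2} with $G$ preserves the separation, because $G$ contributes the same subgraph colour to both bags (it is $1$-WL-invariant, hence constant across CSL graphs on $n$ nodes).

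Now assemble the two directions. For the ``at least as powerful'' direction: if some MPNN separates $G^1$ and $G^2$, then $1$-WL separates them, so by (i) DS(S)-WL with $\widehat{\mathrm{EGO}}$ separates them, and \Cref{thm:dssgnn_refines_dsswl} then yields a DS(S)-GNN of the form \Cref{eq:dss-gnn-arch} separating them. For strictness: by (ii) together with the MPNN upper bound, no MPNN separates $\mathrm{CSL}(n,2)$ from $\mathrm{CSL}(n,k)$, while DS(S)-WL with $\widehat{\mathrm{EGO}}$ does, so \Cref{thm:dssgnn_refines_dsswl} again produces a DS(S)-GNN separating this pair. Together these show both DSS-GNN and DS-GNN are strictly more powerful than MPNNs, with $\widehat{\mathrm{EGO}}$ (or $\widehat{\mathrm{ND}}$) as a witnessing policy.

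The crux is not a new idea but matching hypotheses across the three results: one needs a single policy that simultaneously (a) makes DS-WL refine $1$-WL --- which forces $G$ into the bag, so the policy must be augmented; (b) satisfies the edge-coverage assumption of \Cref{thm:dssgnn_refines_dsswl}; and (c) still realizes a strict separation from $1$-WL. Augmented policies meet all three. It is also worth noting that the claim is a statement about model \emph{classes}: for each separable pair one only needs \emph{some} separating network, which may depend on the pair --- exactly the guarantee \Cref{thm:dssgnn_refines_dsswl} provides --- so no claim uniform in the network weights is required.
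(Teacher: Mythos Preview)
Your proof is correct and follows essentially the same approach as the paper: the corollary is deduced from \Cref{thm:variants_vs_wl}, \Cref{thm:dssgnn_refines_dsswl}, and the MPNN $\leq$ $1$-WL upper bound of \citet{xu2018how,morris2019weisfeiler}, with augmented policies serving as the common witness. You spell out more carefully than the paper why a single policy (e.g.\ $\widehat{\mathrm{EGO}}$) simultaneously satisfies the hypotheses of both theorems and why adding $G$ to the bag preserves the CSL separation, but the logical skeleton is identical.
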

The corollary is an immediate consequence of~\Cref{thm:variants_vs_wl},~\Cref{thm:dssgnn_refines_dsswl} and the results in~\citet{xu2018how, morris2019weisfeiler}, according to which MPNNs are at most as powerful as the WL test.

\subsection{Theoretical expressiveness of design choices}\label{sec:expressiveness}

Our general ESAN architecture has several important design choices: choice of DSS-GNN or DS-GNN for the architecture, choice of graph encoder, and choice of subgraph selection policy. Thus, here we study how these design choices affect expressiveness of our architecture.

\noindent \textbf{DSS vs. DS matters.} To continue the discussion of the last section, we show that DSS-GNN is at least as powerful as DS-GNN, and is in fact strictly stronger than DS-GNN for a specific policy.%

\begin{proposition}\label{prop:dssgnn_strictly_more_powerful_than_dsgnn}
DSS-GNN is at least as powerful as DS-GNN for all policies.
Furthermore, there are policies where DSS-GNN is strictly more powerful than DS-GNN.
\end{proposition}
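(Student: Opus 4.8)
The plan is to treat the two assertions separately, since they are of quite different character.

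\emph{DSS-GNN at least as powerful as DS-GNN, for every policy.} This step is essentially definitional and I expect no obstacle. By the very definition given above, DS-GNN is the variant of DSS-GNN obtained by setting $L^2\equiv 0$ in every layer of $E_{\text{subgraphs}}$ (cf.\ \Cref{eq:dss-gnn-layer}), while keeping $R_{\text{subgraphs}}$ and $E_{\text{sets}}$ untouched. Hence, for any fixed subgraph selection policy and any fixed base graph encoder, the family of functions on bags of subgraphs realized by DS-GNN is a \emph{subfamily} of the one realized by DSS-GNN. Therefore, whenever some DS-GNN assigns distinct representations to a pair of non-isomorphic graphs, the identical network --- now read as a DSS-GNN --- does the same, which is precisely the claim.

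\emph{Strictness for some policy.} Here the plan is to go through the WL analogues of \Cref{sec:wl} rather than reason about networks directly. It suffices to exhibit one policy $\pi$ that is admissible for \Cref{thm:dssgnn_refines_dsswl} --- e.g.\ one in which every edge of the input graph occurs in at least one subgraph of the bag, which includes the policies of \Cref{subsec:policy} and their augmentations --- together with a pair of non-isomorphic graphs $G^1,G^2$ on which DSS-WL with policy $\pi$ succeeds while DS-WL with policy $\pi$ fails. Given such a witness, \Cref{thm:dssgnn_refines_dsswl} closes the argument: its ``at most'' half gives that no DS-GNN with an MPNN base encoder separates $G^1,G^2$ (it is dominated by DS-WL), while its ``at least'' half produces a DSS-GNN in the form of \Cref{eq:dss-gnn-arch} that separates them; combined with the first assertion, this yields strict dominance of DSS-GNN over DS-GNN for this $\pi$.

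\emph{Constructing the witness --- the crux.} The design is dictated by what DS-WL can perceive: per subgraph, only the stable multiset of $1$-WL colors, and then only the multiset of those over the whole bag. So I want $G^1,G^2$ whose $\pi$-bags agree at that coarse level --- most cleanly, bags all of whose subgraphs are pairwise $1$-WL-equivalent to one common pattern, so that DS-WL returns identical bag representations on both sides --- yet in which the subgraphs are ``laid out'' differently on the shared, consistently ordered node set, a discrepancy that DSS-WL can detect through its cross-subgraph aggregates $C^t_v$ and (crucially, via the original connectivity) $M^t_v$. A natural place to look is among regular graphs with identical local statistics but differing higher-order structure --- for instance a pair of strongly regular graphs with the same parameters together with an ego-network policy, where every ego-net carries the same $1$-WL profile and hence DS-WL stalls --- designing $\pi$ so that propagating colors across subgraphs breaks the symmetry on exactly one side. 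I expect the main obstacle to be the \emph{verification that DSS-WL genuinely separates} the chosen pair: one must track the DSS-WL refinement to convergence and check that the two bag representations diverge, and the outcome is sensitive to the depth/shape of the policy (a policy whose subgraphs are too symmetric will leave DSS-WL stalled too). The complementary check --- that the per-subgraph $1$-WL profiles coincide on both sides --- is by contrast a routine finite computation once the pattern is fixed. Any explicitly checkable triple $(G^1,G^2,\pi)$ with these two properties, including one built around a hand-tailored policy, would serve equally well.
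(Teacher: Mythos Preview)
Your first part matches the paper verbatim. For the strictness, the paper takes a more elementary and direct route than yours: rather than passing through the WL analogues and \Cref{thm:dssgnn_refines_dsswl}, it constructs a small explicit witness at the GNN level. It introduces the hand-tailored policy $SE : G=(V,E) \mapsto \{(V,\{e\}) : e \in E\}$ (each subgraph retains all vertices but a single edge) and takes $G^1$ the path on four nodes, $G^2$ the $3$-star. DS-GNN fails trivially since every subgraph is an isomorphic single edge; for DSS-GNN the paper simply writes down concrete layers --- essentially $L^1=0$ and $L^2(A,X)=\tfrac{1}{3}AX$ acting on the aggregated adjacency, which recovers the original degree vector --- and reads off the separation. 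A second example, with the augmented policy $\widehat{SE}$ and the pair $C_6$ versus $2\times C_3$ (using the $\sum_{j\neq i}X_j$ variant of the sharing aggregator), strengthens this to the case where the original graph is included in the bag.

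Your route via \Cref{thm:dssgnn_refines_dsswl} is logically sound but heavier: it requires establishing a \emph{strict} DSS-WL versus DS-WL separation, something the paper never proves on its own (the appendix only shows DSS-WL $\geq$ DS-WL). Your strongly-regular candidate is also far more ambitious than needed and, as you anticipated, the DSS-WL verification there would be delicate. If you insist on the WL route, note that the paper's own $SE$ example already works: $SE$ is vertex-set-preserving, so DSS-WL refines ordinary WL, and WL separates the path from the star in one round by degree; meanwhile DS-WL stalls because every subgraph is a single edge. The paper's direct construction buys simplicity and self-containment (no appeal to \Cref{thm:dssgnn_refines_dsswl} or the WL machinery); yours buys a cleaner conceptual reduction of the GNN question to the purely combinatorial WL question, at the cost of importing that machinery.
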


\noindent \textbf{Base graph encoder choice matters.} 
Several graph networks with increased expressive power over 1-WL have been proposed \citep{sato2020survey,maron2019provably}. Thus, one may desire to use a more expressive graph network than MPNNs. The following proposition analyzes the expressivity of a generalization of DS-GNN to use a 3-WL base encoder; details are given in the proof in Appendix~\ref{app:expressiveness_proofs}.

\begin{proposition}\label{prop:3wl_dss_gnn}
Let the subgraph policy be depth-1 $\mathrm{\widehat{EGO}}$ or $\mathrm{\widehat{EGO}+}$. Then:
    (1) DS-GNN with a 3-WL base graph encoder is strictly more powerful than 3-WL. 
    (2) DS-GNN with a 3-WL base graph encoder is strictly more powerful than DS-GNN with a 1-WL base graph encoder. 
\end{proposition}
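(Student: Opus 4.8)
\textbf{Proof proposal for Proposition~\ref{prop:3wl_dss_gnn}.}

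The plan is to establish each part via a ``refinement plus separating example'' argument, mirroring the structure used to prove \Cref{thm:variants_vs_wl}. For part (1), I would first argue that DS-GNN with a 3-WL base encoder is \emph{at least} as powerful as plain 3-WL: with the augmented policy $\mathrm{\widehat{EGO}}$ the original graph $G$ itself appears in the bag $S_G^\pi$, so running the 3-WL-equivalent base encoder on that one component already recovers the 3-WL coloring of $G$; since the final set encoder $E_{\text{sets}}$ is universal and the representation of $G$ can be read off as one coordinate of the multiset of subgraph representations, any pair separated by 3-WL is separated by the architecture. For strictness, I would exhibit a pair of graphs $G^1, G^2$ that are 3-WL-indistinguishable but whose depth-1 ego-network bags differ — the natural candidates are the standard 4-regular counterexamples to 3-WL (e.g.\ the Rook's $4\times4$ graph versus the Shrikhane graph, which are 3-WL equivalent but have non-isomorphic local structure), and I would check that their depth-1 ego-networks (with the root marked, for $\mathrm{\widehat{EGO}+}$) are distinguishable by 1-WL applied subgraph-wise, hence a fortiori by a 3-WL base encoder. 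This gives a pair separated by DS-GNN but not by 3-WL.

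For part (2), the direction ``at least as powerful'' is immediate because 3-WL refines 1-WL, so the 3-WL base encoder produces a coloring at least as fine as the 1-WL base encoder on every subgraph, and hence DS-GNN with a 3-WL encoder can simulate DS-GNN with a 1-WL encoder (formally, using the same set encoder on a finer multiset of subgraph colors). For strictness I need a pair of graphs that DS-GNN with a \emph{1-WL} base encoder on depth-1 $\mathrm{\widehat{EGO}}$ cannot separate but that the 3-WL version can. Here I would take a pair of graphs that are indistinguishable by 1-WL even after the ego-network decomposition (so all ego-networks, including $G$ itself, are pairwise 1-WL equivalent across the two bags) but that \emph{are} separated by 3-WL directly; then the $G$-component alone suffices for the 3-WL version to tell them apart, while the 1-WL version is stuck. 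A clean source of such a pair is any 3-WL-distinguishable, 1-WL-indistinguishable pair whose depth-1 ego-networks are also 1-WL-indistinguishable — strongly regular graphs with the same parameters (again Rook's $4\times4$ vs.\ Shrikhane, or a larger SRG pair) are the canonical choice, since 1-WL cannot see past the regularity even locally, but 3-WL distinguishes many SRG pairs.

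The main obstacle will be verifying the separating examples rather than the refinement arguments, which are routine. Specifically, for part (1) I must confirm that the chosen 3-WL counterexample pair genuinely has \emph{non}-1-WL-equivalent ego-network bags (otherwise the 1-WL-level reasoning inside the base encoder won't fire) — this requires a concrete check of the local neighborhoods, possibly appealing to the fact that ego-networks of vertex-transitive non-isomorphic graphs can still differ in subtle ways, so I may instead need a bespoke small example constructed by hand. For part (2) the delicate point is the \emph{simultaneous} requirement that the pair be 3-WL-separable yet have every ego-network (and $G$ itself) 1-WL-indistinguishable; I would lean on known results that $k$-WL distinguishes SRG pairs for $k \geq 3$ while 1-WL is defeated by any regular graph, and then note that ego-networks of an SRG are themselves regular-ish enough that 1-WL fails on them too — but pinning down one explicit pair and doing the finite check is where the real work lies, and it is relegated to Appendix~\ref{app:expressiveness_proofs} as the excerpt indicates.
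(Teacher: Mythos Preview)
Your refinement arguments are fine, but the separating-example strategy has a factual error that breaks both parts. You propose Rook's $4\times 4$ vs.\ Shrikhande and then assert that ``3-WL distinguishes many SRG pairs''; in fact Rook's and Shrikhande are strongly regular \emph{of the same parameters}, and 3-WL cannot distinguish any such pair. So your plan for part~(2)---let the $G$-component in the augmented bag do the work because 3-WL separates $G^1$ from $G^2$ directly---fails for this example. And for part~(1), your hope that the depth-1 ego-nets are already 1-WL distinguishable also fails: the ego-net of Rook's is two disjoint triangles joined to a hub, the ego-net of Shrikhande is a 6-cycle joined to a hub, and 1-WL famously cannot tell $2\times C_3$ from $C_6$ (hence not these ego-nets either).

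The fix, which is what the paper does, is to notice that those same ego-nets \emph{are} 3-WL distinguishable (3-WL counts triangles), while remaining 1-WL indistinguishable. That single observation gives both parts simultaneously with the same pair: 3-WL fails on the originals (same-parameter SRGs), DS-GNN with a 1-WL encoder fails on the bag (ego-nets and originals all 1-WL equivalent), but DS-GNN with a 3-WL encoder succeeds because the ego-nets differ at the 3-WL level. In short, the separation comes from the \emph{ego-nets under 3-WL}, not from 1-WL on the ego-nets nor from 3-WL on the $G$-component, and your proposal misplaces where the discriminating signal lives.
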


Part (1) shows that our DS-GNN architecture improves the expressiveness of 3-WL graph encoders over 3-WL itself, so expressiveness gains from our ESAN architecture are not limited to the 1-WL base graph encoder case. Part (2) shows that our architecture can gain in expressive power when increasing the expressive power of the graph encoder. %

\noindent \textbf{Subgraph selection policy matters.}
Besides complexity (discussed in Appendix~\ref{sec:complexity}), different policies also result in different expressive power of our model. 
We analyze the case of 1-WL (MPNN) graph encoder for different choices of policy, restricting our attention to strongly regular (SR) graphs. These graphs are parameterized by 4 parameters ($n$, $k$, $\lambda$, $\mu$).
SR graphs are an interesting `corner case' often used for studying GNN expressivity, %
as any SR graphs of the same parameters cannot be distinguished by 1-WL or 3-WL \citep{arvind2020weisfeiler, bodnar2021weisfeiler, bodnar2021weisfeiler2}.

We show that our framework with the ND and depth-$n$ EGO+ policies can %
 distinguish any SR graphs of different parameters. However, just like 3-WL, we cannot distinguish SR graphs of the {\em same} parameters.
On the other hand, the ED policy can distinguish certain pairs of non-isomorphic SR graphs of the same parameters, while still being able to distinguish all SR graphs of different parameters. %
We formalize the above statements in \Cref{prop:strongly_regular}.

\begin{proposition}\label{prop:strongly_regular}
Consider DS-GNN on SR graphs: %
(1) ND, EGO+, ED can distinguish any SR graphs of different parameters,
(2) ND, EGO+ cannot distinguish any SR graphs of the same parameters,
(3) ED can distinguish some SR graphs of the same parameters.
The EGO+ policies are at depth-$n$. 
Thus, for distinguishing SR graphs, ND and depth-$n$ EGO+ are as strong as 3-WL, while ED is strictly stronger than 3-WL.
\end{proposition}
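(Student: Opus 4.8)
\textbf{Proof proposal for \Cref{prop:strongly_regular}.}

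The plan is to treat the three claims separately, exploiting what DS-GNN with a 1-WL base encoder ``sees'' under each policy: by \Cref{thm:dssgnn_refines_dsswl} it suffices to reason at the level of the DS-WL color refinement, i.e.\ to ask whether the multiset of stable WL colorings of the subgraphs in the bag differs between two SR graphs. For part (1), I would fix two SR graphs $G, G'$ with \emph{different} parameter tuples $(n,k,\lambda,\mu)$ and observe that these parameters are recoverable from elementary invariants of any single subgraph in the ND, EGO+, or ED bag --- e.g.\ an ND subgraph has $n-1$ vertices, one vertex of degree $0$ and the rest of degree $k$ or $k-1$, so $n$ and $k$ are read off immediately; for a depth-$n$ EGO+ network the root's marked component is the whole connected graph, so its size $n$ and regularity degree $k$ are visible, and $\lambda, \mu$ follow from common-neighbor counts which 1-WL on the ego-net with a distinguished root can extract via the root's refined color. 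Even the crude invariant ``number of vertices'' or ``degree sequence'' of a subgraph already separates most differing parameter tuples, and for the remaining coincidences one uses $\lambda$ and $\mu$, which control the local WL colors. So part (1) reduces to a short bookkeeping argument that $(n,k,\lambda,\mu)$ is a function of the DS-WL bag-coloring.

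For part (2), the key fact is the classical one that \emph{any} two SR graphs with the same parameters have the property that every vertex-deleted subgraph looks WL-identical across the two graphs, and likewise every depth-$n$ ego-network (which is just the graph itself with one marked vertex) --- because a SR graph is vertex-transitive in the WL sense (1-WL gives a single color class), so marking one vertex yields the same stable coloring regardless of which SR graph of those parameters we started from; more precisely, I would invoke that 3-WL cannot distinguish SR graphs of equal parameters, combined with \Cref{prop:3wl_dss_gnn}-style reasoning showing DS-GNN with ND or depth-$n$ EGO+ and a 1-WL encoder is no stronger than 3-WL here. The cleanest route is: DS-WL with ND computes, for each graph, the multiset $\{\!\!\{\,\mathrm{WL}(G - v) : v \in V\,\}\!\!\}$; for SR graphs this multiset is determined by $(n,k,\lambda,\mu)$ alone (the $G-v$ are all isomorphic-looking to WL by a counting argument on common neighbors), hence equal for the two graphs. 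The same applies to depth-$n$ EGO+. Part (2) is therefore a statement that these bags are parameter-determined, which I would prove by a direct WL-stable-partition computation on $G-v$.

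For part (3), I would exhibit a concrete pair: the two SR$(16,6,2,2)$ graphs (the $4\times4$ rook's graph and the Shrikande graph), which are the standard smallest example, and show that their bags of \emph{edge}-deleted subgraphs differ under DS-WL. The mechanism is that deleting an edge $\{u,v\}$ breaks regularity at $u$ and $v$, and 1-WL run on $G - \{u,v\}$ can then propagate the asymmetry: the colors of $u$ and $v$ differ from the bulk, and their neighbors' refined colors encode how many common neighbors $u$ and $v$ had and how those are arranged --- information that differs between the rook and Shrikande graphs because, although both have $\lambda = \mu = 2$, the \emph{second-order} structure (e.g.\ whether the two common neighbors of an adjacent pair are themselves adjacent, or the structure of the subgraph induced on non-neighbors) is different. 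I would verify the WL stable colorings of $G - e$ for these two graphs differ --- this can be done by hand via a few rounds of refinement, or cited from the literature on these well-studied graphs --- and note that since \emph{every} SR$(16,6,2,2)$ edge-deleted subgraph gives the same color regardless of which edge (by edge-transitivity), the whole bag-multiset reduces to a single color, so the bags differ as soon as that single color differs. Finally, part (3) combined with part (1) gives ``ED strictly stronger than 3-WL for SR graphs,'' and parts (1)--(2) give ``ND and depth-$n$ EGO+ exactly as strong as 3-WL here.''

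The main obstacle I expect is part (3): proving rigorously that DS-WL on edge-deleted subgraphs actually \emph{separates} the rook/Shrikande pair, rather than merely plausibly doing so. One must track the WL refinement on the non-regular graph $G - e$ carefully enough to see a genuine discrepancy, and argue it does not wash out at stability; the cleanest defense is an explicit finite computation of the stable partitions (feasible since $n=16$), together with the edge-transitivity remark that collapses the bag to one representative. Parts (1) and (2) are comparatively routine invariant-bookkeeping and parameter-determinacy arguments.
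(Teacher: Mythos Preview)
Your proposal is correct and takes essentially the same approach as the paper: direct computation of the stable 1-WL coloring on each subgraph type, showing it is a function of $(n,k,\lambda,\mu)$ alone for ND, depth-$n$ EGO+, and ED (this handles parts (1) and (2) simultaneously), and for part (3) the explicit Rook--Shrikhande pair, collapsing each bag to a single representative via the isomorphism of all edge-deleted subgraphs and then tabulating the 1-WL refinement on one $G-e$ from each (the paper finds the colorings diverge at iteration~4).

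One caveat: your proposed shortcut for part (2) via ``\Cref{prop:3wl_dss_gnn}-style reasoning'' to upper-bound DS-GNN (1-WL encoder) by 3-WL does not go through --- that proposition is a \emph{lower} bound on DS-GNN with a \emph{3-WL} encoder, and the paper contains no 3-WL upper bound on DS-GNN with a 1-WL encoder. Stick with your ``cleanest route'' (direct parameter-determinacy of the stable WL partition of $G-v$ and of the marked ego-net), which is exactly what the paper does. A bookkeeping nit: under the paper's convention node deletion leaves an isolated vertex, so $G-v$ has $n$ vertices, not $n-1$; this does not affect your argument.
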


\section{Experiments}\label{sec:experiments}
\vspace{-5pt}

\begin{table}[t!]
    \vspace{-35pt}
    \centering
    \tiny
    \caption{TUDatasets. Gray background indicates that ESAN outperforms the base encoder. SoTA line reports results for the best performing model for each dataset.
    }
    \label{tab:tud_datasets}
    \resizebox{\linewidth}{!}{%
    \begin{tabular}{l | lllll | ll}
        \toprule
        Method $\downarrow$ / Dataset $\rightarrow$ & 
        \textsc{MUTAG} &
        \textsc{PTC} &
        \textsc{PROTEINS} &
        \textsc{NCI1} &
        \textsc{NCI109} &
        \textsc{IMDB-B} &
        \textsc{IMDB-M}
        \\
        \midrule

        \textsc{SoTA} & 
        92.7$\pm$6.1 &  
        68.2$\pm$7.2 &
        77.2$\pm$4.7 &
        83.6$\pm$1.4 &
        84.0$\pm$1.6 &
        77.8$\pm$3.3 & 
        54.3$\pm$3.3
        \\

        \midrule

        \textsc{GIN} \citep{xu2018how} & 
        89.4$\pm$5.6 & 
        64.6$\pm$7.0 & 
        76.2$\pm$2.8 &
        82.7$\pm$1.7 &
        82.2$\pm$1.6 &
        75.1$\pm$5.1 &
        52.3$\pm$2.8
        \\

        \midrule

         {\bf \textsc{DS-GNN (GIN) (ED)}}   & 
         \cellcolor{Gray}89.9$\pm$3.7 &
         \cellcolor{Gray}66.0$\pm$7.2 &
         \cellcolor{Gray}76.8$\pm$4.6 &
         \cellcolor{Gray}83.3$\pm$2.5 &
         \cellcolor{Gray}83.0$\pm$1.7 &
         \cellcolor{Gray}76.1$\pm$2.6 &
         \cellcolor{Gray}52.9$\pm$2.4 
         \\

        {\bf \textsc{DS-GNN (GIN) (ND)}}   & 
         \cellcolor{Gray}89.4$\pm$4.8 &
         \cellcolor{Gray}66.3$\pm$7.0 &
         \cellcolor{Gray}77.1$\pm$4.6 &
         \cellcolor{Gray}83.8$\pm$2.4 &
         \cellcolor{Gray}82.4$\pm$1.3 &
         \cellcolor{Gray}75.4$\pm$2.9 &
         \cellcolor{Gray}52.7$\pm$2.0
        \\

        {\bf \textsc{DS-GNN (GIN) (EGO)}}   & 
         \cellcolor{Gray}89.9$\pm$6.5 &
         \cellcolor{Gray}68.6$\pm$5.8 &
         \cellcolor{Gray}76.7$\pm$5.8 &
         81.4$\pm$0.7 &
         79.5$\pm$1.0 &
         \cellcolor{Gray}76.1$\pm$2.8 &
         \cellcolor{Gray}52.6$\pm$2.8
        \\
        
        {\bf \textsc{DS-GNN (GIN) (EGO+)}}   & 
         \cellcolor{Gray}91.0$\pm$4.8 &
         \cellcolor{Gray}68.7$\pm$7.0 &
         \cellcolor{Gray}76.7$\pm$4.4 &
         82.0$\pm$1.4 &
         80.3$\pm$0.9 &
         \cellcolor{Gray}77.1$\pm$2.6 &
         \cellcolor{Gray}53.2$\pm$2.8
        \\
        
        \midrule

        {\bf \textsc{DSS-GNN (GIN) (ED)}}   & 
         \cellcolor{Gray}91.0$\pm$4.8 &
         \cellcolor{Gray}66.6$\pm$7.3 &
         75.8$\pm$4.5 &
         \cellcolor{Gray}83.4$\pm$2.5 &
         \cellcolor{Gray}82.8$\pm$0.9 &
         \cellcolor{Gray}76.8$\pm$4.3 &
         \cellcolor{Gray}53.5$\pm$3.4
        \\

        {\bf \textsc{DSS-GNN (GIN) (ND)}}   & 
         \cellcolor{Gray}91.0$\pm$3.5 &
         \cellcolor{Gray}66.3$\pm$5.9 &
         76.1$\pm$3.4 &
         \cellcolor{Gray}83.6$\pm$1.5 &
         \cellcolor{Gray}83.1$\pm$0.8 &
         \cellcolor{Gray}76.1$\pm$2.9 &
         \cellcolor{Gray}53.3$\pm$1.9
        \\

        {\bf \textsc{DSS-GNN (GIN) (EGO)}}   & 
         \cellcolor{Gray}91.0$\pm$4.7 &
         \cellcolor{Gray}68.2$\pm$5.8 &
         \cellcolor{Gray}76.7$\pm$4.1 &
         \cellcolor{Gray}83.6$\pm$1.8 &
         \cellcolor{Gray}82.5$\pm$1.6 &
         \cellcolor{Gray}76.5$\pm$2.8 &
         \cellcolor{Gray}53.3$\pm$3.1
        \\
        
        {\bf \textsc{DSS-GNN (GIN) (EGO+)}}   & 
        \cellcolor{Gray}91.1$\pm$7.0 &
        \cellcolor{Gray}69.2$\pm$6.5 &
        75.9$\pm$4.3 &
        \cellcolor{Gray}83.7$\pm$1.8 &
        \cellcolor{Gray}82.8$\pm$1.2 &
        \cellcolor{Gray}77.1$\pm$3.0&
        \cellcolor{Gray}53.2$\pm$2.4
        \\

        \bottomrule
    \end{tabular}
    }
    \vspace{-5pt}
\end{table}

We perform an extensive set of experiments to answer five main questions: (1) \emph{Is our approach more expressive than the base graph encoders in practice?} (2) \emph{Can our approach achieve better performance on real benchmarks?} (3) \emph{How do the different subgraph selection policies compare?} (4) \emph{Can our efficient stochastic variant offer the same benefits as the full approach?} (5) \emph{Are there any empirical advantages to DSS-GNN versus DS-GNN?}. In the following, we report our main experimental results. We defer readers to \Cref{app:implementation,app:extra_exp} for additional details and results, including timings. An overall discussion of the experimental results is found at~\Cref{app:discussion}. Our code is also available.\footnote{\small \url{https://github.com/beabevi/ESAN}}

\textbf{Expressive power: EXP, CEXP, CSL.}
In an effort to answer question (1), we first tested the benefits of our framework on the synthetic \textsc{EXP}, \textsc{CEXP} \citep{abboud2020surprising}, and \textsc{CSL} datasets \citep{murphy2019relational, dwivedi2020benchmarking},
specifically designed so that any 1-WL GNN cannot do better than random guess ($50\%$ accuracy in \textsc{EXP}, $75\%$ in \textsc{CEXP}, $10\%$ in \textsc{CSL}). %
As for \textsc{EXP} and \textsc{CEXP}, we report the mean and standard deviation from 10-fold cross-validation in \Cref{tab:rni_datasets} in \Cref{app:implementation}. 
While GIN and GraphConv perform no better than a random guess, DS- and DSS-GNN perfectly solve the task when using these layers as base graph encoders, with all the policies working equally well.
\Cref{tab:rni_sampling} in \Cref{app:extra_exp} shows the performance of our stochastic variant. In all cases our approach yields almost perfect accuracy, where DSS-GNN works better than DS-GNN when sampling only 5\% of the subgraphs. On the \textsc{CSL} benchmark, both DS- and DSS-GNN achieve perfect performance for any considered 1-WL base encoder and policy, even stochastic ones. More details are found in \Cref{app:implementation} and \Cref{app:extra_exp}. 

\textbf{TUDatasets.}
We experimented with popular datasets from the TUD repository. We followed the widely-used hyperparameter selection and experimental procedure proposed by \citet{xu2018how}. %
Salient results are reported in Table~\ref{tab:tud_datasets} where the best performing method for each dataset is reported as SoTA; in the following we discuss the full set of results, reported in \Cref{tab:tud_datasets_full} in \Cref{app:implementation}. We observe that: 
(1) ESAN variants achieve excellent results with respect to SoTA methods: they score first, second and third in, respectively, two, four and one dataset out of seven;
(2) ESAN consistently improves over the base encoder, up to almost 5\% (DSS-GNN: 51/56 cases, DS-GNN: 42/56 cases, see gray background); 
(3) ESAN consistently outperforms other methods aimed at increasing the expressive power of GNNs, such as ID-GNN \citep{you2021identity} and RNI \citep{abboud2020surprising}. 
Particularly successful configurations are: (i) DSS-GNN (EGO+) + GraphConv/GIN;  (ii) DS-GNN (ND) + GIN. 
\Cref{tab:tud_datasets_sampling} in \Cref{app:extra_exp} compares the performance of our stochastic sampling variant with the base encoder (GIN) and the full approach. Our stochastic version provides a strong and efficient alternative, generally outperforming the base encoder. Intriguingly, stochastic sampling occasionally outperforms even the full approach.%

\begin{wraptable}[32]{r}{0.5\textwidth}
  \vspace{-10pt}
  \caption{Test results for OGB datasets. Gray background indicates that ESAN outperforms the base encoder. %
  }
  \label{tab:ogbg-hiv-baselines}
  \vspace{-5pt}
      \tiny
  \begin{tabular}{l|c|c}
      \toprule
      & \textsc{ogbg-molhiv} & \textsc{ogbg-moltox21} \\
        \textbf{Method} & \textbf{ROC-AUC (\%)} & \textbf{ ROC-AUC (\%)}\\
      \midrule
        
        {\textsc{GIN} \citep{xu2018how}} & 75.58$\pm$1.40 & 74.91$\pm$0.51 \\
        \midrule
        {\bf \textsc{DS-GNN (GIN) (ED)}} & \cellcolor{Gray}76.43$\pm$2.12 & \cellcolor{Gray}75.12$\pm$0.50 \\
        {\bf \textsc{DS-GNN (GIN) (ND)}} & \cellcolor{Gray}76.19$\pm$0.96 & \cellcolor{Gray}75.34$\pm$1.21 \\
        {\bf \textsc{DS-GNN (GIN) (EGO)}} & \cellcolor{Gray}78.00$\pm$1.42 & \cellcolor{Gray}76.22$\pm$0.62 \\
        {\bf \textsc{DS-GNN (GIN) (EGO+)}} & \cellcolor{Gray}77.40$\pm$2.19 & \cellcolor{Gray}76.39$\pm$1.18 \\
        \midrule
        {\bf \textsc{DSS-GNN (GIN) (ED)}} & \cellcolor{Gray}77.03$\pm$1.81 &  \cellcolor{Gray}76.71$\pm$0.67\\
        {\bf \textsc{DSS-GNN (GIN) (ND)}} & \cellcolor{Gray}76.63$\pm$1.52 & \cellcolor{Gray}77.21$\pm$0.70 \\
        {\bf \textsc{DSS-GNN (GIN) (EGO)}}  & \cellcolor{Gray}77.19$\pm$1.27 & \cellcolor{Gray}77.45$\pm$0.41 \\
        {\bf \textsc{DSS-GNN (GIN) (EGO+)}} & \cellcolor{Gray}76.78$\pm$1.66 & \cellcolor{Gray}77.95$\pm$0.40 \\
      \bottomrule
    \end{tabular}
\vspace{18pt}
    \centering
    \scriptsize
    \caption{\revision{ZINC12k dataset. Gray background indicates that ESAN outperforms the base encoder.}} %
    \label{tab:zinc}
    \begin{tabular}{ >{\color{black}}l|  >{\color{black}}l}
    \toprule
        Method & \textsc{ZINC (MAE $\downarrow$)} \\
        \midrule

        \textsc{PNA}~\citep{corso2020pna} & 0.188$\pm$0.004 \\ 
        \textsc{DGN}~\citep{beaini2021directional} & 0.168$\pm$0.003 \\
        \textsc{SMP}~\citep{vignac2020building} & 0.138$\pm$? \\ 
        \textsc{GIN}~\citep{xu2018how} & 0.252$\pm$0.017 \\
        \midrule
        \textsc{HIMP}~\citep{Fey/etal/2020} & 0.151$\pm$0.006 \\ 
        \textsc{GSN}~\citep{bouritsas2020improving} & 0.108$\pm$0.018 \\ 
        \textsc{CIN-small}~\citep{bodnar2021weisfeiler2} & 0.094$\pm$0.004 \\
        \midrule
        \textsc{\bf DS-GNN (GIN) (ED)} & \cellcolor{Gray}0.172$\pm$0.008 \\
        \textsc{\bf DS-GNN (GIN) (ND)} & \cellcolor{Gray}0.171$\pm$0.010 \\ 
        \textsc{\bf DS-GNN (GIN) (EGO)} & \cellcolor{Gray}0.126$\pm$0.006 \\
        \textsc{\bf DS-GNN (GIN) (EGO+)} & \cellcolor{Gray}0.116$\pm$0.009 \\ 
        \midrule
        \textsc{\bf DSS-GNN (GIN) (ED)} & \cellcolor{Gray}0.172$\pm$0.005 \\ 
        \textsc{\bf DSS-GNN (GIN) (ND)} & \cellcolor{Gray}0.166$\pm$0.004 \\
        \textsc{\bf DSS-GNN (GIN) (EGO)} & \cellcolor{Gray}0.107$\pm$0.005 \\
        \textsc{\bf DSS-GNN (GIN) (EGO+)} & \cellcolor{Gray}0.102$\pm$0.003 \\ 
        \bottomrule
    \end{tabular}

\end{wraptable}

\textbf{OGB.} We tested the ESAN framework on the \textsc{ogbg-molhiv} and \textsc{ogbg-moltox21} datasets from \cite{hu2020open} using the scaffold splits. %
Table \ref{tab:ogbg-hiv-baselines} shows the results of all ESAN variants to the performance of its GIN base encoder, while in~\Cref{tab:ogbg-hiv-complete} in \Cref{app:implementation} we report results obtained when employing a GCN base encoder. When using GIN, all ESAN variants improve the base encoder accuracy by up to \revision{2.4\%} on \textsc{ogbg-molhiv} and \revision{3\%} on \textsc{ogbg-moltox21}. In addition, DSS-GNN performs better that DS-GNN with GCN. In general, DSS-GNN improves the base encoder in 14/16 cases, whereas a DS-GNN improves it in 10/16 cases. Finally, \Cref{tab:ogb_datasets_sampling} in \Cref{app:extra_exp} shows the performance of our stochastic variant.

\textbf{ZINC12k.} 
We further experimented with an additional large scale molecular benchmark: \textsc{ZINC12k}~\citep{ZINCdataset, gomez2018auto,dwivedi2020benchmarking}, where, as prescribed, we impose a $100$k parameter budget. 
We observe from \Cref{tab:zinc} that all ESAN configurations significantly outperform their base GIN encoder. 
In general, our method achieves particularly competitive results, irrespective of the chosen subgraph selection policy. It always outperforms the provably expressive PNA model~\citep{corso2020pna}, while other expressive approaches~\citep{beaini2021directional,vignac2020building} are outperformed when our model is equipped with EGO(+) policies. 
With the same policy, ESAN also outperforms HIMP~\citep{Fey/etal/2020}, and GSN~\citep{bouritsas2020improving} which explicitly employ information from rings and their counts, a predictive signal on this task~\citep{bodnar2021weisfeiler2}. 
In conclusion, ESAN is the best performing model amongst all provably expressive, domain agnostic GNNs, while being competitive with and often of superior performance than (provably expressive) GNNs which employ domain specific structural information.

\section{Conclusion}
\vspace{-10pt}
We have presented ESAN, a novel framework for learning graph-structured data, based on decomposing a graph into a set of subgraphs and processing this set in a manner respecting its symmetry structure. We have shown that ESAN increases the expressive power of several existing graph learning architectures and performs well on multiple graph classification benchmarks. The main limitation of our approach is its increased computational complexity with respect to standard MPNNs. We have proposed a more efficient stochastic version to mitigate this limitation and demonstrated that it performs well in practice. 
In follow-up research, we plan to investigate the following directions: 
(1) Is it possible to {\em learn} useful subgraph selection policies? %
(2) Can higher order structures on subgraphs (e.g., a graph of subgraphs) be leveraged for better expressive power? (3) A theoretical analysis of the stochastic version and of different policies/aggregation functions.

\subsubsection*{Acknowledgements}

We would like to thank all those people involved in the 2021 London Geometry and Machine Learning Summer School (LOGML), where this research project started. In particular, we would like to express our gratitude to the members of the organizing committee: Mehdi Bahri, Guadalupe Gonzalez, Tim King, Dr Momchil Konstantinov and Daniel Platt. We would also like to thank Stefanie Jegelka, Cristian Bodnar, and Yusu Wang for helpful discussions. MB is supported in part by ERC Consolidator grant no 724228 (LEMAN).

\bibliography{iclr2021_conference}
\bibliographystyle{iclr2021_conference}
\newpage

\appendix

\section{Background: learning on sets and graphs}\label{app:prelim}
In this section we present a brief introduction to GNNs, equivariance and set learning.
\paragraph{Graph neural networks.} 
GNNs are a type of neural networks that process graphs as input. Their roots can be traced back to the field of computational chemistry \cite{kireev1995chemnet,baskin1997neural} and the pioneering works of \cite{sperduti1994encoding,goller1996learning,sperduti1997supervised,gori2005new,scarselli2008graph}. In recent years, GNNs have become a major research direction and  are used in many application fields such as social network analysis \citep{fan2019graph}, molecular biology \citep{jin2018junction}, physics~\citep{shlomi2020graph,cai2021equivariant} and computer vision~\citep{Johnson_2018_CVPR}. Over the years, several ideas have served as inspiration for GNNs, including spectral graph theory \citep{bruna2013spectral,bronstein2017geometric} and equivariance \citep{maron2018invariant,bronstein2021geometric}. Message-passing neural networks (MPNNs) \citep{gilmer2017neural,morris2019weisfeiler,xu2018how} have been the most popular approach to this problem. These models are inspired by convolutional networks on images because of their locality and stationary characteristics. 
These networks are composed of several layers, which update node (and edge, if available) features based on the local graph connectivity. There are several ways to define a message passing layer.  For example, the following is a popular formulation \citep{morris2019weisfeiler}:
\begin{equation}\label{eq:morris}
    x^{t}_v=W_1^{t}x^{t-1}_v+W_2^{t}\sum_{u\sim v}x^{t-1}_u + b^t  
\end{equation}
Here, $x^{t}_v\in R^{d_t} $ is the feature associated with a node $v$ after after applying the $t$-th  layer, $W_i\in R^{d_t\times d_{t-1}}, b^t\in R^{d_t}$ are the learnable parameters, and $u\sim v$ means that the nodes are adjacent.

\textbf{Expressive power of GNNs.} 
There is a particular interest in the expressive power of GNNs \citep{sato2020survey}. The seminal works of \cite{morris2019weisfeiler, xu2018how} analyzed the expressive power of MPNNs and showed that their ability to distinguish graphs is the same as the first version of the WL graph isomorphism test introduced by  \citet{weisfeiler1968reduction}. Importantly, this test cannot distinguish rather simple pairs of graphs (see Figure \ref{fig:teaser}). In general, WL tests are a polynomial time hierarchy of graph isomorphism tests: for $k>k'$, $k$-WL is strictly more powerful than $k'$-WL \citep{cai1992optimal}, but requires more time and space complexity which is exponential in $k$. These works inspired a plethora of novel GNNs with improved expressive power \citep{maron2019provably,bouritsas2020improving,bodnar2021weisfeiler,bodnar2021weisfeiler2,vignac2020building,murphy2019relational,chen2019equivalence}. %
See Appendix \ref{sec:related} for a detailed discussion.

\textbf{Invariance and equivariance} 
The notion of symmetry, expressed through the action of a group on signals defined on some domain, is a key concept of `geometric deep learning', allowing to derive from first principles many modern deep representation learning architectures  \citep{bronstein2021geometric}. 
Let $H$ be a group acting on vector spaces $V,W$. We say that a function $f:V\rightarrow \mathbb{R}$ is $H$-{\em invariant} if $f(h\cdot x)=f(x)$ for all $x\in V,~ h\in H$. Similarly, a function $f:V\rightarrow W$ is {\em equivariant} if the function commutes with the group action, namely $f(h\cdot x)=h\cdot f(x)$ for all $x\in V,~ h\in H$. Constructing invariant and equivariant learning models has become a central design paradigm in machine learning, see e.g.,  \cite{cohen2016group,ravanbakhsh2017equivariance,kondor2018generalization,esteves2018learning}.
In particular, graphs typically do not have a canonical ordering of nodes, forcing GNN layers to be {\em permutation-equivariant}. 

\textbf{Deep learning on Sets.} 
Sets can be considered as a special case of graphs with no edges, and learning on set-structured data has recently become an important topic with applications such as computer graphics and vision dealing with 3D point clouds \citep{qi2017pointnet}. A set of $n$ items can be encoded in an $n\times d$ feature matrix $X$,  %
and since the order of rows is arbitrary, one is 
interested in functions that are invariant or equivariant under element permutations. The  works of \cite{zaheer2017deep,ravanbakhsh2016deep,qi2017pointnet} were the first to suggest universal architectures for set data. They are based on element-wise functions and global pooling operations. For example, a Deep Sets layer takes the form 
\begin{equation}\label{eq:ds_layer}
    x^{t}_i=W_1^{t}x^{t-1}_i+W_2^{t}\sum_{j=1}^n x^{t-1}_j + b^t
\end{equation}

Here, $x^{t}_i$ 
is the feature of the $i$-th set element after applying layer $t$.
Alternatively, this layer can be described as a message passing layer on a complete graph with \Eqref{eq:morris}. 

Recently, \cite{maron2020learning} suggested the \emph{Deep Sets for Symmetric elements} framework (DSS), a generalization of Deep Sets to a learning setup in which each element has its own symmetry structure, represented by another group $G$  %
acting on the $d$-dimensional vectors in the input (the rows of $X$), such as translations in the case of images. The paper characterizes the space of linear equivariant layers which boil down to replacing the unconstrained fully-connected matrices $W_i$ in the equation above with corresponding linear equivariant layers, like convolutions in the case of images. We observe that when $W_2=0$, we obtain a simple Siamese network. Importantly, the paper shows that when $W_2\neq 0$, the network exhibits a greater expressive power. Intuitively this is due its enhanced information-sharing abilities within the set.

\section{Relation to previous works}\label{sec:related}
\textbf{Processing subgraphs for graph learning.} 
Our approach is related to multiple techniques in graph learning. %
Introduced for semi-supervised node-classification tasks, DropEdge \citep{rong2019dropedge} can be considered as a stochastic version of the ED policy that processes one single edge-deleted subgraph at a time. %
Ego-GNNs~\citep{sandfelder2021ego} resemble our EGO policy, as messages are passed within each node's ego-net, and are aggregated from each ego-net that a node is contained in.
ID-GNNs \citep{you2021identity} resemble the EGO+ policy, the difference being the way in which the root node is identified and the way the information between the ego-nets is combined. RNP-GNNs \citep{tahmasebi2020counting} learn functions over recursively-defined node-deleted ego-nets with augmented features, which allows for high expressive power for counting substructures and computing other local functions.
\revision{FactorGCN~\citep{yang2020factorizable} is a model that disentangles relations in graph data by learning subgraphs containing edges that capture latent relationships. The graph automorphic equivalence network~\citep{xu2020graph} compares ego-nets to subgraph templates in a GNN architecture.}
Furthermore, \cite{hamilton2017inductive} introduce a minibatching procedure that samples neighborhoods for each GNN layer --- in large part for the sake of efficiently processing large graphs. Other works like Cluster-GCN~\citep{chiang2019cluster} and GraphSAINT~\citep{zeng2020graphsaint} have followed by developing methods for sampling whole subgraphs at a time, and then processing these subgraphs with a GNN.

\textbf{Provably expressive GNNs.} 
The inability of WL to disambiguate even simple graph pairs has recently sparked particular interest in provably expressive GNNs. A fruitful streamline of works~\citep{morris2019weisfeiler,morris2019weisfeiler2,kondor2018covariant,maron2018invariant,maron2019provably,keriven2019universal,azizian2020expressive} has proposed architectures aligning to the k-WL hierarchy; the expressiveness of these models is well characterized in terms of this last, but these guarantees come at the cost of a generally intractable computational cost for $k \geq 3$. Additionally, most of these architectures lack an important feature contributing to the recent success of GNNs: \emph{locality} of computation.
In an effort to maintain the inductive bias of locality, recent works have proposed to provably enhance the expressiveness of GNNs by augmenting node features with random identifiers~\citep{sato2021random, loukas2019what, abboud2020surprising, puny2020global} or auxiliary colourings~\citep{dasoulas2019coloring}. Although of scalable and immediate implementation, these approaches may not preserve equivariance w.r.t.\ node permutations and random features in particular have been observed to under-perform in generalising outside of training data.

A family of works focused on building local and equivariant provably powerful architectures and, for this reason, is especially related to our approach. Of particular relevance are works which argued for the use of (higher-order) structural information. Structural node identifiers obtained as subgraph isomorphism counting were introduced in~\citet{bouritsas2020improving}. Alternatively, \citet{srinivasan2019equivalence, de2020natural, thiede2021autobahn} argue for the use the automorphism group of edges and substructures to obtain more expressive representations. These features cannot be computed by standard MPNNs~\citep{zhengdao2020can} for substructures different than trees and they can provably enhance the expressiveness of standard MPNNs while retaining their local inductive bias. 
\citet{bodnar2021weisfeiler, bodnar2021weisfeiler2} introduced the idea of lifting graphs to more general topological spaces such as simplicial and cellular complexes. Higher-order structures are first-class objects thereon and message passing is directly performed on them. 
In general, listing (or counting) substructures has an intractable worst-case complexity and the design of the substructure bank may be difficult in certain cases. This makes these models of less obvious application outside of the molecular and social domain, where the importance of specific substructures is not well understood and their number may rapidly grow. 

Our framework, on the contrary, attains locality, equivariance and provable expressiveness with scalable and non-domain specific policies. Any of the ED, ND, and EGO(+) policies make our approach strictly more powerful than standard MPNNs (see Section~\ref{app:wl_proofs}), and the generation of the bags is computationally tractable, independently of the graph distribution at hand (more in Section~\ref{sec:complexity}). Either way, we remark that the aforementioned works are compatible with our overall framework, and can be employed to parameterize the $E_{\text{subgraphs}}$ module in those cases where our approach may benefit from provably powerful and domain-specific base graph encoders.

As a final note, we would like to point out the presence of two recent works which focused on the problem of automatically extracting relevant graph substructures in an automated way. \citet{bouritsas2021partition} propose a model that can be trained to partition graphs into substructures for lossless compression; \citet{chen2021learning} show how the model introduced in~\citet{zhengdao2020can} can achieve competitive results while retaining interpretability by the detection of the graph substructures employed by such model to make its decisions. These approaches may potentially be used to design more advanced subgraph selection policies.

\revision{
\subsection{Comparison to Reconstruction GNNs}

Contemporaneous work by \cite{cotta2021reconstruction} takes motivation from the node reconstruction conjecture \citep{kelly1957congruence} in developing a model that processes node-deleted subgraphs of a given graph by independent MPNNs. Thus, there are some connections between our work and theirs. Nevertheless, we develop a more general framework; in fact, the Reconstruction GNN \citep{cotta2021reconstruction} is exactly our DS-GNN with a $k$-node-deletion subgraph selection policy. There are multiple substantial differences between these works, which we discuss below.

\textbf{Architecture for processing bags of subgraphs:} In the terminology of our paper, \cite{cotta2021reconstruction} use a DS-GNN architecture, while we perform a rigorous and more specific analysis of equivariance, and argue in favor of a different and more powerful DSS-GNN architecture. We show that DS-GNN is theoretically too restrictive in terms of the invariance it enforces (see the DS-GNN paragraph in Section~\ref{sec:bag_architecture}) and we prove that DSS-GNN is strictly stronger than DS-GNN for certain policies (see Proposition~\ref{prop:dssgnn_strictly_more_powerful_than_dsgnn}).
Moreover, our DSS-GNN achieves better empirical results than DS-GNN in most of the tested settings, so this theoretical benefit leads to practical gains.

\textbf{General subgraph policies:} We advocate general subgraph policies, four of which are studied in the paper (node-deletion, edge-deletion, ego-nets and ego-nets+). There are many other possible subgraph policies that fit into our framework, such as those based on trees. In Section \ref{sec:experiments}, we show that the node-deletion policy 
is not the optimal subgraph selection policy in many practical settings (e.g., on the \textsc{ZINC12k} dataset, the EGO and EGO+ policies perform significantly better).

\textbf{More general theoretical analysis:} We provide a more general theoretical analysis. First, since we consider general subgraph selection policies,
we are able to prove results for many policies in the same way (e.g. as in Proposition~\ref{prop:strongly_regular}), not just for node-deletion policies.
Second, we consider higher-order graph encoders in our general framework, whereas \cite{cotta2021reconstruction} considers MPNNs and
universal feed forward networks. We prove results on the integration of higher-order encoders in our ESAN framework in Proposition~\ref{prop:3wl_dss_gnn}. 

\textbf{Formulation of new WL variants:} We also develop and analyze new variants of the Weisfeiler-Lehman test (DS-WL and DSS-WL). We then characterize the expressive power of DS-GNN and DSS-GNN in terms of these variants (see Theorem~\ref{thm:dssgnn_refines_dsswl}), which helps us to prove other results in Section~\ref{sec:expressiveness}, and will help in proofs for future work. In comparison, there is no such WL variant developed by \cite{cotta2021reconstruction}.

\textbf{Basic motivation:} The paper of \cite{cotta2021reconstruction} is mainly motivated by the Reconstruction Conjecture in graph theory. In contrast, our paper is motivated by the recent work by \cite{maron2020learning} on architectures that are equivariant to product symmetry groups arising when dealing with sets of objects containing internal symmetries.

}

\section{Experiments: further details and discussion}
\begin{table}[t!]
    \vspace{-35pt}
    \centering
    \tiny
    \caption{TUDatasets. The top three are highlighted by \textbf{\textcolor{red}{First}}, \textbf{\textcolor{violet}{Second}}, \textbf{Third}. Gray background indicates that ESAN outperforms the base encoder.}
    \label{tab:tud_datasets_full}
    \resizebox{\linewidth}{!}{%
    \begin{tabular}{l | lllll | ll}
        \toprule
        Dataset & 
        \textsc{MUTAG} &
        \textsc{PTC} &
        \textsc{PROTEINS} &
        \textsc{NCI1} &
        \textsc{NCI109} &
        \textsc{IMDB-B} &
        \textsc{IMDB-M}
        \\
        \midrule       
        \textsc{RWK} \citep{gartner2003graph} & 
         79.2$\pm$2.1 & 
         55.9$\pm$0.3 & 
         59.6$\pm$0.1&
         $>$3 days &
         N/A &
         N/A &
         N/A
         \\
 
        \textsc{GK ($k=3$)} \citep{shervashidze2009efficient} &
        81.4$\pm$1.7 & 
        55.7$\pm$0.5 &
        71.4$\pm$0.3 &
        62.5$\pm$0.3 &
        62.4$\pm$0.3 &
        N/A &
        N/A
        \\

        \textsc{PK} \citep{neumann2016propagation} & 
         76.0$\pm$2.7 & 
         59.5$\pm$2.4 &
         73.7$\pm$0.7 &
         82.5$\pm$0.5 &
         N/A &
         N/A &
         N/A
         \\

        \textsc{WL kernel} \citep{shervashidze2011weisfeiler} &
          90.4$\pm$5.7 & 
          59.9$\pm$4.3 & 
          75.0$\pm$3.1 &
          \first{86.0}$\pm$1.8 &
          N/A &
          73.8$\pm$3.9 &
          50.9$\pm$3.8
          \\

        \midrule
         
        \textsc{DCNN} \citep{DCNN_2016} & 
        N/A &  
        N/A &
        61.3$\pm$1.6 &
        56.6$\pm$1.0 &
        N/A &
        49.1$\pm$1.4 &
        33.5$\pm$1.4
        \\

        \textsc{DGCNN} \citep{zhang2018end} & 
        85.8$\pm$1.8 & 
        58.6$\pm$2.5 & 
        75.5$\pm$0.9 &
        74.4$\pm$0.5 &
        N/A &
        70.0$\pm$0.9 & 
        47.8$\pm$0.9
        \\
        
        \textsc{IGN} \citep{maron2018invariant} &
        83.9$\pm$13.0 &
        58.5$\pm$6.9 &
        76.6$\pm$5.5 &
        74.3$\pm$2.7 &
        72.8$\pm$1.5 &
        72.0$\pm$5.5 & 
        48.7$\pm$3.4 
        \\

        \textsc{PPGNs} \citep{maron2019provably} &
        90.6$\pm$8.7 &
        66.2$\pm$6.6 &
        \second{77.2}$\pm$4.7 &
        83.2$\pm$1.1 &
        82.2$\pm$1.4 &
        73.0$\pm$5.8 & 
        50.5$\pm$3.6
         \\

        \textsc{Natural GN} \citep{de2020natural} &
        89.4$\pm$1.6 &
        \third{66.8}$\pm$1.7 &
        71.7$\pm$1.0 &
        82.4$\pm$1.3 &
        N/A &
        73.5$\pm$2.0 &
        51.3$\pm$1.5
        \\

        \textsc{GSN} \citep{bouritsas2020improving} &
        \second{92.2}$\pm$7.5 &
        \second{68.2}$\pm$7.2 & 
        76.6$\pm$5.0 &
        83.5$\pm$2.0 &
        N/A &
        \first{77.8}$\pm$3.3 & 
        \first{54.3}$\pm$3.3
        \\
        
        \textsc{SIN} \citep{bodnar2021weisfeiler} & 
        N/A  &
        N/A & 
        76.4$\pm$3.3 &
        82.7$\pm$2.1 &
        N/A &
        75.6$\pm$3.2 & 
        52.4$\pm$2.9
        \\

        \textsc{CIN} \citep{bodnar2021weisfeiler2} & 
        \first{92.7}$\pm$6.1 &
        \second{68.2}$\pm$5.6 &
        \third{77.0}$\pm$4.3 &
        \third{83.6}$\pm$1.4 &
        \first{84.0}$\pm$1.6 &
        75.6$\pm$3.7 & 
        52.7$\pm$3.1
        \\

        \midrule

        \textsc{GIN} \citep{xu2018how} & 
        89.4$\pm$5.6 & 
        64.6$\pm$7.0 & 
        76.2$\pm$2.8 &
        82.7$\pm$1.7 &
        82.2$\pm$1.6 &
        75.1$\pm$5.1 &
        52.3$\pm$2.8
        \\
        
        \midrule        
        \textsc{GIN + ID-GNN} \citep{you2021identity} & 
        90.4$\pm$5.4 & 
        67.2$\pm$4.3 & 
        75.4$\pm$2.7 &
        82.6$\pm$1.6 &
        82.1$\pm$1.5 &
        76.0$\pm$2.7 &
        52.7$\pm$4.2
        
        \\

        \midrule
        
        \textsc{DropEdge} (\cite{rong2019dropedge}) & 
        91.0$\pm$5.7 & 
        64.5$\pm$2.6 & 
        73.5$\pm$4.5 &
         82.0$\pm$2.6 &
        82.2$\pm$1.4 &
        \third{76.5}$\pm$ 3.3 &
        52.8$\pm$ 2.8 
        
        \\

        \midrule

         {\bf \textsc{DS-GNN (GIN) (ED)}}   & 
         \cellcolor{Gray}89.9$\pm$3.7 &
         \cellcolor{Gray}66.0$\pm$7.2 &
         \cellcolor{Gray}76.8$\pm$4.6 &
         \cellcolor{Gray}83.3$\pm$2.5 &
         \cellcolor{Gray}83.0$\pm$1.7 &
         \cellcolor{Gray}76.1$\pm$2.6 &
         \cellcolor{Gray}52.9$\pm$2.4 
         \\

        {\bf \textsc{DS-GNN (GIN) (ND)}}   & 
         \cellcolor{Gray}89.4$\pm$4.8 &
         \cellcolor{Gray}66.3$\pm$7.0 &
         \cellcolor{Gray}77.1$\pm$4.6 &
         \cellcolor{Gray}\second{83.8}$\pm$2.4 &
         \cellcolor{Gray}82.4$\pm$1.3 &
         \cellcolor{Gray}75.4$\pm$2.9 &
         \cellcolor{Gray}52.7$\pm$2.0
        \\

        {\bf \textsc{DS-GNN (GIN) (EGO)}}   & 
         \cellcolor{Gray}89.9$\pm$6.5 &
         \cellcolor{Gray}68.6$\pm$5.8 &
         \cellcolor{Gray}76.7$\pm$5.8 &
         81.4$\pm$0.7 &
         79.5$\pm$1.0 &
         \cellcolor{Gray}76.1$\pm$2.8 &
         \cellcolor{Gray}52.6$\pm$2.8
        \\
        
        {\bf \textsc{DS-GNN (GIN) (EGO+)}}   & 
         \cellcolor{Gray}91.0$\pm$4.8 &
         \cellcolor{Gray}68.7$\pm$7.0 &
         \cellcolor{Gray}76.7$\pm$4.4 &
         82.0$\pm$1.4 &
         80.3$\pm$0.9 &
         \cellcolor{Gray}\second{77.1}$\pm$2.6 &
         \cellcolor{Gray}53.2$\pm$2.8
        \\
        
        \midrule

         \color{black} {\bf \textsc{DS-GNN (GIN) ($\mathbf{\widehat{\textsc{ED}}}$)}}   & 
         \cellcolor{Gray}90.5$\pm$5.8&
         \cellcolor{Gray}66.0$\pm$6.4&
         \cellcolor{Gray}77.0$\pm$3.2&
         \cellcolor{Gray}82.8$\pm$1.1&
         \cellcolor{Gray}82.9$\pm$1.2&
         \cellcolor{Gray}76.2$\pm$2.4&
         \cellcolor{Gray}52.9$\pm$2.8 
         \\

        \color{black} {\bf \textsc{DS-GNN (GIN) ($\mathbf{\widehat{\textsc{ND}}}$)}}   & 
         \cellcolor{Gray}90.0$\pm$6.0&
         \cellcolor{Gray}66.0$\pm$8.7&
         \cellcolor{Gray}\first{77.3}$\pm$3.8&
         \cellcolor{Gray}83.0$\pm$1.9&
         \cellcolor{Gray}82.3$\pm$0.7&
         \cellcolor{Gray}75.5$\pm$2.2&
         \cellcolor{Gray}52.6$\pm$2.9 
        \\

        \color{black} {\bf \textsc{DS-GNN (GIN) ($\mathbf{\widehat{\textsc{EGO}}}$)}}   & 
         \cellcolor{Gray}91.0$\pm$5.3&
         \cellcolor{Gray}68.4$\pm$7.4&
         \cellcolor{Gray}76.7$\pm$4.4&
         81.5$\pm$1.1&
         79.6$\pm$1.5&
         \cellcolor{Gray}76.6$\pm$2.5&
         \cellcolor{Gray}52.6$\pm$2.8 
        \\
        
        \color{black} {\bf \textsc{DS-GNN (GIN) ($\mathbf{\widehat{\textsc{EGO}}}$+)}}   & 
         \cellcolor{Gray}90.5$\pm$4.7&
         \cellcolor{Gray}68.1$\pm$7.8 &
         \cellcolor{Gray}76.6$\pm$4.0&
         82.0$\pm$1.2&
         80.4$\pm$1.4&
         \cellcolor{Gray}76.8$\pm$2.7&
         \cellcolor{Gray}53.3$\pm$2.1 
        \\
        
        \midrule

        {\bf \textsc{DSS-GNN (GIN) (ED)}}   & 
         \cellcolor{Gray}91.0$\pm$4.8 &
         \cellcolor{Gray}66.6$\pm$7.3 &
         75.8$\pm$4.5 &
         \cellcolor{Gray}83.4$\pm$2.5 &
         \cellcolor{Gray}82.8$\pm$0.9 &
         \cellcolor{Gray}76.8$\pm$4.3 &
         \cellcolor{Gray}53.5$\pm$3.4
        \\

        {\bf \textsc{DSS-GNN (GIN) (ND)}}   & 
         \cellcolor{Gray}91.0$\pm$3.5 &
         \cellcolor{Gray}66.3$\pm$5.9 &
         76.1$\pm$3.4 &
         \cellcolor{Gray}83.6$\pm$1.5 &
         \cellcolor{Gray}\second{83.1}$\pm$0.8 &
         \cellcolor{Gray}76.1$\pm$2.9 &
         \cellcolor{Gray}53.3$\pm$1.9
        \\

        {\bf \textsc{DSS-GNN (GIN) (EGO)}}   & 
         \cellcolor{Gray}91.0$\pm$4.7 &
         \cellcolor{Gray}68.2$\pm$5.8 &
         \cellcolor{Gray}76.7$\pm$4.1 &
         \cellcolor{Gray}83.6$\pm$1.8 &
         \cellcolor{Gray}82.5$\pm$1.6 &
         \cellcolor{Gray}76.5$\pm$2.8 &
         \cellcolor{Gray}53.3$\pm$3.1
        \\
        
        {\bf \textsc{DSS-GNN (GIN) (EGO+)}}   & 
        \cellcolor{Gray}91.1$\pm$7.0 &
        \cellcolor{Gray}\first{69.2}$\pm$6.5 &
        75.9$\pm$4.3 &
        \cellcolor{Gray}83.7$\pm$1.8 &
        \cellcolor{Gray}82.8$\pm$1.2 &
        \cellcolor{Gray}\second{77.1}$\pm$3.0&
        \cellcolor{Gray}53.2$\pm$2.4
        \\

        \midrule

        \textsc{GraphConv} \citep{morris2019weisfeiler}  & 
         90.5$\pm$4.6 &
         64.9$\pm$10.4 &
         73.9$\pm$6.1&
         82.4$\pm$2.7 &
         81.7$\pm$1.0 &
         76.1$\pm$3.9 &
         \third{53.1}$\pm$2.9
        \\

        \midrule
        
        \textsc{GraphConv + ID-GNN} \citep{you2021identity} & 
        89.4$\pm$4.1 &
        65.4$\pm$7.1 &
        71.9$\pm$4.6 &
        83.4$\pm$2.4 &
        \third{82.9}$\pm$1.2 &
        76.1$\pm$2.5 &
        \second{53.7}$\pm$3.3 
        \\
        
        \midrule

        \textsc{RNI} \citep{abboud2020surprising}  & 
         91.0$\pm$4.9 &
         64.3$\pm$6.1 &
         73.3$\pm$3.3 &
         82.1$\pm$1.7 &
         81.7$\pm$1.0 &
         75.5$\pm$3.3 &
         \third{53.1}$\pm$1.9
        \\

        \midrule

        {\bf \textsc{DS-GNN (GraphConv) (ED)}}  & 
         90.4$\pm$4.1 &
         \cellcolor{Gray}65.7$\pm$5.2 &
         \cellcolor{Gray}76.3$\pm$5.2 &
         \cellcolor{Gray}82.7$\pm$1.9 &
         \cellcolor{Gray}82.4$\pm$1.5 &
         75.3$\pm$2.3 &
         \cellcolor{Gray}53.5$\pm$2.3
        \\

        {\bf \textsc{DS-GNN (GraphConv) (ND)}}  & 
         88.3$\pm$5.1 &
         \cellcolor{Gray}66.6$\pm$7.8 &
         \cellcolor{Gray}76.8$\pm$3.9 &
         \cellcolor{Gray}82.9$\pm$2.5 &
         \cellcolor{Gray}82.7$\pm$1.3 &
         75.7$\pm$2.9 &
         \cellcolor{Gray}53.5$\pm$2.1
        \\

        {\bf \textsc{DS-GNN (GraphConv) (EGO)}} & 
         89.4$\pm$5.4 &
         \cellcolor{Gray}66.6$\pm$6.5 &
         \cellcolor{Gray}76.7$\pm$5.4 &
         81.3$\pm$1.9 &
         79.6$\pm$2.0 &
         \cellcolor{Gray}76.6$\pm$4.0 &
         \cellcolor{Gray}53.1$\pm$1.5
        \\
        
        {\bf \textsc{DS-GNN (GraphConv) (EGO+)}}   & 
         90.4$\pm$5.8 &
         \cellcolor{Gray}67.4$\pm$4.7 &
         \cellcolor{Gray}76.8$\pm$4.3 &
         \cellcolor{Gray}82.8$\pm$2.5 &
         80.6$\pm$1.3 &
         76.0$\pm$1.6 &
         \cellcolor{Gray}53.3$\pm$2.4
        \\

        \midrule

         {\bf \textsc{DS-GNN (GraphConv) ($\mathbf{\widehat{\textsc{ED}}}$)}}   & 
         90.0$\pm$4.3&
         \cellcolor{Gray}66.0$\pm$7.1&
         \cellcolor{Gray}76.6$\pm$3.7&
         \cellcolor{Gray}83.0$\pm$1.8&
         \cellcolor{Gray}82.6$\pm$1.1&
         \cellcolor{Gray}76.3$\pm$3.1&
         \cellcolor{Gray}53.2$\pm$1.9 
         \\

        {\bf \textsc{DS-GNN (GraphConv) ($\mathbf{\widehat{\textsc{ND}}}$)}}   & 
         90.4$\pm$4.0&
         \cellcolor{Gray}66.0$\pm$5.0&
         \cellcolor{Gray}76.5$\pm$4.8&
         \cellcolor{Gray}83.0$\pm$2.3&
         \cellcolor{Gray}82.7$\pm$1.4&
         75.1$\pm$1.5&
         \cellcolor{Gray}\second{53.7}$\pm$2.1 
        \\

        {\bf \textsc{DS-GNN (GraphConv) ($\mathbf{\widehat{\textsc{EGO}}}$)}}   & 
         90.0$\pm$5.0&
         \cellcolor{Gray}67.8$\pm$5.9&
         \cellcolor{Gray}76.8$\pm$4.8&
         81.9$\pm$1.7&
         80.4$\pm$2.0&
         \cellcolor{Gray}76.4$\pm$3.2&
         \cellcolor{Gray}53.1$\pm$2.7 
        \\
        
        {\bf \textsc{DS-GNN (GraphConv) ($\mathbf{\widehat{\textsc{EGO}}}$+)}}   & 
         \cellcolor{Gray}91.5$\pm$5.3&
         \cellcolor{Gray}67.0$\pm$5.5&
         \cellcolor{Gray}76.5$\pm$4.0&
         82.0$\pm$1.2&
         80.5$\pm$1.7&
         \cellcolor{Gray}76.4$\pm$2.7&
         \cellcolor{Gray}53.3$\pm$2.6 
        \\
        
        \midrule

        {\bf \textsc{DSS-GNN (GraphConv) (ED)}}   & 
         \cellcolor{Gray}91.0$\pm$5.8 &
         \cellcolor{Gray}66.3$\pm$7.7 &
         \cellcolor{Gray}75.7$\pm$3.6 &
         \cellcolor{Gray}83.1$\pm$2.3 &
         \cellcolor{Gray}82.9$\pm$1.0 &
         75.8$\pm$2.8 &
         \cellcolor{Gray}\second{53.7}$\pm$2.8 
        \\

        {\bf \textsc{DSS-GNN (GraphConv) (ND)}}   & 
         \cellcolor{Gray}90.6$\pm$5.2 &
         \cellcolor{Gray}65.4$\pm$5.8 &
         \cellcolor{Gray}76.2$\pm$5.0 &
         \cellcolor{Gray}83.7$\pm$1.7 &
         \cellcolor{Gray}82.4$\pm$1.3 &
         75.1$\pm$3.2 &
         \cellcolor{Gray}53.3$\pm$2.6
        \\

        {\bf \textsc{DSS-GNN (GraphConv) (EGO)}}   & 
         \cellcolor{Gray}91.5$\pm$4.9 &
         \cellcolor{Gray}68.0$\pm$6.1 &
         \cellcolor{Gray}76.6$\pm$4.6 &
         \cellcolor{Gray}83.5$\pm$1.1 &
         \cellcolor{Gray}82.5$\pm$1.6 &
         \cellcolor{Gray}76.3$\pm$3.6 &
         \cellcolor{Gray}53.1$\pm$2.8
        \\
        
        {\bf \textsc{DSS-GNN (GraphConv) (EGO+)}}   & 
         \cellcolor{Gray}\third{92.0}$\pm$5.0 &
         \cellcolor{Gray}67.7$\pm$5.7 &
         \cellcolor{Gray}77.0$\pm$5.4 &
         \cellcolor{Gray}83.4$\pm$1.8 &
         \cellcolor{Gray}82.6$\pm$1.5 &
         \cellcolor{Gray}76.6$\pm$2.8 &
         \cellcolor{Gray}53.6$\pm$2.8
        \\
        
        \bottomrule

    \end{tabular}
    }
\end{table}
\begin{table}[ht]
  \centering
  \caption{Results for $\textsc{ogbg-molhiv}$ and $\textsc{ogbg-moltox21}$.  Gray background indicates that ESAN outperforms the base encoder. Note that ESAN achieves much higher training accuracies with respect to base graph encoders.}
  \label{tab:ogbg-hiv-complete}
\tiny
\resizebox{\textwidth}{!}{
  \begin{tabular}{lccc|ccc}
      \toprule
      & \multicolumn{3}{c}{\textsc{ogbg-molhiv}} & \multicolumn{3}{c}{\textsc{ogbg-moltox21}} \\
        \textbf{Method} & \multicolumn{3}{c}{\textbf{ROC-AUC (\%)}} & \multicolumn{3}{c}{\textbf{ ROC-AUC (\%)}}\\
        & Training & Validation & \textbf{Test} & Training & Validation & \textbf{Test} \\
      \midrule
        {\textsc{GCN} \citep{kipf2016semi}} & 88.65$\pm$2.19 & 82.04$\pm$1.41 & 76.06$\pm$0.97 & 92.06$\pm$1.81 &  79.04$\pm$0.19 & 75.29$\pm$0.69 \\
        \midrule
        {\bf \textsc{DS-GNN (GCN) (ED)}} & 86.25$\pm$2.77 & 82.36$\pm$0.75 & 74.70$\pm$1.94 & 90.97$\pm$1.70 & 80.03$\pm$0.59 & 74.86$\pm$0.92 \\
        {\bf \textsc{DS-GNN (GCN) (ND)}} & 86.82$\pm$4.26 & 81.90$\pm$0.82 & 74.40$\pm$2.48 & \cellcolor{Gray}89.28$\pm$0.99 & \cellcolor{Gray}80.56$\pm$0.61 & \cellcolor{Gray}75.79$\pm$0.30 \\
        {\bf \textsc{DS-GNN (GCN) (EGO)}} & 91.91$\pm$3.83 & 83.51$\pm$0.95 & 74.00$\pm$2.38 & \cellcolor{Gray}89.29$\pm$1.64 & \cellcolor{Gray}80.48$\pm$0.52 & \cellcolor{Gray}75.41$\pm$0.72 \\
        {\bf \textsc{DS-GNN (GCN) (EGO+)}} &
        86.85$\pm$3.57 & 81.95$\pm$0.69 & 73.84$\pm$2.58 & 90.24$\pm$1.20 & 80.77$\pm$0.51 & 74.74$\pm$0.96 \\
        \midrule
        {\bf \textsc{DSS-GNN (GCN) (ED)}} & 99.52$\pm$0.34 & 83.44$\pm$1.10 & 76.00$\pm$1.41  & \cellcolor{Gray}94.60$\pm$1.10 & \cellcolor{Gray}80.05$\pm$0.40 & \cellcolor{Gray}75.34$\pm$0.69 \\
        {\bf \textsc{DSS-GNN (GCN) (ND)}} & 97.40$\pm$3.52 & 82.88$\pm$1.29 & 75.17$\pm$1.35  & \cellcolor{Gray}93.82$\pm$2.47 & \cellcolor{Gray}81.22$\pm$0.52 & \cellcolor{Gray}75.56$\pm$0.59 \\
        {\bf \textsc{DSS-GNN (GCN) (EGO)}} & \cellcolor{Gray}98.56$\pm$2.08 & \cellcolor{Gray}84.34$\pm$1.02 & \cellcolor{Gray}76.16$\pm$1.02  & \cellcolor{Gray}93.06$\pm$2.54 & \cellcolor{Gray}81.51$\pm$0.43 & \cellcolor{Gray}76.14$\pm$0.53 \\
        {\bf \textsc{DSS-GNN (GCN) (EGO+)}} & \cellcolor{Gray}98.47$\pm$2.20 & \cellcolor{Gray}84.45$\pm$0.65 & \cellcolor{Gray}76.50$\pm$1.38  & \cellcolor{Gray}91.60$\pm$1.81 & \cellcolor{Gray}81.55$\pm$0.63 & \cellcolor{Gray}76.29$\pm$0.78 \\
        
        \midrule
        {\textsc{GIN} \citep{xu2018how}} & 88.64$\pm$2.54 & 82.32$\pm$0.90 & 75.58$\pm$1.40 & 93.06$\pm$0.88 & 78.32$\pm$0.48 & 74.91$\pm$0.51 \\
        \midrule
        {\bf \textsc{DS-GNN (GIN) (ED)}} & \cellcolor{Gray}91.71$\pm$3.50 &  \cellcolor{Gray}83.32$\pm$0.83 & \cellcolor{Gray}76.43$\pm$2.12 & \cellcolor{Gray}92.38$\pm$1.57 & \cellcolor{Gray}78.98$\pm$0.45 & \cellcolor{Gray}75.12$\pm$0.50 \\
        {\bf \textsc{DS-GNN (GIN) (ND)}} & \cellcolor{Gray}89.70$\pm$3.20 & \cellcolor{Gray}83.21$\pm$0.87 & \cellcolor{Gray}76.19$\pm$0.96 & \cellcolor{Gray}91.23$\pm$2.15 & \cellcolor{Gray}79.61$\pm$0.59 & \cellcolor{Gray}75.34$\pm$1.21 \\
        {\bf \textsc{DS-GNN (GIN) (EGO)}} & \cellcolor{Gray}93.43$\pm$2.17 & \cellcolor{Gray}84.28$\pm$0.90 & \cellcolor{Gray}78.00$\pm$1.42 & \cellcolor{Gray}92.08$\pm$1.79 & \cellcolor{Gray}81.28$\pm$0.54 & \cellcolor{Gray}76.22$\pm$0.62 \\
        {\bf \textsc{DS-GNN (GIN) (EGO+)}} & \cellcolor{Gray}90.53$\pm$3.01 & \cellcolor{Gray}84.63$\pm$0.83 & \cellcolor{Gray}77.40$\pm$2.19 & \cellcolor{Gray}90.07$\pm$1.65 & \cellcolor{Gray}81.29$\pm$0.57 & \cellcolor{Gray}76.39$\pm$1.18 \\
        \midrule
        {\bf \textsc{DSS-GNN (GIN) (ED)}} & \cellcolor{Gray}91.69$\pm$3.47 & \cellcolor{Gray}83.33$\pm$0.98 & \cellcolor{Gray}77.03$\pm$1.81 & \cellcolor{Gray}95.85$\pm$2.08 & \cellcolor{Gray}80.83$\pm$0.41 & \cellcolor{Gray}76.71$\pm$0.67\\
        {\bf \textsc{DSS-GNN (GIN) (ND)}} & \cellcolor{Gray}90.71$\pm$4.28 & \cellcolor{Gray}83.62$\pm$1.20 & \cellcolor{Gray}76.63$\pm$1.52 & \cellcolor{Gray}96.90$\pm$1.45 & \cellcolor{Gray}81.22$\pm$0.23 & \cellcolor{Gray}77.21$\pm$0.70 \\
        {\bf \textsc{DSS-GNN (GIN) (EGO)}} & \cellcolor{Gray}97.05$\pm$3.50 & \cellcolor{Gray}85.72$\pm$1.21 & \cellcolor{Gray}77.19$\pm$1.27 & \cellcolor{Gray}95.58$\pm$1.61& \cellcolor{Gray}81.80$\pm$0.20 & \cellcolor{Gray}77.45$\pm$0.41 \\
        {\bf \textsc{DSS-GNN (GIN) (EGO+)}} &  \cellcolor{Gray}94.47$\pm$2.13 & \cellcolor{Gray}85.51$\pm$0.79 & \cellcolor{Gray}76.78$\pm$1.66 & \cellcolor{Gray}96.06$\pm$1.61 & \cellcolor{Gray}81.82$\pm$0.21 & \cellcolor{Gray}77.95$\pm$0.40 \\
      \bottomrule
    \end{tabular}
    }
\end{table}

\begin{table}[ht]
\centering
\caption{Results for RNI datasets. Gray background indicates that ESAN outperforms the base encoder. DS/DSS-GNN can boost the performance of both base graph encoders, GIN and GraphConv, from random to perfect. }
\label{tab:rni_datasets}
\scriptsize
\begin{tabular}{@{}lll}
\toprule
                                              & \textsc{EXP} & \textsc{CEXP} \\ \midrule
\textsc{GIN}~\citep{xu2018how}							&  51.1$\pm$2.1 &   70.2$\pm$4.1\\ 
\midrule
\textsc{GIN + ID-GNN} \citep{you2021identity}               & 100$\pm$0.0 & 100$\pm$0.0 \\

\midrule
\bf \textsc{DS-GNN (GIN) (ED/ND/EGO/EGO+)}                & \cellcolor{Gray}100$\pm$0.0 & \cellcolor{Gray}100$\pm$0.0  \\
\bf \textsc{DSS-GNN (GIN) (ED/ND/EGO/EGO+)}                      & \cellcolor{Gray}100$\pm$0.0 & \cellcolor{Gray}100$\pm$0.0  \\

 \midrule
\textsc{GraphConv} \citep{morris2019weisfeiler}						& 50.3$\pm$2.6 & 72.9$\pm$3.6 \\
\midrule
\textsc{GraphConv + ID-GNN}  \citep{you2021identity}          & 100$\pm$0.0 & 100$\pm$0.0 \\
\midrule
\bf \textsc{DS-GNN (GraphConv) (ED/ND/EGO/EGO+)} &  \cellcolor{Gray}100$\pm$0.0   & \cellcolor{Gray}100$\pm$0.0 \\
\bf \textsc{DSS-GNN (GraphConv) (ED/ND/EGO/EGO+)} &  \cellcolor{Gray}100$\pm$0.0   & \cellcolor{Gray}100$\pm$0.0 \\
\bottomrule
\end{tabular}
\end{table}

\subsection{Datasets and models in comparison}
We conducted experiments on thirteen graph classification datasets originating from five data repositories: (1) RNI  \citep{abboud2020surprising} and CSL \citep{murphy2019relational, dwivedi2020benchmarking} (measuring expressive power) (2) TUD repository (social network analysis and bioinformatics) \citep{morris2020tudataset} (3) Open Graph Benchmark (molecules) \citep{hu2020open} and (4) ZINC12k (molecules) \citep{dwivedi2020benchmarking}. These datasets are diverse on a number of important levels, including the number of graphs (188-41,127), existence of input node and edge features, and sparsity.

We compared our approach to a wide range of methods, including maximally expressive MPNNs \citep{xu2018how, morris2019weisfeiler}. We particularly focused on more expressive GNNs \citep{maron2019provably,de2020natural,bouritsas2020improving,abboud2020surprising,bodnar2021weisfeiler,abboud2020surprising} and also compared to DropEdge \citep{rong2019dropedge} and ID-GNN \citep{you2021identity}, that share some similarity with our ED and EGO policies.\footnote{We implemented the fast version of ID-GNN with 10 dimensional augmented node features.}

We experimented with both DSS-GNN and DS-GNN on all the datasets and used popular MPNN-type GNNs as base encoders: GIN \citep{xu2018how}, GraphConv \citep{morris2019weisfeiler}, and GCN \citep{kipf2016semi} using all four selection policies (ND, ED, EGO, and EGO+). We used $\max$ for aggregating the adjacency matrices for DSS-GNN.

\subsection{Further experimental details}\label{app:implementation}

We implemented our approach using the PyG framework~\citep{fey2019fast} and ran the experiments on NVIDIA DGX V100 stations. We performed hyper parameter tuning using the Weights and Biases framework~\citep{wandb} for all the methods, including the baselines. 

Details of hyper parameter grids and architectures are discussed in the subsections below.

\subsubsection{TUDatasets}
We considered the evaluation procedure proposed by \citet{xu2018how}. Specifically, we conducted 10-fold cross validation and reported the validation performances at the epoch achieving the highest averaged validation accuracy across all the folds. We used Adam optimizer with learning rate decayed by a factor of 0.5 every 50 epochs. The training is stopped after 350 epochs.
As for DS-GNN, we implemented $R_{\text{subgraphs}}$ with summation over node features, while module $E_{\text{sets}}$ is parameterized with a two-layer DeepSets with final mean readout; layers are in the form of \Cref{eq:ds_layer}.
In DSS-GNN, we considered the mean aggregator for the feature matrix and used the adjacency matrix of the original graph. We implemented $R_{\text{subgraphs}}$ by averaging node representations on each subgraph; $E_{\text{sets}}$ is either a two-layer DeepSets in the form of \Cref{eq:ds_layer} with final mean readout, or a simple averaging of subgraphs representations (the choice is driven by hyper parameter tuning).
We considered two baseline models, namely GIN~\citep{xu2018how} and GraphConv~\citep{morris2019weisfeiler} and based our hyper parameter search on the same parameters of the corresponding baseline, as detailed below.
 
 \paragraph{GIN.} We considered the same architecture as in \citet{xu2018how}, with 4 GNN layers with all MLPs having 1 hidden layer. We used the Jumping Knowledge scheme from \citet{xu2018representation} and employed batch normalization~\citep{ioffe2015batch}. As in \cite{xu2018how}, we tuned the batch size in $\{32, 128\}$, the embedding dimension of the MLPs in $\{16, 32\}$ and the initial learning rate in $\{0.01, 0.001\}$. We tuned the embedding dimension of the DeepSets layers of $E_{\text{sets}}$ in $\{32, 64\}$.

 \paragraph{GraphConv.} We considered the same architecture as in \citet{morris2019weisfeiler}, with 3 GNN layers and embedding dimension of 64. We added batch normalization~\citep{ioffe2015batch} after each layer in all models, as we noticed a gain in performances. For a fair comparison, we tuned batch size, learning rate and DeepSets hidden channels over the same grid employed for GIN.

 \paragraph{Additional comparisons.} We additionally implemented and compared our models to Dropedge~\citep{rong2019dropedge}, RNI~\citep{abboud2020surprising} and ID-GNN~\citep{you2021identity}. For Dropedge, we used a drop rate of 20$\%$ to drop edges while training. For RNI, we considered Partial-RNI where 50\% of node features are randomized and the remaining are deterministically set to the initial node features. For ID-GNN, we implemented the fast version with 10 dimensional augmented node features, due to the good performances presented in the original paper. \Cref{tab:tud_datasets_full} shows the full results on the TUDatasets, including a comparison to the graph kernels~\citep{gartner2003graph,shervashidze2009efficient,shervashidze2011weisfeiler,neumann2016propagation}. As discussed in the main paper, to guarantee at least the expressive power of the basic graph encoder used for DS-GNN one can add the original graph $G$ to $S_G^{\pi}$ for any policy $\pi$, obtaining the augmented version $\widehat{\pi}$. We included results for our DS-GNN on the augmented policies in the same table. As can be seen, DS-GNN obtains very similar results for any policy and its corresponding augmented version.
 
\subsubsection{OGB datasets}
We used the evaluation procedure proposed in \cite{hu2020open}. Specifically, we ran each experiment with 10 different initialization seeds and we reported the mean performances (along with standard deviation) corresponding to the best validation result. We used Adam optimizer and trained for 100 epochs. We considered two baseline models: GIN~\citep{xu2018how}
and GCN~\citep{kipf2016semi} and we kept the proposed parameters~\citep{hu2020open}, i.e., 5 GNN layers with embedding dimension 300. We tuned the learning rate in $\{0.001, 0.0001\}$.
DS-GNN implements $R_{\text{subgraphs}}$ as either averaging or summation of node representations, the choice is made based on validation ROC-AUC. $E_{\text{sets}}$ is a two-layer DeepSets (\Cref{eq:ds_layer}) with final mean readout. Its hidden dimension is tuned in $\{32, 64\}$.
As for DSS-GNN, we used the mean aggregator for the feature matrix and used the adjacency matrix of the original graph. Module $R_{\text{subgraphs}}$ applies averaging over node representations and, in turn, $E_{\text{sets}}$ averages the obtained subgraph encodings. 
\Cref{tab:ogbg-hiv-complete} shows the full results on the OGB datasets, reporting training, validation and test performances.

\subsubsection{RNI datasets}
We used the same training procedure and hyper parameter search scheme considered for TUDatasets. The only architectural difference is the number of GNN layers that we increased to 6 and the embedding dimension that we fixed to \revision{32}.
\Cref{tab:rni_datasets} reports the results on the RNI datasets.

\subsubsection{CSL}
 
We tuned the batch size in $\{8, 16\}$, the embedding dimension of the MLPs in $\{32, 64\}$ the initial learning rate in $\{0.01, 0.001\}$ and number of layers in $\{4, 8\}$. 
Interestingly, we find that, when employing the ND selection policy, DS-GNN with less than $5$ layers can not differentiate $\mathrm{CSL}(41, 9)$ from $\mathrm{CSL}(41, 12)$ and only achieves $90\%$ accuracy. Increasing the number of layers is crucial for DS-GNN to achieve perfect performance in this case. Similarily, ego networks of $\mathrm{CSL}(41, 9)$ and $\mathrm{CSL}(41, 12)$ of depth 2 and 3 are isomorphic. Therefore, when working with EGO and EGO+ policies, we need depth 4 to distinguish those two classes.

\subsubsection{ZINC12k}

Throughout our experimentation, we used the same train, validation and test splits in \citet{dwivedi2020benchmarking}, whose prescribed overall optimization and benchmarking setup we verbatim follow. We used Mean Absolute Error (MAE) as the training loss and evaluation performance metric. Our models are trained with batches of size $128$ and an Adam optimizer whose learning rate is initially set to $0.001$, reduced by $0.5$ whenever the validation loss does not improve after a patience value set to $20$ epochs. Training is stopped when the learning rate reduces below $10^{-5}$. We ran each experiment for $10$ different initialization seeds, and reported the mean performances (along with standard deviation) at the time of the early stopping. All models feature $4$ GNN layers, as common in this benchmark. Importantly, in order to comply with the $100$k parameter budget advocated in~\citet{dwivedi2020benchmarking}, we set the embedding dimension of all GNN modules employed in our models to $64$. For the $\textsc{EGO}$ and $\textsc{EGO}+$ policies we used depth 3 ego-nets centered at each node.

DS-GNN implements $R_{\text{subgraphs}}$ as averaging of node representations while $E_{\text{sets}}$ is a two-layer \emph{invariant} DeepSets~\citep{zaheer2017deep} with final mean readout. Specifically, we tune $\phi$ and $\rho$ to either be both a single layer or a 2-layers MLP, with dimensions fixed to $96$. 
As for DSS-GNN, we used the mean aggregator for the feature matrix and used the adjacency matrix of the original graph. Module $R_{\text{subgraphs}}$ applies averaging over node representations and, in turn, $E_{\text{sets}}$ averages the obtained subgraph encodings.

\subsection{Additional experiments}\label{app:extra_exp}
\begin{table}[t!]
    \centering
    \tiny
    \caption{Results of our stochastic sampling approach on TUDatasets, where each graph sees 100\%, 50\%, 20\%, 5\% of the  subgraphs in the selected policy (100\% corresponds to full bags). Gray background indicates that ESAN outperforms the base encoder.}
    \label{tab:tud_datasets_sampling}
    \resizebox{\linewidth}{!}{%
    \begin{tabular}{ll | lllll | ll | >{\color{black}} l}
        \toprule
        &  & 
        \textsc{MUTAG} &
        \textsc{PTC} &
        \textsc{PROTEINS} &
        \textsc{NCI1} &
        \textsc{NCI109} &
        \textsc{IMDB-B} &
        \textsc{IMDB-M} &
        \textsc{RDT-B}
        \\
        \midrule

        \textsc{GIN} \citep{xu2018how} & &
        89.4$\pm$5.6 & 
        64.6$\pm$7.0 & 
        76.2$\pm$2.8 &
        82.7$\pm$1.7 &
        82.2$\pm$1.6 &
        75.1$\pm$5.1 &
        52.3$\pm$2.8 &
        92.4$\pm$2.5
        \\

        \midrule

         \multirow{4}{*}{\bf \textsc{DS-GNN (GIN) (ED)}}  & {\bf \textsc{100\%}} & 
         \cellcolor{Gray}89.9$\pm$3.7 &
         \cellcolor{Gray}66.0$\pm$7.2 &
         \cellcolor{Gray}76.8$\pm$4.6 &
         \cellcolor{Gray}83.3$\pm$2.5 &
         \cellcolor{Gray}83.0$\pm$1.7 &
         \cellcolor{Gray}76.1$\pm$2.6 &
         \cellcolor{Gray}52.9$\pm$2.4 & 
         N/A
         \\

        &{\bf \textsc{50\%}}   & 
          \cellcolor{Gray}89.9$\pm$5.6 & 
          \cellcolor{Gray}67.7$\pm$5.9&
          \cellcolor{Gray}76.9$\pm$3.8&
          \cellcolor{Gray}83.0$\pm$1.8&
          \cellcolor{Gray}83.0$\pm$1.0&
          \cellcolor{Gray}76.3$\pm$2.6&
          \cellcolor{Gray}52.9$\pm$2.6 &
          N/A
         \\

        &{\bf \textsc{20\%}}   & 
          \cellcolor{Gray}90.5$\pm$5.2 &
          \cellcolor{Gray}66.3$\pm$5.3&
          \cellcolor{Gray}77.1$\pm$4.5&
          \cellcolor{Gray}83.6$\pm$1.9&
          \cellcolor{Gray}82.9$\pm$1.6&
          \cellcolor{Gray}76.2$\pm$2.4&
          \cellcolor{Gray}53.2$\pm$2.4 &
          N/A
         \\

        &{\bf \textsc{5\%}}   & 
          88.8$\pm$6.5&
          \cellcolor{Gray}65.7$\pm$7.3&
          \cellcolor{Gray}76.8$\pm$4.6&
          \cellcolor{Gray}83.8$\pm$1.9&
          \cellcolor{Gray}82.7$\pm$1.8&
          \cellcolor{Gray}76.2$\pm$2.8&
          \cellcolor{Gray}53.3$\pm$2.4 &
          \cellcolor{Gray}92.7$\pm$1.0
         \\

                \midrule

         \multirow{4}{*}{\bf \textsc{DS-GNN (GIN) (ND)}}  & {\bf \textsc{100\%}} & 
         \cellcolor{Gray}89.4$\pm$4.8 &
         \cellcolor{Gray}66.3$\pm$7.0 &
         \cellcolor{Gray}77.1$\pm$4.6 &
         \cellcolor{Gray}83.8$\pm$2.4 &
         \cellcolor{Gray}82.4$\pm$1.3 &
         \cellcolor{Gray}75.4$\pm$2.9 &
         \cellcolor{Gray}52.7$\pm$2.0 &
         N/A
        \\

        &{\bf \textsc{50\%}}   & 
          88.9$\pm$4.3&
          \cellcolor{Gray}66.2$\pm$6.8&
          \cellcolor{Gray}76.8$\pm$5.2&
          \cellcolor{Gray}83.1$\pm$1.5&
          \cellcolor{Gray}82.7$\pm$1.6&
          \cellcolor{Gray}75.5$\pm$2.8&
          \cellcolor{Gray}52.9$\pm$2.2 &
          N/A
         \\

        &{\bf \textsc{20\%}}   & 
          88.9$\pm$5.5&
          \cellcolor{Gray}66.0$\pm$6.1&
          \cellcolor{Gray}77.2$\pm$2.9&
          \cellcolor{Gray}84.0$\pm$1.6&
          \cellcolor{Gray}82.7$\pm$0.8&
          \cellcolor{Gray}75.5$\pm$1.6&
          \cellcolor{Gray}52.5$\pm$2.9 &
          N/A
         \\

        &{\bf \textsc{5\%}}   & 
          88.3$\pm$4.5&
          \cellcolor{Gray}66.3$\pm$8.7&
          \cellcolor{Gray}77.3$\pm$4.1&
          \cellcolor{Gray}83.7$\pm$2.2&
          \cellcolor{Gray}82.8$\pm$0.7&
          \cellcolor{Gray}75.5$\pm$3.9&
          52.1$\pm$1.8 &
          \cellcolor{Gray}92.4$\pm$1.2
         \\

        \midrule

         \multirow{4}{*}{\bf \textsc{DS-GNN (GIN) (EGO)}}  & {\bf \textsc{100\%}} &
         \cellcolor{Gray}89.9$\pm$6.5 &
         \cellcolor{Gray}68.6$\pm$5.8 &
         \cellcolor{Gray}76.7$\pm$5.8 &
         81.4$\pm$0.7 &
         79.5$\pm$1.0 &
         \cellcolor{Gray}76.1$\pm$2.8 &
         \cellcolor{Gray}52.6$\pm$2.8 &
         N/A
        \\

        &{\bf \textsc{50\%}}   & 
          \cellcolor{Gray}89.9$\pm$5.6&
          \cellcolor{Gray}67.5$\pm$5.8&
          \cellcolor{Gray}76.9$\pm$4.6&
          81.7$\pm$1.4&
          80.3$\pm$0.8&
          \cellcolor{Gray}76.8$\pm$2.3&
          \cellcolor{Gray}52.4$\pm$2.7 &
          N/A
         \\

        &{\bf \textsc{20\%}}   & 
          88.3$\pm$5.8&
          \cellcolor{Gray}67.5$\pm$8.8&
          \cellcolor{Gray}77.3$\pm$4.1&
          80.7$\pm$1.1&
          78.3$\pm$2.1&
          \cellcolor{Gray}76.9$\pm$2.6&
          \cellcolor{Gray}52.8$\pm$2.6 &
          N/A
         \\

        &{\bf \textsc{5\%}}   & 
          86.8$\pm$7.1&
          \cellcolor{Gray}67.7$\pm$8.6&
          \cellcolor{Gray}76.9$\pm$4.4&
          77.0$\pm$1.9&
          74.0$\pm$1.6&
          \cellcolor{Gray}76.9$\pm$3.5&
          52.3$\pm$2.8 &
          89.0$\pm$2.0
         \\
         
         \midrule

         \multirow{4}{*}{\bf \textsc{DS-GNN (GIN) (EGO+)}}  & {\bf \textsc{100\%}} &
         \cellcolor{Gray}91.0$\pm$4.8 &
         \cellcolor{Gray}68.7$\pm$7.0 &
         \cellcolor{Gray}76.7$\pm$4.4 &
         82.0$\pm$1.4 &
         80.3$\pm$0.9 &
         \cellcolor{Gray}77.1$\pm$2.6 &
         \cellcolor{Gray}53.2$\pm$2.8 &
         N/A
        \\

        &{\bf \textsc{50\%}}   & 
          \cellcolor{Gray}91.6$\pm$5.4&
          \cellcolor{Gray}67.8$\pm$8.4&
          \cellcolor{Gray}77.5$\pm$4.7&
          82.5$\pm$1.1&
          80.4$\pm$1.1&
          \cellcolor{Gray}76.4$\pm$3.1&
          \cellcolor{Gray}53.1$\pm$3.6 &
          N/A
         \\

        &{\bf \textsc{20\%}}   & 
          \cellcolor{Gray}91.5$\pm$6.0&
          \cellcolor{Gray}66.3$\pm$8.0&
          \cellcolor{Gray}77.1$\pm$4.4&
          81.5$\pm$1.6&
          79.0$\pm$2.1&
          \cellcolor{Gray}76.2$\pm$2.3&
          \cellcolor{Gray}52.6$\pm$1.6 &
          N/A
         \\

        &{\bf \textsc{5\%}}   & 
          88.4$\pm$5.1&
          \cellcolor{Gray}67.7$\pm$6.4&
          \cellcolor{Gray}77.2$\pm$4.2&
          77.2$\pm$1.7&
          75.2$\pm$0.7&
          \cellcolor{Gray}77.2$\pm$2.3&
          \cellcolor{Gray}53.2$\pm$3.9 &
          89.1$\pm$2.3
         \\
         
        \midrule

         \multirow{4}{*}{\bf \textsc{DSS-GNN (GIN) (ED)}}  & {\bf \textsc{100\%}} & 
         \cellcolor{Gray}91.0$\pm$4.8 &
         \cellcolor{Gray}66.6$\pm$7.3 &
         75.8$\pm$4.5 &
         \cellcolor{Gray}83.4$\pm$2.5 &
         \cellcolor{Gray}82.8$\pm$0.9 &
         \cellcolor{Gray}76.8$\pm$4.3 &
         \cellcolor{Gray}53.5$\pm$3.4 &
         N/A
         \\

        &{\bf \textsc{50\%}}   & 
          \cellcolor{Gray}90.5$\pm$5.1&
          \cellcolor{Gray}66.3$\pm$6.7&
          75.9$\pm$4.3&
          \cellcolor{Gray}83.0$\pm$2.4&
          \cellcolor{Gray}82.4$\pm$1.6&
          \cellcolor{Gray}76.1$\pm$3.1&
          \cellcolor{Gray}53.7$\pm$2.4 &
          N/A
         \\

        &{\bf \textsc{20\%}}   & 
          \cellcolor{Gray}90.4$\pm$5.3&
          \cellcolor{Gray}66.6$\pm$6.3&
          \cellcolor{Gray}76.3$\pm$3.8&
          \cellcolor{Gray}83.1$\pm$1.7 &
          \cellcolor{Gray}82.9$\pm$1.9&
          \cellcolor{Gray}76.5$\pm$3.3&
          \cellcolor{Gray}52.9$\pm$1.8 &
          N/A
         \\

        &{\bf \textsc{5\%}}   & 
          89.4$\pm$6.2&
          \cellcolor{Gray}65.7$\pm$7.0&
          75.4$\pm$4.5&
          \cellcolor{Gray}83.2$\pm$2.1 &
          \cellcolor{Gray}83.3$\pm$1.3&
          \cellcolor{Gray}77.0$\pm$3.2&
          \cellcolor{Gray}53.3$\pm$3.4 &
          \cellcolor{Gray}92.7$\pm$1.5
         \\

                \midrule

         \multirow{4}{*}{\bf \textsc{DSS-GNN (GIN) (ND)}}  & {\bf \textsc{100\%}} & 
         \cellcolor{Gray}91.0$\pm$3.5 &
         \cellcolor{Gray}66.3$\pm$5.9 &
         76.1$\pm$3.4 &
         \cellcolor{Gray}83.6$\pm$1.5 &
         \cellcolor{Gray}83.1$\pm$0.8 &
         \cellcolor{Gray}76.1$\pm$2.9 &
         \cellcolor{Gray}53.3$\pm$1.9 &
         N/A
        \\

        &{\bf \textsc{50\%}}   & 
          88.9$\pm$6.1&
          \cellcolor{Gray}65.1$\pm$4.8&
          \cellcolor{Gray}76.4$\pm$3.9&
          \cellcolor{Gray}83.1$\pm$2.3&
          \cellcolor{Gray}82.4$\pm$1.6&
          \cellcolor{Gray}75.6$\pm$2.4&
          \cellcolor{Gray}53.1$\pm$2.6 &
          N/A
         \\

        &{\bf \textsc{20\%}}   & 
          \cellcolor{Gray}89.9$\pm$4.3&
          \cellcolor{Gray}66.0$\pm$9.0&
          76.2$\pm$4.8&
          \cellcolor{Gray}83.2$\pm$2.2&
          \cellcolor{Gray}82.3$\pm$1.4&
          \cellcolor{Gray}75.9$\pm$1.8&
          \cellcolor{Gray}53.5$\pm$3.0 &
          N/A
         \\

        &{\bf \textsc{5\%}}   & 
          \cellcolor{Gray}89.9$\pm$5.2&
          \cellcolor{Gray}67.2$\pm$7.0&
          \cellcolor{Gray}77.0$\pm$3.9&
          \cellcolor{Gray}82.8$\pm$2.0&
          \cellcolor{Gray}82.6$\pm$1.8&
          \cellcolor{Gray}76.3$\pm$4.6&
          \cellcolor{Gray}53.1$\pm$2.8 &
          \cellcolor{Gray}92.7$\pm$1.3
         \\

        \midrule

         \multirow{4}{*}{\bf \textsc{DSS-GNN (GIN) (EGO)}}  & {\bf \textsc{100\%}} &
         \cellcolor{Gray}91.0$\pm$4.7 &
         \cellcolor{Gray}68.2$\pm$5.8 &
         \cellcolor{Gray}76.7$\pm$4.1 &
         \cellcolor{Gray}83.6$\pm$1.8 &
         \cellcolor{Gray}82.5$\pm$1.6 &
         \cellcolor{Gray}76.5$\pm$2.8 &
         \cellcolor{Gray}53.3$\pm$3.1 &
         N/A
        \\

        &{\bf \textsc{50\%}}   & 
          \cellcolor{Gray}91.5$\pm$5.5&
          \cellcolor{Gray}68.8$\pm$7.7&
          \cellcolor{Gray}77.3$\pm$3.5&
          \cellcolor{Gray}83.5$\pm$1.8&
          \cellcolor{Gray}83.0$\pm$0.8&
          \cellcolor{Gray}76.3$\pm$3.2&
          \cellcolor{Gray}52.9$\pm$3.1 &
          N/A
         \\

        &{\bf \textsc{20\%}}   & 
          \cellcolor{Gray}91.5$\pm$4.8&
          \cellcolor{Gray}66.3$\pm$5.9&
          \cellcolor{Gray}77.1$\pm$4.4&
          \cellcolor{Gray}83.4$\pm$1.4&
          \cellcolor{Gray}82.8$\pm$1.0&
          \cellcolor{Gray}76.9$\pm$3.0&
          \cellcolor{Gray}52.8$\pm$3.1 &
          N/A
         \\

        &{\bf \textsc{5\%}}   & 
          \cellcolor{Gray}89.4$\pm$5.2&
          \cellcolor{Gray}67.1$\pm$7.5&
          \cellcolor{Gray}76.6$\pm$4.3&
          \cellcolor{Gray}83.3$\pm$2.0&
          \cellcolor{Gray}83.0$\pm$1.4&
          \cellcolor{Gray}77.2$\pm$3.2&
          \cellcolor{Gray}52.9$\pm$2.4 &
          \cellcolor{Gray}92.8$\pm$1.6
         \\
         
         \midrule

         \multirow{4}{*}{\bf \textsc{DSS-GNN (GIN) (EGO+)}}  & {\bf \textsc{100\%}} &
        \cellcolor{Gray}91.1$\pm$7.0 &
        \cellcolor{Gray}69.2$\pm$6.5 &
        75.9$\pm$4.3 &
        \cellcolor{Gray}83.7$\pm$1.8 &
        \cellcolor{Gray}82.8$\pm$1.2 &
        \cellcolor{Gray}77.1$\pm$3.0&
        \cellcolor{Gray}53.2$\pm$2.4 &
        N/A
        \\

        &{\bf \textsc{50\%}}   & 
          \cellcolor{Gray}91.0$\pm$5.8&
          \cellcolor{Gray}66.9$\pm$7.4&
          \cellcolor{Gray}77.1$\pm$3.0&
          \cellcolor{Gray}83.7$\pm$1.4&
          \cellcolor{Gray}83.0$\pm$1.7&
          \cellcolor{Gray}76.4$\pm$4.1&
          \cellcolor{Gray}53.0$\pm$3.1 &
          N/A
         \\

        &{\bf \textsc{20\%}}   & 
          \cellcolor{Gray}90.5$\pm$5.2&
          \cellcolor{Gray}66.6$\pm$7.2&
          \cellcolor{Gray}77.4$\pm$4.5&
          \cellcolor{Gray}83.7$\pm$1.3&
          \cellcolor{Gray}82.9$\pm$1.1&
          \cellcolor{Gray}77.7$\pm$3.4&
          \cellcolor{Gray}53.2$\pm$2.9 &
          N/A
         \\

        &{\bf \textsc{5\%}}   & 
          \cellcolor{Gray}89.4$\pm$5.2&
          \cellcolor{Gray}67.2$\pm$6.3&
          \cellcolor{Gray}76.8$\pm$3.1&
          \cellcolor{Gray}83.3$\pm$1.4&
          \cellcolor{Gray}83.2$\pm$1.3&
          \cellcolor{Gray}77.1$\pm$1.9&
          \cellcolor{Gray}53.5$\pm$2.6 &
          \cellcolor{Gray}93.3$\pm$1.3
         \\
        
        \bottomrule

    \end{tabular}
    }
\end{table}

\begin{table}[t!]
\vspace{-35pt}
\centering
\tiny
 \caption{\revision{Results of our stochastic sampling approach on OGB and RNI datasets, where each graph sees $100\%, 50\%, 20\%, 5\%$ of the  subgraphs in the selected policy.  Gray background indicates that ESAN outperforms the base encoder.}}
\label{tab:rni_sampling}
\label{tab:ogb_datasets_sampling}
    \begin{tabular}{ll |>{\color{black}}l >{\color{black}} l | l l} %
    \toprule
    &  &
    \textsc{ogbg-molhiv} &
    \textsc{ogbg-moltox21}&
    \textsc{EXP}&
    \textsc{CEXP} \\
    \midrule

    \textsc{GIN}~\citep{xu2018how}  & &
    75.58$\pm$1.40 &
    74.91$\pm$0.51 &
    51.2$\pm$2.1 &
    70.2$\pm$4.1
    \\
    \midrule

    \multirow{4}{*}{\bf \textsc{DS-GNN (GIN) (ED)}}  & {\bf \textsc{100\%}} & 
    \cellcolor{Gray}76.43$\pm$2.12&
    \cellcolor{Gray}75.12$\pm$0.50&
    \cellcolor{Gray}100$\pm$0.0 &
    \cellcolor{Gray}100$\pm$0.0
    \\

    &{\bf \textsc{50\%}}   & 
    \cellcolor{Gray}76.29$\pm$1.33& 
    74.59$\pm$0.71&
    \cellcolor{Gray}100$\pm$0.0 &
    \cellcolor{Gray}100$\pm$0.0
    \\

    &{\bf \textsc{20\%}}   & 
    \cellcolor{Gray}76.57$\pm$1.48&
    \cellcolor{Gray}75.67$\pm$0.89&
    \cellcolor{Gray}100$\pm$0.0 &
    \cellcolor{Gray}99.9$\pm$0.2
    \\

    &{\bf \textsc{5\%}}   & 
    \cellcolor{Gray}77.82$\pm$1.00&
    \cellcolor{Gray}76.39$\pm$1.11&
    \cellcolor{Gray}99.7$\pm$0.4&
    \cellcolor{Gray}99.9$\pm$0.2\\
    \midrule

    \multirow{4}{*}{\bf \textsc{DS-GNN (GIN) (ND)}}  & {\bf \textsc{100\%}} & 
    \cellcolor{Gray}76.19$\pm$0.96&
    \cellcolor{Gray}75.34$\pm$1.21&
    \cellcolor{Gray}100$\pm$0.0 &
    \cellcolor{Gray}100$\pm$0.0\\

    &{\bf \textsc{50\%}}   & 
    \cellcolor{Gray}77.23$\pm$1.32&
    74.82$\pm$1.05&
    \cellcolor{Gray}100$\pm$0.0&
    \cellcolor{Gray}99.9$\pm$0.2\\

    &{\bf \textsc{20\%}}   & 
    \cellcolor{Gray}77.65$\pm$0.84&
    \cellcolor{Gray}75.66$\pm$0.46&
    \cellcolor{Gray}100$\pm$0.0&
    \cellcolor{Gray}99.9$\pm$0.2\\

    &{\bf \textsc{5\%}}   & 
    \cellcolor{Gray}78.26$\pm$1.02&
    \cellcolor{Gray}76.51$\pm$1.04&
    \cellcolor{Gray}97.2$\pm$1.1&
    \cellcolor{Gray}99.8$\pm$0.8\\
    \midrule

    \multirow{4}{*}{\bf \textsc{DS-GNN (GIN) (EGO)}}  & {\bf \textsc{100\%}} &
    \cellcolor{Gray}78.00$\pm$1.42&
    \cellcolor{Gray}76.22$\pm$0.62&
    \cellcolor{Gray}100$\pm$0.0 &
    \cellcolor{Gray}100$\pm$0.0 \\

    &{\bf \textsc{50\%}}   & 
    \cellcolor{Gray}76.52$\pm$0.72&
    \cellcolor{Gray}75.98$\pm$0.72&
    \cellcolor{Gray}100$\pm$0.0&
    \cellcolor{Gray}99.9$\pm$0.2\\

    &{\bf \textsc{20\%}}   & 
    \cellcolor{Gray}77.49$\pm$1.32&
    \cellcolor{Gray}75.88$\pm$0.50&
    \cellcolor{Gray}99.9$\pm$0.2&
    \cellcolor{Gray}96.8$\pm$1.5\\

    &{\bf \textsc{5\%}}   & 
     73.92$\pm$1.78&
    \cellcolor{Gray}74.95$\pm$0.54&
    \cellcolor{Gray}93.5$\pm$1.3&
    \cellcolor{Gray}83.9$\pm$3.8\\
    \midrule

    \multirow{4}{*}{\bf \textsc{DS-GNN (GIN) (EGO+)}}  & {\bf \textsc{100\%}} &
    \cellcolor{Gray}77.40$\pm$2.19&
    \cellcolor{Gray}76.39$\pm$1.18&
    \cellcolor{Gray}100$\pm$0.0 &
    \cellcolor{Gray}100$\pm$0.0
    \\

    &{\bf \textsc{50\%}}   & 
    \cellcolor{Gray}76.91$\pm$1.22&
    \cellcolor{Gray}75.69$\pm$1.17&
    \cellcolor{Gray}100$\pm$0.0 &
    \cellcolor{Gray}99.9$\pm$0.2
    \\

    &{\bf \textsc{20\%}}   & 
    \cellcolor{Gray}75.92$\pm$1.59&
    \cellcolor{Gray}75.84$\pm$0.63&
    \cellcolor{Gray}99.7$\pm$0.4&
    \cellcolor{Gray}97.0$\pm$1.4\\

    &{\bf \textsc{5\%}}   & 
    73.46$\pm$1.80&
    \cellcolor{Gray}75.08$\pm$0.96&
    \cellcolor{Gray}93.7$\pm$2.7&
    \cellcolor{Gray}83.2$\pm$2.6
    \\
    \midrule

    \multirow{4}{*}{\bf \textsc{DSS-GNN (GIN) (ED)}}  & {\bf \textsc{100\%}} & 
    \cellcolor{Gray}77.03$\pm$1.81&
    \cellcolor{Gray}76.71$\pm$0.67&
    \cellcolor{Gray}100$\pm$0.0 &
    \cellcolor{Gray}100$\pm$0.0
     \\

    &{\bf \textsc{50\%}}   & 
    \cellcolor{Gray}77.50$\pm$1.82&
    \cellcolor{Gray}76.40$\pm$0.84&
    \cellcolor{Gray}100$\pm$0.0&
    \cellcolor{Gray}100$\pm$0.0\\

    &{\bf \textsc{20\%}}   & 
    \cellcolor{Gray}76.82$\pm$1.83&
    \cellcolor{Gray}76.31$\pm$0.90&
    \cellcolor{Gray}100$\pm$0.0&
    \cellcolor{Gray}100$\pm$0.0\\

    &{\bf \textsc{5\%}}   & 
    \cellcolor{Gray}76.71$\pm$1.46&
    \cellcolor{Gray}76.84$\pm$0.54&
    \cellcolor{Gray}99.8$\pm$0.3&
    \cellcolor{Gray}100$\pm$0.0\\

    \midrule

    \multirow{4}{*}{\bf \textsc{DSS-GNN (GIN) (ND)}}  & {\bf \textsc{100\%}} & 
    \cellcolor{Gray}76.63$\pm$1.52&
    \cellcolor{Gray}77.21$\pm$0.70&
    \cellcolor{Gray}100$\pm$0.0 &
    \cellcolor{Gray}100$\pm$0.0
    \\

    &{\bf \textsc{50\%}}   & 
    \cellcolor{Gray}76.96$\pm$1.71&
    \cellcolor{Gray}76.92$\pm$0.94&
    \cellcolor{Gray}100$\pm$0.0&
    \cellcolor{Gray}100$\pm$0.0\\

    &{\bf \textsc{20\%}}   & 
    \cellcolor{Gray}76.23$\pm$1.48&
    \cellcolor{Gray}77.07$\pm$1.03&
    \cellcolor{Gray}100$\pm$0.0&
    \cellcolor{Gray}100$\pm$0.0\\

    &{\bf \textsc{5\%}}   & 
    \cellcolor{Gray}76.74$\pm$1.67&
    \cellcolor{Gray}76.54$\pm$0.86&
    \cellcolor{Gray}97.7$\pm$1.0&
    \cellcolor{Gray}99.9$\pm$0.2\\

    \midrule

    \multirow{4}{*}{\bf \textsc{DSS-GNN (GIN) (EGO)}}  & {\bf \textsc{100\%}} &
    \cellcolor{Gray}77.19$\pm$1.27&
    \cellcolor{Gray}77.45$\pm$0.41&
    \cellcolor{Gray}100$\pm$0.0 &
    \cellcolor{Gray}100$\pm$0.0
    \\

    &{\bf \textsc{50\%}}   & 
    \cellcolor{Gray}76.42$\pm$1.38&
    \cellcolor{Gray}76.37$\pm$1.02&
    \cellcolor{Gray}100$\pm$0.0&
    \cellcolor{Gray}100$\pm$0.0\\

    &{\bf \textsc{20\%}}   & 
    \cellcolor{Gray}76.41$\pm$1.05&
    \cellcolor{Gray}77.47$\pm$0.65&
    \cellcolor{Gray}100$\pm$0.0&
    \cellcolor{Gray}100$\pm$0.0\\

    &{\bf \textsc{5\%}}   & 
    \cellcolor{Gray}76.38$\pm$1.48&
    \cellcolor{Gray}77.40$\pm$0.58&
    \cellcolor{Gray}99.2$\pm$0.6&
    \cellcolor{Gray}100$\pm$0.0\\

    \midrule

    \multirow{4}{*}{\bf \textsc{DSS-GNN (GIN) (EGO+)}}  & {\bf \textsc{100\%}} &
    \cellcolor{Gray}76.78$\pm$1.66&
    \cellcolor{Gray}77.95$\pm$0.40&
    \cellcolor{Gray}100$\pm$0.0&
    \cellcolor{Gray}100$\pm$0.0
    \\

    &{\bf \textsc{50\%}}   & 
    \cellcolor{Gray}76.88$\pm$0.93&
    \cellcolor{Gray}76.42$\pm$0.93&
    \cellcolor{Gray}100$\pm$0.0&
    \cellcolor{Gray}100$\pm$0.0\\

    &{\bf \textsc{20\%}}   & 
    \cellcolor{Gray}76.93$\pm$1.45&
    \cellcolor{Gray}76.45$\pm$0.81&
    \cellcolor{Gray}100$\pm$0.0&
    \cellcolor{Gray}100$\pm$0.0\\

    &{\bf \textsc{5\%}}   & 
    \cellcolor{Gray}75.97$\pm$0.80&
    \cellcolor{Gray}76.70$\pm$0.56&
    \cellcolor{Gray}99.5$\pm$0.6&
    \cellcolor{Gray}100$\pm$0.0\\

    \bottomrule

    \end{tabular}

\end{table}

\begin{table}[ht]
    \caption{\revision{Number of parameters of the best models}}
    \label{tab:num-params}
    \centering
    \scriptsize
    \begin{tabular}{>{\color{black}}l|>{\color{black}}c|>{\color{black}}c|>{\color{black}}c|>{\color{black}}c|>{\color{black}}c|>{\color{black}}c|>{\color{black}}c}
    \toprule
         & \textsc{MUTAG} & \textsc{PTC} & \textsc{PROTEINS} & \textsc{NCI1} & \textsc{NCI109} & \textsc{IMDB-B} & \textsc{IMDB-M} \\
         \midrule
        \textsc{GIN}~\citep{xu2018how} & 8296 & 8692 & 8164 & 9286 & 9319 & 39305 & 40091  \\ 
        \textsc{\bf ESAN (GIN)} & 18625 & 5601 & 10865 & 47137 & 20481 & 33729 & 26883  \\
    \bottomrule
    \end{tabular}
\end{table}

\begin{table}[ht]
    \caption{\revision{Results with larger number of trainable weights}}
    \label{tab:big-gin}
    \centering
    \scriptsize
    \begin{tabular}{>{\color{black}}l|>{\color{black}}c|>{\color{black}}c|>{\color{black}}c|>{\color{black}}c}
    \toprule
         & \textsc{MUTAG} & \textsc{PROTEINS} & \textsc{NCI1} & \textsc{NCI109} \\
         \midrule
        \textsc{GIN}~\citep{xu2018how} & 89.4$\pm$5.6 & 76.2$\pm$2.8 & 82.7$\pm$1.7 & 82.2$\pm$1.6 \\
        \textsc{BIG-GIN} & 89.9$\pm$4.9 & 75.8$\pm$5.5 & 82.9$\pm$1.8 & 81.6$\pm$1.5 \\
        \midrule
        \textsc{\bf ESAN (GIN)} & 91.1$\pm$7.0 & 77.1$\pm$4.6 & 83.8$\pm$2.4 & 83.1$\pm$0.8 \\ 
    \bottomrule
    \end{tabular}
\end{table}
\begin{table}[ht]
\centering
\caption{\revision{Timing comparison per epoch on a RTX 2080 GPU. Time taken for a single epoch with batch size 32 on the NCI1 dataset and with batch size 128 on the ZINC dataset. All values are in seconds.}}
\scriptsize
\begin{tabular}{l|c|c|c|>{\color{black}}c|>{\color{black}}c} \toprule
                & \multicolumn{3}{c|}{\textsc{NCI1}} & \multicolumn{2}{c}{\color{black} \textsc{ZINC}} \\
                  & \textsc{baseline} & \textsc{100\% subgraphs} & \textsc{20\% subgraphs} &  \textsc{baseline} & \textsc{100\% subgraphs} \\ \midrule
\textsc{GIN}~\citep{xu2018how} & 1.00$\pm$0.05   & -                      & -             & 1.45$\pm$0.01 & -       \\
\midrule
\textsc{\bf DS-GNN (GIN) (ED)}    & -                 & 2.07$\pm$0.01       &    1.73$\pm$0.01  & - & 3.56$\pm$0.03   \\
\textsc{\bf DS-GNN (GIN) (ND)}    & -                 & 2.08$\pm$0.03      &   1.71$\pm$0.01  & - & 3.42$\pm$0.02 \\
\textsc{\bf DS-GNN (GIN) (EGO)}   & -                 & 1.96$\pm$0.01      &    1.72$\pm$0.01  & - & 3.02$\pm$0.04   \\
\textsc{\bf DS-GNN (GIN) (EGO+)}  & -                 & 2.01$\pm$0.02     &     1.73$\pm$0.01  & - & 3.09$\pm$0.04 \\
\midrule
\textsc{\bf DSS-GNN (GIN) (ED)}   & -                 & 3.26$\pm$0.07    &    2.94$\pm$0.01   & - & 4.25$\pm$0.01   \\
\textsc{\bf DSS-GNN (GIN) (ND)}   & -                 &    3.19$\pm$0.07     &    2.91$\pm$0.02  & - & 4.12$\pm$0.03  \\
\textsc{\bf DSS-GNN (GIN) (EGO)}  & -                 &     3.14$\pm$0.04    &   2.79$\pm$0.02   & - & 3.63$\pm$0.02    \\
\textsc{\bf DSS-GNN (GIN) (EGO+)} & -                 &   3.19$\pm$0.03      & 2.90$\pm$0.01 & - & 3.69$\pm$0.05  \\ \bottomrule      
\end{tabular}
\label{tab:timing}
\end{table}

\subsubsection{Stochastic Sampling results}
We performed our stochastic sampling using GIN as a base encoder. We considered three different subgraph sub-sampling ratios: 5\%, 20\% and 50\%. While training, we randomly sampled a subset of subgraphs for each graph in each epoch. During evaluation, we performed majority voting using 5 different subgraph sampling sets for each graph. We performed hyper parameter tuning as detailed in \Cref{app:implementation}, and compared the performances of the sub-sampling method both to the base encoder, and to the corresponding approach without the sub-sampling procedure (reported as 100\%). Results for the TUDatasets can be found in \Cref{tab:tud_datasets_sampling}. As can be seen in the table, the performance of our approaches do not generally degrade when seeing only a fraction of the subgraphs. \revision{We showcased the importance of sampling by adding experiments on \textsc{RDT-B} dataset, which contains large graphs ($430$ nodes and $498$ edges per graph on average) and sampling is employed to alleviate the computational burden. We used the same hyperparameter tuning and evaluation strategy of the other TUDatasets but we fixed the batch size to 32. ESAN outperforms the base encoder even when only seeing $5\%$ of the subgraphs}.
\Cref{tab:rni_sampling} shows the performance of our stochastic variant on the RNI datasets. Even a small fraction of subgraphs is enough to achieve good results on \textsc{EXP} and \textsc{CEXP}. Notably, DSS-GNN works better than DS-GNN for the fraction 5\%. \revision{Similarly, \Cref{tab:ogb_datasets_sampling} also reports the performances on the OGB datasets. We additionally experimented with stochastic sampling policies on the CSL dataset. Although isomorphic graphs may be mapped to different representations and possibly different classes due to the randomness introduced by the sampling procedure, we observed that sampling does not deteriorate performance in practice: ESAN was able to still obtain perfect accuracy. We hypothesise that different samplings of the same bag are still mapped closely in the embedding space, this allowing our models to easily predict the same isomorphism class for all of them.}

\revision{

\subsubsection{Size of the models}
As already discussed, on TUDatasets, our ESAN variants tend to outperform their base encoders in the majority of the cases. We made sure to fairly tune these baseline models specifically for each dataset, aiming at a valid comparison with our approach. However, one may still wonder whether the improvements of ESAN are due to an overall larger number of parameters. This aspect is investigated in the following for the case of GIN base graph encoders.

First, we compared the number of trainable parameters of the best baseline model with that of our best ESAN configuration; numbers are reported in \Cref{tab:num-params}. We found the best ESAN configuration to utilise \emph{fewer} learnable parameters in three out of seven datasets: \textsc{IMDB-B}, \textsc{IMDB-M} and \textsc{PTC}.

In order to verify the fairness of our results on the remaining datasets, we conducted additional experiments, training larger GIN architectures obtained by enlarging the dimension of the hidden layers in a way to approximately match the overall number of ESAN parameters. We dub these models ``BIG-GIN''. The results are reported in \Cref{tab:big-gin}. We observed that, on average, adding more parameters to baselines produced slightly worse results and did not change the ranking of the methods. Using baselines with more parameters yielded marginally worse results for two datasets (\textsc{PROTEINS}, \textsc{NCI109}) and only insignificantly better results on the remaining ones (\textsc{NCI1}, \textsc{MUTAG}). This demonstrates that, while a larger number of parameters tend to correlate with larger model capacity, it does not necessarily correlate with generalization performance, which is what we are really interested in. 

We finally refer readers to~\Cref{tab:zinc}, where we report the results obtained on the \textsc{ZINC12k} datasets. On this benchmark, all ESAN configurations significantly outperformed their GIN base encoder while being all compliant to the $100$k parameter budget. 

\subsubsection{Timing}
We focus our analysis on \textsc{NCI1} dataset from~\citet{morris2020tudataset} (4110 graphs, 30 nodes/edges per graph on average) and estimated the time to perform a single training epoch on a RTX 2080 GPU. We used GIN as a base encoder and compared the times of: (1) DS-GNN and DSS-GNN without sampling (100\% subgraphs -- full bags); (2) DS-GNN and DSS-GNN with sampling (20\% of subgraphs); (3) the baseline GIN method. We kept the hyperparameters the same for all methods to allow a fair comparison. Results are reported in \Cref{tab:timing}. 
As can be seen in the table, our method takes around 2x the time of the corresponding GIN baseline for DS-GNN, and around 3x for DSS-GNN. Sampling reduces these times by 10-15\%, showcasing how our stochastic version can be beneficial in practice. Note that these times consider a single process completing a single training epoch. In our implementation, we perform the cross validation in parallel, making use of multiprocessing. The overall speedup gained by the parallelism comes at the cost of a fixed overhead. %
This added overhead reduces significantly the gap between our method and the baseline. In practice, we take less than twice the time of GIN on the \textsc{NCI1} dataset, even without sampling.  Indeed, DS-GNN (GIN) takes around 50 minutes to perform the entire 10-fold cross validation with 350 epochs for all policies and full bags, with DSS-GNN (GIN) taking a few minutes longer. GIN takes around 30 minutes for the same amount of work.

Further timing experiments on \textsc{ZINC12k} confirmed the empirically tractable computational complexity of our approach: for a training epoch on 10k graphs, DS-GNN required only between $3$ and $3.5$ seconds (depending on the policy), while DSS-GNN required between 3.7 and 4.25 seconds (the base encoder ran in slightly less than 1.5 seconds). Full results are reported in \Cref{tab:timing}.

}

\subsection{Discussion}\label{app:discussion}

Do our experimental results answer the research questions we raised at the beginning of~\Cref{sec:experiments}? We discuss them in specific here below.

(1) \emph{Is our approach more expressive than the base graph encoders in practice?} ESAN perfectly solves the RNI and \textsc{CSL} expressiveness benchmarks with a variety of subgraph selection policies, while the base encoder (GIN) performs as a random guesser (see~\Cref{tab:rni_datasets}). These results provide a (strong) positive answer to this question.

(2) \emph{Can our approach achieve better performance on real benchmarks?} On real world graph datasets, ESAN has outperformed its base encoder in most of its configurations. Superior performance has been recorded across benchmarks, independently of the choice of the base encoder and the subgraph selection policy. In particular, we found ESAN to outperform its base graph encoder in 91\% and 75\% of the cases for, respectively, the DSS-GNN and DS-GNN variants. Overall, ESAN attains performance comparable to state-of-the-art methods on the TUDatasets and, in particular, on \textsc{ZINC12k}, where it significantly outperforms all provably expressive, domain agnostic GNNs. ESAN appears generally less competitive on the \textsc{OGB} dataset with respect to state-of-the-art methods~\citep{bodnar2021weisfeiler2, bouritsas2020improving, beaini2021directional}. We hypothesise that a different choice of base graph encoders may lead to further improvements on this benchmark and defer such investigations to future work.

(3) \emph{How do the different subgraph selection policies compare?} This question has a less definite answer. We found certain policies to attain more competitive results on some benchmarks (e.g., ND on \textsc{PROTEINS} for DS-GNN (GIN) and EGO(+) on \textsc{ZINC}), while they all seemed to perform equally well on others (e.g., on \textsc{IMDB-M} and on \textsc{NCI1} for the DSS-GNN variant). Our results do not strongly support correlations between the performance of a specific policy and the application domain. At the same time, the EGO(+) policies were found generally more consistent in their results across datasets.

(4) \emph{Can our efficient stochastic variant offer the same benefits as the full approach?} Our results suggest a generally affirmative answer. From~\Cref{tab:tud_datasets_sampling}, it emerges that, barring a few notable exceptions, our subgraph sampling scheme has marginal to no impact to the overall performance of ESAN, even for drastic sampling rates. This is particularly evident over the \textsc{IMDB-B} and \textsc{IMDB-M} social benchmarks, as well as the \textsc{PROTEINS} bioinformatics one. In these and other cases, subgraph sampling sometimes slightly outperforms even the full, deterministic approach. Similar conclusions can be drawn from~\Cref{tab:rni_sampling}. An interesting trend is worth reporting: In the case of the DS-GNN variant, subgraph sampling negatively affects EGO(+) policies, sometimes consistently. This is clearly evident on the \textsc{MUTAG}, \textsc{NCI1} and \textsc{NCI109} benchmarks, as well as the \textsc{ogbg-molhiv} one. Interestingly, the DSS-GNN variant appears more robust in this regard (see paragraph below).

(5) \emph{Are there any empirical advantages to DSS-GNN versus DS-GNN?} Overall, we observe a tendency for DSS-GNN to perform better than DS-GNN. This observation is more pronounced in certain conditions and datasets, and on \textsc{ZINC12k} in particular (see~\Cref{tab:zinc}). Another interesting finding is that DSS-GNN appears significantly more robust across different scenarios. As already noted above, this variant does not exhibit the same performance drop showed by DS-GNN with EGO(+) policies, and is, in particular, less sensitive to subgraph sampling (see~\Cref{tab:tud_datasets_sampling}). In certain cases, we observe DSS-GNN to be (more) robust to the choice of the base graph encoder, as it can be noted by comparing the performance of the two variants on the \textsc{OGB} datasets with GCN encoders (see~\Cref{tab:ogbg-hiv-baselines} and~\Cref{tab:ogbg-hiv-complete}). These advantages come at the cost of a slightly larger computational complexity, as observed in~\Cref{tab:timing}.

When is the ESAN framework advantageous, in general? Given the above, we can conclude that applying our approach to real world datasets is generally beneficial whenever some additional computational resources are available. The DSS-GNN variant, when equipped with EGO(+) policies, is a competitive and versatile architecture, showing strong performance across various benchmarks and conditions. DS-GNN may offer slightly more favourable computational complexity, and it generally works well in conjunction with ND and ED policies.

More specifically, our results indicate that the application of ESAN is recommended in those cases where the task at hand may require expressive power beyond 1-WL. There is some empirical evidence that certain molecular benchmarks fall in this category; our results on the \textsc{ZINC12k} dataset validate this hypothesis by showing that our approach strongly outperforms GIN, while performing best amongst all provably expressive, domain agnostic GNNs (see~\Cref{tab:zinc}). Finally, ESAN is especially suitable in those cases where no domain knowledge is available about graph substructures relevant to the task being solved. As it requires little pre-engineering, our method represents a valid, flexible and provably expressive approach in these conditions.

We are not aware of any particular scenario where the application of ESAN has proved to be consistently unsuccessful. We have observed it to not work well in conjunction with GCN encoders over the OGB datasets, but this would require further future investigation before drawing definite conclusions.

\newpage
\clearpage
\section{WL variants and proofs for Section~\ref{sec:wl_express}}~\label{app:wl_proofs}
This appendix includes additional details for Section~\ref{sec:wl} and proofs for the theoretical results in Section~\ref{sec:wl_express} of the main paper, along with additional context and results.

\begin{wrapfigure}{R}{0.565\textwidth}
    \begin{minipage}{0.565\textwidth}
        \begin{algorithm}[H]
            \caption{WL Test}\label{alg:wl}
            \begin{algorithmic}[1]
                \Require Graphs $G^1, G^2$,  initial $\iota: v \mapsto l_v$ (optional) 
                \State $t \gets 0, \enspace converged \gets \mathrm{\textbf{false}}$
                \For{$v \in G^1 \cup G^2$}
                    \State $c^{0}_{v} \gets \bar{c}$ or $\iota(v)$ if provided\label{alg_line:wl_init}
                \EndFor
                \State $c^{0}_{G^1} \gets \ldblbrace c^{0}_{v} \thinspace | \enspace v \in G^1 \rdblbrace$
                \State $c^{0}_{G^2} \gets \ldblbrace c^{0}_{v} \thinspace | \enspace v \in G^2 \rdblbrace$
                \While{$c^{t}_{G^1} = c^{t}_{G^2}$ and not $converged$}
                    \For{$i \in \{ 1, 2 \}$}
                        \For{$v \in G^i$}
                            \State $N^{t}_{v} \gets \ldblbrace c^{t}_{w} \thinspace | \enspace w \in \mathcal{N}^{i}(v) \rdblbrace$
                            \State $c^{t+1}_{v} \gets \mathrm{HASH}( c^{t}_{v}, N^{t}_{v})$\label{alg_line:wl_ref_step}
                        \EndFor
                        \State $c^{t+1}_{G^i} \gets \ldblbrace c^{t+1}_{v} \thinspace | \enspace v \in G^i \rdblbrace$   
                    \EndFor
                    \If{$c^{t}_{G^1} \sqsubseteq c^{t+1}_{G^1}$ and $c^{t}_{G^2} \sqsubseteq c^{t+1}_{G^2}$}\label{alg_line:wl_conv}
                        \State $converged \gets \mathrm{\textbf{true}}$
                    \EndIf
                    \State $t \gets t+1$
                \EndWhile
            \If{$c^t_{G^1} \neq c^t_{G^2}$}
                \State \Return non-isomorphic
            \Else
                \State \Return possibly isomorphic
            \EndIf
            \end{algorithmic}
        \end{algorithm}
    \end{minipage}
\end{wrapfigure}

\subsection{Preliminaries}

Let us first introduce some notation and required definitions. We begin with the definition of \emph{vertex coloring}~\citep{rattan2021weisfeiler}.

\begin{definition}[Vertex coloring]
    A vertex coloring is a function mapping a graph and one of its nodes to a ``color'' from a fixed color palette.
\end{definition}

Generally, we can define a vertex coloring as a function $c: \mathcal{V} \rightarrow C, (G, v) \mapsto c^G_v$, where $\mathcal{V}$ is the set of all possible tuples of the form $(G, v)$ with $G=(V, E) \in \mathcal{U}$ (the set of all finite graphs) and $v \in V$. Concretely we write $c: (G, v) \mapsto c^G_v$ and we sometimes drop the superscript $G$ whenever clear from the context. In order to design and compare vertex colorings, it is important to introduce the notion of \emph{refinement}.

\begin{definition}[Vertex color refinement]
    Let $c$, $d$ be two vertex colorings. We say that $d$ refines $c$ (or $d$ is a refinement of $c$) when for all graphs $G=(V^G, E^G), H=(V^H, E^H)$ and all vertices $v \in V^G, u \in V^H$ we have that $d^G_v = d^H_u \implies c^G_v = c^H_u$. We write $d \sqsubseteq c$.
\end{definition}

When working with a specific graph pair $G^1, G^2$, we write $d \sqsubseteq_{G^1, G^2} c$ when for all  $v \in V^{G^1}$ and all $u \in V^{G^2}$, if $d^{G^1}_v = d^{G^2}_u$, then $c^{G^1}_v = c^{G^2}_u$.

The $1$-WL test, sometimes known as ``naïve vertex refinement'' (and sometimes simply as ``WL'' in this manuscript) is an efficient heuristic to test (non-)isomorphism between graphs. We report the related pseudocode in Algorithm~\ref{alg:wl}.

\begin{wrapfigure}{R}{0.565\textwidth}
    \begin{minipage}{0.565\textwidth}
        \begin{algorithm}[H]
            \caption{DSS-WL Test}\label{alg:dsswl}
            \begin{algorithmic}[1]
                \Require $G^1, G^2$, policy $\pi$, $\iota: v \mapsto l_v$ (optional)
                \State $t \gets 0, \enspace converged \gets \mathrm{\textbf{false}}$
                \State $\mathcal{G}^1 \gets \pi(G^1), \enspace \mathcal{G}^2 \gets \pi(G^2)$
                \For{$S \in \mathcal{G}^1 \cup \mathcal{G}^2$}
                    \For{$v \in S$}
                        \State $c^{0}_{v,S} \gets \bar{c}$ or $\iota(v)$ if provided\label{alg_line:dsswl_init}
                    \EndFor
                    \State $c^{0}_S \gets \mathrm{HASH}( \ldblbrace c^{0}_{v,S} \thinspace | \enspace v \in S \rdblbrace )$
                \EndFor
                \State $c_{G^1} \gets \ldblbrace c^{0}_{S} \thinspace | \enspace S \in \mathcal{G}^1 \rdblbrace, \enspace c_{G^2} \gets \ldblbrace c^{0}_{S} \thinspace | \enspace S \in \mathcal{G}^2 \rdblbrace$
                \While{$c_{G^1} = c_{G^2}$ and not $converged$}
                    \For{$i \in \{ 1, 2 \}$}
                        \For{$S \in \mathcal{G}^i$}
                            \For{$v \in S$}
                                \State $N^{t}_{v,S} \gets \ldblbrace c^{t}_{w,S} \thinspace | \enspace w \in \mathcal{N}^{S}(v) \rdblbrace$
                                \State $C^{t}_{v} \gets \ldblbrace c^{t}_{v,R} \thinspace | \enspace R \in \mathcal{G}^i \rdblbrace$
                                \State $M^{t}_{v} \gets \ldblbrace C^{t}_{w} \thinspace | \enspace w \in \mathcal{N}(v) \rdblbrace$
                                \State $c^{t+1}_{v,S} \gets \mathrm{HASH}( c^{t}_{v,S}, N^{t}_{v,S}, C^{t}_{v}, M^{t}_{v} )$\label{alg_line:dsswl_ref_step}
                        \EndFor
                        \State $c^{t+1}_S \gets \mathrm{HASH}( \ldblbrace c^{t+1}_{v,S} \thinspace | \enspace v \in S \rdblbrace )$
                    \EndFor
                    \State $c_{G^i} \gets \ldblbrace c^{t+1}_{S} \thinspace | \enspace S \in \mathcal{G}^i \rdblbrace$   
                    \EndFor
                    \If{$\forall S \in \mathcal{G}^1 \cup \mathcal{G}^2, \enspace c^{t}_S \sqsubseteq c^{t+1}_S$}\label{alg_line:dss_wl_conv}
                        \State $converged \gets \mathrm{\textbf{true}}$
                    \EndIf
                    \State $t \gets t+1$
                \EndWhile
            \If{$c_{G^1} \neq c_{G^2}$}
                \State \Return non-isomorphic
            \Else
                \State \Return possibly isomorphic
            \EndIf
            \end{algorithmic}
        \end{algorithm}
    \end{minipage}
\end{wrapfigure}

The algorithm represents a graph with the multiset (or histogram) of colors associated with its nodes. colors naturally induce a partitioning of the nodes into color classes, in the sense that two nodes belong to the same partition if and only if they are equally colored. We denote with $c^{-1}(v)$ the partition to which node $v$ belongs to, according to coloring $c$. Starting from an initial vertex coloring, the algorithm iterates rounds whereby the node partitioning gets finer and finer. These rounds are referred to as ``refinement steps'' since, if $c$ indicates the colorings computed by the WL algorithm, $c^{t+1} \sqsubseteq_{G,G} c^{t}$:
\begin{proposition}
    Let $G$ be a graph and $c^t_S$ be the vertex coloring computed by WL. Then, $\forall t > 0, \enspace c^{t+1} \sqsubseteq_{G, G} c^{t}$.
\end{proposition}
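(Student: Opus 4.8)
The statement is the (within-graph) monotonicity of one round of $1$-WL colour refinement, and it should reduce to a one-line argument once one exploits that the $\mathrm{HASH}$ appearing in the refinement step of Algorithm~\ref{alg:wl} (line~\ref{alg_line:wl_ref_step}) is injective on its domain, i.e.\ it returns equal colours only on equal inputs. So the plan is not to induct on $t$ at all, but rather to peel off the refinement rule and read a single coordinate out of an injective hash.

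Concretely, I would proceed as follows. First, recall that for every vertex $v$ of $G$ the algorithm sets $c^{t+1}_v = \mathrm{HASH}\!\left(c^t_v,\, N^t_v\right)$, where $N^t_v = \ldblbrace c^t_w \mid w \in \mathcal{N}(v) \rdblbrace$ is the multiset of colours of the neighbours of $v$ in $G$. Second, take arbitrary vertices $v, u \in V^G$ with $c^{t+1}_v = c^{t+1}_u$; by injectivity of $\mathrm{HASH}$ this forces the equality of arguments $\left(c^t_v, N^t_v\right) = \left(c^t_u, N^t_u\right)$, and comparing first coordinates yields $c^t_v = c^t_u$. Third, since $v$ and $u$ were arbitrary vertices of the single graph $G$, this implication is exactly the definition of $c^{t+1} \sqsubseteq_{G,G} c^{t}$. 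That completes the argument.

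I do not anticipate a real obstacle. The only points to state carefully are: (a) the standing assumption that $\mathrm{HASH}$ is injective on the relevant finite domain — this is implicit throughout the WL literature and used elsewhere in this appendix, so I would just invoke it explicitly here; and (b) that the restricted refinement relation $\sqsubseteq_{G,G}$ quantifies only over pairs of vertices \emph{inside} $G$, so no cross-graph reasoning (unlike for the unrestricted $\sqsubseteq$) is needed. I would also remark that the argument is uniform in $t$ and uses neither induction nor the hypothesis $t>0$; hence in fact $c^{t+1}\sqsubseteq_{G,G}c^{t}$ for every $t\ge 0$, and composing these refinements gives $c^{t'}\sqsubseteq_{G,G}c^{t}$ whenever $t'\ge t$, which is the form most convenient to cite in later proofs.
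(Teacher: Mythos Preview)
Your proposal is correct and follows essentially the same approach as the paper: both argue that if $c^{t+1}_v = c^{t+1}_u$ then $\mathrm{HASH}(c^t_v, N^t_v) = \mathrm{HASH}(c^t_u, N^t_u)$, and injectivity of $\mathrm{HASH}$ forces $c^t_v = c^t_u$. Your additional remarks (that no induction is needed and the statement holds for all $t \ge 0$) are valid observations but not present in the paper's terse proof.
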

\begin{proof}
    For those nodes $v, u$ in the vertex set of G such that $c^{t+1}_{v} = c^{t+1}_{u}$ we have $\mathrm{HASH}(c^{t}_{v}, N^{t}_{v}) = \mathrm{HASH}(c^{t}_{u}, N^{t}_{u})$ and, necessarily, $c^{t}_{v} = c^{t}_{u}$ due to the injectivity of the $\mathrm{HASH}$ function.
\end{proof}

The refinement is at convergence when the node partitioning is \emph{stable}, that is, when it does not vary across steps. This means that both $c^{t+1} \sqsubseteq_{G,G} c^{t}$ and $c^{t} \sqsubseteq_{G,G} c^{t+1}$ hold. At convergence on both the two input graphs, the test deems them non-isomorphic if assigned distinct multisets of node colors, and it is inconclusive otherwise.

We report the full DSS-WL test in Algorithm~\ref{alg:dsswl}. Let us first note that both DSS-WL and DS-WL produce color refinements at the level of each subgraph:
\begin{proposition}\label{prop:dsswl_is_a_refinement}
    Let $G$ be a graph, $\pi$ be a subgraph selection policy and $\mathcal{G}$ be the bag induced by $\pi$ on $G$. Let $c^t_S$ be the vertex coloring computed by DSS-WL (DS-WL) over subgraph $S \in \mathcal{G}$ at round $t$. Then, $\forall S \in \mathcal{G}, \forall t > 0, \enspace c^{t+1}_S \sqsubseteq_{S, S} c^{t}_S$.
\end{proposition}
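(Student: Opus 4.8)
The plan is to transcribe, essentially line for line, the argument already given for standard WL (the Proposition immediately preceding Algorithm~\ref{alg:dsswl}), the only change being that the DSS-WL refinement step hashes a four-entry tuple instead of a two-entry one. Injectivity of $\mathrm{HASH}$ still suffices to recover equality of the relevant coordinate, and that is all the statement needs.

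Concretely, I would fix a graph $G$, a subgraph selection policy $\pi$ with induced bag $\mathcal{G} = \pi(G)$, a subgraph $S \in \mathcal{G}$, a step $t > 0$, and two vertices $v, u$ of $S$ with $c^{t+1}_{v,S} = c^{t+1}_{u,S}$. By the refinement rule on line~\ref{alg_line:dsswl_ref_step}, this equality unfolds to $\mathrm{HASH}(c^{t}_{v,S}, N^{t}_{v,S}, C^{t}_{v}, M^{t}_{v}) = \mathrm{HASH}(c^{t}_{u,S}, N^{t}_{u,S}, C^{t}_{u}, M^{t}_{u})$. Since $\mathrm{HASH}$ is injective, the two argument tuples coincide; in particular their first coordinates agree, i.e.\ $c^{t}_{v,S} = c^{t}_{u,S}$. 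As $v, u$ were arbitrary vertices of $S$, this is precisely $c^{t+1}_S \sqsubseteq_{S, S} c^{t}_S$. For DS-WL the argument is identical: the refinement rule is the same tuple with $C^{t}_{v}$ and $M^{t}_{v}$ deleted (equivalently, plain WL run on $S$ in isolation), so injectivity of $\mathrm{HASH}$ again forces $c^{t}_{v,S} = c^{t}_{u,S}$. This handles both variants.

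I do not expect a genuine obstacle here; the entire content is injectivity of the hash, exactly as in the WL case. The one point worth a sentence of care is why the claim is asserted only for $t > 0$: at $t = 0$ the colors may be externally supplied labels $\iota(v)$ (line~\ref{alg_line:dsswl_init}) that bear no \emph{a priori} relation to one another, so no refinement statement is made for the initialization, whereas from the first application of line~\ref{alg_line:dsswl_ref_step} onward the above monotonicity holds. I would also note, since it is used implicitly in the surrounding development, that because the per-subgraph partitions induced by $c^{t}_S$ can only become finer and live on a finite vertex set, the refinement on each $S$ stabilizes after finitely many steps, so the convergence test on line~\ref{alg_line:dss_wl_conv} is eventually triggered.
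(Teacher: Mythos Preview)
Your proof is correct and essentially identical to the paper's: both unfold the DSS-WL update, invoke injectivity of $\mathrm{HASH}$ to recover $c^{t}_{v,S} = c^{t}_{u,S}$, and handle DS-WL by the same mechanism (the paper phrases the DS-WL case as a reduction to the already-proved WL refinement property, but this is the same content). Your side remark about why the claim is stated only for $t>0$ is unnecessary, since the injectivity argument works verbatim for $t=0$ as well; the paper does not comment on this restriction.
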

\begin{proof}
    In the case of DS-WL, the assertion is immediately proved by observing that the algorithm performs independent WL refinements on each subgraph, and we know the standard WL procedure is indeed a vertex refinement one.
    
    The assertion trivially holds also in the case of DSS-WL. Suppose there exist nodes $v, u$ in the vertex set of $S$ such that $c^{t+1}_{v, S} = c^{t+1}_{u, S}$. This implies $\mathrm{HASH}(c^{t}_{v,S}, N^{t}_{v,S}, C^{t}_{v}, M^{t}_{v}) = \mathrm{HASH}(c^{t}_{u,S}, N^{t}_{u,S}, C^{t}_{u}, M^{t}_{u})$ and, necessarily, $c^{t}_{v,S} = c^{t}_{u,S}$ due to the injectivity of the $\mathrm{HASH}$ function.
\end{proof}

An immediate consequence of the above considerations is that, for vertex refinement algorithms, if two vertices belong to distinct partitions at time step $t$, they will for any other $\bar{t} > t$. This allows us to conclude that both DSS-WL and DS-WL tests eventually terminate on any pair of (finite) input graphs.

\begin{proposition}
    Let $G^1, G^2$ be any pair of finite graphs. There exists $\bar{t} < \infty$ such that the DSS-WL (DS-WL) test converges when run on $G^1, G^2$ with any subgraph selection policy $\pi$.
\end{proposition}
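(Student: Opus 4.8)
The plan is to make precise the observation stated just above: DSS-WL (and its restriction DS-WL) is a \emph{vertex-refinement} procedure, so the node partitions it produces only get finer, and a strictly refining chain of partitions of a finite ground set must stabilize after finitely many rounds --- at which point the convergence test inside the algorithm necessarily fires.

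Concretely, I would first package all node colorings into a single partition. Since $G^1, G^2$ are finite and a subgraph selection policy returns a finite bag of finite graphs, the disjoint union $V_{\mathrm{tot}} := \bigsqcup_{S \in \pi(G^1) \cup \pi(G^2)} V^S$ is a finite set. Let $\Pi^t$ be the partition of $V_{\mathrm{tot}}$ that identifies $(v,S)$ with $(w,R)$ precisely when $c^t_{v,S} = c^t_{w,R}$. Because the refinement rule on line~\ref{alg_line:dsswl_ref_step} applies an \emph{injective} $\mathrm{HASH}$ to a tuple whose first coordinate is $c^t_{v,S}$, the equality $c^{t+1}_{v,S} = c^{t+1}_{w,R}$ forces $c^t_{v,S} = c^t_{w,R}$; hence $\Pi^{t+1}$ refines $\Pi^t$ for every $t$. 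The block count $|\Pi^t|$ is thus non-decreasing in $t$ and bounded by $|V_{\mathrm{tot}}|$, so there is a finite $\bar t \le |V_{\mathrm{tot}}|$ with $|\Pi^t| = |\Pi^{\bar t}|$ for all $t \ge \bar t$; since the chain is nested, $\Pi^t = \Pi^{\bar t}$ for all such $t$.

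Next I would translate the stabilization of $\Pi$ into the algorithm's convergence predicate. Once $\Pi^t = \Pi^{t+1}$, the color values used at round $t$ and those used at round $t+1$ are each in bijection with the (common) blocks of $\Pi^{\bar t}$, so there is a global bijection $\phi$ with $c^{t+1}_{v,S} = \phi(c^t_{v,S})$ for all $(v,S)$. Applying $\phi$ entrywise, the multiset of node colors of any subgraph $S$ at round $t$ determines and is determined by the corresponding multiset at round $t+1$; since $c^t_S$ is an injective hash of that multiset, $c^t_S = c^t_R \iff c^{t+1}_S = c^{t+1}_R$ for all subgraphs $S, R$ in the bag. In particular the node colorings are stable within each subgraph and $c^t \sqsubseteq c^{t+1}$ at the subgraph level, which is exactly the condition checked on line~\ref{alg_line:dss_wl_conv}; the algorithm therefore sets its \emph{converged} flag and the loop exits. (It may also exit earlier, as soon as $c_{G^1} \neq c_{G^2}$; either way it performs at most $\bar t$ refinement rounds.) For DS-WL the identical argument applies; alternatively, one may note that DS-WL simply runs the ordinary WL refinement independently on each of the finitely many subgraphs, so Proposition~\ref{prop:dsswl_is_a_refinement} together with the finiteness of each $V^S$ gives termination directly.

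The main obstacle is precisely the step connecting node-partition stability to the subgraph-level convergence predicate: even after $\Pi^t$ has frozen, the concrete hash values attached to the blocks keep changing from round to round, so one cannot argue with literal equalities $c^{t+1}_{v,S} = c^t_{v,S}$ but must instead track the relabeling bijection $\phi$ and reason at the level of partitions and multisets. The remainder is the routine ``a monotone chain in a finite partition lattice must terminate'' argument.
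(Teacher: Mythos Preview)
Your proposal is correct and follows essentially the same approach as the paper: both argue that the node colorings form a monotone refinement chain on a finite ground set, so the block count is non-decreasing and bounded, forcing stabilization after finitely many rounds. The only cosmetic difference is that you package everything into a single global partition $\Pi^t$ on the disjoint union $V_{\mathrm{tot}}$, whereas the paper tracks the block counts $N_{t,S}$ per subgraph and sums them; your version is arguably cleaner, and you are also more explicit than the paper in bridging the gap between partition stabilization and the algorithm's actual convergence predicate via the relabeling bijection~$\phi$.
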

\begin{proof}
    Let $\mathcal{G}^1, \mathcal{G}^2$ be the bags induced by $\pi$ on $G^1, G^2$. The two tests converge when, on all subgraph, the node partitioning induced by the coloring does not vary across two consecutive steps. Due to Proposition~\ref{prop:dsswl_is_a_refinement}, we know that their colorings are vertex refinements on each of the subgraphs. This implies that, at iteration $t+1$, the partitioning on a subgraph is either as fine as, or finer than the one at iteration $t$. In other words, if $N_{t, S}$ is the number of color classes at round $t$ on subgraph $S$, we have that $N_{t+1, S} \geq N_{t, S}$. At the same time, the value $N_{t, S}, \forall t \geq 0$ is upper bounded by $N_S < \infty$, i.e. the cardinality of the vertex set of $S$. This implies that the number of possible iterations before convergence must be upper-bounded by the value $ \nu = \max_i \big( \sum_{S \in  \mathcal{G}^i} N_S \big)$, which is, itself, finite.
\end{proof}

The concept of vertex color refinement is linked to the ability to distinguish between non-isomorphic graphs. This is a consequence of the following Lemma, whose proof is adapted from~\citet{bodnar2021weisfeiler}.

\begin{lemma}\label{lem:ref_implication}
	Let $c, d: \mathcal{V} \rightarrow \mathbb{N}$ be coloring functions implementing maps in the form $(A, a) \mapsto c_a, (A, a) \mapsto d_a$, with $(A, a) \in \mathcal{V}$ such that $A$ is a set and $a \in A$.
	Let $A, B$ be two sets such that $(A, \cdot), (B, \cdot) \in \mathcal{V}$ and $\mathcal{A}_{\phi}, \mathcal{B}_{\phi}$ be multisets of colors defined as: $ \mathcal{A}_{\phi} = \ldblbrace \phi_a | a \in A \rdblbrace, \mathcal{B}_{\phi} = \ldblbrace \phi_b | b \in B \rdblbrace, \phi \in \{c, d\} $
	Suppose that for all $a, b \in A \cup B,$ it holds that $d_a = d_b \implies c_a = c_b$ (we write $d \sqsubseteq_{A,B} c$).
	Then $\mathcal{A}_{c} \neq \mathcal{B}_{c} \implies \mathcal{A}_{d} \neq \mathcal{B}_{d}$.
\end{lemma}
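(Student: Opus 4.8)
The plan is to prove the contrapositive: assuming $\mathcal{A}_d = \mathcal{B}_d$ as multisets, show that $\mathcal{A}_c = \mathcal{B}_c$.

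First I would use the refinement hypothesis $d \sqsubseteq_{A,B} c$ to construct a well-defined recoloring map. Let $D = \{\, d_x : x \in A \cup B \,\} \subseteq \mathbb{N}$ be the set of $d$-colors occurring on $A \cup B$. For each $v \in D$, choose any $x \in A \cup B$ with $d_x = v$ and set $f(v) := c_x$. This does not depend on the choice of $x$: if $x, y \in A \cup B$ both satisfy $d_x = d_y = v$, then the hypothesis gives $c_x = c_y$. Hence $f : D \to \mathbb{N}$ is well defined and, by construction, $c_x = f(d_x)$ for every $x \in A \cup B$. This is the only point at which the hypothesis is invoked, and it is exactly why it must be quantified over $A \cup B$ rather than over $A$ and $B$ separately: $d$-colors shared across the two sets must be assigned a common $c$-color.

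Next I would transport the pointwise identity $c_x = f(d_x)$ to the level of multisets. For a finite multiset $\mathcal{M}$ supported on $D$, let $f(\mathcal{M})$ denote the multiset obtained by applying $f$ elementwise, with multiplicity. Since $c_a = f(d_a)$ for all $a \in A$, the multiplicity of any color $\gamma$ in $\mathcal{A}_c$ equals $|\{\, a \in A : c_a = \gamma \,\}| = |\{\, a \in A : f(d_a) = \gamma \,\}|$, which is precisely the multiplicity of $\gamma$ in $f(\mathcal{A}_d)$; thus $\mathcal{A}_c = f(\mathcal{A}_d)$, and likewise $\mathcal{B}_c = f(\mathcal{B}_d)$. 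Note that this remains valid even though $f$ need not be injective. Finally, from $\mathcal{A}_d = \mathcal{B}_d$ we get $\mathcal{A}_c = f(\mathcal{A}_d) = f(\mathcal{B}_d) = \mathcal{B}_c$, which establishes the contrapositive and hence the claim.

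The argument is essentially bookkeeping; the one step that needs care is the well-definedness of $f$, after which everything follows. The transport-to-multisets step is routine but worth spelling out, precisely because $f$ is in general non-injective, so one cannot simply argue via a bijection between the color supports.
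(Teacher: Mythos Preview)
Your proof is correct. It takes a genuinely different route from the paper's own argument, so a brief comparison is warranted.

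The paper argues directly: assuming $\mathcal{A}_c \neq \mathcal{B}_c$, it fixes a $c$-color $\bar{c}$ over-represented in $\mathcal{A}_c$, partitions $A$ and $B$ into the fibers $A^{\bar{c}}, \tilde{A}^{\bar{c}}$ (and likewise for $B$), and then uses the refinement hypothesis to show that the $d$-colors occurring on $A^{\bar{c}} \cup B^{\bar{c}}$ are disjoint from those on the complements. From $|A^{\bar{c}}| > |B^{\bar{c}}|$ it extracts a $d$-color whose multiplicities in $\mathcal{A}_d$ and $\mathcal{B}_d$ must differ, since no compensation from the complements is possible.

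Your contrapositive argument packages the refinement hypothesis into a single well-defined recoloring map $f$ with $c_x = f(d_x)$, and then reduces the conclusion to the elementary fact that applying a (not necessarily injective) function elementwise preserves multiset equality. This is precisely the content of the paper's subsequent Lemma~\ref{lem:multiset_equality_under_fun}, so in effect you have absorbed that later lemma into this one and avoided the explicit fiber-partition bookkeeping. The trade-off: your route is shorter and more conceptual here, while the paper's direct argument keeps Lemma~\ref{lem:multiset_equality_under_fun} as a stand-alone tool reused elsewhere. Either way, the key insight---that refinement makes $c$ a function of $d$ on $A \cup B$---is the same; you just exploit it more directly.
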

\begin{proof}
	If $\mathcal{A}_{c} \neq \mathcal{B}_{c}$, we assume, w.l.o.g., that there is a color $\bar{c}$ which is represented more in $\mathcal{A}_{c}$ than in $\mathcal{B}_{c}$. In other words, if we define the two sets:
	$$
		A^{\bar{c}} = \{ a \in A | c_a = \bar{c} \}, \quad B^{\bar{c}} = \{ b \in B | c_b = \bar{c} \}
	$$
	\noindent then $|A^{\bar{c}}| > |B^{\bar{c}}|$. These sets naturally define the partitioning:
	$$
		A = A^{\bar{c}} \cup \tilde{A}^{\bar{c}}, \quad  B = B^{\bar{c}} \cup \tilde{B}^{\bar{c}}
	$$
	\noindent where $\tilde{A}^{\bar{c}} = \{ a \in A | c_a \neq \bar{c} \}, \tilde{B}^{\bar{c}} = \{ b \in B | c_b \neq \bar{c} \}$.
	Let us now consider the multisets:
	$$
		\mathcal{A}_d^{*} = \ldblbrace d_a | a \in A^{\bar{c}} \rdblbrace, \quad
		\mathcal{B}_d^{*} = \ldblbrace d_b | b \in B^{\bar{c}} \rdblbrace
	$$
	First, each of these multisets have the same cardinality of the generating sets, that is $|\mathcal{A}_d^{*}| = |A^{\bar{c}}|, |\mathcal{B}_d^{*}| = |B^{\bar{c}}|$ so, we necessarily have $\mathcal{A}_d^{*} \neq \mathcal{B}_d^{*}$.
	Second, in view of the fact that $d \sqsubseteq_{A,B} c$, although the two multisets may generally contain more than one color, these colors cannot appear outside of these last ones. More formally, by construction we have that 
	$$
		\forall x \in A^{\bar{c}} \cup B^{\bar{c}}, \quad y \in \tilde{A}^{\bar{c}} \cup \tilde{B}^{\bar{c}} \quad c_{x} \neq c_{y}
	$$
	\noindent and, due to $d \sqsubseteq_{A,B} c$ it must also hold that:
	$$
		\forall x \in A^{\bar{c}} \cup B^{\bar{c}}, \quad y \in \tilde{A}^{\bar{c}} \cup \tilde{B}^{\bar{c}} \quad d_{x} \neq d_{y}
	$$
	This implies that, if we define multisets
	$$
		\tilde{\mathcal{A}_d^{*}} = \ldblbrace d_a | a \in \tilde{A}^{\bar{c}} \rdblbrace, \quad
		\tilde{\mathcal{B}_d^{*}} = \ldblbrace d_b | b \in \tilde{B}^{\bar{c}} \rdblbrace
	$$
	\noindent then $(\mathcal{A}_d^{*} \cup \mathcal{B}_d^{*}) \cap (\tilde{\mathcal{A}_d^{*}} \cup \tilde{\mathcal{B}_d^{*}}) = \varnothing$.

	Now, we have that $\mathcal{A}_d = \mathcal{A}_d^{*} \cup \tilde{\mathcal{A}_d^{*}}, \mathcal{B}_d = \mathcal{B}_d^{*} \cup \tilde{\mathcal{B}_d^{*}},$ and $\mathcal{A}_d^{*} \neq \mathcal{B}_d^{*}$. Say that, w.l.o.g., this last inequality holds due to a color $\bar{d}$ appearing $n_A$ times in $\mathcal{A}_d^{*}$ and $n_B$ times in $\mathcal{B}_d^{*}$ with $n_A > n_B$. Then, a necessary condition for $\mathcal{A}_d = \mathcal{B}_d$ is that $\bar{d}$ appears $n_A - n_B$ times more in $\tilde{\mathcal{B}_d^{*}}$ than in $\tilde{\mathcal{A}_d^{*}}$. However, this condition cannot be met due to the disjointness of the multisets as shown above.
\end{proof}

\begin{definition}
    Let $c$ be a vertex coloring. A readout map $\mathcal{A}_c: (V, E) \mapsto \phi(\ldblbrace c^G_v | v \in V \rdblbrace)$ is function representing a graph $G = (V, E)$ with the multiset of node colors generated by $c$ on $V$, or an injection thereof ($\phi$).
\end{definition}

Readout maps allow us to leverage colorings to test (non-)isomorphism between graphs: we can define algorithmic tests whereby two graphs are distinguished when their readout representations are distinct. This is the case of the WL test, where $\phi$ is the identity function and $c$ is the coloring induced by a certain number of rounds of the refinement procedure.

The following is an immediate consequence of Lemma~\ref{lem:ref_implication}.

\begin{corollary}
    Let $c, d$ be two vertex coloring functions such that $d \sqsubseteq c$. Let $\mathcal{A}_c, \mathcal{A}_d$ be readout maps associated with, respectively, vertex colorings $c, d$. Then, any two graphs distinguished by $\mathcal{A}_c$ are also distinguished by $\mathcal{A}_d$. We say $\mathcal{A}_d$ \emph{refines} $\mathcal{A}_c$ and write $\mathcal{A}_d \sqsubseteq \mathcal{A}_c$.
\end{corollary}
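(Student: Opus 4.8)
The plan is to reduce the statement directly to Lemma~\ref{lem:ref_implication} by unfolding the definition of a readout map. First I would fix two graphs $G^1 = (V^1, E^1)$ and $G^2 = (V^2, E^2)$ that are distinguished by $\mathcal{A}_c$, i.e.\ $\mathcal{A}_c(G^1) \neq \mathcal{A}_c(G^2)$. By definition of a readout map, $\mathcal{A}_c(G^i) = \phi(\ldblbrace c^{G^i}_v \mid v \in V^i \rdblbrace)$ for an injection $\phi$; since $\phi$ is injective, $\mathcal{A}_c(G^1) \neq \mathcal{A}_c(G^2)$ is equivalent to the inequality of the underlying color multisets $\ldblbrace c^{G^1}_v \mid v \in V^1 \rdblbrace \neq \ldblbrace c^{G^2}_u \mid u \in V^2 \rdblbrace$. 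The same remark applies to $\mathcal{A}_d$ with its own injection.

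Next I would cast this into the setting of Lemma~\ref{lem:ref_implication}. Take $A = V^1$ and $B = V^2$, and read each vertex coloring as a map on pairs, so that the element $a = (G^1, v)$ carries color $c_a := c^{G^1}_v$ (and likewise $d_a := d^{G^1}_v$), and analogously for $B$; encoding the two graphs into the tuples keeps the $A$-elements and $B$-elements formally distinct even if $G^1$ and $G^2$ happen to share vertices. The global refinement hypothesis $d \sqsubseteq c$ says precisely that $d^G_v = d^H_u \implies c^G_v = c^H_u$ for all graphs and vertices; restricting it to $G^1, G^2$ gives exactly the hypothesis $d \sqsubseteq_{A,B} c$ required by the Lemma. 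With the multisets $\mathcal{A}_c, \mathcal{B}_c, \mathcal{A}_d, \mathcal{B}_d$ of the Lemma identified with the $c$- and $d$-color multisets on $V^1$ and $V^2$, the first step gave $\mathcal{A}_c \neq \mathcal{B}_c$, and Lemma~\ref{lem:ref_implication} then yields $\mathcal{A}_d \neq \mathcal{B}_d$, i.e.\ $\ldblbrace d^{G^1}_v \mid v \in V^1 \rdblbrace \neq \ldblbrace d^{G^2}_u \mid u \in V^2 \rdblbrace$.

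Finally, applying the injection $\phi$ associated with $\mathcal{A}_d$ to these distinct multisets yields $\mathcal{A}_d(G^1) \neq \mathcal{A}_d(G^2)$, so $G^1$ and $G^2$ are distinguished by $\mathcal{A}_d$ as well, which is exactly $\mathcal{A}_d \sqsubseteq \mathcal{A}_c$. There is essentially no obstacle here: the only point requiring mild care is the bookkeeping translating between the abstract ``sets with colored elements'' language of Lemma~\ref{lem:ref_implication} and the vertex-coloring language of the corollary, together with the observation that the readout injections make ``distinguished by $\mathcal{A}_\bullet$'' equivalent to ``has a distinct color multiset'', so the Lemma applies verbatim.
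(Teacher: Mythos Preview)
Your proposal is correct and takes exactly the approach the paper indicates: the paper simply states that the corollary is an immediate consequence of Lemma~\ref{lem:ref_implication}, and you have carefully unpacked what that immediate consequence looks like, including the bookkeeping with the injections $\phi$ and the identification of $A = V^1$, $B = V^2$ needed to invoke the lemma.
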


When $\mathcal{A}_d \sqsubseteq \mathcal{A}_c$ then $\mathcal{A}_d$ is said to be at least as powerful as $\mathcal{A}_c$. When the opposite holds, that is $\mathcal{A}_c \sqsubseteq \mathcal{A}_d$, then the two have the same discriminative power and we write $\mathcal{A}_c \equiv \mathcal{A}_d$. On the contrary, if there exist a pair of graphs distinguished by $\mathcal{A}_d$ which are indistinguishable by $\mathcal{A}_c$, then we say $\mathcal{A}_d$ is \emph{strictly} more powerful than $\mathcal{A}_c$, and write $\mathcal{A}_d \sqsubset \mathcal{A}_c$.

Let us lastly introduce some general, auxiliary lemmas that will be required for the proofs in the following subsection.

\begin{lemma}\label{lem:multiset_equality_under_fun}
	Let $A$, $B$ be two multisets with elements in $\mathcal{D}$ such that $A = \ldblbrace a_i \rdblbrace_{i=1}^{m} = \ldblbrace b_i \rdblbrace_{i=1}^{m} = B$. For any function $\varphi: \mathcal{D} \rightarrow \mathcal{C}$ it holds that $\varphi(A) = \ldblbrace \varphi(a_i) \rdblbrace_{i=1}^{m} = \ldblbrace \varphi(b_i) \rdblbrace_{i=1}^{m} = \varphi(B)$.
\end{lemma}

\begin{proof}
Let $\varphi$ be any \emph{function} (not necessarily injective). Suppose $A = B$, but $\varphi(A) \neq \varphi(B)$.  
There is some $\bar{c}$ such that w.l.o.g $\bar{c}$ appears more in $\varphi(A)$ than $\varphi(B)$. Defining
\begin{equation}
\mathcal{A} = \ldblbrace d_A : \varphi(d_A) = \bar{c}, d_A \in A\rdblbrace, \quad \mathcal{B} = \ldblbrace d_B : \varphi(d_B) = \bar{c}, d_B \in B\rdblbrace
\end{equation}
 Thus, $\mathcal{A} \neq \mathcal{B}$ because $|\mathcal{A}| > |\mathcal{B}|$. Note that $A = \mathcal{A} \cup \tilde{\mathcal{A}}$ for the complementary set $\tilde{\mathcal{A}}$ that is disjoint from $\mathcal{A}$ defined as $\tilde{\mathcal{A}} = \ldblbrace d_A : \varphi(d_A) \neq \bar{c}, d_A \in A\rdblbrace$, and similarly $B = \mathcal{B} \cup \tilde{\mathcal{B}}$.
 
 Since $|\mathcal{A}| > |\mathcal{B}|$, we can choose a $d \in \mathcal{A}$ such that the multiplicity $\#_{d}(\mathcal{A})$ of $d \in \mathcal{A}$ is greater than $\#_{d}(\mathcal{B})$. Further, since $A = \mathcal{A} \cup \tilde{\mathcal{A}}$ and $\mathcal{A}$ is disjoint from $\tilde{\mathcal{A}}$, we have that $\#_d(A) = \#_d(\mathcal{A})$. Likewise, $\#_d(B) = \#_d(\mathcal{B})$. However, this is a contradiction, because then
 \begin{equation}
     \#_d(A) = \#_d(\mathcal{A}) > \#_d(\mathcal{B}) = \#_d(B),
 \end{equation}
 so $A \neq B$.
\end{proof}

\begin{lemma}\label{lem:tuple_multiset_equality_transfers_at_each_el}
	Let $A$, $B$ be two multisets with elements in $\mathcal{D}^1 \times \mathellipsis \times \mathcal{D}^k$ such that $A = \ldblbrace (a_i^1, \mathellipsis, a_i^k) \rdblbrace_{i=1}^{m}, B = \ldblbrace (b_i^1, \mathellipsis, b_i^k) \rdblbrace_{i=1}^{m}$. %
	Let $A^j = \ldblbrace a_i^j \rdblbrace_{i=1}^{m}, B^j = \ldblbrace b_i^j \rdblbrace_{i=1}^{m}, \forall j = 1, \mathellipsis, k$. Then $A=B$ implies $A^j = B^j$ for each $j$.
\end{lemma}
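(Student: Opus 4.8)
The plan is to reduce this statement to \Cref{lem:multiset_equality_under_fun}, which we have already established. For each fixed index $j \in \{1, \dots, k\}$, consider the coordinate projection map $\varphi_j : \mathcal{D}^1 \times \dots \times \mathcal{D}^k \rightarrow \mathcal{D}^j$ defined by $\varphi_j(x^1, \dots, x^k) = x^j$. This is a perfectly good function (no injectivity is needed, which is exactly why \Cref{lem:multiset_equality_under_fun} was stated for arbitrary $\varphi$).

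First I would observe that, by the definition of $\varphi_j$ applied elementwise to the multiset $A = \ldblbrace (a_i^1, \dots, a_i^k) \rdblbrace_{i=1}^m$, we have $\varphi_j(A) = \ldblbrace \varphi_j(a_i^1, \dots, a_i^k) \rdblbrace_{i=1}^m = \ldblbrace a_i^j \rdblbrace_{i=1}^m = A^j$, and similarly $\varphi_j(B) = B^j$. Then I would invoke \Cref{lem:multiset_equality_under_fun} with this $\varphi_j$: from $A = B$ it follows that $\varphi_j(A) = \varphi_j(B)$, i.e.\ $A^j = B^j$. Since $j$ was arbitrary, this holds for every $j = 1, \dots, k$, which is the claim.

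There is essentially no obstacle here; the only thing to be careful about is that the image of a multiset under a non-injective map must itself be taken as a multiset (i.e.\ with multiplicities preserved according to how many preimages map there), which is precisely the convention under which \Cref{lem:multiset_equality_under_fun} is proved, so the reduction is clean. One could alternatively give a direct self-contained argument by comparing multiplicities: for any $d \in \mathcal{D}^j$, the multiplicity of $d$ in $A^j$ equals $\sum$ of multiplicities in $A$ of all tuples whose $j$-th entry is $d$, and likewise for $B$; equality of these sums follows immediately from $A = B$. But deferring to the already-proved lemma is the shortest route.
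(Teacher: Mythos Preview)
Your proof is correct and takes essentially the same approach as the paper: define the coordinate projection $\varphi_j$ and invoke \Cref{lem:multiset_equality_under_fun} to conclude $A^j = \varphi_j(A) = \varphi_j(B) = B^j$. The paper's proof is just a terser version of yours, without the aside on the alternative multiplicity-counting argument.
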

\begin{proof}
    The above is proven by considering that, for any $j$, it is possible to construct the mapping $\varphi^j: \mathcal D^1 \times \ldots \times \mathcal D^k \to \mathcal D^j$ such that $(a_i^1, \mathellipsis, a_i^k) \mapsto a_i^j$. Then, $\varphi^j(A) = A^j$ and $\varphi^j(B) = B^j$. At the same time, by Lemma~\ref{lem:multiset_equality_under_fun}, $A=B$ implies $\varphi^j(A) = \varphi^j(B)$ and thus $A^j = B^j$.
\end{proof}

\subsection{Proofs for results in Section~\ref{sec:wl_express}}

\subsubsection{DSS-WL vs. DS-WL}

We start by formally proving that DSS-WL is at least as expressive as DS-WL. We will require the following lemma, which describes a refinement relation between DSS-WL and WL node colorings at the level of subgraphs.
\begin{lemma}\label{lem:dsswl_noderefines_wl}
	Let $\pi$ be any subgraph selection policy, $G^1, G^2$ be any two graphs and $\mathcal{D}^1, \mathcal{D}^2$ the $\pi$-bags thereof.
	Let $a^{(t)}, b^{(t)}$ generally refer to the coloring generated by $t$ steps of, respectively, the WL and DSS-WL algorithms, with WL running independently on each subgraph.
	Then, for any $t \geq 0, \enspace \forall G = (V_G, E_G), H = (V_H, E_H) \thinspace \in \mathcal{D}_1 \cup \mathcal{D}_2, \enspace b^{(t)} \sqsubseteq_{G,H} a^{(t)}$.
\end{lemma}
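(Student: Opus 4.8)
The plan is to argue by induction on the number of refinement rounds $t$, using the injectivity of the $\mathrm{HASH}$ function in DSS-WL to peel apart the color update, and Lemma~\ref{lem:multiset_equality_under_fun} to transport the inductive hypothesis through the neighbourhood multiset. Recall that here ``WL running independently on each subgraph'' is exactly the DS-WL refinement restricted to a single subgraph (line~\ref{alg_line:wl_ref_step}), so the claim says that the DSS-WL node coloring refines the DS-WL node coloring on each subgraph of either bag, uniformly over pairs $G,H \in \mathcal{D}_1 \cup \mathcal{D}_2$.

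For the base case $t=0$ I would simply observe that both algorithms initialise every node of every subgraph with the identical label --- either the constant $\bar{c}$ or $\iota(v)$ when an initial coloring is provided (lines~\ref{alg_line:wl_init} and~\ref{alg_line:dsswl_init}). Hence $b^{(0)}$ and $a^{(0)}$ coincide as functions on $\bigcup_{S \in \mathcal{D}_1 \cup \mathcal{D}_2} V_S$, and in particular $b^{(0)} \sqsubseteq_{G,H} a^{(0)}$ for all $G,H \in \mathcal{D}_1 \cup \mathcal{D}_2$.

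For the inductive step I would assume $b^{(t)} \sqsubseteq_{G,H} a^{(t)}$ for every pair $G, H \in \mathcal{D}_1 \cup \mathcal{D}_2$, and first extract from this the existence of a well-defined map $\varphi$ sending each DSS-WL color occurring at round $t$ (over all subgraphs of both bags) to the WL color of any node carrying it: if $b^{(t)}_{w,S} = b^{(t)}_{x,R}$ with $S,R \in \mathcal{D}_1 \cup \mathcal{D}_2$, the relevant instance of the hypothesis gives $a^{(t)}_{w,S} = a^{(t)}_{x,R}$, so $\varphi$ is well-defined. Now fix $v \in V_G$, $u \in V_H$ with $b^{(t+1)}_{v,G} = b^{(t+1)}_{u,H}$. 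By the DSS-WL rule (line~\ref{alg_line:dsswl_ref_step}) and injectivity of $\mathrm{HASH}$, the argument tuples must agree componentwise; in particular $b^{(t)}_{v,G} = b^{(t)}_{u,H}$ and $N^{(t)}_{v,G} = N^{(t)}_{u,H}$, i.e. $\ldblbrace b^{(t)}_{w,G} \mid w \in \mathcal{N}^{G}(v) \rdblbrace = \ldblbrace b^{(t)}_{x,H} \mid x \in \mathcal{N}^{H}(u) \rdblbrace$. Applying $\varphi$ to the singleton equality yields $a^{(t)}_{v,G} = a^{(t)}_{u,H}$, and applying $\varphi$ to the multiset equality together with Lemma~\ref{lem:multiset_equality_under_fun} yields $\ldblbrace a^{(t)}_{w,G} \mid w \in \mathcal{N}^{G}(v) \rdblbrace = \ldblbrace a^{(t)}_{x,H} \mid x \in \mathcal{N}^{H}(u) \rdblbrace$. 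Since the WL update on a subgraph determines $a^{(t+1)}$ as a fixed function of the pair consisting of the current color and the multiset of neighbour colors, these two equalities force $a^{(t+1)}_{v,G} = a^{(t+1)}_{u,H}$, which is the desired $b^{(t+1)} \sqsubseteq_{G,H} a^{(t+1)}$.

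The main obstacle I anticipate is not any computation but getting the quantifiers right: the relabeling $\varphi$ has to be well-defined across all pairs of subgraphs of $\mathcal{D}_1 \cup \mathcal{D}_2$ simultaneously, not merely within a single subgraph, which is precisely why the inductive hypothesis must be carried in its uniform-in-$(G,H)$ form. Once that is in place, the componentwise decomposition of the hashed tuple (via injectivity of $\mathrm{HASH}$) and the multiset transfer (via Lemma~\ref{lem:multiset_equality_under_fun}) are routine. I would also record that the inputs $C^{(t)}_v$ and $M^{(t)}_v$ of the DSS-WL rule are never used beyond the fact that they appear inside the hashed tuple: discarding their equalities only discards information we do not need, so nothing in the argument relies on them.
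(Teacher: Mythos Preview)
Your proof is correct and follows essentially the same inductive route as the paper: both the base case and the inductive step (unpack the $\mathrm{HASH}$ by injectivity, recover $b^{(t)}_{v,G}=b^{(t)}_{u,H}$ and $N^{(t)}_{v,G}=N^{(t)}_{u,H}$, push these equalities through to the $a$-colors, and reassemble via the WL update) match the paper's argument. The only cosmetic difference is that the paper transfers the neighbourhood-multiset equality by invoking the contrapositive of Lemma~\ref{lem:ref_implication} directly, whereas you make the underlying map $\varphi$ explicit and appeal to Lemma~\ref{lem:multiset_equality_under_fun}; these are equivalent moves.
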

\begin{proof}
	We seek to show that $\forall v \in V_G, u \in V_H, b^{G, (t)}_v = b^{H, (t)}_u \implies a^{G, (t)}_v = a^{H, (t)}_u$. As usual, let us prove the lemma by induction on $t$.
    
	\emph{(Base case)} The thesis trivially holds for $t=0$ given that all nodes in all subgraphs are assigned the same color. Let us assume the thesis holds true for $t>0$, we seek to show it also holds for $t+1$.
    
	\emph{(Inductive Step)} Suppose that, for nodes $v$, $u$ we have $b^{G, (t+1)}_v = b^{H, (t+1)}_u$. The equality can be rewritten as $\mathrm{HASH}(b^{G, (t)}_v, N^{G, (t)}_v, C^{(t)}_v, M^{(t)}_v) = \mathrm{HASH}(b^{H, (t)}_u, N^{H, (t)}_u, C^{(t)}_u, M^{(t)}_u)$, which implies the equality between each of the inputs to the $\mathrm{HASH}$ function. Importantly, this means that $b^{G, (t)}_v = b^{H, (t)}_u$ and $N^{G, (t)}_v = N^{H, (t)}_u$. The inductive hypothesis immediately gives us $a^{G, (t)}_v = a^{H, (t)}_w$ and $\ldblbrace a^{G, (t)}_i | i \in \mathcal{N}^G(v) \rdblbrace = \ldblbrace a^{H, (t)}_j | j \in \mathcal{N}^H(w) \rdblbrace$, by the definition of $N^{G, (t)}_v$ and $N^{H, (t)}_w$ and in view of Lemma~\ref{lem:ref_implication}. These are equalities between the (only) inputs in the WL color refinement step, which applies an injective $\mathrm{HASH}$ to obtain $a^{G, (t+1)}_v$ and $a^{H, (t+1)}_u$. It then immediately follows that $a^{G, (t+1)}_v = a^{H, (t+1)}_u$.
\end{proof}

As a consequence, we have the following corollary:
\begin{corollary}\label{cor:dsswl_subgraphrefines_dswl}
	Let $\pi$ be any subgraph selection policy, $G^1, G^2$ be any two graphs and $\mathcal{D}^1, \mathcal{D}^2$ the $\pi$-bags thereof. Let $a^{(t)}_G$ and $b^{(t)}_G$ refer to the subgraph colors assigned to subgraphs $G$ at step $t$ by, respectively, DS-WL and DSS-WL.
	For any pair of subgraphs $G, H \in \mathcal{D}_1 \cup \mathcal{D}_2$, and time step $t \geq 0$, if DS-WL assigns a distinct color to $G$ and $H$ then DSS-WL does so as well: $a^{(t)}_G \neq a^{(t)}_H \implies b^{(t)}_G \neq b^{(t)}_H$.
\end{corollary}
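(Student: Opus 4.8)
The plan is to prove the contrapositive statement: if DSS-WL assigns the same color to subgraphs $G$ and $H$ at step $t$, then DS-WL does so as well. The starting point is the observation (read off from Algorithm~\ref{alg:dsswl}, and its DS-WL specialization) that both tests obtain the color of a subgraph $S$ by applying an injective $\mathrm{HASH}$ to the multiset of node colors on $S$; concretely $a^{(t)}_G = \mathrm{HASH}(\ldblbrace a^{G,(t)}_v \mid v \in V_G \rdblbrace)$ and likewise for $b^{(t)}_G$. Hence $b^{(t)}_G = b^{(t)}_H$ forces, by injectivity of $\mathrm{HASH}$, the equality of multisets $\ldblbrace b^{G,(t)}_v \mid v \in V_G \rdblbrace = \ldblbrace b^{H,(t)}_u \mid u \in V_H \rdblbrace$.

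Next I would invoke Lemma~\ref{lem:dsswl_noderefines_wl}, which states precisely that $b^{(t)} \sqsubseteq_{G,H} a^{(t)}$ at the node level: two nodes (one in $G$, one in $H$) that receive the same DSS-WL color also receive the same DS-WL color. This is exactly the hypothesis required to apply Lemma~\ref{lem:ref_implication} with $A = V_G$, $B = V_H$, $d = b^{(t)}$ and $c = a^{(t)}$. The contrapositive of that lemma (``$\mathcal{A}_c \neq \mathcal{B}_c \implies \mathcal{A}_d \neq \mathcal{B}_d$'' becomes ``$\mathcal{A}_d = \mathcal{B}_d \implies \mathcal{A}_c = \mathcal{B}_c$'') then yields $\ldblbrace a^{G,(t)}_v \mid v \in V_G \rdblbrace = \ldblbrace a^{H,(t)}_u \mid u \in V_H \rdblbrace$ from the multiset equality established above.

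Finally, applying the well-defined function $\mathrm{HASH}$ to these two now-equal multisets — formally via Lemma~\ref{lem:multiset_equality_under_fun} — gives $a^{(t)}_G = a^{(t)}_H$, which is the conclusion of the contrapositive. Reversing the implication chain gives $a^{(t)}_G \neq a^{(t)}_H \implies b^{(t)}_G \neq b^{(t)}_H$, as claimed.

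I do not expect any real obstacle: the result is an immediate corollary of Lemma~\ref{lem:dsswl_noderefines_wl} together with Lemma~\ref{lem:ref_implication}. The only points demanding a little care are (i) using the injectivity of $\mathrm{HASH}$ in the correct direction, namely to descend from equal subgraph colors to equal node-color multisets, and only the forward (functional) direction to ascend back; and (ii) being explicit that Lemma~\ref{lem:dsswl_noderefines_wl} is a \emph{cross-subgraph} refinement statement, so it is legitimate to instantiate Lemma~\ref{lem:ref_implication} with $A$ and $B$ taken from two different subgraphs of the (possibly different) bags $\mathcal{D}^1,\mathcal{D}^2$.
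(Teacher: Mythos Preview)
Your proposal is correct and follows essentially the same route as the paper: the paper's proof also invokes Lemma~\ref{lem:dsswl_noderefines_wl} for the node-level refinement, Lemma~\ref{lem:ref_implication} to lift this to multisets of node colors, and Lemma~\ref{lem:multiset_equality_under_fun} together with the injectivity of $\mathrm{HASH}$ to pass between subgraph colors and node-color multisets. Your version simply spells the argument out via the contrapositive, which is a cosmetic difference; one minor remark is that in your last step you do not actually need Lemma~\ref{lem:multiset_equality_under_fun} (which applies a function elementwise to a multiset) but merely the fact that $\mathrm{HASH}$ is a well-defined function on multisets, though this does not affect correctness.
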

\begin{proof}
	In both algorithms, subgraph colors are obtained by injectively hashing the multisets of node colors therein. Furthermore, in DS-WL, the node colors at time step $t$ on generic subgraph $G$ are obtained by independently running $t$ rounds of the WL procedure on $G$. 
	The thesis therefore holds in view of Lemma~\ref{lem:dsswl_noderefines_wl}, Lemma~\ref{lem:ref_implication} and Lemma~\ref{lem:multiset_equality_under_fun}.
\end{proof}

Equipped with the above, we can now prove the following:
\begin{proposition}\label{prop:dsswl_refines_dswl}
	Let $\pi$ be any subgraph selection policy. When both are run on the same $\pi$-induced bags, DSS-WL is at least as powerful as DS-WL in distinguishing non-isomorphic graphs, i.e., any two non-isomorphic graphs $G^1, G^2$ distinguished by DS-WL via policy $\pi$ are also distinguished by DSS-WL via the same policy.
\end{proposition}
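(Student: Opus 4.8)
The plan is to reduce the statement to the subgraph-level refinement relation already established in Corollary~\ref{cor:dsswl_subgraphrefines_dswl}, combined with the multiset-transfer principle of Lemma~\ref{lem:ref_implication}. Suppose DS-WL distinguishes $G^1$ and $G^2$ when run with policy $\pi$, and let $\mathcal{G}^1 = \pi(G^1)$, $\mathcal{G}^2 = \pi(G^2)$ be the corresponding bags (well defined up to stacking order, since $\pi$ is permutation-invariant). By definition of the test, there is a finite round $t$ at which the graph representations diverge, i.e. the multisets of DS-WL subgraph colors differ:
\[
\ldblbrace a^{(t)}_S \mid S \in \mathcal{G}^1 \rdblbrace \neq \ldblbrace a^{(t)}_S \mid S \in \mathcal{G}^2 \rdblbrace ,
\]
where $a^{(t)}_S$ denotes the color DS-WL assigns to subgraph $S$ at round $t$.

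Next I would instantiate Lemma~\ref{lem:ref_implication} with $A := \mathcal{G}^1$, $B := \mathcal{G}^2$ playing the role of the two ``sets'' of the lemma (their ``elements'' being the subgraphs), with $c$ the DS-WL subgraph coloring $S \mapsto a^{(t)}_S$ and $d$ the DSS-WL subgraph coloring $S \mapsto b^{(t)}_S$, both taken after the same number $t$ of refinement rounds on the same bags. The hypothesis $d \sqsubseteq_{A,B} c$ required by the lemma --- any two subgraphs in $\mathcal{G}^1 \cup \mathcal{G}^2$ receiving equal DSS-WL colors at round $t$ also receive equal DS-WL colors at round $t$ --- is exactly the contrapositive of Corollary~\ref{cor:dsswl_subgraphrefines_dswl}. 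Lemma~\ref{lem:ref_implication} then yields
\[
\ldblbrace b^{(t)}_S \mid S \in \mathcal{G}^1 \rdblbrace \neq \ldblbrace b^{(t)}_S \mid S \in \mathcal{G}^2 \rdblbrace ,
\]
so the DSS-WL graph representations already diverge at round $t$ and the test declares $G^1, G^2$ non-isomorphic.

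The only bookkeeping needed to make this airtight is the observation that DSS-WL's colorings $b^{(t)}$ are defined for every round $t$ regardless of when the algorithm happens to halt, so it is legitimate to compare $a^{(t)}$ and $b^{(t)}$ precisely at the round where DS-WL first separates the graphs; divergence at that round is all that is required for DSS-WL to separate them too. I do not anticipate any real obstacle: the entire technical content has been pushed into Corollary~\ref{cor:dsswl_subgraphrefines_dswl} (which in turn rests on the induction on $t$ in Lemma~\ref{lem:dsswl_noderefines_wl}), and what remains is the one-line multiset argument above. The single point to be careful about is the orientation of the refinement --- \emph{DSS-WL refines DS-WL}, not the converse --- and aligning it with the direction of $\sqsubseteq_{A,B}$ in Lemma~\ref{lem:ref_implication} so that the implication $\mathcal{A}_c \neq \mathcal{B}_c \Rightarrow \mathcal{A}_d \neq \mathcal{B}_d$ is applied the right way round.
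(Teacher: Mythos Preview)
Your proposal is correct and follows essentially the same route as the paper's proof: both reduce to the subgraph-level refinement established in Corollary~\ref{cor:dsswl_subgraphrefines_dswl} and then invoke Lemma~\ref{lem:ref_implication} with $A,B$ the two bags and $c,d$ the DS-WL and DSS-WL subgraph colorings at the distinguishing round. Your explicit remarks about the orientation of the refinement and the legitimacy of comparing colorings at round $t$ regardless of halting are accurate and match the paper's implicit handling.
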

\begin{proof}
	Let $\mathcal{D}^1 = \ldblbrace G^1_k \rdblbrace_{k=1}^{m}, \mathcal{D}^2 = \ldblbrace G^2_h \rdblbrace_{h=1}^{m}$ be the bags induced by $\pi$ on $G_1, G_2$. Let $a^{(t)}$ and $b^{(t)}$ refer to the \emph{subgraph} coloring computed at step $t$ by, respectively, DS-WL and DSS-WL (these assign each subgraph a color by hashing the multiset of node colors at time $t$). Also, suppose DS-WL distinguishes between $G^1, G^2$ at time step $T$.
	The theorem immediately follows in light of Corollary~\ref{cor:dsswl_subgraphrefines_dswl} and Lemma~\ref{lem:ref_implication}. Let $\mathcal{D}^1, \mathcal{D}^2$ correspond to, respectively sets $A, B$ from Lemma~\ref{lem:ref_implication}, and let $a^{(T)}, b^{(T)}$ correspond to, respectively, colorings $c, d$ from the same proposition. If $G^1, G^2$ are distinguished by DS-WL at step $T$, then we have $\mathcal{A}_c \neq \mathcal{B}_c$. Also, in view of Corollary~\ref{cor:dsswl_subgraphrefines_dswl}, it holds that $b^{(T)} \sqsubseteq_{\mathcal{D}^1, \mathcal{D}^2} a^{(T)}$, that is $d \sqsubseteq_{A,B} c$. All hypotheses in Lemma~\ref{lem:ref_implication} hold and we can then conclude that $\mathcal{A}_d \neq \mathcal{B}_d$ that is $\mathcal{A}_{b^{(T)}} \neq \mathcal{B}_{b^{(T)}}$. This shows DSS-WL assigns the two bags distinct multisets of colors, and thus, as DS-WL, it distinguishes between the two graphs $G^1, G^2$.
\end{proof}

\subsubsection{Expressiveness of WL variants}

In this subsection we will characterize the expressive power of our variants in relation to the standard WL test. We will first show that DSS-WL and DS-WL are indeed at least as powerful as the standard WL test when equipped with non-trivial subgraph selection policies. These will form the necessary premises to prove Theorem~\ref{thm:variants_vs_wl}.

We start by characterizing DSS-WL, and  giving some required definitions and lemmas.
\begin{definition}\label{def:vertex_set_preserving_policy}
	Let $G = (V, E)$ be any graph. Let $\pi$ be any subgraph selection policy generating an $m$-subgraph bag $\mathcal{S}^{\pi}_G = \{ G_i = (V_i, E_i) \}_{i=1}^{m}$ over $G$. Policy $\pi$ is said to be \emph{vertex set preserving} if $V_i = V$ for each $i=1, \ldots, m$.
\end{definition}
Intuitively, a vertex set preserving policy is a policy which does not perturb the original vertex set; subgraphs are obtained by taking subsets of edges. Note that the ND policy is vertex set preserving if, instead of removing one node and all the related connectivity, it removes its incident edges only and leaves the isolated node in the vertex set.

An important component of the DSS-WL algorithm is the $\mathrm{HASH}$ input $C_v^{(t)}$, which represents the multiset of colors for node $v$ across the subgraphs in the bag.
\begin{definition}\label{def:needle_node_colors}
	Let $G = (V, E)$ be any graph. Let $\pi$ be a \emph{vertex set preserving} policy generating an $m$-subgraph bag $\mathcal{S}^{\pi}_G$. The \emph{needle node color} for node $v \in V$ at time step $t \geq 0$, denoted $C_v^{(t)}$, is defined as $\ldblbrace c_{v,S}^{(t)} \rdblbrace_{S \in \mathcal{S}^{\pi}_G}$, where $c_{v,k}^{(t)}$ is the node color of node $v$ on subgraph $k$ at time step $t$. We remark that the \emph{needle node color} $C_v^{(t)}$ is employed by the DSS-WL algorithm to obtain a refinement at time step $(t+1)$.
\end{definition}

We now introduce a first result allowing us to compare DSS-WL and WL.
\begin{lemma}\label{lem:dsswl_needlenoderefines_wl}
	Let $\pi$ be any \emph{vertex set preserving} policy. Let $b^{\pi}$ denote the (needle) node coloring from DSS-WL running on bags induced by $\pi$, and $a$ be the node coloring from the standard WL algorithm. Then $b^{\pi} \sqsubseteq a$, that is, for all graphs $G^1 = (V^1, E^1)$ and $G^2 = (V^2, E^2)$ and all nodes $v \in V^1, w \in V^2$ we have that $b^{\pi}_v = b^{\pi}_w \implies a_v = a_w$.
\end{lemma}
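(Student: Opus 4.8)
The plan is to prove the stronger per-round statement by induction on $t$: for every $t\geq 0$ the needle node coloring $b^{\pi,(t)}$ produced by $t$ rounds of DSS-WL (run on the $\pi$-bags) refines the WL node coloring $a^{(t)}$ produced by $t$ rounds of the standard algorithm, i.e.\ $b^{\pi,(t)}\sqsubseteq a^{(t)}$. The lemma as stated then follows by evaluating both procedures at a common time $T$ past both of their convergence points and invoking monotonicity of refinement (cf.\ Proposition~\ref{prop:dsswl_is_a_refinement} and its WL analogue), so that $b^{\pi}=b^{\pi,(T)}\sqsubseteq a^{(T)}=a$.

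For the base case $t=0$, vertex-set preservation guarantees that node $v$ lies in \emph{every} subgraph of $\mathcal{S}^{\pi}_{G^1}$, so $C^{(0)}_v$ is the multiset consisting of $m$ copies of $v$'s initial label ($\bar c$, or $\iota(v)$ if provided). Hence $C^{(0)}_v = C^{(0)}_w$ forces the two bags to have equal cardinality and the two initial labels to coincide, which is exactly $a^{(0)}_v = a^{(0)}_w$. For the inductive step, the key structural observation is that in the DSS-WL update $c^{(t+1)}_{v,S}=\mathrm{HASH}(c^{(t)}_{v,S}, N^{(t)}_{v,S}, C^{(t)}_v, M^{(t)}_v)$ the last two arguments $C^{(t)}_v$ and $M^{(t)}_v$ do not depend on the subgraph $S$. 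Therefore $C^{(t+1)}_v = \ldblbrace \mathrm{HASH}(c^{(t)}_{v,S}, N^{(t)}_{v,S}, C^{(t)}_v, M^{(t)}_v)\;|\;S\in\mathcal{S}^{\pi}_{G^1}\rdblbrace$, and comparing with the analogous multiset for $w$ and using injectivity of $\mathrm{HASH}$: from $C^{(t+1)}_v = C^{(t+1)}_w$ we read off, by matching the $S$-independent coordinates, that (i) $C^{(t)}_v = C^{(t)}_w$ and (ii) $M^{(t)}_v = M^{(t)}_w$ (we will not need the remaining equality of the $\ldblbrace(c^{(t)}_{v,S},N^{(t)}_{v,S})\rdblbrace$ multisets). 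From (i) and the inductive hypothesis, $a^{(t)}_v = a^{(t)}_w$. The inductive hypothesis $b^{\pi,(t)}\sqsubseteq a^{(t)}$ is precisely the statement that the map $C^{(t)}_u \mapsto a^{(t)}_u$ is well defined, i.e.\ there is a function $\varphi_t$ with $a^{(t)}_u = \varphi_t(C^{(t)}_u)$; since $M^{(t)}_v$ and $M^{(t)}_w$ are by definition the multisets of needle colors over the neighborhoods of $v$ and $w$ in their respective \emph{original} graphs, applying $\varphi_t$ elementwise to (ii) via Lemma~\ref{lem:multiset_equality_under_fun} gives $\ldblbrace a^{(t)}_u\;|\;u\in\mathcal{N}_{G^1}(v)\rdblbrace = \ldblbrace a^{(t)}_u\;|\;u\in\mathcal{N}_{G^2}(w)\rdblbrace$. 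These equalities are exactly the two inputs to the injective WL update producing $a^{(t+1)}_v$ and $a^{(t+1)}_w$, so $a^{(t+1)}_v = a^{(t+1)}_w$, completing the induction.

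The difficulty here is not conceptual but hinges entirely on well-definedness, which is where the \emph{vertex set preserving} hypothesis is used: it ensures that every $C^{(t)}_v$ is a multiset indexed over the \emph{same} collection of subgraphs (legitimizing the ``constant in $S$'' observation and making cardinalities comparable across the two graphs), and that every original-graph neighbor $u$ of $v$ occurs in all subgraphs so that $C^{(t)}_u$, and hence $M^{(t)}_v$, is defined. The only other point requiring care is the passage from the per-round statement to the statement at convergence, which relies on the fact that both colorings are monotone under further refinement, so a common evaluation time $T$ exceeding both convergence times suffices.
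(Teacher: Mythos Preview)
Your proposal is correct and follows essentially the same inductive argument as the paper: both prove $b^{\pi,(t)}\sqsubseteq a^{(t)}$ by exploiting that the last two arguments $C^{(t)}_v,M^{(t)}_v$ of the DSS-WL hash are subgraph-independent, extracting $C^{(t)}_v=C^{(t)}_w$ and $M^{(t)}_v=M^{(t)}_w$ from the equality of needle colors at step $t{+}1$, and then feeding these through the inductive hypothesis to recover equality of the two WL inputs. The only cosmetic difference is that the paper invokes Lemma~\ref{lem:ref_implication} for the neighborhood step whereas you phrase it via the well-defined map $\varphi_t$ and Lemma~\ref{lem:multiset_equality_under_fun}; your added remarks on the role of vertex-set preservation and on passing to convergence are careful but not strictly needed, since the paper only ever uses the per-round refinement $b^{(T)}\sqsubseteq a^{(T)}$.
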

\begin{proof}
	Let $G^1, G^2$ be any two graphs, $a^{(t)}$ refer to the WL coloring at iteration $t$, and $b^{(t)}$ refer to the needle coloring induced by DSS-WL at the same iteration (we drop the superscript $\pi$ to ease the notation). We seek to prove that $b^{(t)} \sqsubseteq a^{(t)} \forall t \geq 0$. We prove the lemma by induction on time step $t$.

	In particular, let us refer with $a^{(t)}_v$ to the WL-color assigned to node $v$ at time step $t$, and with $b^{(t)}_v$ to the needle node color DSS-WL assigns to the same node at the same iteration. We remark that this last is defined as $b^{(t)}_v = C^{(t)}_v = \ldblbrace c^{(t)}_{v,S} \rdblbrace_{S \in \mathcal{S}^{\pi}_G} $, with $c^{(t)}_{v,k}$ the DSS-WL color assigned to node $v$ at time step $t$ on subgraph $S$.

	\emph{(Base case)} At $t=0$, it clearly holds that $b^{(0)} \sqsubseteq a^{(0)}$ given that all nodes are initialised with the same constant color $\bar{c}$.

	\emph{(Step)} Let us assume the thesis holds true at time step $t$: $b_{v}^{(t)} = b_{w}^{(t)} \implies a_{v}^{(t)} = a_{w}^{(t)}$. We seek to prove that if that is the case, it also holds true at time step $t+1$.

	Suppose $b_{v}^{(t+1)} = b_{w}^{(t+1)}$. Unrolling one step backwards in time we get:
	$$
		\ldblbrace \mathrm{g} (b_{v,S}^{(t)}, N_{v,S}^{(t)}, C_{v}^{(t)}, M_{v}^{(t)}) \rdblbrace_{S \in \mathcal{S}^{\pi}_{G^1}} = \ldblbrace \mathrm{g} (b_{w,R}^{(t)}, N_{w,R}^{(t)}, C_{w}^{(t)}, M_{w}^{(t)}) \rdblbrace_{R \in \mathcal{S}^{\pi}_{G^2}}
	$$
	\noindent where $\mathrm{g} \equiv \mathrm{HASH}$. In view of Lemma~\ref{lem:multiset_equality_under_fun}, the following equality also holds:
	$$
		\ldblbrace \mathrm{g}^{-1} \circ \mathrm{g} (b_{v,S}^{(t)}, N_{v,S}^{(t)}, C_{v}^{(t)}, M_{v}^{(t)}) \rdblbrace_{S \in \mathcal{S}^{\pi}_{G^1}} = \ldblbrace \mathrm{g}^{-1} \circ \mathrm{g} (b_{w,R}^{(t)}, N_{w,R}^{(t)}, C_{w}^{(t)}, M_{w}^{(t)}) \rdblbrace_{R \in \mathcal{S}^{\pi}_{G^2}}
	$$
	\noindent since, due to the injectivity of $\mathrm{g}$, the left inverse $\mathrm{g}^{-1}: Im(\mathrm{g}) \rightarrow D(\mathrm{g})$ exists (and is injective). Thus we have:
	$$
		\ldblbrace (b_{v,S}^{(t)}, N_{v,S}^{(t)}, C_{v}^{(t)}, M_{v}^{(t)}) \rdblbrace_{S \in \mathcal{S}^{\pi}_{G^1}} = \ldblbrace (b_{w,R}^{(t)}, N_{w,R}^{(t)}, C_{w}^{(t)}, M_{w}^{(t)}) \rdblbrace_{R \in \mathcal{S}^{\pi}_{G^2}}
	$$
	\noindent where, in view of Lemma~\ref{lem:tuple_multiset_equality_transfers_at_each_el}, we also have:
	$$
		\ldblbrace C_{v}^{(t)} \rdblbrace_{S \in \mathcal{S}^{\pi}_{G^1}} = \ldblbrace C_{w}^{(t)} \rdblbrace_{R \in \mathcal{S}^{\pi}_{G^2}}, \quad
		\ldblbrace M_{v}^{(t)} \rdblbrace_{S \in \mathcal{S}^{\pi}_{G^1}} = \ldblbrace M_{w}^{(t)} \rdblbrace_{R \in \mathcal{S}^{\pi}_{G^2}}
	$$
	The elements of these multisets do not depend on index $k$, therefore we can rewrite the equalities as the set equalities:
	$$
		\{ (C_{v}^{(t)}, |\mathcal{S}^{\pi}_{G^1}|) \} = \{ (C_{w}^{(t)}, |\mathcal{S}^{\pi}_{G^2}|) \}, \quad
		\{ (M_{v}^{(t)}, |\mathcal{S}^{\pi}_{G^1}|) \} = \{ (M_{w}^{(t)}, |\mathcal{S}^{\pi}_{G^2}|) \}
	$$
	\noindent which obviously imply
	$$
		C_{v}^{(t)} = C_{w}^{(t)}, \quad
		M_{v}^{(t)} = M_{w}^{(t)}
	$$

	First, we note that $b_{v}^{(t)} = C_{v}^{(t)} = C_{w}^{(t)} = b_{w}^{(t)}$, which by the induction hypothesis gives us $a_{v}^{(t)} = a_{w}^{(t)}$.

	Second, we note that $M_{v}^{(t)} = \ldblbrace b_{i}^{(t)} | i \in \mathcal{N}(v) \rdblbrace$ and $M_{w}^{(t)} = \ldblbrace b_{j}^{(t)} | j \in \mathcal{N}(w) \rdblbrace$. By the induction hypothesis, Lemma~\ref{lem:ref_implication} gives us that $M_{v}^{(t)} = M_{w}^{(t)} \implies \ldblbrace a_{i}^{(t)} | i \in \mathcal{N}(v) \rdblbrace = \ldblbrace a_{j}^{(t)} | j \in \mathcal{N}(w) \rdblbrace$. This, along with the result above, implies $a_{v}^{(t+1)} = a_{w}^{(t+1)}$, since both the two inputs to hash $\mathrm{g}$ at time step $t$ coincide. The proof terminates.
\end{proof}

Finally, by leveraging the above lemma, we show that DSS-WL is at least as powerful as the WL test in distinguishing between non-isomorphic graphs, when equipped with vertex set preserving policies.

\begin{lemma}\label{lem:dsswl_refines_wl}
	Let $\pi$ be any \emph{vertex set preserving} policy. DSS-WL on $\pi$-induced bags is at least as powerful as WL in distinguishing non-isomorphic graphs, i.e., any two non-isomorphic graphs $G^1, G^2$ distinguished by WL are also distinguished by DSS-WL when run on the respective $\pi$-induced bags.
\end{lemma}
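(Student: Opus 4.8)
The plan is to leverage Lemma~\ref{lem:dsswl_needlenoderefines_wl}, which already establishes that the DSS-WL \emph{needle} node coloring refines the WL node coloring, together with one structural observation: under a vertex set preserving policy, a single DSS-WL subgraph color at round $T+1$ encodes the entire multiset of needle node colors of its parent graph at round $T$. So first I would fix non-isomorphic graphs $G^1,G^2$ that WL separates, say at round $T$, so that their WL readouts differ, $\ldblbrace a^{(T)}_v \mid v \in V^{G^1}\rdblbrace \neq \ldblbrace a^{(T)}_v \mid v \in V^{G^2}\rdblbrace$, where $a^{(t)}$ denotes the WL coloring after $t$ rounds. By (the proof of) Lemma~\ref{lem:dsswl_needlenoderefines_wl}, the needle coloring $b^{(t)}$ with $b^{(t)}_v := C^{(t)}_v$ (Definition~\ref{def:needle_node_colors}) satisfies $b^{(T)} \sqsubseteq_{G^1,G^2} a^{(T)}$; applying Lemma~\ref{lem:ref_implication} with $A = V^{G^1}$, $B = V^{G^2}$, $c = a^{(T)}$, $d = b^{(T)}$ then yields $\ldblbrace C^{(T)}_v \mid v \in V^{G^1}\rdblbrace \neq \ldblbrace C^{(T)}_v \mid v \in V^{G^2}\rdblbrace$.

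The heart of the argument is to build, using injectivity of the two $\mathrm{HASH}$ functions, a map $\Psi$ that takes a DSS-WL subgraph color $c^{(T+1)}_S$ and returns the needle-node-color multiset $\ldblbrace C^{(T)}_v \mid v\in V_G\rdblbrace$ of the graph $G$ that $S$ came from: inverting the outer hash recovers $\ldblbrace c^{(T+1)}_{v,S} \mid v\in V_G\rdblbrace$; inverting the inner hash in $c^{(T+1)}_{v,S} = \mathrm{HASH}(c^{(T)}_{v,S}, N^{(T)}_{v,S}, C^{(T)}_v, M^{(T)}_v)$ recovers the third component $C^{(T)}_v$; and by Lemma~\ref{lem:multiset_equality_under_fun} this can be applied elementwise to the multiset. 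Because $\pi$ is vertex set preserving, $S$ has vertex set $V_G$, so $\Psi(c^{(T+1)}_S) = \ldblbrace C^{(T)}_v \mid v\in V_G\rdblbrace$ exactly, independently of which $S \in \pi(G)$ we picked. Now suppose, for contradiction, that DSS-WL does not separate $G^1,G^2$; then the bag representations coincide at round $T+1$, i.e. $\ldblbrace c^{(T+1)}_S \mid S\in\pi(G^1)\rdblbrace = \ldblbrace c^{(T+1)}_S \mid S\in\pi(G^2)\rdblbrace$ (if DSS-WL has already converged by then, the node partition is stable and this equality still holds). These nonempty multisets share a value $\gamma = c^{(T+1)}_{S_1} = c^{(T+1)}_{S_2}$ with $S_1\in\pi(G^1)$, $S_2\in\pi(G^2)$, so applying $\Psi$ gives $\ldblbrace C^{(T)}_v \mid v\in V^{G^1}\rdblbrace = \Psi(\gamma) = \ldblbrace C^{(T)}_v \mid v\in V^{G^2}\rdblbrace$, contradicting the first paragraph. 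Hence DSS-WL separates $G^1,G^2$.

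The main obstacle I anticipate is resisting the natural but wrong move of pushing the needle-node-multiset inequality \emph{directly} into the DSS-WL bag (subgraph-level) representation: these are ``transposed'' views of the same array of per-node, per-subgraph colors, and equality of one does not imply equality of the other in general. The resolution is exactly the observation packaged in $\Psi$ --- since each node's needle color is an input to that node's refinement in \emph{every} subgraph, any one subgraph color at round $T+1$ already pins down the graph's whole needle-node-multiset, so a single coincidence between the two bags suffices. The remaining items (empty $\pi$-bags make the claim vacuous; the passage from ``representations agree at round $T+1$'' to ``DSS-WL does not separate'', using stability of the refinement, Proposition~\ref{prop:dsswl_is_a_refinement}) are routine bookkeeping.
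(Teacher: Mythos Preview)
Your proof is correct and follows essentially the same route as the paper: both hinge on Lemma~\ref{lem:dsswl_needlenoderefines_wl} together with the observation that $C^{(T)}_v$ is an input to the hash producing $c^{(T+1)}_{v,S}$, so the round-$(T+1)$ subgraph colors determine the needle node colors at round $T$. The paper packages this as a node-level refinement chain $c^{(T+1)} \sqsubseteq_{G,H} b^{(T)} \sqsubseteq a^{(T)}$ and then invokes Lemma~\ref{lem:ref_implication} to conclude the two bags of subgraph colors are in fact \emph{disjoint}, whereas you apply Lemma~\ref{lem:ref_implication} one step earlier (to needle vs.\ WL colors) and then use your map $\Psi$ plus a contradiction on a single shared element; the difference is cosmetic.
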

\begin{proof}
	Throughout the proof, we will denote with $a$ colorings generated by the WL algorithm, while $b$ will denote needle colorings generated by DSS-WL and $c$ its node colorings on subgraphs. Additionally, we denote as $\mathcal{D}^1, \mathcal{D}^2$ the $\pi$-bags of, respectively, $G^1, G^2$. We suppose the two bags have both cardinality $m$, and are defined as $\mathcal{D}^1 = \ldblbrace G^1_k = (V^1, E^1_k) \rdblbrace_{k=1}^{m}, \mathcal{D}^2 = \ldblbrace G^2_h = (V^2, E^2_h) \rdblbrace_{h=1}^{m} $. We do not explicitly focus on the case $|\mathcal{D}^1| \neq |\mathcal{D}^2|$ as the two graphs would then be (trivially) distinguished by DSS-WL.

	Let us start by recalling that at time step $t$, two graphs are deemed non-isomorphic by the WL algorithm when the two are assigned two distinct multisets of \emph{node colors}:
	$$
		\ldblbrace a_v^{(t)} | v \in V^1 \rdblbrace \neq \ldblbrace a_w^{(t)} | w \in V^2 \rdblbrace
	$$
	\noindent while the DSS-WL algorithm deems them non-isomorphic when the two are assigned two distinct multisets of \emph{subgraph colors}, that is:
	$$
		\ldblbrace b_{G^1_k}^{(t)} \rdblbrace_{k=1}^{m} \neq \ldblbrace b_{G^2_h}^{(t)} \rdblbrace_{h=1}^{m}
	$$
	\noindent where $b_{G^i_j}^{(t)}$ generally indicates the color computed for subgraph $j$ of graph $G^i$: $\ldblbrace c_{v,j}^{(t)} | v \in V^i \rdblbrace$.

	Suppose WL distinguishes between the two graphs at iteration $T$. In view of Lemma~\ref{lem:dsswl_needlenoderefines_wl}, we know that $b^{(T)} \sqsubseteq a^{(T)}$, where $b^{(T)}$ is the needle node coloring generated by DSS-WL at time step $T$. At the same time, it can be shown that the needle coloring at $T$ is refined by the coloring generated by DSS-WL at $T+1$ on any pair of subgraphs: $\forall G, H \in \mathcal{D}^1 \cup \mathcal{D}^2 \enspace c^{(T+1)} \sqsubseteq_{G, H} b^{(T)}$. Let us shortly prove it. We seek to show that for all nodes $v$ in $G$, $u$ in $H$:
	$$
	    c_{v,G}^{(T+1)} = c_{u,H}^{(T+1)} \implies b_{v}^{(T)} = b_{u}^{(T)}
	$$
	The antecedent implies the equality of all inputs to the $\mathrm{HASH}$ function at time step $T$, that is: $c_{v,G}^{(T)} = c_{u,H}^{(T)}, N_{v,G}^{(T)} = N_{u,H}^{(T)}, M_{v}^{(T)} = M_{u}^{(T)}$ and, importantly, $C_{v}^{(T)} = C_{u}^{(T)}$. By the definition of needle node colors, this last equality is rewritten as $b_{v}^{(T)} = b_{u}^{(T)}$, which proves the refinement. 

	This tells us that, by transitivity, that $\forall G, H \in \mathcal{D}^1 \cup \mathcal{D}^2 \enspace c^{(T+1)} \sqsubseteq_{G, H} a^{(T)}$. Therefore, by Lemma~\ref{lem:ref_implication}, if two graphs are deemed non-isomorphic by the WL algorithm at step $T$, then at step $T+1$, DSS-WL assigns a distinct color to any pair of subgraphs from the respective bags, that is:
	$$
		b_{G^1_k}^{(T+1)} = \ldblbrace c_{v,k}^{(T+1)} | v \in V^1 \rdblbrace \neq \ldblbrace c_{w,h}^{(T+1)} | w \in V^2 \rdblbrace = b_{G^2_h}^{(T+1)} \quad \forall h, k \in \{1, \mathellipsis, m\}
	$$
	This immediately leads DSS-WL to distinguish between graphs $G^1, G^2$ as their corresponding multisets of subgraph colors are disjoint.
\end{proof}

Let us now characterize the expressive power of DS-WL. We start by proving that, under certain subgraph selection policies, it can be made at least as expressive as the WL test. Recall that for a policy $\pi$, the augmented policy $\widehat{\pi}$ is defined by $\widehat{\pi}(G) = \pi(G) \cup \{G\}$, which augments $\pi$ with the original graph $G$.
\begin{lemma}\label{lem:dswl_refines_wl}
    Let $\widehat{\pi}$ be any augmented subgraph selection policy. DS-WL on $\widehat{\pi}$-induced bags is at least as powerful as WL in distinguishing non-isomorphic graphs, i.e., any two non-isomorphic graphs $G^1, G^2$ distinguished by WL are also distinguished by DS-WL when run on the respective $\widehat{\pi}$-induced bags.
\end{lemma}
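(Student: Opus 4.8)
\emph{Proof plan.} The plan is to show that the DS-WL representation of a graph under $\widehat\pi$ already encodes the stable WL colouring of that graph, so that any WL-separable pair is automatically DS-WL-separable.

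First I would unwind the definition of DS-WL on the augmented bag $\widehat\pi(G)=\pi(G)\cup\{G\}$: it runs ordinary WL independently on each subgraph, and at convergence represents $G$ by the multiset $\mathcal{M}(G):=\ldblbrace \mathrm{wl}(S)\mid S\in\widehat\pi(G)\rdblbrace$, where $\mathrm{wl}(S)$ denotes the histogram of stable WL node colours of $S$. Because all these colourings refine over time (Proposition~\ref{prop:dsswl_is_a_refinement}), a separation of the two bags attained at any intermediate step survives to convergence (Lemma~\ref{lem:ref_implication}, applied with subgraphs in the role of vertices and subgraph colours in the role of vertex colours); hence DS-WL separates $G^1,G^2$ under $\widehat\pi$ if and only if $\mathcal{M}(G^1)\neq\mathcal{M}(G^2)$. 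Likewise, WL separates $G^1,G^2$ if and only if $\mathrm{wl}(G^1)\neq\mathrm{wl}(G^2)$. So it is enough to prove that $\mathcal{M}(G)$ determines $\mathrm{wl}(G)$.

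The key step is to recover the number of edges of a graph from its stable WL colour histogram. After a single refinement round a node $v$ receives $\mathrm{HASH}\bigl(\bar c,\ \ldblbrace \bar c\ (\deg v\ \text{times})\rdblbrace\bigr)$, so injectivity of $\mathrm{HASH}$ makes $\deg v$ a fixed function $g$ of $c^{1}_v$, and hence of the stable colour of $v$, uniformly over all graphs; thus $e(h):=\tfrac12\sum_{c\in h} g(c)$ is well defined on colour histograms and $e(\mathrm{wl}(S))=|E(S)|$. Using the paper's convention that a policy returns spanning subgraphs, every $S\in\pi(G)$ satisfies $|E(S)|\le|E(G)|$, with equality precisely when $S=G$. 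Therefore $\max_{m\in\mathcal{M}(G)} e(m)=|E(G)|$, this maximum being realised (only) by the copies of $G$ present in $\widehat\pi(G)$, all of which contribute the value $\mathrm{wl}(G)$. Consequently every histogram of maximal $e$-value occurring in $\mathcal{M}(G)$ equals $\mathrm{wl}(G)$, so $\mathcal{M}(G^1)=\mathcal{M}(G^2)$ forces $\mathrm{wl}(G^1)=\mathrm{wl}(G^2)$; contrapositively, if WL distinguishes $G^1$ and $G^2$ then so does DS-WL under $\widehat\pi$, which is the claim.

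I expect the only genuine obstacle to be this step of \emph{singling out the original graph inside the bag}: DS-WL never marks which subgraph is the original one, so one must supply a colour-intrinsic invariant — here, the edge count, reconstructed from the WL histogram via the first refinement round — that is maximised exactly by $G$, and one must verify both that this invariant is a genuine function of the stable colouring and that $G$ is the unique edge-maximal member of the bag. Everything else (stability of separations across refinement rounds, the ``separates iff separates at convergence'' reductions, the elementary multiset manipulations) is routine given Proposition~\ref{prop:dsswl_is_a_refinement} and Lemmas~\ref{lem:ref_implication}--\ref{lem:multiset_equality_under_fun}.
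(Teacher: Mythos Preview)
Your proposal is correct and follows essentially the same approach as the paper: both arguments single out the original graph inside the augmented bag by the fact that it is the unique edge-maximal member, and both observe that the edge count is already determined by the WL colouring after one refinement round (so proper subgraphs receive colours disjoint from that of $G$). The paper carries this out at the finite round $T$ where WL first separates $G^1,G^2$ rather than at convergence, which lets it skip your reduction via Lemma~\ref{lem:ref_implication}, but the core mechanism is identical.
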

\begin{proof}
    We focus on the non-trivial case in which the two graphs have the same number of edges (and nodes) and policy $\widehat{\pi}$, on the two graphs, generates bags of same cardinality. Let us remark that subgraph colors are obtained by DS-WL by running an independent WL procedure on each of them. 
    
    Let $G^1, G^2$ be two graphs distinguished by WL at round $T \geq 1$. Let $a^1, a^2, a^1 \neq a^2$ be the representations WL assigns to the two input graphs at time step $T$.
    Also, let us denote by $C^1 = \ldblbrace c^{1}_k \rdblbrace_{k=0}^{m}, C^2 = \ldblbrace c^{2}_k \rdblbrace_{k=0}^{m}$ the multisets of \emph{subgraph} colors computed by DS-WL on the two input $\widehat{\pi}$-bags at round $T$. In particular, if $c^{1}_0, c^{2}_0$ are the subgraph colors associated with $G_0^1 = G^1, G_0^2 = G^2$, clearly, $c^{1}_0 = a^{1}, c^{2}_0 = a^{2}, c^{1}_0 \neq c^{2}_0$.
    
    Now, any other subgraph from any bag has a number of edges smaller than the one in the original graphs. This implies that, if $S$ is a proper subgraph of $G$, then WL assigns it a color distinct from the one assigned to $G$ already at the first round. Then, $ \forall k=1, \mathellipsis, m, i=1,2, c^{i}_k \notin \{ c^{1}_0, c^{2}_0 \}$. This entails the presence of a color in $C^1$ which is not in $C^2$ (and vice-versa), thus $C^1 \neq C^2$ which implies DS-WL distinguishes between the two graphs.
\end{proof}

As for DS-WL, the inclusion of the original graphs in the generated bags guarantees a lower bound on the expressive power of the test, which corresponds to that of the standard WL algorithm. Nonetheless, there exist families of graphs undistinguishable by WL, which can be are separated by DS-WL when equipped with policies which do not include the original graph in the induced bags. Lemma~\ref{lem:cslk2} provides examples of them: $\mathrm{CSL}(n,2)$ graphs can be distinguished from any $\mathrm{CSL}(n,k)$ with $k \in [3, n/2 - 1]$ with either ND, EGO and EGO+ policy. We recall the definition of these example graphs:

\noindent \textbf{Circulant Skip Link Graphs.} For parameters $n \geq 7$ and $k \in [2, n-2]$, $\mathrm{CSL}(n,k)$ is a 4-regular graph with vertex set $V= \{0, \ldots, n-1\}$ and edges between node $i$ and $i\pm 1 \ \mrm{mod} \ n$ as well as between $i$ and $i\pm k \ \mrm{mod} \ n$ \citep{murphy2019relational}. For an infinite subset of non-isomorphic pairs of these graphs, we prove that DS-WL with two different policies can distinguish pairs that 1-WL cannot.

\begin{figure}[ht]
    \centering
    \subfloat[$\mathrm{CSL}(8,2)$ (left) and $\mathrm{CSL}(8,3)$ (right)]{\includegraphics[width=.35\columnwidth]{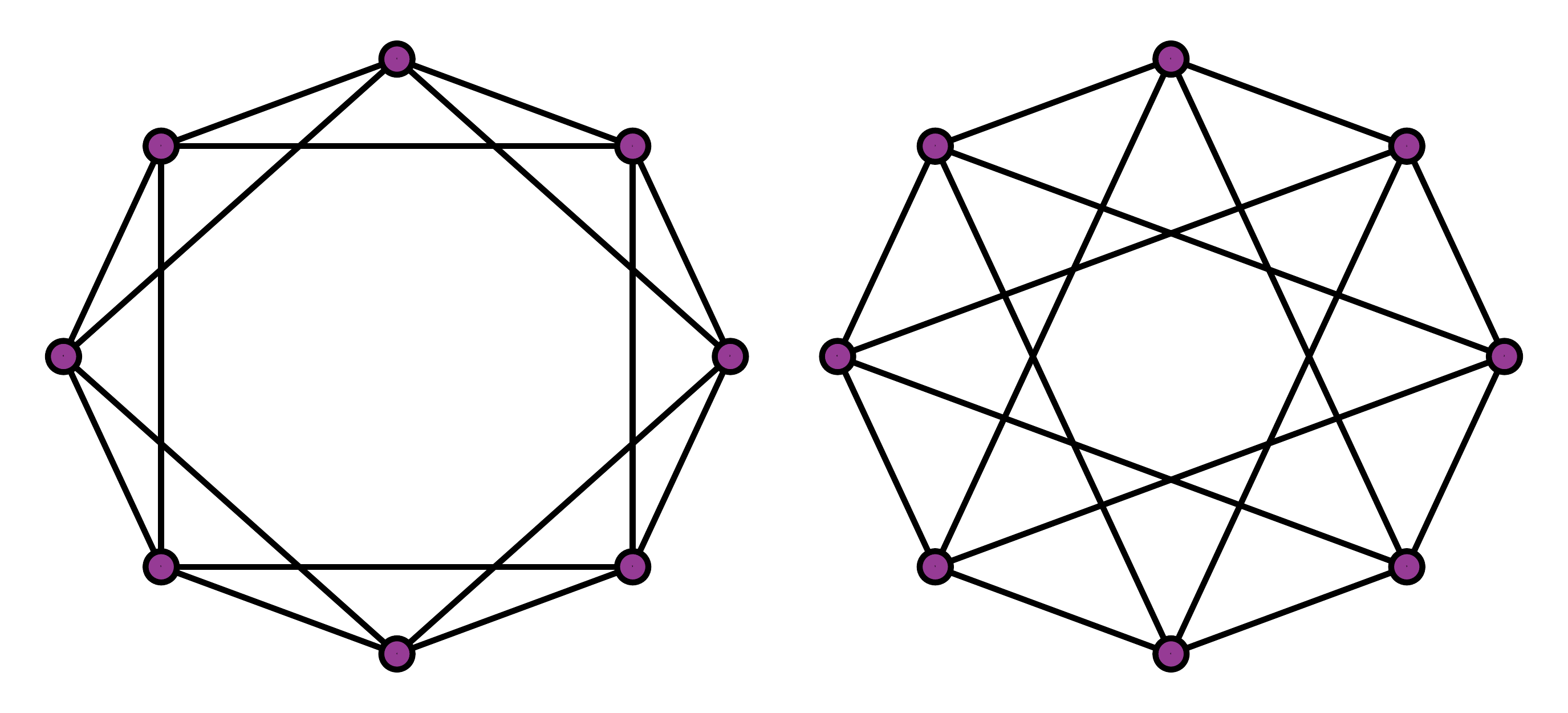}}\hspace{20pt}
    \subfloat[Subgraphs from ND over $\mathrm{CSL}(8,2)$ and $\mathrm{CSL}(8,3)$]{\includegraphics[width=.305\columnwidth]{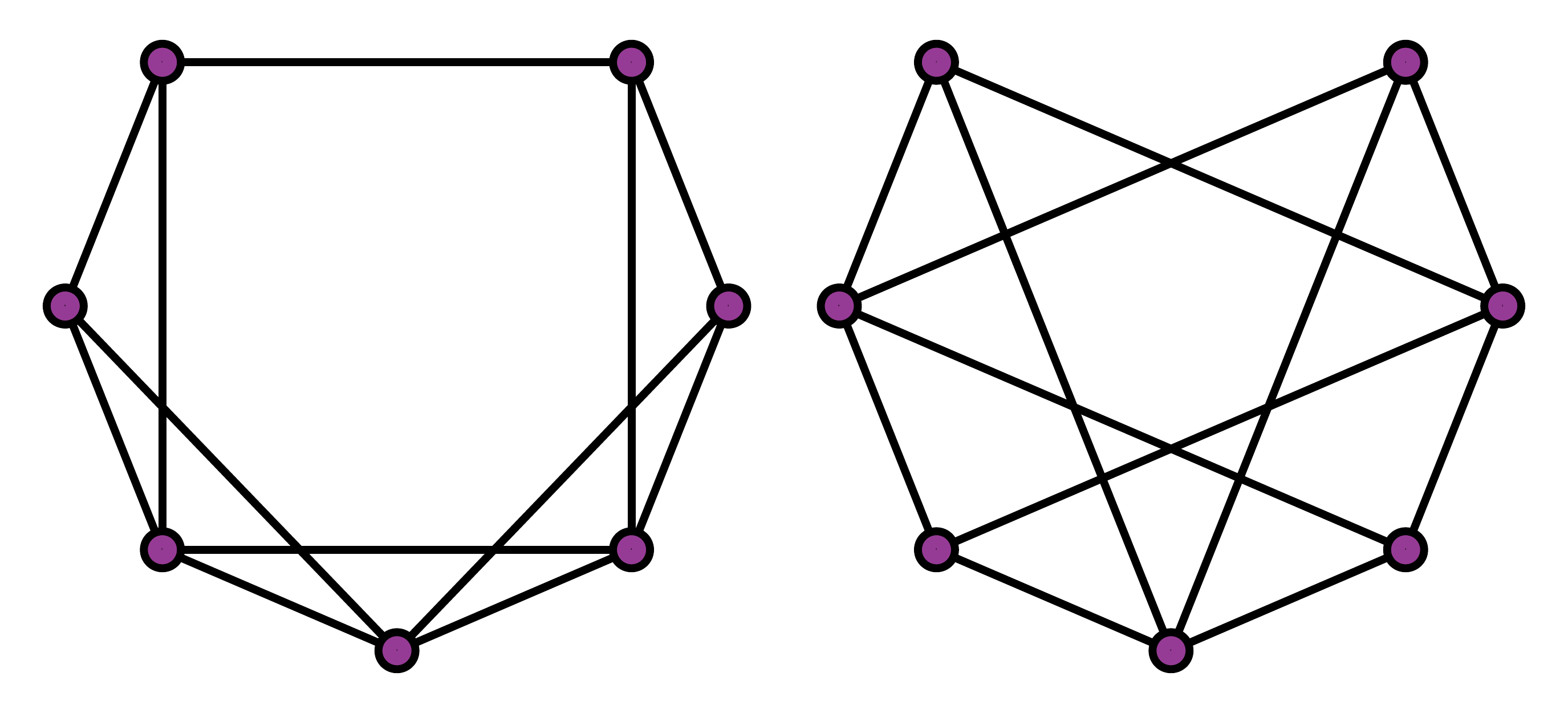}}
    \caption{Graphs $\mathrm{CSL}(8,2)$ and $\mathrm{CSL}(8,3)$ (left) and their node-deleted subgraphs (right).}
    \label{fig:csl_example}
\end{figure}

\begin{proof}[Proof of Lemma~\ref{lem:cslk2}]

\newcommand{\Gskip}{G_{\mathrm{skip}}}

We start with a lemma that reduces the problem to considering a single subgraph of each $\mathrm{CSL}$ graph.

\begin{lemma}\label{lem:csl_node_delete}
    All node-deleted subgraphs of $G = \mathrm{CSL}(n,k)$ are isomorphic. All depth-1 ego-nets of $G=\mathrm{CSL}(n,k)$ are isomorphic.
\end{lemma}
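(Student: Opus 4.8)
The plan is to use the fact that $\mathrm{CSL}(n,k)$ is vertex-transitive, so that any two of its node-deleted subgraphs — and any two of its depth-$1$ ego-nets — are related by an automorphism of the ambient graph and are therefore isomorphic. The entire argument reduces to producing, for any pair of vertices, an automorphism of $\mathrm{CSL}(n,k)$ carrying one to the other.

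First I would recall that $\mathrm{CSL}(n,k)$ is the circulant graph on vertex set $\mathbb{Z}_n$ in which $i \sim j$ if and only if $i - j \in \{\pm 1, \pm k\} \pmod{n}$; in particular, adjacency depends only on the difference $i-j$ modulo $n$. Hence for every $s \in \mathbb{Z}_n$ the rotation $\rho_s : i \mapsto i+s \pmod{n}$ is a graph automorphism, since $(i+s)-(j+s) = i-j$, and the subgroup $\{\rho_s : s \in \mathbb{Z}_n\}$ acts transitively on the vertices: given $u,v$, the automorphism $\rho_{v-u}$ sends $u$ to $v$. This is the only substantive step, and it is immediate from the circulant structure; there is no real obstacle.

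Next, for the node-deleted subgraphs I would use the paper's convention that deleting node $u$ keeps the full vertex set $\mathbb{Z}_n$ and removes exactly the edges incident to $u$, yielding $G_u = (\,\mathbb{Z}_n,\ E \setminus \{e \in E : u \in e\}\,)$. Fixing $u,v$ and setting $\phi = \rho_{v-u}$, the fact that $\phi \in \mathrm{Aut}(G)$ with $\phi(u)=v$ implies that $\phi$ maps the edges incident to $u$ bijectively onto the edges incident to $v$, and the remaining edges bijectively onto the remaining edges; hence $\phi$ restricts to an isomorphism $G_u \cong G_v$. Since $u,v$ were arbitrary, all node-deleted subgraphs are isomorphic.

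Finally, for the depth-$1$ ego-nets, $\mathrm{Ego}_1(w)$ is the subgraph induced on $\{w\} \cup \mathcal{N}(w)$ (padded with the remaining isolated vertices when one retains all $n$ nodes). With $u,v$ and $\phi = \rho_{v-u}$ as above, $\phi$ maps $\mathcal{N}(u)$ bijectively onto $\mathcal{N}(v)$, hence maps $\{u\} \cup \mathcal{N}(u)$ onto $\{v\} \cup \mathcal{N}(v)$ and, being a bijection of $\mathbb{Z}_n$, maps the complementary set of isolated vertices onto the corresponding complement; since $\phi$ both preserves and reflects adjacency, it is an isomorphism between the two induced subgraphs, so all depth-$1$ ego-nets are isomorphic. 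The only point needing mild care throughout is matching the node-deletion / ego-net bookkeeping (retaining all $n$ vertices) with the automorphism argument, which the formulation above handles directly.
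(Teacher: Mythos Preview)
Your proposal is correct and takes essentially the same approach as the paper: both use the rotation $j \mapsto j + s \pmod{n}$ as the explicit automorphism of $\mathrm{CSL}(n,k)$ carrying one vertex to another, and deduce that the corresponding node-deleted subgraphs and ego-nets are isomorphic. Your write-up is more detailed about why the rotation preserves adjacency and about the bookkeeping with retained isolated vertices, but the underlying argument is identical to the paper's one-line proof.
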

\begin{proof}
    For $i \in \{0, \ldots, n-1\}$, there is an isomorphism from node-deleted subgraphs $G - v_0$ to $G - v_i$ given by mapping node $j$ in $G - v_0$ to node $j+i \mod n$ in $G - v_i$. This same isomorphism works for mapping the ego-net rooted at $v_0$ to the ego-net rooted at $v_i$.
    
\end{proof}

    Let the nodes of $\mathrm{CSL}(n, 2)$ be denoted $v_0^{(1)}, \ldots, v_{n-1}^{(1)}$ and similarly denote the nodes of $\mathrm{CSL}(n, k)$ as $v_0^{(2)}, \ldots, v_{n-1}^{(2)}$. First, we start with the node-deleted case.
    
    \textbf{Node-deleted policy.} Denote the subgraph $G_1 = \mathrm{CSL}(n, 2) - v_0^{(1)}$ and $G_2 = \mathrm{CSL}(n,k) - v_0^{(2)}$. Due to Lemma~\ref{lem:csl_node_delete}, it suffices to show that $G_1 \neq_{1-\mathrm{WL}} G_2$.

    After deleting the nodes, in $G_1$ we note that $v_1^{(1)}, v_{n-1}^{(1)}, v_2^{(1)},$  and $v_{n-2}^{(1)}$ have degree 3 while all other nodes have degree 4. Also, these degree 3 nodes each have at least one neighbor of degree 3.

    In $G_2$, the nodes of degree 3 are $v_1^{(2)}, v_{n-1}^{(2)}, v_k^{(2)}, $ and $v_{n-k}^{(2)}$. However, it can be seen that none of these nodes have a neighbor of degree 3. 

    In fact, it suffices to prove that just $v_1^{(2)}$ has no neighbors of degree 3. To see this, note that $v_1^{(2)}$ has neighbors $v_2^{(2)}, v_{1+k}^{(2)}$, and $v_{n-k+1}^{(2)}$. The constraints $n \geq 7$ and $k \in [3, n/2-1]$ show that these neighbors are disjoint from the degree 3 nodes $S = \{v_{n-1}^{(2)}, v_k^{(2)}, v_{n-k}^{(2)} \}$. This is because these constraints imply that the indices of the degree 3 nodes in $S$ satisfy
    \begin{align*}
        3 < n/2  & < n-1 < n \\
        3 & \leq k  \leq n/2-1\\
        4 < n/2 + 1 & \leq  n-k  \leq n-3.
    \end{align*}
    On the other hand, the indices of the neighbors of $v_1^{(2)}$ (i.e. $v_2^{(2)}, v_{k+1}^{(2)}, v_{n-k+1}^{(2)}$) satisfy:
    \begin{align*}
        0 & < 2 < 3\\
        k & < k+1  \leq n/2\\
        n/2 + 2 & \leq n-k+1  < n-2.
    \end{align*}
    Which together suffice to show that the indices do not overlap. Indeed, $2$ is less than all of the indices of degree 3 nodes. $k+1$ is less than $n-k$, greater than $k$ , and less than $n-1$. Finally, $n-k+1$ is less than $n-1$, greater than $k$, and greater than $n-k$.
    
    Thus, $v_1^{(2)}$ has no degree 3 neighbors, so that 1-WL distinguishes $G_1$ and $G_2$ after two iterations.
    
    \textbf{EGO policies.} Let $G_1$ be the depth-1 ego-net of $\mathrm{CSL}(n,2)$ rooted at $v_0^{(1)}$, and let $G_2$ be the depth-1 ego-net of $\mathrm{CSL}(n,k)$ rooted at $v_0^{(2)}$. This proof works for both the EGO and EGO+ cases, as we will simply show the degree distribution of $G_1$ and $G_2$ differ. This is because $G_1$ has 7 edges total --- 4 edges from $v_0^{(1)}$ to $v_{n-1}^{(1)}, v_{n-2}^{(1)}, v_{1}^{(1)}, $ and $v_{2}^{(1)}$, as well as edges  $(v_{n-2}^{(1)}, v_{n-1}^{(1)})$, $(v_1^{(1)}, v_2^{(1)})$, and $(v_{n-1}, v_1)$. However, $G_2$ only has 4 edges --- those incident to the root $v_0^{(2)}$. The proof that no other edges exist is the same as the above proof that the nodes of degree 3 in the node-deleted subgraphs of $G_2$ have no other neighbors of degree 3. As the degree distributions differ, 1-WL can distinguish $G_1$ and $G_2$.
    
    The derivations above are enough to prove the thesis hold for DS-WL under policies ND, EGO and EGO+. Proposition~\ref{prop:dsswl_refines_dswl} ensures that these graphs are also distinguished by DSS-WL and allows us to conclude the proof.
\end{proof}

We now exhibit an example of a graph pair separated by our variants with ED policy (and undistinguishable by WL).
\begin{definition}[$C_6$ and $2 \times C_3$]
    The $C_6$ graph is a $2$-regular graph composed by a chordless cycle of $6$ nodes. The $2 \times C_3$ graph is $2$-regular and is a graph formed by two disconnected cycles of $3$ nodes each (triangles). These graphs are illustrated in Figure~\ref{fig:c6_2xc3_a}.
\end{definition}

\begin{figure}[ht]
    \centering
    \subfloat[$C_6$ (left) and $2 \times C_3$ (right)]{\includegraphics[width=.35\columnwidth]{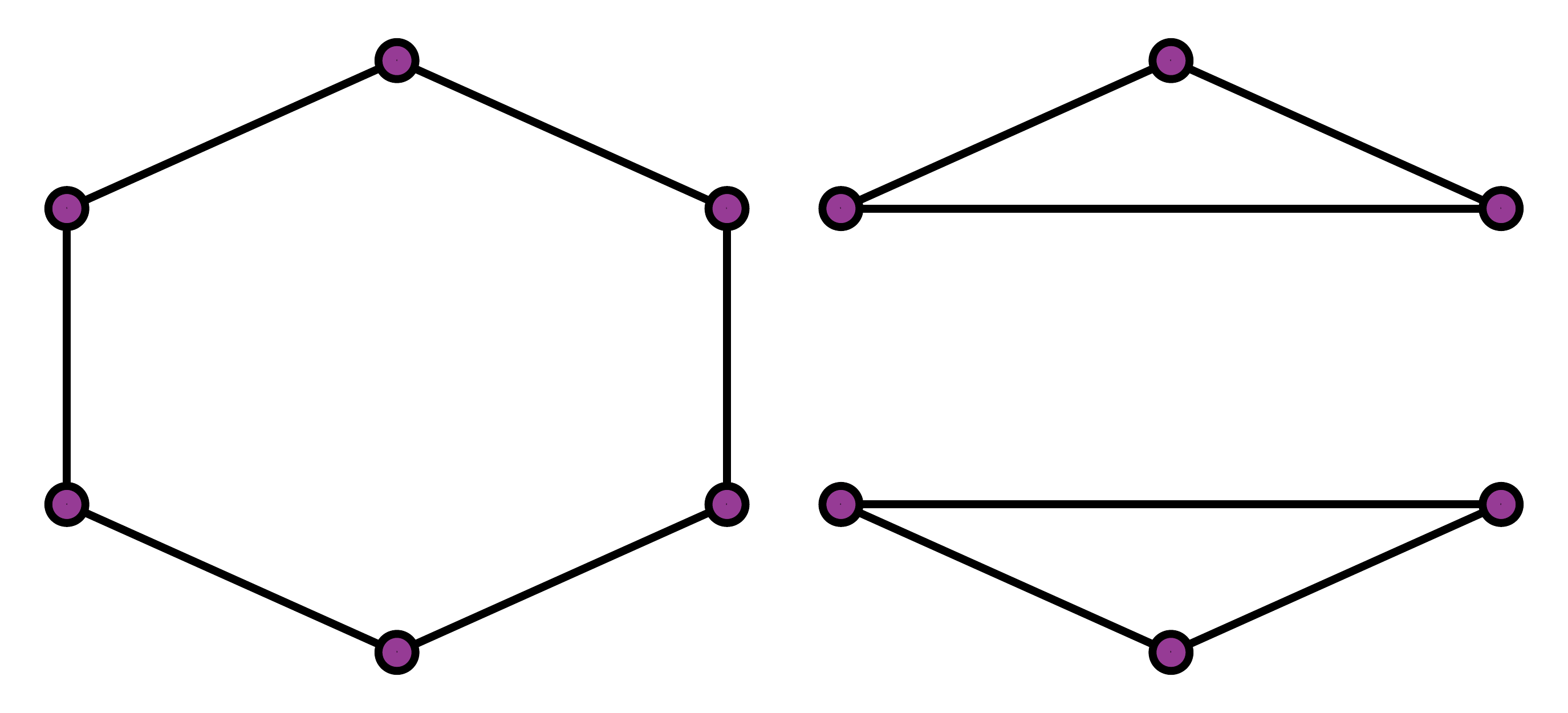}\label{fig:c6_2xc3_a}}\hspace{20pt}
    \subfloat[Subgraphs from ED over $C_6$ and $2 \times C_3$]{\includegraphics[width=.35\columnwidth]{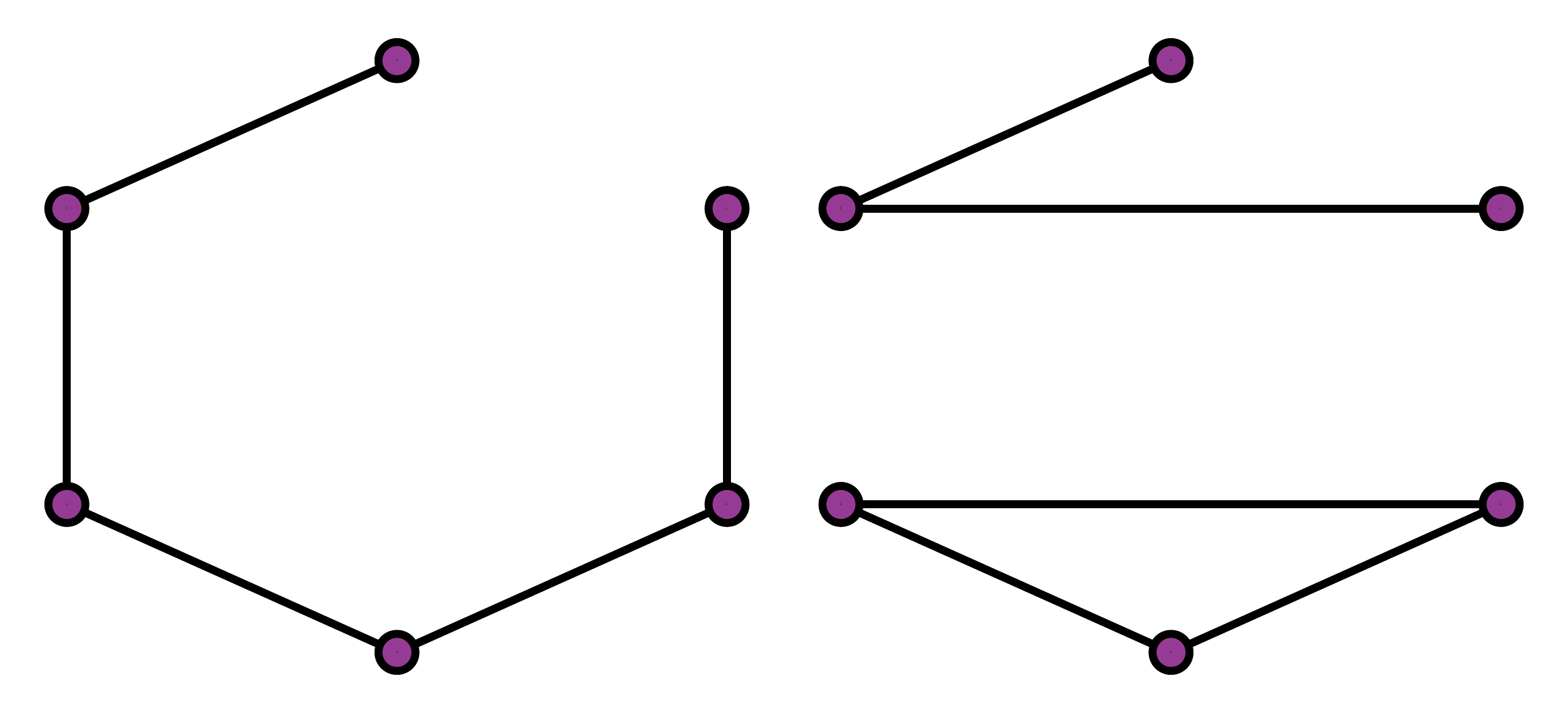}\label{fig:c6_2xc3_b}}
    \caption{Graphs $C_6$, $2 \times C_3$ (left) and their edge-deleted subgraphs (right).}
    \label{fig:c6_2xc3}
\end{figure}

It is easy to verify that, when run on this pair, the WL test converges at the first iteration: each node is assigned a color representing degree $2$. This is a consequence of being $2$-regular. We now show that they are indeed distinguished by our variants with ED policy.

\begin{lemma}\label{lem:c6_2xc3}
    The $C_6$ and $2 \times C_3$ are disambiguated by DS-WL and DSS-WL under ED policy.
\end{lemma}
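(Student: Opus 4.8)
The plan is to reduce the lemma to a single $1$-WL separation of two small subgraphs, exploiting the edge-transitivity of both input graphs. First I would describe the ED bags. Since $C_6$ is edge-transitive, every subgraph obtained by deleting one edge is isomorphic to the $6$-node path $P_6$, so $\mathrm{ED}(C_6)$ is the multiset consisting of $P_6$ with multiplicity $6$. Since the automorphism group of $2 \times C_3$ acts transitively on its $6$ edges, every edge-deleted subgraph is isomorphic to $P_3 \sqcup C_3$ — a $3$-node path together with a disjoint triangle, on all $6$ nodes (consistent with the ED policy keeping the vertex set) — so $\mathrm{ED}(2 \times C_3)$ is $P_3 \sqcup C_3$ with multiplicity $6$. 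Both subgraphs have $n = 6$ nodes and the common degree sequence $(1,1,2,2,2,2)$, so one round of $1$-WL does not yet separate them.

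Next I would run two rounds of $1$-WL colour refinement on $P_6$ and on $P_3 \sqcup C_3$, computed consistently across the two graphs (equivalently, on their disjoint union). After the first round, colours encode degree and the two histograms agree (two degree-$1$ colours, four degree-$2$ colours). After the second round, a degree-$2$ node is refined by the multiset of degrees of its neighbours. In $P_3 \sqcup C_3$, the middle node of the path is a degree-$2$ node both of whose neighbours have degree $1$, producing the colour $(2, \ldblbrace 1,1 \rdblbrace)$; in $P_6$, every degree-$2$ node has at least one degree-$2$ neighbour (the neighbours of the two "near-endpoint" nodes give multiset $\ldblbrace 1,2 \rdblbrace$, the two central nodes give $\ldblbrace 2,2 \rdblbrace$), so the colour $(2, \ldblbrace 1,1 \rdblbrace)$ never occurs. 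Hence this colour class is nonempty in $P_3 \sqcup C_3$ and empty in $P_6$, so the round-$2$ colour histograms differ and $1$-WL assigns $P_6$ and $P_3 \sqcup C_3$ distinct stable colour histograms.

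Then I would conclude for DS-WL: by definition it runs an independent WL refinement on each subgraph and compares the two multisets of subgraph colours. All subgraphs in $\mathrm{ED}(C_6)$ are isomorphic to $P_6$ and thus receive one common colour $\kappa_1$ (an injective hash of the stable WL colour histogram of $P_6$); all subgraphs in $\mathrm{ED}(2 \times C_3)$ receive the colour $\kappa_2$ attached to $P_3 \sqcup C_3$. By the previous paragraph $\kappa_1 \neq \kappa_2$, so the multisets $\ldblbrace \kappa_1 \rdblbrace^{\times 6}$ and $\ldblbrace \kappa_2 \rdblbrace^{\times 6}$ are disjoint and DS-WL outputs "non-isomorphic". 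Finally, DSS-WL also separates $C_6$ from $2 \times C_3$, since by Proposition~\ref{prop:dsswl_refines_dswl} DSS-WL is at least as powerful as DS-WL under the same policy.

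The only step requiring genuine care — as opposed to bookkeeping about automorphisms — is the $1$-WL separation of $P_6$ and $P_3 \sqcup C_3$: I must exhibit a colour class that is genuinely present in one stable colouring and genuinely absent from the other (rather than histograms merely "looking different"), and verify that the colour $(2, \ldblbrace 1,1 \rdblbrace)$ is a well-defined object shared by the two parallel refinement computations. The degree-then-neighbour-degree calculation above is exactly what pins this down.
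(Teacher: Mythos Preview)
Your proof is correct and follows essentially the same approach as the paper: reduce to one representative subgraph per bag via edge-transitivity, then separate $P_6$ from $P_3 \sqcup C_3$ by two rounds of $1$-WL, and finally invoke Proposition~\ref{prop:dsswl_refines_dswl} for DSS-WL. The only cosmetic difference is that the paper points to the colour $(2,\ldblbrace 1,2 \rdblbrace)$ (present in $P_6$, absent in $P_3 \sqcup C_3$) whereas you point to $(2,\ldblbrace 1,1 \rdblbrace)$ (present in $P_3 \sqcup C_3$, absent in $P_6$); both witnesses are valid.
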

\begin{proof}
    We discuss the DS-WL variant; if this is able to distinguish between the two graphs, then DSS-WL is as well, in light of Proposition~\ref{prop:dsswl_refines_dswl}. We first note that, on each of the two graphs, the ED policy induces bags of only one isomorphism class (shown in Figure~\ref{fig:c6_2xc3_b}). This means that it is sufficient to show these two subgraphs are distinguished by WL to prove DS-WL separates them. We simulate two WL rounds on these subgraphs starting from a constant color initialisation. 
    
    \emph{(Iter. 1)} Colors after the first WL refinement uniquely represent node degrees. Each of the two subgraphs will therefore contain two nodes with color $1$ and four nodes with color $2$. The subgraphs are not yet separated.
    
    \emph{(Iter. 2)} After the second WL refinement, we note that the number of nodes of color $(2, \ldblbrace 1,2 \rdblbrace)$ is two in $C_6$, while this color is absent in $2 \times C_3$ because there is no node having both a $1$-degree and a $2$-degree node in its neighborhood. At this step the subgraphs are separated.
\end{proof}

We are now ready to prove Theorem~\ref{thm:variants_vs_wl}.
\begin{proof}[Proof of Theorem~\ref{thm:variants_vs_wl}]
    
    \emph{(DSS-WL)} Policies ND, ED, EGO, EGO+ are vertex set preserving. Thus, in view of Lemma~\ref{lem:dsswl_refines_wl}, DSS-WL refines WL under these subgraph selection strategies and we know that there exists exemplary graph pairs distinguished by DSS-WL under these policies and not by WL. In particular, Lemma~\ref{lem:cslk2} exhibits exemplary pairs for policies ND, EGO, EGO+: these CSL pairs are indeed indistinguishable by WL~\citep{murphy2019relational, chen2019equivalence}. Lemma~\ref{lem:c6_2xc3} exhibits another such exemplary pair for policy ED. This is sufficient to prove the thesis for DSS-WL.
    
    \emph{(DS-WL)} Let us consider augmented policies $\mathrm{\widehat{ND}}$, $\mathrm{\widehat{ED}}$, $\mathrm{\widehat{EGO}}$, $\mathrm{\widehat{EGO}+}$. In view of Lemma~\ref{lem:dswl_refines_wl}, these policies are such that DS-WL refines WL. Additionally, we note that Lemma~\ref{lem:cslk2} and Lemma~\ref{lem:c6_2xc3} trivially hold also when considering these policies, thus providing counterexamples of graphs indistinguishable by WL but separated by DS-WL under these policies. This allows us to conclude the proof.
\end{proof}

\subsubsection{Neural counterparts}

We will now focus on characterizing the relation between the proposed WL variants and neural counterparts such as the DSS-GNN architecture introduced in the main text, seeking to prove Theorem~\ref{thm:dssgnn_refines_dsswl}. In this subsection we will always work with a DSS-GNN architecture with GraphConv encoders~\citep{morris2019weisfeiler} and entry-wise $\max$ adjacency matrix aggregation. Accordingly, the representation for node $v$ in subgraph $i$ is given by:
\begin{equation}\label{eq:dssgnn_morris_layer}
    h_{v,i}^{t+1} = \sigma \big( W^1_1 h_{v,i}^t + W^1_2 \sum_{u \sim_i v} h_{u, i}^t + W^2_1 \sum_{j=1}^m h_{v, j}^t + W^2_2 \sum_{w \sim v} \sum_{j=1}^m h_{w, j}^t \big)
\end{equation}
\noindent where $\sigma$ is the ReLU non-linearity, $\sim_i$ indicates the connectivity over subgraph $i$ and $\sim$ indicates the connectivity induced by the aggregated subgraph adjacency matrices. Let us remark that, for all policies of interest in this work, each edge in the original graph is present in at least one subgraph. In these cases, $\sim$ corresponds to the connectivity in the original graph. We conveniently formalize this intuition in the following definition:
\begin{definition}[Edge-covering policies]
    A subgraph selection policy $\pi$ is said to be \emph{edge covering} if for any graph $G = (V, E)$, $\forall \thinspace e \in E, \enspace \exists \thinspace S = (V_S, E_S) \in \pi(G)$ such that $e \in E_S$.
\end{definition}

Clearly, policies ND, ED, EGO(+) are all edge-covering. Let us now show that in the case of edge-covering policies, there exists a DSS-GNN architecture with layers in the form of \Cref{eq:dssgnn_morris_layer} that, when working on graphs of bounded size, can ``simulate'' the color refinement steps of the DSS-WL algorithm.
\begin{lemma}\label{lem:dssgnn_simulates_dsswl}
    Let $\mathcal{G} = \ldblbrace G_i = (V_i, E_i) \rdblbrace_{i=1}^{m}$ denote the bag generated by an edge-covering selection policy on any $G$ from a family $\mathcal{F}$ of graphs with bounded size, optionally endowed with node labels from a finite support. Let $c_{v,k}^{(T)}$ be the color assigned by DSS-WL to node $v$ in subgraph $k$, after $T < \infty$ iterations. There exists a DSS-GNN model $\mathcal{M}$ with $D = 10T$ layers of the form described in \Cref{eq:dssgnn_morris_layer} such that $c_{v,k}^{(T)}$ is uniquely represented by a one-hot encoding $h_{v,k}^{(D)}$, the representation for node $v$ in subgraph $k$ computed by such a model.
\end{lemma}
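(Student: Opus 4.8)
The plan is to build the DSS-GNN model $\mathcal{M}$ layer-by-layer so that it tracks the DSS-WL colorings, mimicking the classical argument that an MPNN can simulate 1-WL (as in \citet{xu2018how, morris2019weisfeiler}) but lifted to the bag-of-subgraphs setting. First I would set up an encoding convention: since $\mathcal{F}$ has bounded size and finitely many initial node labels, the set of colors $c_{v,k}^{(t)}$ produced by DSS-WL at any iteration $t \le T$ is finite, so we may represent each color by a distinct one-hot vector in a sufficiently large (but bounded) dimension. The initialization layer maps the (finite) input labels to one-hot vectors; if no labels are given, all nodes start from the same constant one-hot vector, matching the DSS-WL initialization.

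The core is to show that \emph{one} DSS-WL refinement step $c^{t+1}_{v,S} \gets \mathrm{HASH}(c^t_{v,S}, N^t_{v,S}, C^t_v, M^t_v)$ can be implemented by a constant number — here claimed to be $10$ — of layers of the form \Cref{eq:dssgnn_morris_layer}. The key point is that each of the four arguments to $\mathrm{HASH}$ is a function of sums that the layer \Cref{eq:dssgnn_morris_layer} already computes: $h^t_{v,i}$ itself gives $c^t_{v,S}$; the term $W^1_2 \sum_{u\sim_i v} h^t_{u,i}$ accumulates the neighbor-multiset $N^t_{v,S}$; $W^2_1 \sum_j h^t_{v,j}$ accumulates the cross-subgraph multiset $C^t_v$ (here I would use that the policy is edge-covering so that $\sim$ is the original-graph connectivity, hence $W^2_2 \sum_{w\sim v}\sum_j h^t_{w,j}$ computes exactly $M^t_v$, the cross-bag aggregated colors over the original neighbors). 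Since one-hot vectors are summed, the resulting sum vector records the multiplicity of each color, i.e., it is an injective encoding of the corresponding multiset. Then I would invoke the standard fact that a bounded multiset of colors over a bounded vertex set takes finitely many values, so there is an affine map followed by ReLUs — realizable by a constant number of stacked layers of the form \Cref{eq:dssgnn_morris_layer} with the Siamese/neighbor weights zeroed out and only the $W^1_1$ ``identity-like'' channel active — that reads off a fresh one-hot code for each attained $4$-tuple $(c^t_{v,S}, N^t_{v,S}, C^t_v, M^t_v)$, which is precisely an injective $\mathrm{HASH}$. Budgeting $10$ layers per DSS-WL step (a few to gather the four sums into disjoint coordinate blocks, a few to implement the ReLU-network that injectively relabels, and spare layers for bookkeeping such as preserving channels through the ReLU) gives $D = 10T$ layers in total. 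The induction on $t$ then yields: whenever $c^{(t)}_{v,k} = c^{(t)}_{u,\ell}$ we have $h^{(10t)}_{v,k} = h^{(10t)}_{u,\ell}$, and whenever the colors differ the one-hot encodings differ — i.e., $h^{(D)}_{v,k}$ uniquely represents $c^{(T)}_{v,k}$.

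The main obstacle I expect is not the existence of an injective ReLU relabeling (this is routine universal-approximation-on-a-finite-set reasoning, and bounded graph size is exactly what makes it finite), but rather the careful bookkeeping that a \emph{single} layer of the specific form \Cref{eq:dssgnn_morris_layer} is expressive enough at each micro-step: the layer forces the same $\sigma$ and couples the four terms additively through one ReLU, so to implement a multi-layer ReLU subnetwork one must carve the feature space into disjoint coordinate blocks, route the ``current color'' forward unchanged through layers whose graph-aggregation weights are set to zero, and make sure the ReLU does not destroy information (e.g. by keeping activations non-negative, which one-hot encodings guarantee). I would handle this by always working with non-negative feature vectors and by reserving a block of coordinates that is only ever acted on by $W^1_1$ (the per-node pointwise term), so that those coordinates behave like a plain feed-forward ReLU network while the aggregation happens in a separate block; $10$ layers gives ample room for this. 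Finally I would note that $T$ is finite by the termination result (Proposition on convergence of DSS-WL) stated earlier, so $D = 10T < \infty$, completing the construction. $\square$
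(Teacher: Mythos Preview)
Your overall plan mirrors the paper's: one-hot encode colors, exploit injectivity of summation over one-hot vectors to capture the $\mathrm{HASH}$ inputs, then use a ReLU subnetwork on a finite set to re-encode into fresh one-hots. The edge-covering observation (so that $\sim$ recovers the original connectivity) is used identically.

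There is, however, a genuine gap in your treatment of $M^t_v$. You claim that $W^2_2 \sum_{w\sim v}\sum_j h^t_{w,j}$ ``computes exactly $M^t_v$'' and justify this by ``since one-hot vectors are summed, the resulting sum vector \dots is an injective encoding of the corresponding multiset.'' But $M^t_v = \ldblbrace C^t_w \mid w \in \mathcal{N}(v) \rdblbrace$ is a multiset of \emph{multisets}, not of colors. The inner sum $\sum_j h^t_{w,j}$ is the \emph{histogram} of $C^t_w$, which is not one-hot; summing such histograms over $w$ is not injective on $M^t_v$. Concretely, with two neighbors and a bag of two subgraphs, the configurations $C^t_{w_1}=\ldblbrace a,a\rdblbrace,\ C^t_{w_2}=\ldblbrace b,b\rdblbrace$ and $C^t_{w_1}=\ldblbrace a,b\rdblbrace,\ C^t_{w_2}=\ldblbrace a,b\rdblbrace$ both yield the double sum $2e_a+2e_b$, yet give different $M^t_v$.

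The paper's proof confronts exactly this point: after one layer writes out the histogram encoding of $C^t_v$, it inserts \emph{four} additional layers (via an auxiliary lemma that a $4$-layer ReLU network can one-hot encode any element of a fixed finite set) to convert that histogram back into a one-hot vector \emph{before} the neighbor aggregation; only then, in a sixth layer, is the sum over $w\sim v$ (scaled by $1/m$ since the same $\bar{h}^{(d_t,C)}_w$ is shared across subgraphs) an injective encoding of $M^t_v$. A final block of four layers one-hot encodes the full concatenated tuple. This $1+4+1+4$ decomposition is precisely where the constant $10$ comes from. Your proposal budgets the $10$ only loosely and locates the ``main obstacle'' in ReLU bookkeeping, whereas the actual obstacle---and the reason the layer count is $10$ rather than, say, $5$---is this two-stage injectivity issue for $M^t_v$.
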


\begin{proof}

	We will prove the lemma by describing how to parameterize the DSS-GNN layers in a way to simulate the DSS-WL color refinement step on each subgraph.

	\emph{(Preliminaries)} We assume to work with finite graphs whose size is upper-bounded by a positive integer $M \in \mathbb{N}$ and whose degree is upper-bounded by $N \in \mathbb{N}$. From these assumptions it immediately follows that any employed subgraph selection policy generates bags of bounded cardinalities, whose subgraphs are finite graphs with size upper-bounded by $M$ and degree upper-bounded by $N$. Let us denote with $B < \infty$ the maximum possible bag cardinality under the considered selection policy on $\mathcal{F}$. Additionally, if initial node labels are present, it is assumed that they are defined over a finite support. For a DSS-GNN to simulate color refinement steps, it is necessary to encode colors in vector/matrix form. Under the aforementioned boundedness assumptions, at any step, each of the four inputs in the DSS-WL color refinement equation belongs to a finite set of bounded size. The domain of the $\mathrm{HASH}$ function is therefore represented by the cross-product of such these sets and is therefore finite and of bounded size. Due to the injectiveness of $\mathrm{HASH}$, its image must therefore be finite and bounded as well. The application of this injective function effectively computes a unique representation of the inputs by assigning them a color. 

	\emph{(Encoding)} Given these premises, it is clear that, at each time step $t$, including $t=0$, it is possible to represent colors as one-hot vectors of proper dimension without losing any information. This allows the application of our DSS-GNN model. In the following, we indicate with $t \in \mathbb{N}$ a time step in the DSS-WL procedure, and with $d_t \in \mathbb{N}$ the corresponding layer of the described DSS-WL model. Note that, in general, $d_t \geq t$ as more than one layer is required to simulate one color refinement step. Let $c_{v,k}^{(t)}$ be the color assigned to node $v$ on subgraph $k$ at time-step $t$ and $h_{v,k}^{(d_t)}$ be its representation computed by our DSS-GNN model, of dimension $a_{d_t}$. At time step $t=0$, each node in each subgraph is assigned a constant color $c_{v,k}^{(0)} = \bar{c}$; if initial colors are available, they are retained. We consider $h_{v,k}^{(d_0)} = h_{v,k}^{(0)}$ to be a proper one-hot encoding of these initial colors. For a color refinement from time step $t$ to $t+1$ we consider inputs to layer $d_t$ to be one-hot encoded. The refinement is simulated with layers $d_t$ to $d_{t+1}$, where the final output of these is a one-hot encoding uniquely representing the result of the $\mathrm{HASH}$ function.

	\emph{(Computation)} Starting from the one-hot color representation in input to layer $d_t$, the refinement step is implemented by: (i) computing unique representations for all the four inputs to the $\mathrm{HASH}$ function; (ii) simulating the $\mathrm{HASH}$ function by mapping them to an output, unique, one-hot encoding.
	Let us start by describing step (i). An important observation is that the summation is an injective aggregator over multisets when elements are represented by one-hot encodings. In other words, we can uniquely represent a multiset of one-hot encoded elements by simply summing its members. This implies that there exists one DSS-GNN layer which computes a unique representation for each of the first three inputs to the $\mathrm{HASH}$ function: $c_{v,k}^{(t)}, N_{v,k}^{(t)}, C_{v}^{(t)}$. In this layer, each matrix $ W $ has dimension $ 3 \cdot a_{d_t} \times a_{d_t}$ and $ b $ has dimension $ 3 \cdot a_{d_t} $. In particular: $ W^1_1 = [ I, \mathbf{0}, \mathbf{0} ], W^1_2 = [ \mathbf{0}, I, \mathbf{0} ], W^2_1 = [ \mathbf{0}, \mathbf{0}, I ], W^2_2 = \mathbf{0}, b = \mathbf{0} $, where $[\cdot , \cdot]$ indicates vertical concatenation.
	
	In output from this layer are vectors in the form $ h_{v,k}^{(d_t + 1)} = [h_{v,k}^{(d_t)}, h_{v,k}^{(d_t, N)}, h_{v}^{(d_t, C)}] $, with the last two components representing unique encodings of, respectively, inputs $N_{v,k}^{(t)}, C_{v}^{(t)}$.
	In order to also compute a unique representation for the fourth input ($M_{v}^{(t)}$), we cannot directly apply summation on the just computed $h_{v}^{(d_t, C)}$, because these are not one-hot encoded anymore, and summation is, generally, not injective. To overcome this, we employ additional layers to first uniquely encode each $h_{v,k}^{(d_t, C)}$ with a one-hot scheme so that the original neighborhoods $\mathcal{N}(v)$ would be uniquely encoded by the summation over them, inherent in the DSS-GNN layer. This one-hot encoding scheme is possible because there is only a finite number of multisets of size $m \leq B$ that can be constructed from a finite color palette of cardinality $a_{d_t}$. Additionally, this encoding can be computed by stacking $4$ DSS-GNN layers with ReLU activations as shown by the following:

	\begin{lemma}\label{lem:dssgnn_onehotencodes_elements_from_finite_set}
		Let $\mathcal{H}$ be a finite set. Let elements in $\mathcal{H}$ be uniquely encoded by $h(\cdot)$. There exists a $4$-layer DSS-GNN model which computes the map $\mu: h(a) \mapsto e, a \in \mathcal{H}$ with $e$ being a unique one-hot encoding for element $a$.
	\end{lemma}
	\begin{proof}
		We consider an ordering relation $ < $ determining an overall sorting $s$ of the elements in $\mathcal{H}$. This can be obtained, for example, by a lexicographic ordering of the elements in the encodings $h(\cdot)$. This sorting associates any element with an integer index, that is $s: \mathcal{H} \rightarrow \mathbb{N}, s: a \mapsto i$. We can therefore denote each (encoded) element with $h_i, i = 1, \mathellipsis, |\mathcal{H}|$. We seek to implement the mapping $\mu: h_i \mapsto e_i$, such that: $ e_i \in \{ 0, 1 \}^{|\mathcal{H}|} $ and $(e_i)_j = 1$ if $j = i$, $(e_i)_j = 0$ otherwise.
		
		We define the dataset $\mathcal{D} = \{ (h_i, e_i) \}_{i=1}^{\mathcal{H}}$. This dataset satisfies Assumption $3.1$ in~\citet{yun2019small}: all $h_i$'s are distinct and labels $e_i$'s are in $[-1, +1]^{d_y}, d_y = |\mathcal{H}|$. Moreover we have that $\forall i, e_i \in {0, 1}^{d_y}$. Thus, according to Proposition 3.2 in the same work, there exists a $4$-layer ReLU network parameterised by $\mathbf{\theta}$ and with $\Omega(\sqrt{|\mathcal{H}|} + |\mathcal{H}|)$ neurons which perfectly fits $\mathcal{D}$. This network computes $\mu$ and is exactly implemented by a $4$-layer DSS-GNN model with matrices ${W^1_2}^{(l)}, {W^2_1}^{(l)}, {W^2_2}^{(l)} = \mathbf{0}, l=1, \mathellipsis, 4$ and $\mathbf{\theta} = {({W^1_1}^{(l)}, b^{(l)})}_{l=1}^4$.
	\end{proof}

	We invoke Lemma~\ref{lem:dssgnn_onehotencodes_elements_from_finite_set} with $\mathcal{H}$ being the (finite) set of all possible $B$-size multisets over node colorings $h^{(d_t)}$, which are from a finite palette of size $a_{d_t}$. Accordingly, there exists a stacking of $4$ DSS-GNN layers such that $ h_{v,k}^{(d_t + 5)} = [h_{v,k}^{(d_t)}, h_{v,k}^{(d_t, N)}, \bar{h}_{v}^{(d_t, C)}] $, with $\bar{h}_{v}^{(d_t, C)}$ being the one hot encoding of $h_{v}^{(d_t, C)}$, dimension $e_{t, C}$.
	If ${(\tilde{W^1_1}^{(l)}, \tilde{b}^{(l)})}_{l=d+1}^{d+5}$ are the parameters from Lemma~\ref{lem:dssgnn_onehotencodes_elements_from_finite_set} one-hot encoding $h_{v}^{(d_t, C)}$, then these layers are obtained simply by ${W^1_1}^{(l)} = [ [ I | \mathbf{0} ] , [ \mathbf{0} | \tilde{W^1_1}^{(l)} ] ], b^{l} = [\mathbf{0}, \tilde{b}^{(l)}], {W^1_2}^{(l)}, {W^2_1}^{(l)}, {W^2_2}^{(l)} = \mathbf{0}$. Here $[ \cdot | \cdot ]$ indicates horizontal concatenation and $I$ is always of size $2 \cdot a_{d_t} \times 2 \cdot a_{d_t}$.
	Overall, each $h_{v,k}^{(d_t + 5)}$ has dimension $a_{d_t+5} = (2 \cdot a_{d_t}) + e_{t, C}$. Importantly, we notice that component $\bar{h}_{v}^{(d_t, C)}$ is computed for node $v$ but independently of $k$, and is thus shared across all subgraphs.
	Layer $d_t + 6$ now computes the encoding of $M_{v}^{(t)}$. Each weight matrix has dimension $( a_{d_t+5} + e_{t, C} ) \times a_{d_t+5}$ with: $ W^1_1 = [ I, \mathbf{0} ]$ ($I$ size $a_{d_t+5}$), $W^1_2, W^2_1 = \mathbf{0}, W^2_2 = [ \mathbf{0}, [ \mathbf{0}_\bot | \frac{1}{m} I ] ]$ ( $\mathbf{0}_\bot$ size $ e_{t, C} \times (2 \cdot a_{d_t}) $, $I$ size $e_{t, C}$), $ b = \mathbf{0} \in \mathbb{R}^{a_{d_t+5}} $.

	Now, $h_{v,k}^{(d_t + 6)} = [h_{v,k}^{(d_t)}, h_{v,k}^{(d_t, N)}, \bar{h}_{v}^{(d_t, C)}, \bar{h}_{v}^{(d_t, M)}]$, of dimension $a_{d_t+6} = a_{d_t+5} + e_{t, C}$, constructed as the concatenation of unique representations of the inputs to the $\mathrm{HASH}$ function. It is only left to ``simulate'' the application of this last. We do this by computing a one-hot encoding of vector $h_{v,k}^{(d_t + 6)}$. As already discussed, this encoding is possible due the fact that $\mathrm{HASH}$ is an injection with domain of finite and bounded cardinality. In other words, vectors $h_{v,k}^{(d_t + 6)}$ can only assume a finite, bounded number of distinct values. Again, this encoding is computable by $4$ DSS-GNN layers, as per Lemma~\ref{lem:dssgnn_onehotencodes_elements_from_finite_set} invoked considering the (finite) set of all possible colorings attainable by encodings $h^{(d_t + 6)}$.
	
	Finally, we have that $h_{v,k}^{(d_{t+1})} = h_{v,k}^{(d_t + 10)}$. This construction can be arbitrarily repeated to exactly simulate $T$ iterations of the DSS-WL color refinement step with $D = 10T$ layers, which concludes the proof.
\end{proof}

The following Lemma shows that there exists a DS-GNN architecture with layers of the form in \Cref{eq:dssgnn_morris_layer} that, when working on graphs of bounded size, can ``simulate'' the color refinement steps of the DS-WL algorithm.

\begin{lemma}\label{lem:dsgnn_simulates_dswl}
    Let $\mathcal{G} = \ldblbrace G_i \rdblbrace_{i=1}^{m}$ denote the bag generated by a subgraph selection policy applied on any $G$ from a family $\mathcal{F}$ of graphs with bounded size. Let $c_{v,k}^{(T)}$ be the color assigned by DS-WL to node $v$ in subgraph $k$, after $T$ iterations. There exists a DS-GNN model $\mathcal{M}$ with $D = 5T$ layers of the form described in \Cref{eq:dssgnn_morris_layer} such that $c_{v,k}^{(T)}$ is uniquely represented by a one-hot encoding $h_{v,k}^{(D)}$, the representation for node $v$ in subgraph $k$ computed by such a model.
\end{lemma}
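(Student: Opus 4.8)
\textbf{Proof plan for Lemma~\ref{lem:dsgnn_simulates_dswl}.}
The plan is to mirror the construction of Lemma~\ref{lem:dssgnn_simulates_dsswl}, but with the drastic simplification that DS-WL only needs to simulate the standard WL refinement rule $c_{v,S}^{t+1}\gets \mathrm{HASH}(c_{v,S}^{t}, N_{v,S}^{t})$ \emph{independently on each subgraph}. A DS-GNN layer of the form in \Cref{eq:dssgnn_morris_layer} with $W^2_1 = W^2_2 = \mathbf{0}$ is exactly a Siamese GraphConv layer acting on each subgraph separately, so we never have to deal with the cross-bag inputs $C_v^{t}$, $M_v^{t}$ at all. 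Hence the per-step layer budget drops from $10$ to $5$, yielding $D = 5T$.

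\textbf{Key steps, in order.} First I would set up the same boundedness preliminaries as in Lemma~\ref{lem:dssgnn_simulates_dsswl}: since $\mathcal{F}$ consists of graphs of bounded size (and hence bounded degree), any subgraph selection policy produces bags of bounded cardinality whose subgraphs have size at most $M$ and degree at most $N$; consequently, at each WL iteration the pair $(c_{v,S}^{t}, N_{v,S}^{t})$ ranges over a finite set of bounded size, and the injective $\mathrm{HASH}$ has finite, bounded image. This justifies encoding all colors as one-hot vectors throughout. Second, I would describe the simulation of a single refinement step $t \to t+1$ using $5$ DS-GNN layers (with all information-sharing matrices set to $\mathbf{0}$): (i) one layer that, using the fact that summation over one-hot encodings is an injective multiset aggregator, computes $h_{v,S}^{(d_t+1)} = [\,h_{v,S}^{(d_t)},\ \sum_{u\sim_S v} h_{u,S}^{(d_t)}\,]$ by taking $W^1_1 = [I;\mathbf{0}]$, $W^1_2 = [\mathbf{0};I]$, $b = \mathbf{0}$ (a unique representation of the two inputs to $\mathrm{HASH}$); and (ii) a stack of $4$ layers that one-hot re-encode this concatenated vector, simulating the application of the injective $\mathrm{HASH}$, via Lemma~\ref{lem:dssgnn_onehotencodes_elements_from_finite_set} invoked on the finite set of attainable values of $h^{(d_t+1)}$ (with all of $W^1_2, W^2_1, W^2_2$ set to $\mathbf{0}$ so the re-encoding acts entrywise/Siamese). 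Third, I would observe that iterating this block $T$ times gives a DS-GNN model with $D = 5T$ layers such that $h_{v,k}^{(D)}$ one-hot encodes $c_{v,k}^{(T)}$, and note that the argument does not require the node labels to come from a finite support or the policy to be edge-covering --- both of which were needed in the DSS case only because of the cross-bag aggregation over the original connectivity.

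\textbf{Main obstacle.} There is no genuinely new obstacle here; the entire content is already present in the proof of Lemma~\ref{lem:dssgnn_simulates_dsswl}, and the work is in \emph{deleting} the parts of that construction concerned with $C_v^{t}$ and $M_v^{t}$ (which required the extra $5$ layers: one to form $C_v^{t}$, four to one-hot encode it for the $M_v^{t}$ aggregation) and verifying the layer count now comes out to $5T$. The only point requiring a moment of care is confirming that the one-hot re-encoding sublemma (Lemma~\ref{lem:dssgnn_onehotencodes_elements_from_finite_set}) is applied correctly to the concatenated vector $[\,h_{v,S}^{(d_t)}, \sum_{u\sim_S v} h_{u,S}^{(d_t)}\,]$ rather than to the full four-component vector, and that setting the information-sharing weight matrices to zero genuinely reduces \Cref{eq:dssgnn_morris_layer} to an independent GraphConv layer per subgraph --- both of which are immediate.
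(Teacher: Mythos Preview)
Your proposal is correct and matches the paper's proof essentially verbatim: one layer with $W^1_1=[I,\mathbf{0}]$, $W^1_2=[\mathbf{0},I]$, $W^2_1=W^2_2=\mathbf{0}$, $b=\mathbf{0}$ to form $[h_{v,S}^{(d_t)},\sum_{u\sim_S v}h_{u,S}^{(d_t)}]$, followed by four layers from Lemma~\ref{lem:dssgnn_onehotencodes_elements_from_finite_set} to one-hot re-encode, repeated $T$ times. One small correction to your closing remark: the paper's proof explicitly retains the assumption of ``initial labels from a finite support'' in the DS case as well, and this is needed already at $t=0$ to one-hot encode the initial colors --- it is not an artifact of the cross-bag aggregation (your point about edge-covering being unnecessary here is, however, correct).
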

\begin{proof}
    The proof closely follows the one for Lemma~\ref{lem:dssgnn_simulates_dsswl}; we work in the same regime (finite graphs of bounded size and degree, initial labels from a finite support) and assume the network encodes node colors with a proper one-hot encoding at each refinement step being simulated. The difference is that, in this case, the described DS-GNN model always has parameters $W^2_1, W^2_2 = \mathbf{0}$ and does not need to encode the $\mathrm{HASH}$ inputs $C_{v}^{(t)}, M_{v}^{(t)}$ (not present in its update equations).
    
    Let $c_{v,k}^{(t)}$ be the color assigned to node $v$ on subgraph $k$ at time-step $t$ and $h_{v,k}^{(d_t)}$ be its representation computed by our DS-GNN model, of dimension $a_{d_t}$ and properly one-hot encoded. We seek to describe a stacking of DS-GNN layers producing $h_{v,k}^{(d_{t+1})}$, a unique, one-hot encoding of color $c_{v,k}^{(t+1)}$.
    
    Once more, since summation is an injective aggregator over multisets when elements are represented by one-hot encodings, there exists one DS-GNN layer which computes a unique representation for the two inputs to the $\mathrm{HASH}$ function: $c_{v,k}^{(t)}, N_{v,k}^{(t)}$. In this layer, each matrix $ W $ has dimension $ 2 \cdot a_{d_t} \times a_{d_t}$ and $ b $ has dimension $ 2 \cdot a_{d_t} $. In particular: $ W^1_1 = [ I, \mathbf{0} ], W^1_2 = [ \mathbf{0}, I ], b = \mathbf{0}$ where $[\cdot , \cdot]$ indicates vertical concatenation. 
	
	In output from this layer are vectors in the form $ h_{v,k}^{(d_t + 1)} = [h_{v,k}^{(d_t)}, h_{v,k}^{(d_t, N)}] $, with the last component representing a unique encoding of $N_{v,k}^{(t)}$. In order to obtain $ h_{v,k}^{(d_{t + 1})} $ it is only left to one-hot encode $ h_{v,k}^{(d_t + 1)} $. We perform this operation with a stacking of $4$ DS-GNN layers leveraging only on matrix $W^1_1$ (in module $L_1$) and bias $b$. This stacking is described by Lemma~\ref{lem:dssgnn_onehotencodes_elements_from_finite_set}, invoked considering the (finite) set of all possible colorings attainable by encodings $h^{(d_t + 1)}$.
\end{proof}

Let us now move to prove Theorem~\ref{thm:dssgnn_refines_dsswl} from the main text.

\begin{proof}[Proof of Theorem~\ref{thm:dssgnn_refines_dsswl}]

	\emph{(DS(S)-GNN $\sqsubseteq$ DS(S)-WL)} We will directly describe a DSS-GNN architecture satisfying the requirements under edge-covering policies. According to \Cref{eq:dss-gnn-arch}, a DSS-GNN architecture computes a representation for a graph in input as: $h_G = E_{\text{sets}}\circ R_{\text{subgraphs}} \circ E_{\text{subgraphs}} (G)$. Let $T$ be the time step at which DSS-WL (resp. DS-WL) distinguishes two input graphs $G^1, G^2$. At this round, the overall coloring is such that the multisets of subgraph colors for the two graphs are distinct:
	$$
		\ldblbrace c_{G^1_k}^{(T)} \rdblbrace_{k=1}^{m} \neq \ldblbrace c_{G^2_h}^{(T)} \rdblbrace_{h=1}^{m}
	$$
	\noindent where $c_{G^i_j}^{(T)}$ generally indicates the color computed for subgraph $j$ of graph $G^i$: $\mathrm{HASH}(\ldblbrace c_{v,j}^{(T)} | v \in V^i \rdblbrace)$.
	Now, let $E_{\text{subgraphs}}$ be the module obtained by the stacking described in Lemma~\ref{lem:dssgnn_simulates_dsswl} (resp. Lemma~\ref{lem:dsgnn_simulates_dswl}). We know that this module uniquely encodes each subgraph node color $c_{v,j}^{(T)}$ with one-hot representations $h_{v,j}^{(d_T)}$. Therefore, it is only left to describe modules $R_{\text{subgraphs}}, E_{\text{sets}}$ in a way to injectively aggregate such representations into an overall graph embedding.
	
	Given that subgraph node colors are (uniquely) represented by one-hot encodings, the required $R_{\text{subgraphs}}$ can be simply obtained by summation over node representations on each subgraph. This is because summation is an injective aggregator when multiset elements are described in a one-hot fashion. Let us generally refer to the computed subgraph representations as $h_{j}$.
	
	Let us finally describe module $E_{\text{sets}}$. The graphs of interest have a number of nodes upper-bounded by a positive integer and so $h_{j}$'s uniquely represent bounded-sized multisets of colors from a finite palette. The set of all possible such multisets is finite, so it is possible to compute a one-hot encoding for these subgraph representations with a $4$-layer ReLU MLP applied in a siamese fashion. Lemma~\ref{lem:dssgnn_onehotencodes_elements_from_finite_set} ensures the existence of such a network as the described stacking of DSS-GNN layers effectively boils down to a Siamese MLP. This would constitute the first $4$ layers of $E_{\text{sets}}$. To conclude, one last layer applying summation to these one-hot encodings would suffice to output an embedding uniquely describing the multiset of subgraph colors.
	
	\emph{(DS-WL $\sqsubseteq$ DS-GNN)} As for DS-GNN and DS-WL, we show in the following that, if a DS-GNN distinguishes two graphs, then DS-WL also distinguishes these two graphs.
    Suppose DS-GNN distinguishes between $G^{1}$ and $G^{2}$. If the bags $S_{G^{1}}$ and $S_{G^{2}}$ are of different sizes, then DS-WL separates the two graphs as well because it hashes multisets of different sizes to different outputs. Assume they have the same size. Letting 1-WL$: \mc G \to \Sigma$ denote the 1-WL graph coloring function and $\mathrm{MPNN} : \mc G \to \Sigma$ denote the $\mathrm{MPNN}$ base graph encoder employed by DS-GNN, we have:
    \begin{equation}
        \ldblbrace\mathrm{MPNN}(G_1^1), \ldots, \mathrm{MPNN}(G_m^1)\rdblbrace \neq \ldblbrace\mathrm{MPNN}(G_1^2), \ldots, \mathrm{MPNN}(G_m^2)\rdblbrace
    \end{equation}
    By \cite{morris2019weisfeiler, xu2018how}, we know that 1-WL $\sqsubseteq \mathrm{MPNN}$, i.e. that $\text{1-WL}(H^1) = \text{1-WL}(H^2)$ implies that $\mathrm{MPNN}(H^1) = \mathrm{MPNN}(H^2)$ for any graphs $H^1$ and $H^2$. Thus, Lemma~\ref{lem:ref_implication} tells us that
    \begin{equation}
        \ldblbrace\text{1-WL}(G_1^1), \ldots, \text{1-WL}(G_m^1)\rdblbrace \neq \ldblbrace\text{1-WL}(G_1^2), \ldots, \text{1-WL}(G_m^2)\rdblbrace.
    \end{equation}
    Hence, the final readout of DS-WL maps these multisets to different values, so that DS-WL distinguishes $G^1$ and $G^2$.
    
    This is enough to prove the second part of the thesis, terminating the overall proof.
\end{proof}

\section{Proofs for Section~\ref{sec:expressiveness}: design choice expressiveness}\label{app:expressiveness_proofs}
This appendix section contains proofs for Section~\ref{sec:expressiveness}. First, we note that we use DS-GNN and DS-WL interchangeably in proofs, as they are equivalent in power for distinguishing graphs (as shown in Theorem~\ref{thm:dssgnn_refines_dsswl}).

\subsection{Proofs of DSS versus DS}

\begin{proof}[Proof of Prop~\ref{prop:dssgnn_strictly_more_powerful_than_dsgnn}]
That DSS-GNN is at least as powerful as DS-GNN follows directly from the definition. For any DS-GNN, there is a DSS-GNN with $L^2 = 0$ in each layer that implements the DS-GNN. To show that DSS-GNN is strictly more powerful, we provide two examples.

\textbf{A trivial example.} Consider the subgraph selection policy $SE: G = (V, E) \mapsto \{ (V, \{ e \}) \thinspace | \enspace e \in E \}$. Each subgraph is a single edge from the original graph. Now, let $G_1$ be the path on 4 nodes, and let $G_2$ be the $3$-star. In other words, $G_1$ has a vertex set of $\{1,2,3,4\}$ and edges $\{(1,2), (2,3), (3,4)\}$, while $G_2$ has a vertex set of $\{1,2,3,4\}$ and edges $\{(1,2), (1,3), (1,4)\}$.

Now, we show that DS-GNN cannot distinguish $G_1$ and $G_2$. Note that the bags are of the same size: $|SE(G_i)| = 3$.
Moreover, any other subgraph is a single edge, and these single edges are isomorphic and hence 1-WL equivalent. Thus, the multisets of subgraph colors are equal
\begin{equation}
    \ldblbrace \text{1-WL}(H) : H \in SE(G_1)\rdblbrace = \ldblbrace \text{1-WL}(H) : H \in SE(G_2)\rdblbrace,
\end{equation}
so that DS-WL cannot distinguish these graphs, and hence DS-GNN cannot distinguish these graphs by \Cref{thm:dssgnn_refines_dsswl}.

To see that DSS-GNN can distinguish these graphs, we let $L^1(A, X) = 0$ and $L^2(A, X) = A \left(\frac{1}{3} X \right)$. These operations are simple and can be implemented by MPNNs with sum pooling; for instance, GIN~\citep{xu2018how} and GraphConv~\citep{morris2019weisfeiler} can both implement $L^1$ and $L^2$. Letting the initial node features in each subgraph $i$ be $X^{(0)}_{i} = \vec{1} = \begin{bmatrix} 1 \ldots 1 \end{bmatrix}^\top$, we compute the node features after one layer as
\begin{align}
    X^{(1)}_{i} & = L^1(A_i, \vec{1}) + L^2\left(\sum_j A_j, \sum_j \vec{1}\right)\\
    & = 0 + L^2\left(A, 3 \vec{1}\right)\\
    & = A\vec{1}.
\end{align}
Thus, the $k$th component of $X^{(1)}_{i}$ is the degree of node $k$. Since $G_1$ has no nodes of degree 3 while $G_2$  has one node of degree 3, these node features differ between $G_1$ and $G_2$. Thus, DSS-GNN distinguishes these two graphs. This is because the multiset of node features in $G_1$ differs from the multiset of node features in $G_2$ after this iteration, so a final set pooling across nodes can map these two graphs to different outputs (for instance, a max pooling outputs 2 for $G_1$ and 3 for $G_2$).

This example is trivial because it is easy to make a change to the policy that would allow DS-GNN to distinguish these two graphs. Since the two original graphs have different degree distributions, they are directly distinguished by 1-WL. Thus, adding the original graph to the bags of subgraphs would allow DS-GNN to distinguish these graphs.

\textbf{A strengthened result.} 
For an extension of the above example, we consider the policy $\widehat{SE}$ that is given by adding the original graph to the $SE$ policy, so $\widehat{SE}(G) = SE(G) \cup \{G\}$. For this policy, we can again show that DS-GNN cannot distinguish a pair of graphs that DSS-GNN can, though we need to allow a slightly different type of aggregation for DSS-GNN. In particular, when computing the features for subgraph $i$, we use an $L^2$ that is mentioned in Section~\ref{sec:bag_architecture}: $L^2\left(\sum_j A_j, \sum_{j \neq i} X_j\right)$. 

Now, consider the pair of non-isomorphic graphs: $G_1$ is two triangles $2 \times C_3$ and $G_2$ is the 6-cycle $C_6$ \revision{(these graphs are depicted in Figure~\ref{fig:c6_2xc3})}. Denote the subgraph corresponding to the single edge $(i,j)$ as $G^{(i,j)}$, and the corresponding adjacency as $A^{(i,j)} \in \{0,1\}^{6 \times 6}$.

We show that DS-GNN cannot distinguish $G_1$ and $G_2$. First, note that the bag is of size $|\widehat{SE}(G_i)| = 7$ for each graph --- they each have 6 edges and the original graph adds one more element to the bag. Note that $G_1$ and $G_2$ are regular of the same number of nodes and degree, so they are 1-WL equivalent. As any other subgraph is a single edge, we once again know that these single edges are isomorphic and thus 1-WL equivalent. Hence, the multisets of subgraph colors are equal
\begin{equation}
    \ldblbrace\text{1-WL}(H) : H \in \widehat{SE}(G_1)\rdblbrace = \ldblbrace\text{1-WL}(H) : H \in \widehat{SE}(G_2)\rdblbrace,
\end{equation}
so that similarly to the previous example, DS-GNN cannot distinguish these graphs.

Now, we show that DSS-GNN can distinguish $G_1$ and $G_2$. We give a choice of parameters for DSS-GNN that distinguishes these graphs. Let the input node features be all one, so $X^{(0)} = \vec{1} = (1, \ldots, 1) \in \mathbb{R}^{n \times 1}$.  Let the first layer graph encoders be given by:
\begin{equation}
    L^1(A, X) = A X, \qquad \qquad L^2(A, X) = 0
\end{equation}
Once again, these two graph encoders can be implemented by both GIN and GraphConv.
Then the $k$th element of the representation of the nodes of $A^{(i,j)}$ after one layer is:
\begin{align}
    X^{(1)}_{(i,j), k} = (L(\mc A, \mc X))^{(i,j)}_k & = L^1(A^{(i,j)}, \vec{1})_k \\
    & = (A^{(i,j)} \vec{1})_k\\
    & = \ind_{k \in \{i,j\}}.
\end{align}
In other words, the feature on node $k$ for $A^{(i,j)}$ is one if $k$ is $i$ or $j$, and zero otherwise.  Also, a similar computation shows that subgraph $G$ corresponding to the original graph has node feature $2\vec{1}$ after this layer.

Now, let the second layer graph encoders be:
\begin{equation}
    L^1_{(2)}(A, X) = 0, \qquad \qquad L^2_{(2)}(A, X) = \left(\frac{1}{2} A\right) \left(-X + 4\vec{1} \right),
\end{equation}
where $L^2$ takes the standard sum pooled $\sum_j A_j$ as well as the modified pooled $\sum_{j \neq i} X_j$. These can be implemented by GraphConv with bias terms. 
Then the $k$th element of the representation of the nodes of $A^{(i,j)}$ after the second layer is:
\begin{align}
    X^{(2)}_{(i,j), k}  & = L^2_{(2)}\left(A + \sum_{(i,j) \in E} A^{(i,j)}, \  X^{(1)}_G + \sum_{(v,w) \in E \setminus \{(i,j)\}} X^{(1)}_{(v,w)} \right)_k\\
    & = L^2_{(2)}\left(2A, \  2\vec{1} + \sum_{(v,w) \in E \setminus \{(i,j)\}} X^{(1)}_{(v,w)} \right)_k\\
    & = L^2_{(2)}\left(2A, \  Z_{(i,j)} \right)_k\\
    & = \left(A (-Z_{(i,j)}+4\vec{1}) \right)_k
\end{align}
where $Z_{(i,j),l}$ is $4$ if $l \notin \{i,j\}$ and is $3$ if $l\in\{i,j\}$. Thus, $-Z_{(i,j),l}+4$ is $0$ if $l \notin \{i,j\}$ and $1$ if $l\in\{i,j\}$. We show that $X^{(2)}_{(i,j)}$ differs between $G_1$ and $G_2$.

For $G_1$, let $k$ be the other node in the triangle that node $i$ and $j$ are in. Then $X^{(2)}_{(i,j), k} = 2$.
On the other hand, for $G_2$, there is no value of $k$ such that $X^{(2)}_{(i,j), k}$ is $2$. This is because there is no node $k$ that is adjacent to both $i$ and $j$ (there are no triangles in $G_2$). Thus, we have shown that DSS-GNN distinguishes these two graphs after the second layer, and we are done.
\end{proof}

\subsection{Proofs of Increasing Encoder Expressiveness}\label{appendix:encoder_theory}

First, we introduce certain graphs that provide useful examples for our proofs, and that have been used to study expressiveness of graph algorithms more generally.

\begin{figure}
    \centering
    \includegraphics[width=.5\columnwidth]{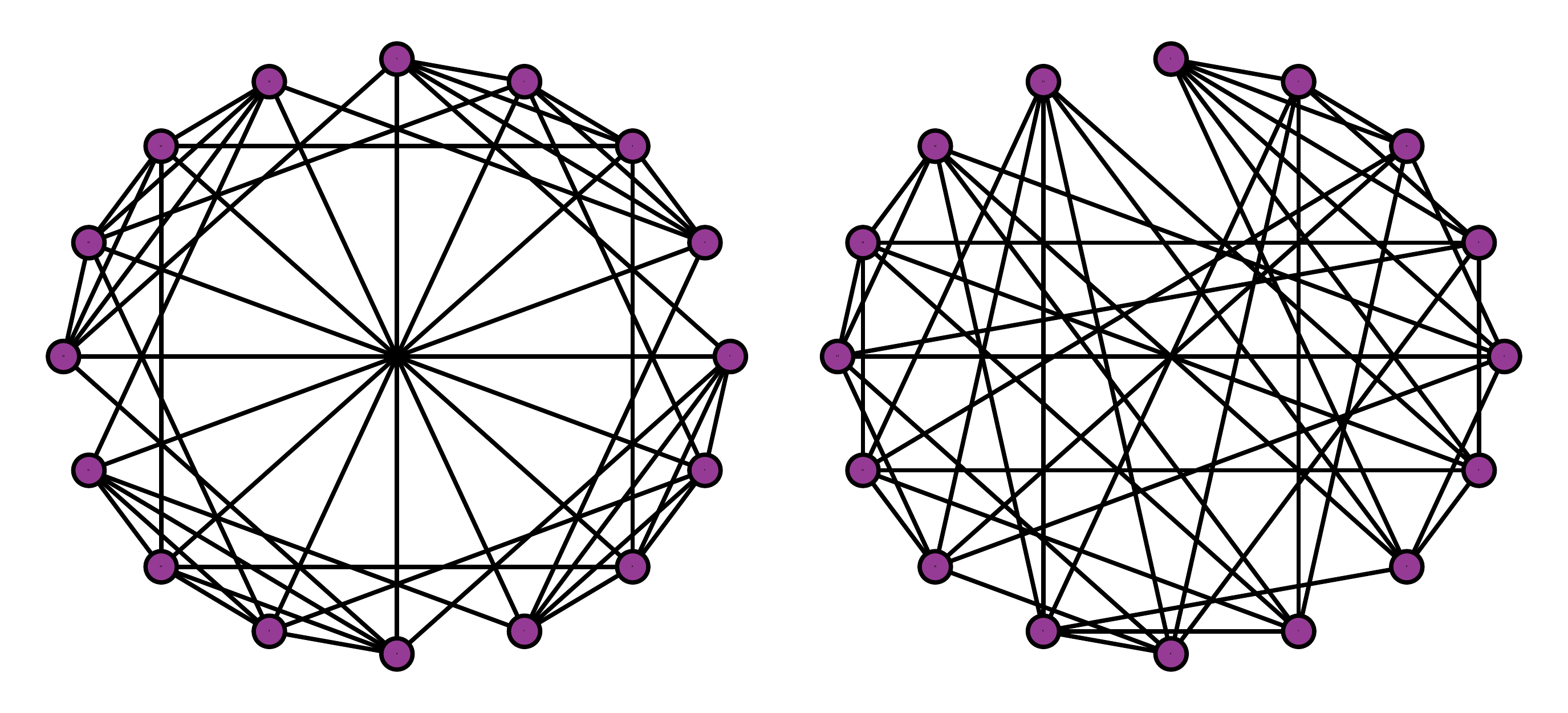}
    \caption{The Rook's graph (left) and Shrikhande graph (right) \citep{arvind2020weisfeiler} are non-isomorphic strongly regular graphs of the same parameters. They thus cannot be distinguished by 3-WL.}
    \label{fig:rook_shrik}
\end{figure}

\noindent \textbf{Rook's graph and Shrikhande graph.} We also introduce a particular pair of commonly-used example graphs that will be useful for us.
Figure~\ref{fig:rook_shrik} illustrates the 4-by-4 Rook's graph and the Shrikhande graph. These are non-isomorphic graphs that are strongly regular of the same parameters, so they cannot be distinguished by 3-WL / 2-FWL \citep{arvind2020weisfeiler, balcilar2021breaking}.

We number the nodes for ease of reference in the below proofs. For the either graph, let the vertex set be $V = \left\{(x,y) : x \in \{0,1,2,3\}, y \in \{0,1,2,3\}\right\}$, so each of the 16 nodes are viewed as a point on a grid in the plane. For each node $(x,y) \in V$, the Rook's graph has edges $\left((x,y), (x', y) \right)$ for $x' \neq x$ and $\left((x,y), (x, y') \right)$ for $y' \neq y$. The Shrikhande graph has edges $\left((x,y), (x,y \pm 1 \text{ mod 4}) \right)$, $\left((x,y), (x \pm 1 \text{ mod 4},y)  \right)$, and $\left((x,y), (x \pm 1 \text{ mod 4},y \pm 1 \text{ mod 4})  \right)$. We number the nodes on the grid from left to right then top to bottom, so node 1 is (0,3), node 2 is (1,3), node 5 is (0,2), node 6 is (1,2), and so on.

\begin{proof}[Proof of Prop~\ref{prop:3wl_dss_gnn}]
Note that the generalization of DS-GNN to 3-WL encoders is straightforward, as we simply use the graph representation vectors from the readout of the 3-WL encoder as the subgraph representations in the DeepSets module. In other words, letting $\text{3-WL-GNN}(G)$ be the final coloring of $G$ that is obtained from a GNN that is equivalent to 3-WL, we apply a DeepSets module to $\ldblbrace \text{3-WL-GNN}(G_i) : G_i \in S_G \rdblbrace$ for DS-GNN with a 3-WL encoder. This DS-GNN with 3-WL is at least as strong as DS-GNN with 1-WL since 3-WL is at least as strong as 1-WL. Also, DS-GNN with 3-WL is at least as strong as 3-WL since the original graph is included in any augmented policy $\widehat{\pi}$.

To show that DS-GNN with 3-WL is strictly stronger than the other two models, we show that it can distinguish the Rook's graph and the Shrikhande graph, which the other two models cannot. Recall that 3-WL cannot distinguish the two because they are both strongly regular graphs of the same parameters \citep{arvind2020weisfeiler}.

Now, note that all depth-1 ego-nets of the Rook's graph consist of two disjoint triangles with a root node that is connected to all nodes. All depth-1 ego-nets of the Shrikhande graph consist of a 6-cycle with a root node that is connected to all nodes. It is easy to see that 1-WL cannot distinguish these ego-nets, as 1-WL cannot distinguish two disjoint triangles from a 6-cycle. Thus, DS-GNN with the depth-1 $\mathrm{\widehat{EGO}}$ or $\mathrm{\widehat{EGO}+}$ policy cannot distinguish the two graphs.

On the other hand, 3-WL can distinguish these ego-nets as 3-WL can distinguish two disjoint triangles from a 6-cycle; for instance, this can be seen due to 3-WL being able to count triangles at initialization~\citep{zhengdao2020can}.
\end{proof}

\subsection{Proofs of Policy Choice and Expressiveness}

\begin{figure}
    \centering
    \subfloat[Depth 1]{\includegraphics[width=.4\columnwidth]{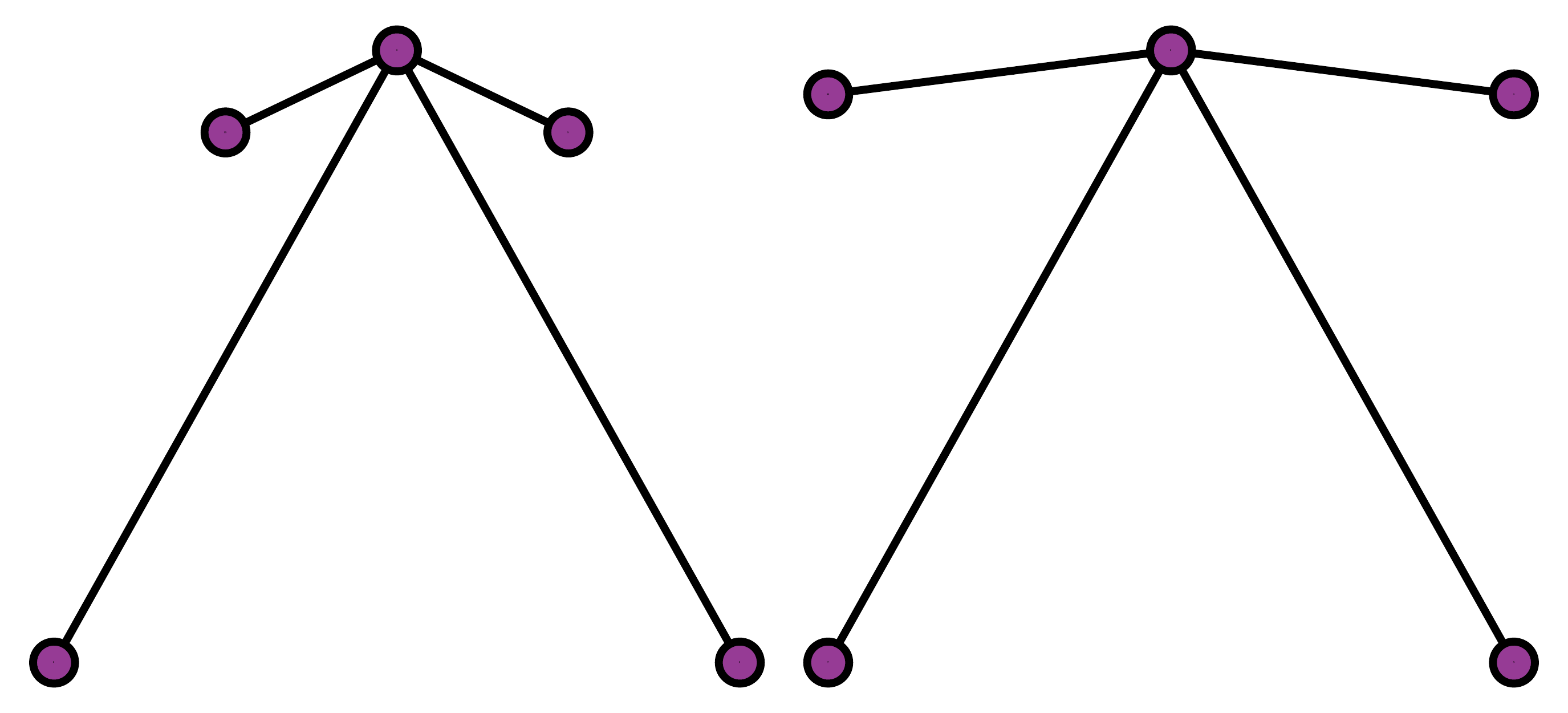}}  \hspace{1cm} 
    \subfloat[Depth 2]{\includegraphics[width=.4\columnwidth]{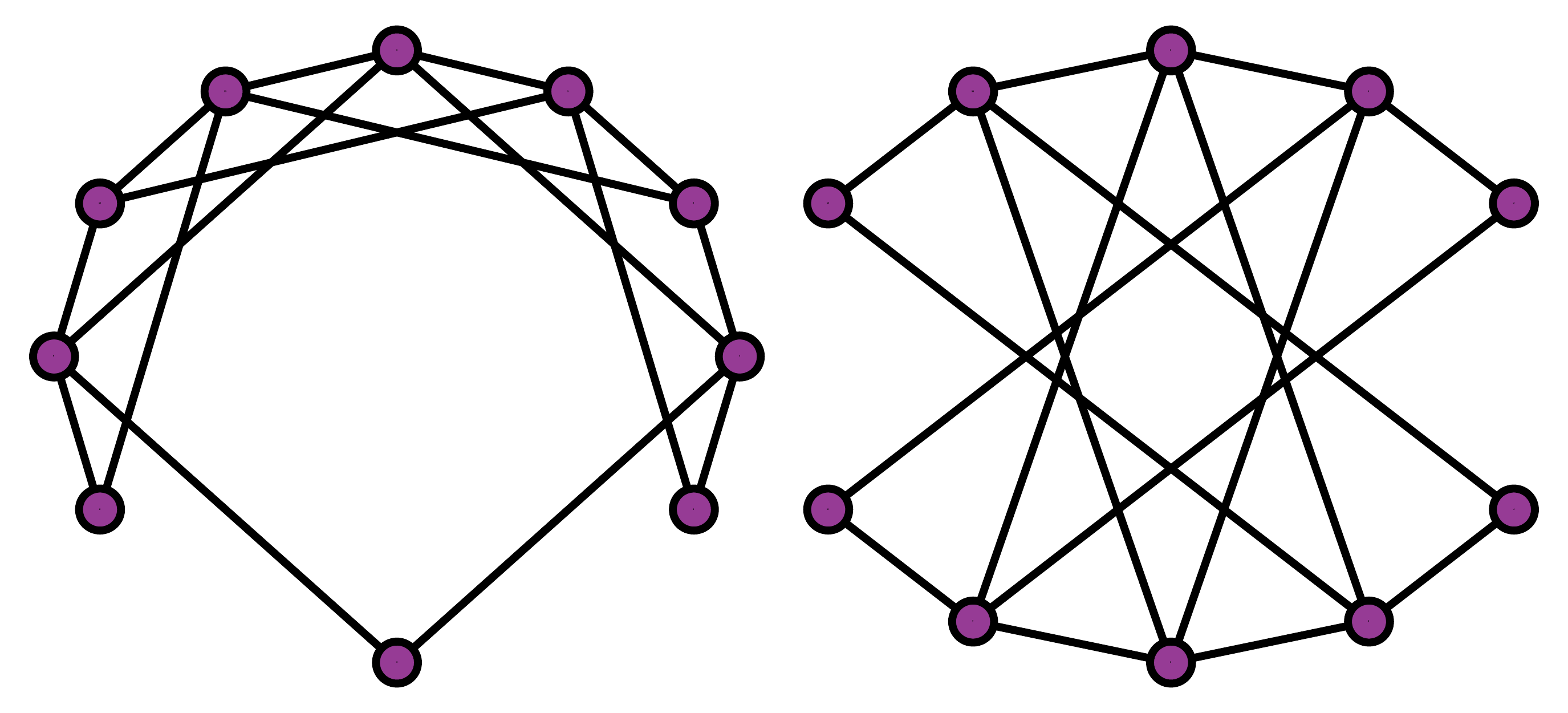}} \\ 
    \subfloat[Depth 3]{\includegraphics[width=.4\columnwidth]{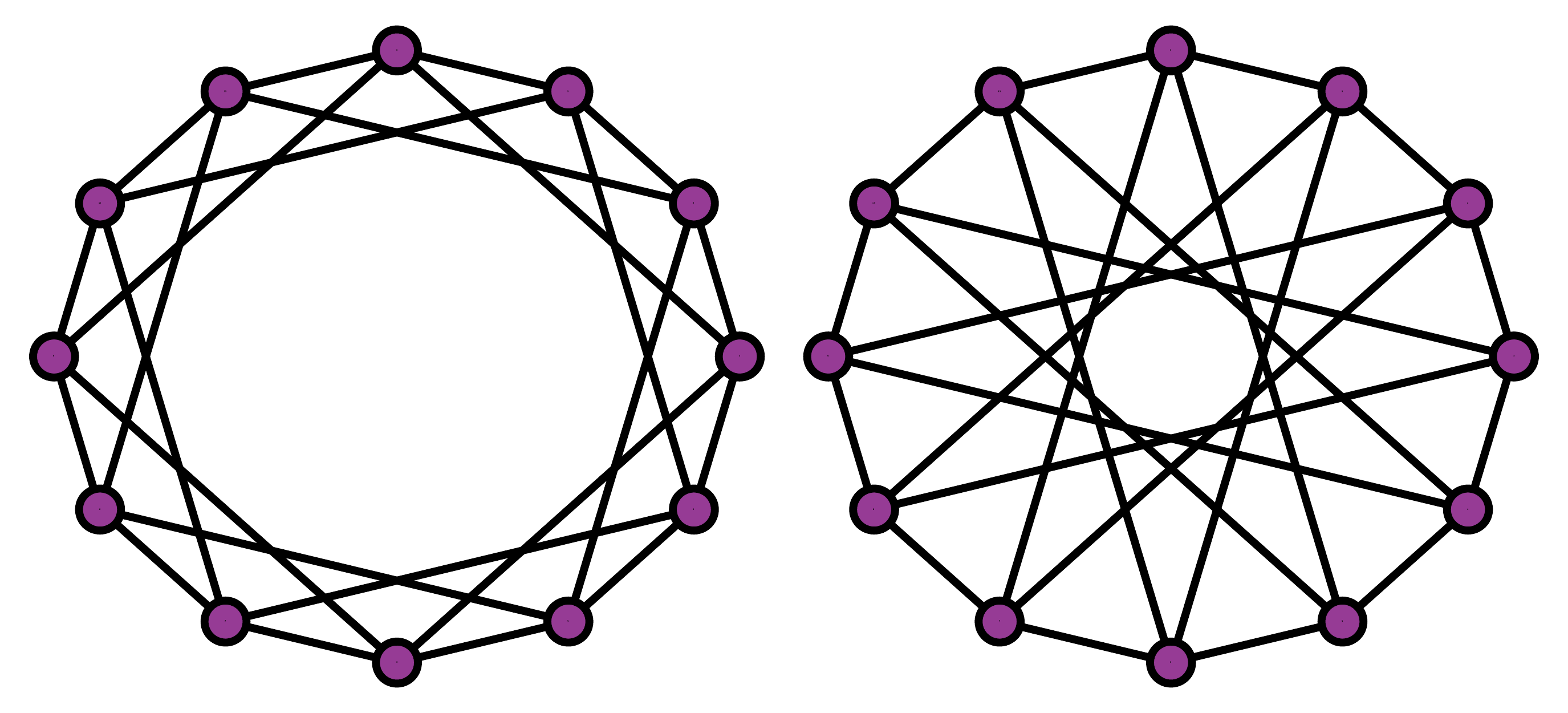}}
    \caption{Ego-nets of $\mathrm{CSL}(12,3)$ (left) and $\mathrm{CSL}(12,5)$ (right) for depths 1, 2, and 3. These are used in Proposition~\ref{prop:ego_csl}.}
    \label{fig:csl_ego}
\end{figure}

Before restricting to strongly regular graphs, we give an example that shows how the power of the EGO policy can depend in interesting ways with the depth --- increasing ego-net depth may either increase or decrease the expressivity of DS-GNN.

\begin{proposition}\label{prop:ego_csl}
Our DS-GNN model with depth-1 or depth-3 EGO policies do not distinguish CSL(12,3) and CSL(12,5), while the depth-2 EGO policy does distinguish them.
\end{proposition}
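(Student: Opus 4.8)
The plan is to reduce the claim, for each depth $d \in \{1,2,3\}$, to a single question about whether $1$-WL separates a pair of ego-nets, and then to settle the three cases by direct inspection. The reduction goes as follows. For $G = \mathrm{CSL}(n,k)$ the cyclic shift $\rho\colon j \mapsto j+1 \bmod n$ is a graph automorphism, so for any depth $d$ it carries the depth-$d$ ego-net rooted at $v_i$ isomorphically onto the one rooted at $v_{i+1}$; hence all $n$ depth-$d$ ego-nets of $G$ are mutually isomorphic (this is the depth-$d$ analogue of \Cref{lem:csl_node_delete}, with the same proof). Consequently the bag produced by the depth-$d$ EGO policy on $\mathrm{CSL}(12,k)$ consists of $12$ copies of a single isomorphism class, which I will call $E_d(k)$. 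Since DS-GNN with a $1$-WL (MPNN) base encoder has exactly the power of DS-WL on edge-covering policies (\Cref{thm:dssgnn_refines_dsswl}), and DS-WL on these bags compares the multiset of per-subgraph $1$-WL colorings, DS-GNN distinguishes $\mathrm{CSL}(12,3)$ from $\mathrm{CSL}(12,5)$ under depth-$d$ EGO if and only if $1$-WL distinguishes $E_d(3)$ from $E_d(5)$ (both bags having the same size $12$).

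First I would dispose of depths $1$ and $3$. For $d=1$: the neighbours of $v_0$ in $\mathrm{CSL}(12,3)$ are $\{1,3,9,11\}$ and in $\mathrm{CSL}(12,5)$ are $\{1,5,7,11\}$; a one-line check of the six pairwise cyclic differences shows that in both cases no two of these neighbours are adjacent, so $E_1(3) \cong E_1(5) \cong K_{1,4}$ (padded, if one wishes, with isolated vertices), and not even graph isomorphism separates them. For $d=3$: a short breadth-first search shows that both $\mathrm{CSL}(12,3)$ and $\mathrm{CSL}(12,5)$ have diameter $3$, so $E_3(k)$ is the whole graph $\mathrm{CSL}(12,k)$; both are $4$-regular, hence $1$-WL produces a constant colouring on each and cannot separate them (as is the case for any two CSL graphs on the same number of vertices, cf.\ the discussion around \Cref{thm:variants_vs_wl}). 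By the reduction, DS-GNN does not distinguish the two CSL graphs at depths $1$ and $3$.

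The substantive case is $d=2$, where I would compute the two depth-$2$ ego-nets explicitly. The $2$-hop ball of $v_0$ in $\mathrm{CSL}(12,3)$ is $V \setminus \{5,7\}$, and in $\mathrm{CSL}(12,5)$ it is $V \setminus \{3,9\}$; restricting the respective edge sets to these $10$-vertex sets and counting incidences gives degree multisets $\{4^{5},3^{2},2^{3}\}$ for $E_2(3)$ and $\{4^{6},2^{4}\}$ for $E_2(5)$. In particular $E_2(3)$ contains vertices of degree $3$ while $E_2(5)$ contains none, so the two graphs have different degree multisets; since the first refinement round of $1$-WL partitions vertices by degree, $1$-WL separates $E_2(3)$ from $E_2(5)$ after a single iteration. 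By the reduction this shows that DS-GNN with the depth-$2$ EGO policy distinguishes $\mathrm{CSL}(12,3)$ and $\mathrm{CSL}(12,5)$, completing the proof. These ego-nets are exactly those drawn in \Cref{fig:csl_ego}, which one could cite in place of the incidence bookkeeping.

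The main obstacle is precisely that depth-$2$ adjacency bookkeeping: one must correctly identify the induced subgraphs on the two $2$-hop balls and verify that their degree sequences differ in the claimed way (in particular that a degree-$3$ vertex appears for $k=3$ but not for $k=5$). Everything else — the reduction via the rotation automorphism together with \Cref{thm:dssgnn_refines_dsswl}, the star structure at depth $1$, and the ``full graph, hence $4$-regular'' observation at depth $3$ — is structural and essentially immediate.
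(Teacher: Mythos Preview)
Your proof is correct and follows essentially the same route as the paper: reduce to a single ego-net per graph via the rotation automorphism, then handle depth~$1$ (both ego-nets are $K_{1,4}$), depth~$3$ (diameter~$3$, so the ego-net is the full $4$-regular graph), and depth~$2$ (degree sequences differ, with a degree-$3$ vertex present only for $k=3$). The paper's proof is terser---it points to \Cref{fig:csl_ego} rather than writing out the induced-subgraph degree counts---but your explicit multisets $\{4^{5},3^{2},2^{3}\}$ versus $\{4^{6},2^{4}\}$ are exactly the content behind that figure.
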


\begin{proof}[Proof of Prop~\ref{prop:ego_csl}]
The depth-1, depth-2, and depth-3 ego-nets of $\mathrm{CSL}(12,3)$ and $\mathrm{CSL}(12,5)$ are all isomorphic, so DS-GNN with a 1-WL encoder can distinguish the two graphs if and only if 1-WL can distinguish the ego-nets. The ego-nets are plotted in Figure~\ref{fig:csl_ego}.

The depth-1 ego-nets of both $\mathrm{CSL}(12,3)$ and $\mathrm{CSL}(12,5)$ are $K_{1,4}$ --- the star with one root node of degree 4 and four nodes of degree 1 connected to the root. As these are isomorphic, 1-WL does not distinguish any subgraphs, so our DS-GNN cannot distinguish $\mathrm{CSL}(12,3)$  and $\mathrm{CSL}(12,5)$ with depth-1 EGO policy.

The depth-2 ego-nets are displayed in Figure~\ref{fig:csl_ego}. It can be seen that the depth-2 ego-nets have different degree distributions: for instance, that of $\mathrm{CSL}(12,3)$ has degree 3 nodes while the ego-nets of $\mathrm{CSL}(12,5)$ do not have any degree 3 nodes. Thus, 1-WL distinguishes these ego-nets, and hence DS-GNN does as well with the depth-2 EGO policy.

As both $\mathrm{CSL}(12,3)$ and $\mathrm{CSL}(12,5)$ have diameter 3, the depth-3 ego-nets are in both cases just the original graph. As both graphs are regular of the same number of nodes and same degree, 1-WL does not distinguish the subgraphs, so DS-GNN cannot distinguish $\mathrm{CSL}(12,3)$ and $\mathrm{CSL}(12,5)$ with depth-3 EGO policy.
\end{proof}

\subsection{Strongly Regular Graph Proofs}

In this Section, we prove Proposition~\ref{prop:strongly_regular} on the power of different policies for distinguishing strongly regular graphs.

\subsubsection{ND and EGO+ on Strongly Regular Graphs}

\begin{lemma}\label{lem:ds_sr_nd}
    DS-GNN with the ND policy can distinguish any strongly regular graphs of different parameters, but cannot distinguish any strongly regular graphs of the same parameters.
\end{lemma}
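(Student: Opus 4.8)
\textbf{Proof plan for Lemma~\ref{lem:ds_sr_nd}.}

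The plan is to split the statement into two halves. For the positive half --- that DS-GNN with the ND policy distinguishes strongly regular graphs of \emph{different} parameters --- I would first recall (as in Theorem~\ref{thm:dssgnn_refines_dsswl}) that it suffices to reason about DS-WL, and that DS-WL distinguishes two graphs whenever the multisets of 1-WL colorings of their node-deleted subgraphs differ. Given a strongly regular graph $G$ with parameters $(n,k,\lambda,\mu)$, every node-deleted subgraph $G-v$ has $n-1$ nodes, exactly $k$ nodes of degree $k-1$ (the former neighbors of $v$) and $n-1-k$ nodes of degree $k$. Moreover, by strong regularity, among the $k$ degree-$(k-1)$ nodes each has exactly $\lambda$ neighbors that are also degree-$(k-1)$ nodes, and each degree-$k$ node has exactly $\mu$ neighbors among the degree-$(k-1)$ nodes. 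So one round of 1-WL after the degree-refinement round already exposes $n$, $k$, $\lambda$, $\mu$ in the refined color multiset of $G-v$; since by Lemma~\ref{lem:csl_node_delete}-style reasoning all node-deleted subgraphs of a vertex-transitive-enough SR graph need not literally be isomorphic, but they all have the \emph{same} 1-WL color histogram (this is the key sub-claim to verify carefully), the bag-level multiset of 1-WL colorings recovers the 4-tuple of parameters. Hence two SR graphs with different parameters get different DS-WL representations.

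For the negative half --- DS-GNN with ND cannot distinguish two SR graphs with the \emph{same} parameters --- I would again pass to DS-WL and show that for any two SR graphs $G, H$ of parameters $(n,k,\lambda,\mu)$, the multisets $\ldblbrace \text{1-WL}(G-v) : v \in V(G) \rdblbrace$ and $\ldblbrace \text{1-WL}(H-u) : u \in V(H) \rdblbrace$ coincide. The cleanest way is to prove that 1-WL run on any single node-deleted subgraph $G-v$ \emph{stabilizes} into a coloring that depends only on $(n,k,\lambda,\mu)$ and not on $G$ or $v$. I would do this by explicitly tracking the stable partition: start from the bipartition into "ex-neighbors of $v$" (degree $k-1$) and "non-neighbors of $v$" (degree $k$), and argue this 2-class partition is already equitable --- every node in class 1 has $\lambda$ neighbors in class 1 and $k-1-\lambda$ in class 2; every node in class 2 has $\mu$ neighbors in class 1 and $k-\mu$ in class 2. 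Since an equitable partition is a fixed point of 1-WL refinement, the stable 1-WL coloring of $G-v$ has exactly two colors with class sizes $k$ and $n-1-k$ and with the stated quotient matrix, entirely determined by the parameters. Therefore every node-deleted subgraph of every SR graph with these parameters receives the same 1-WL color, so the bag multisets are equal and DS-WL (hence DS-GNN with a 1-WL/MPNN encoder, by Theorem~\ref{thm:dssgnn_refines_dsswl}) fails to separate $G$ and $H$.

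The main obstacle is the claim in the negative half that the two-class partition $\{$ex-neighbors of $v\}$, $\{$non-neighbors of $v\}$ is equitable in $G-v$; this is where strong regularity is used in full force (the $\lambda$ condition inside the neighborhood, the $\mu$ condition from a non-neighbor into the neighborhood, and the bookkeeping that deleting $v$ removes exactly one edge from each ex-neighbor and none from the non-neighbors). I would verify these counts directly from the SR axioms, being careful that "non-neighbor of $v$" in the SR definition means a vertex at distance $2$, so the $\mu$ count is literally the SR parameter; and that an ex-neighbor of $v$, inside $G-v$, has $\lambda$ neighbors among the other ex-neighbors and its remaining $k-1-\lambda$ neighbors among the non-neighbors. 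Once equitability is established, the rest is routine: equitable partitions are 1-WL-stable, a 1-WL-stable coloring is determined by its quotient matrix and class sizes, and those are functions of $(n,k,\lambda,\mu)$ alone --- which simultaneously gives the positive direction (different parameters $\Rightarrow$ different quotient data $\Rightarrow$ different bag representation) and the negative direction (same parameters $\Rightarrow$ identical bag representation).
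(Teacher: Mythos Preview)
Your proposal is correct and follows essentially the same approach as the paper. Both arguments rest on the same core computation: in $G-v$ the degree partition into ``ex-neighbors of $v$'' and ``non-neighbors of $v$'' is already equitable, with quotient data $(\lambda,\, k-1-\lambda,\, \mu,\, k-\mu)$ determined solely by the SR parameters, so the stable 1-WL coloring of every node-deleted subgraph depends only on $(n,k,\lambda,\mu)$; the paper establishes this by stepping explicitly through three WL rounds, while you phrase it via equitability, but the content is identical.
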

\begin{proof}
    We show that the 1-WL algorithm converges to the same coloring of \textit{any} node-deleted subgraph of a strongly regular graph in at most 3 iterations, and that the coloring only depends on the strongly regular parameters.
    Let $G$ be a strongly regular graph of parameters ($n$, $k$, $\lambda$, $\mu$), meaning that $G$ has $n$ nodes, is regular of degree $k$, any two adjacent vertices have $\lambda$ common neighbors, and any two non-adjacent vertices have $\mu$ common neighbors. We step through the 1-WL iterations on the node-deleted subgraph $G-u$:

    \begin{itemize}
        \item Iteration 1: Initialize each node to have color a.
        \item Iteration 2: Degree $k-1$ nodes are colored b and degree $k$ nodes are colored c.
        \item Iteration 3: All degree $k-1$ nodes are colored d, as they each had color b and have $\lambda$ b neighbors and $(k-1-\lambda)$ c neighbors. To see this, note that these nodes were adjacent to the deleted node $u$ in the original graph $G$, so they had exactly $\lambda$ common neighbors with $u$. Since the neighbors of $u$ are now degree $k-1$ in $G-u$, they have color $b$ at iteration 2. Thus, there are exactly $\lambda$ b neighbors, and the remaining color is c, so there are also exactly $(k-1-\lambda)$ c neighbors.

            Also, all degree $k$ nodes are colored e, as they had color c and have $\mu$ b neighbors and $(k-\mu)$ c neighbors. This is because these nodes are not adjacent to $u$ in the original $G$, so they have exactly $\mu$ common neighbors with $u$ in $G$. Thus, they have exactly $\mu$ b neighbors and $(k-\mu)$ c neighbors.
    \end{itemize}
    As the colors of iteration 2 and 3 are isomorphic, 1-WL has converged. Since this coloring is the same for any node-deleted subgraphs of any strongly regular graph of the same parameters, DS-WL and hence DS-GNN cannot distinguish any strongly regular graphs of the same parameters.
    
    Likewise, this coloring differs for any strongly regular graphs $G_1$ and $G_2$ of different parameters. The coloring differs at iteration 1 if $n$ differs, at iteration 2 if $k$ differs, and at iteration $3$ if $\lambda$ or $\mu$ differ. Thus, DS-WL and hence DS-GNN is able to distinguish any two such graphs.
\end{proof}

\begin{lemma}\label{lem:ds_sr_ego}
    DS-GNN with the depth-$n$ $\mathrm{\widehat{EGO}}+$ policy can distinguish any strongly regular graphs of different parameters, but cannot distinguish any strongly regular graphs of the same parameters.
\end{lemma}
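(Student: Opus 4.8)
The statement to prove (Lemma~\ref{lem:ds_sr_ego}) asserts that DS-GNN with the depth-$n$ $\mathrm{\widehat{EGO}}+$ policy distinguishes any two strongly regular graphs of different parameters, but cannot distinguish any two strongly regular graphs of the same parameters. The plan is to mirror the structure of the proof of Lemma~\ref{lem:ds_sr_nd}, since both lemmas have the same shape: show that the 1-WL coloring of every subgraph in the bag is determined solely by the SR parameters $(n,k,\lambda,\mu)$, and converges quickly. Because we use DS-GNN with a 1-WL (MPNN) base encoder, and by Theorem~\ref{thm:dssgnn_refines_dsswl} DS-GNN is equivalent in power to DS-WL, it suffices to reason entirely at the level of DS-WL, i.e. about 1-WL colorings of the subgraphs in the $\mathrm{\widehat{EGO}}+$ bag.

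First I would observe that since the policy is depth-$n$ and the graph has $n$ nodes (hence diameter at most $n$, in fact much less for connected SR graphs), every ego-net is the whole graph $G$ with one distinguished root node carrying an identifying feature; the augmented policy $\mathrm{\widehat{EGO}}+$ also adds the original graph $G$ itself (no marked node) to the bag. So the bag is $\{G\} \cup \{G \text{ with root } v \text{ marked} : v \in V\}$. For the ``same parameters'' direction I would run 1-WL on a single marked-root copy $G^{(u)}$ (the argument for the bare copy $G$ is the standard one: 1-WL converges immediately on a regular graph). The key computation is a short, explicit 1-WL simulation: iteration~1 gives the root color $r$ and color $a$ elsewhere; iteration~2 refines the non-root nodes into ``neighbor of root'' (with exactly $k$ such nodes, each seeing the root and $k-1$ $a$'s) versus ``non-neighbor of root'' (seeing $k$ $a$'s); iteration~3 uses the strong regularity conditions — a neighbor of the root has exactly $\lambda$ neighbors among the root's other neighbors and a non-neighbor has exactly $\mu$ — to further refine, after which the partition stabilizes. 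The upshot is that the stable coloring of $G^{(u)}$ depends only on $(n,k,\lambda,\mu)$ and not on $u$ or on the isomorphism type of $G$ among SR graphs with those parameters; hence all subgraphs in both bags receive the same multiset of colors, and the final readout cannot separate them.

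For the ``different parameters'' direction I would note that the bag always contains the bare original graph $G$, so DS-WL is at least as powerful as 1-WL; but 1-WL already distinguishes two regular graphs of different degree or different size, so the only remaining case is two SR graphs with the same $n$ and $k$ but different $\lambda$ or $\mu$. Here I would use the marked-root copies: by the iteration-3 analysis above, the count of $b$-colored (root-neighbor) nodes having a given number of $b$-neighbors is exactly $\lambda$ (resp. the count for non-neighbors is governed by $\mu$), so the stable coloring of the marked subgraphs, and therefore the multiset of subgraph colors in the bag, differs as soon as $\lambda$ or $\mu$ differs. Combining both sub-cases gives the full ``different parameters'' claim.

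The main obstacle is bookkeeping the 1-WL refinement carefully on the marked-root graph — in particular justifying that the strong-regularity identities ($\lambda$ common neighbors for adjacent pairs, $\mu$ for non-adjacent pairs) translate exactly into the color-multiset counts at iteration~3, and checking that the process really has stabilized there rather than refining further (one should confirm that within the ``root-neighbor'' class and within the ``non-neighbor'' class the refined colors are constant, which again follows from strong regularity applied to the pair {root, node}). A minor subtlety worth a sentence: one must handle degenerate/disconnected SR graphs (e.g. disjoint unions of cliques) so that ``depth-$n$ ego-net equals the whole graph'' is literally correct; since $n$ is the number of nodes this is immediate. Once these points are nailed down, the result follows exactly as in Lemma~\ref{lem:ds_sr_nd}, and indeed together with that lemma and the ED analysis it yields Proposition~\ref{prop:strongly_regular}.
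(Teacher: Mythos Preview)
Your proposal is correct and follows essentially the same approach as the paper: run 1-WL on a depth-$n$ EGO+ subgraph (the whole graph with one marked root), show the partition stabilizes into \{root\}, \{root's $k$ neighbors\}, \{the $n-k-1$ non-neighbors\} with neighbor-color counts determined entirely by $(n,k,\lambda,\mu)$, and conclude that the multiset of subgraph colors depends only on the SR parameters. The paper's write-up differs only cosmetically (it labels the colors explicitly and reads all four parameter-differences directly off the ego-net coloring rather than splitting off the $n,k$ cases via the bare graph in the augmented bag); your flagging of the disconnected-SR edge case is a nice extra, though note that ``depth-$n$ ego-net equals the whole graph'' is not literally immediate there---it is the vertex-set-preserving convention (isolated nodes remain) plus the fact that same-parameter disconnected SR graphs are isomorphic that closes the gap.
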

\begin{proof}
    Consider a depth-$n$ ego-net rooted at $u$. This proof is similar to that of the ND case in Lemma~\ref{lem:ds_sr_nd}. We compute the 1-WL iterations:
    \begin{itemize}
        \item Iteration 1: $u$ has color a1 and all other nodes have color b
        \item Iteration 2: $u$ has color a2 (as it is the only node with previous color a1). 

            The $k$ neighbors of $u$ have color c (they each have previous color b, have the one neighbor $u$ of color a1, and have $k-1$ neighbors of the remaining color b).

            The $n-k-1$ non-neighbors of $u$ have color d (they each have previous color b, have no neighbors of color a1, and thus have $k$ neighbors of color b).

        \item Iteration 3: $u$ has color a3 (it is the only node with previous color a2)

            The $k$ neighbors of $u$ have color e. They each had previous color c, have one neighbor of color a2, have $\lambda$ neighbors of color c because they are adjacent to $u$ and all neighbors of u had color c, and have $k-\lambda-1$ neighbors of the remaining color d.

            The $n-k-1$ non-neighbors of $u$ have color $f$ (they had previous color d, have no neighbors of color a2, have $\mu$ neighbors of color c because they are non-adjacent to $u$ and all neighbors of u had color c, and have $k-\mu$ neighbors of the remaining color d).
    \end{itemize}
    This coloring is stable, so the algorithm terminates. Note that this coloring only depends on the strongly regular parameters. Once again, it differs for any strongly regular graphs of different parameters, and is the same for any strongly regular graphs of the same parameters. Also, note that besides the unique color of the root node $u$, this coloring is the same as that of the 1-WL coloring of the node-deleted graph $G-u$.
\end{proof}

\subsubsection{ED on Strongly Regular Graphs}

\begin{lemma}\label{lem:wl_ed}
    3 iterations of 1-WL on any edge-deleted subgraph of a strongly regular graph gives a coloring that only depends on the strongly regular parameters. This coloring distinguishes any strongly regular graphs of different parameters.
\end{lemma}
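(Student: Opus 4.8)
The plan is to run 1-WL explicitly on an edge-deleted subgraph $G-\{u,w\}$ of a strongly regular graph $G$ with parameters $(n,k,\lambda,\mu)$, in the same spirit as the ND and augmented ego-net computations of Lemmas~\ref{lem:ds_sr_nd} and~\ref{lem:ds_sr_ego}. The structural facts to record first are: (i) deleting $\{u,w\}$ lowers only the degrees of $u$ and $w$, from $k$ to $k-1$, and leaves every other vertex at degree $k$; and (ii) since $u\sim w$ in $G$, strong regularity forces $u$ and $w$ to have exactly $\lambda$ common neighbours, so all the counts appearing below depend only on the parameters and not on which edge was removed.

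Then I would unroll three refinement rounds. Round 1 assigns a constant colour. Round 2 gives $u,w$ a colour $b$ (degree $k-1$) and all other vertices a colour $c$ (degree $k$), so exactly two vertices carry colour $b$. Round 3 sends both $u$ and $w$ to a common colour $d$, because each has $k-1$ neighbours all of colour $c$ (the neighbours of $u$ other than $w$ had degree $k$ in $G$ and keep it). A vertex $v\notin\{u,w\}$ is split by how many of $u,w$ lie in $N_G(v)$: the $\lambda$ common neighbours of $u,w$ see two $b$-neighbours and get a colour $e_1$; the $2(k-1-\lambda)$ vertices adjacent to exactly one of $u,w$ see one $b$-neighbour and get a colour $e_2$; the remaining $n+\lambda-2k$ vertices see none and get a colour $e_3$. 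I would then check that this is exactly the partition 1-WL produces after round 3 — each of the four classes $\{u,w\},e_1,e_2,e_3$ is internally constant in the pair (round-2 colour, multiset of round-2 neighbour colours), and these four pairs are pairwise distinct — so no further splitting occurs at round 3, and the colour histogram after three rounds is $\{d{:}2,\ e_1{:}\lambda,\ e_2{:}2(k-1-\lambda),\ e_3{:}n+\lambda-2k\}$, a function of $(n,k,\lambda)$ alone (equivalently of $(n,k,\lambda,\mu)$, via the identity $k(k-\lambda-1)=(n-k-1)\mu$).

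The distinguishing claim then follows by locating, for two strongly regular graphs of different parameters, the first round at which their edge-deleted subgraphs disagree: round 1 if $n$ differs (different vertex counts), round 2 if $n$ agrees but $k$ differs (the colour recording degree $k$ is not the colour recording degree $k'$), and round 3 if $n,k$ agree but $\lambda$ differs (the multiplicity of $e_1$ is $\lambda$); and since $\mu$ is determined by $(n,k,\lambda)$, any discrepancy in $\mu$ is already covered. Because 1-WL colours are canonical across graphs, differing colour histograms mean the edge-deleted subgraphs are separated, hence so are the original graphs under the ED policy.

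I expect the only genuine care-points to be (i) confirming that the round-3 analysis is truly edge-independent, which is exactly where strong regularity enters — through the fixed value $\lambda$ of common neighbours of an adjacent pair — and (ii) the bookkeeping that the classes $e_1,e_2,e_3$ have the stated sizes and that 1-WL does not refine them further within round 3; everything else is a mechanical application of the colour-refinement rule.
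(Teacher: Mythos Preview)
Your proposal is correct and follows essentially the same argument as the paper: both explicitly trace three 1-WL rounds on $G-\{u,w\}$, arriving at the partition $\{u,w\}$ (colour $d$), common neighbours of $u,w$ (your $e_1$, paper's $e$), vertices adjacent to exactly one of $u,w$ (your $e_2$, paper's $f$), and the rest (your $e_3$, paper's $g$), with multiplicities $2,\ \lambda,\ 2(k-1-\lambda),\ n+\lambda-2k$, and both invoke the identity $(n-k-1)\mu=k(k-\lambda-1)$ to handle the $\mu$ case. Your write-up is slightly more explicit in checking that the four round-3 classes are internally constant and pairwise distinct, but the substance is identical.
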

\begin{proof}
    Say we have deleted edge $(u_1, u_2)$. The 1-WL iterations are as follows:

\begin{itemize}
    \item In iteration 1, each node has initial color a.
    \item In iteration 2, we have two nodes of degree $k-1$ that are colored b, i.e. $u_1$ and $u_2$. There are $n-2$ nodes of degree $k$ that are colored c.
    \item In iteration 3, we have two nodes of degree $k-1$ that are colored d (because they are both only adjacent to degree $k$ nodes previously colored c). For the degree $k$ nodes, there are three possible colors, which we call e, f, g: the color e is for nodes adjacent to 2 b nodes and $k-2$ c nodes, f is for nodes adjacent to 1 b node and  $k-1$ c nodes, and g is for nodes adjacent to $k$ c nodes. We show that the number of colors e, f, and g only depend on the strongly regular graph parameters.

    First, there are $\lambda$ nodes of color e, as the nodes adjacent to 2 b nodes are common neighbors of $u_1$ and $u_2$, which are adjacent in the original graph $G$. By definition there are $\lambda$ common neighbors of $u_1$ and $u_2$.
    
    Next, there are $2(k-1-\lambda)$ nodes of color f. This is because each node of color f is only adjacent to one of $u_1$ or $u_2$. Each of $u_1$ and $u_2$ has $(k-1-\lambda)$ neighbors that are not in common with the other, so there are in total $2(k-1-\lambda)$ nodes of color f.

    The only other choice is color g, so we can compute the number by subtracting all the other color counts from the total number of nodes $n$. There are thus $n-2-\lambda-2(k-1-\lambda) = n + \lambda - 2k$ nodes of color g.
\end{itemize}

    It is clear that if $G_1$ and $G_2$ are strongly regular graphs that differ in any of the $n, k, $ or $\lambda$ parameters, then three iterations of 1-WL distinguishes due to the above coloring differing. Now, if $G_1$ and $G_2$ differ in the $\mu$ parameter, the same argument holds, because the $\mu$ parameter for a strongly regular graph is related to the other parameters by
    \begin{equation}
        (n-k-1)\mu = k(k-\lambda-1)
    \end{equation}
    (See Theorem 1.1 in \cite{cameron2004strongly}). Thus, $G_1$ and $G_2$ must differ in at least one of $n, k, $ or $\lambda$, and hence they are distinguished by 3 iterations of 1-WL.
  
\end{proof}

\begin{lemma}\label{lem:rook_shrik_ed}
    The $4\times 4$ Rook's graph and the Shrikhande graph are distinguished by DS-GNN and DSS-GNN with the ED policy.
\end{lemma}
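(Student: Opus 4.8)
The plan is to reduce the statement to a $1$-WL computation on a single edge-deleted subgraph and then invoke the WL/GNN correspondence. Both the $4\times 4$ Rook's graph $R$ and the Shrikhande graph $S$ are strongly regular with parameters $(16,6,2,2)$, both have $48$ edges, and both are edge-transitive, so for the ED policy every edge-deleted subgraph of $R$ is isomorphic to one graph $H_R$ and every edge-deleted subgraph of $S$ is isomorphic to one graph $H_S$; the two bags have equal cardinality. Hence DS-WL with the ED policy distinguishes $R$ and $S$ as soon as $1$-WL distinguishes $H_R$ from $H_S$. Given that, Theorem~\ref{thm:dssgnn_refines_dsswl} gives a DS-GNN separating $R$ and $S$, and Proposition~\ref{prop:dssgnn_strictly_more_powerful_than_dsgnn} (DSS-GNN is at least as powerful as DS-GNN) yields the DSS-GNN claim; one may alternatively route the DSS case through Proposition~\ref{prop:dsswl_refines_dswl}.

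First I would record the first three $1$-WL rounds on $H_R$ and $H_S$. By Lemma~\ref{lem:wl_ed}, after $3$ iterations the coloring of any edge-deleted subgraph of a strongly regular graph depends only on its parameters; for $(16,6,2,2)$ it is: two $\mathrm{d}$-nodes (the endpoints $u_1,u_2$ of the deleted edge), two $\mathrm{e}$-nodes (the two common neighbours $w_1,w_2$ of $u_1$ and $u_2$), six $\mathrm{f}$-nodes (adjacent to exactly one of $u_1,u_2$), and six $\mathrm{g}$-nodes (adjacent to neither). So after round $3$ the two subgraphs carry identical colour histograms, reflecting the fact that $3$-WL cannot separate $R$ and $S$.

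The crux is round $4$, where I would exploit the standard local difference between these graphs: every vertex neighbourhood of $R$ induces $2K_3$ (two disjoint triangles), while every vertex neighbourhood of $S$ induces $C_6$. For any edge $\{u_1,u_2\}$ this fixes whether its two common neighbours are adjacent — in $R$ they lie in a common triangle and are adjacent, in $S$ they are the two cycle-neighbours of $u_2$ inside $N(u_1)\cong C_6$ and hence are at distance two, so non-adjacent. Deleting $\{u_1,u_2\}$ touches no edge incident to $w_1$ or $w_2$, so in $H_R$ each $\mathrm{e}$-node has exactly one $\mathrm{e}$-coloured neighbour while in $H_S$ it has none, and in both cases each $\mathrm{e}$-node still has precisely the two $\mathrm{d}$-nodes among its neighbours. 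Thus the multiset of neighbour colours of an $\mathrm{e}$-node differs between $H_R$ and $H_S$ (the multiplicity of colour $\mathrm{e}$ is $1$ versus $0$, with nothing else to compensate), so round $4$ assigns the $\mathrm{e}$-nodes distinct colours in the two graphs, the round-$4$ colour histograms differ, and by monotonicity of refinement the stable colourings differ; hence $1$-WL separates $H_R$ and $H_S$. Combined with the first paragraph this proves that DS-GNN and DSS-GNN with the ED policy distinguish the Rook's and Shrikhande graphs. The only delicate part is the round-$4$ bookkeeping: checking that the distinguishing property of common neighbours holds for \emph{every} edge (which follows from the uniform local structure, equivalently edge-transitivity) and that the $\mathrm{e}$-nodes' neighbour multisets genuinely differ rather than accidentally coinciding.
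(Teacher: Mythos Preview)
Your proposal is correct and follows the same high-level strategy as the paper: reduce to a single edge-deleted subgraph of each graph, then show that $1$-WL separates them at iteration~$4$, and lift to DS-GNN/DSS-GNN via Theorem~\ref{thm:dssgnn_refines_dsswl} and Proposition~\ref{prop:dssgnn_strictly_more_powerful_than_dsgnn}.

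Where you differ is in the execution of the two key steps, and in both cases your argument is more structural than the paper's. The paper justifies ``all edge-deleted subgraphs are isomorphic'' by a computational check; you instead invoke edge-transitivity of both graphs, which is cleaner (and correct: the $4\times 4$ Rook's graph and the Shrikhande graph are both arc-transitive). For the round-$4$ separation, the paper simply tabulates the full $1$-WL run on one edge-deleted subgraph of each graph and reads off that the colour histograms diverge at iteration~$4$. You instead isolate the reason: the open neighbourhood of any vertex induces $2K_3$ in the Rook's graph and $C_6$ in the Shrikhande graph, hence the two common neighbours of any edge are adjacent in the former and non-adjacent in the latter, so after deleting $\{u_1,u_2\}$ the two $\mathrm{e}$-coloured nodes see an $\mathrm{e}$-neighbour in $H_R$ but not in $H_S$. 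The paper's tables confirm exactly this: its iteration-$4$ colour for the $\mathrm{e}$-nodes is $\mathrm{i}=(\mathrm{e},\mathrm{ddeggg})$ in the Rook case versus $\mathrm{m}=(\mathrm{e},\mathrm{ddffgg})$ in the Shrikhande case, differing precisely in the presence of an $\mathrm{e}$ in the neighbour multiset. Your approach explains \emph{why} the tables diverge; the paper's approach has the advantage of being fully explicit and mechanically verifiable.
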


\begin{proof}
We prove that DS-GNN can distinguish these two graphs by directly computing 1-WL colorings of edge-deleted subgraphs. It can be checked computationally that all edge-deleted subgraphs of the Rook's graph are isomorphic, and likewise all edge-deleted subgraphs of the Shrikhande graph are isomorphic. Thus, it suffices to show that a 1-WL coloring of one edge-deleted subgraph of the Rook's graph differs from a 1-WL coloring of one edge-deleted subgraph of the Shrikhande graph.

We use the numbering of nodes as we defined in the beginning of Appendix~\ref{appendix:encoder_theory}.  Each node starts with an initial color that we denote `a'.
See Tables~\ref{tab:1wl_edge_rook} and \ref{tab:1wl_edge_shrik} for the coloring, which differs for the two graphs at iteration 4 (as an aside, note that the first 3 iterations were already computed in the proof of Lemma~\ref{lem:wl_ed}). Thus, these two graphs are distinguished by DS-GNN, and hence DSS-GNN, with the ED policy.
    \begin{table}[ht]
    \centering
    {\small
    \begin{tabular}{rrlll}
        \toprule
        Node & Iter 1 & 2 & 3 & 4\\
        \midrule
        1 & a & b & d & h \\
        2 & a & b & d & h \\
        3 & a & c & e & i \\
        4 & a & c & e & i \\
        5 & a & c & f & j  \\
        6 & a & c & f & j \\
        7 & a & c & g & k   \\
        8 & a & c & g & k  \\
        9 & a & c & f & j \\
        10 & a & c & f & j \\
        11 & a & c & g & k \\
        12 & a & c & g & k \\
        13 & a & c & f & j \\
        14 & a & c & f & j \\
        15 & a & c & g & k \\
        16 & a & c & g & k \\
        \bottomrule
    \end{tabular} \qquad
    \begin{tabular}{rl}
        \toprule
        $\mrm{HASH}(\mc M(v))$ & $\mc M(v)$ \\
        \midrule
        a & initialize\\
        b & (a, aaaaa)\\
        c & (a, aaaaaa)\\
        d & (b, ccccc)\\
        e & (c, bbcccc)\\
        f & (c, bccccc)\\
        g & (c, cccccc)\\
        h & (d, eefff) \\
        i & (e, ddeggg)\\
        j & (f, dfffgg)\\
        k & (g, effggg)\\
        \bottomrule
    \end{tabular} 
    }
    \caption{1-WL on an edge-deleted subgraph of the Rook's graph, where we delete edge (1,2). The table on the right shows $\mc M(v)$, which is a tuple containing the current color of node $v$ along with the multiset of neighbor colors. In each iteration of 1-WL, $\mc M(v)$ is updated to $\mrm{HASH}(\mc M(v))$.}
    \label{tab:1wl_edge_rook}
\end{table}

\begin{table}[ht]
    \centering
    {\small
    \begin{tabular}{rrlll}
        \toprule
        Node & Iter 1 & 2 & 3 & 4\\
        \midrule
        1 & a & b & d & h  \\
        2 & a & b & d & h  \\
        3 & a & c & f & j \\
        4 & a & c & f & j \\
        5 & a & c & f & l \\
        6 & a & c & e & m \\
        7 & a & c & f & l \\
        8 & a & c & g & n \\
        9 & a & c & g & k  \\
        10 & a & c & g & k  \\
        11 & a & c & g & k \\
        12 & a & c & g & k \\
        13 & a & c & e & m \\
        14 & a & c & f & l \\
        15 & a & c & g & n  \\
        16 & a & c & f & l \\
        \bottomrule
    \end{tabular} \qquad
    \begin{tabular}{rl}
        \toprule
        $\mrm{HASH}(\mc M(v))$ & $\mc M(v)$ \\
        \midrule
        a & initialize\\
        b & (a, aaaaa)\\
        c & (a, aaaaaa)\\
        d & (b, ccccc)\\
        e & (c, bbcccc)\\
        f & (c, bccccc)\\
        g & (c, cccccc)\\
        h & (d, eefff)\\
        j & (f, dfffgg)\\
        k & (g, effggg)\\
        l & (f, defggg)\\
        m & (e, ddffgg)\\
        n & (g, ffffgg)\\
        \bottomrule
    \end{tabular} 
    }
    \caption{1-WL on an edge-deleted subgraph of the Shrikhande graph, where we delete edge (1,2).}
    \label{tab:1wl_edge_shrik}
\end{table}
\end{proof}

\subsection{Proof of Proposition~\ref{prop:strongly_regular}}
With the above lemmas for different policies we may formally prove Proposition~\ref{prop:strongly_regular}.
\begin{proof}[Proof of Prop~\ref{prop:strongly_regular}]
\quad \\
(1) Lemmas~\ref{lem:ds_sr_nd} and \ref{lem:ds_sr_ego} show that ND and depth-$n$ $\mathrm{\widehat{EGO}}+$ can distinguish strongly regular graphs of different parameters. Lemma~\ref{lem:wl_ed} shows the same for ED.

(2) is proven in Lemmas~\ref{lem:ds_sr_nd} and \ref{lem:ds_sr_ego}.

(3) is directly proven by Lemma~\ref{lem:rook_shrik_ed}, as DS-GNN with ED distinguishes the Rook's graph and the Shrikhande graph, which are strongly regular of the same parameters.

Finally, note that 3-WL cannot distinguish strongly regular graphs of the same parameters~\citep{arvind2020weisfeiler}, so ED is not less powerful than 3-WL in general, and is strictly more powerful than 3-WL on the family of strongly regular graphs. While we have only provided one pair of strongly regular graphs of the same parameters that ED can distinguish, the general case for all strongly regular graphs is hard; while a lot of work has gone into distinguishing non-isomorphic strongly regular graphs, there is no known polynomial-time algorithm to do so \citep{babai2013faster}.
\end{proof}

\section{Complexity}\label{sec:complexity}
\subsection{Forward pass}

\begin{table}[ht]
    \centering
    \caption{Complexity of graph networks that are more expressive than 1-WL. $\Delta_{\mathrm{max}}$ denotes the maximum degree over all nodes.}
    \resizebox{\textwidth}{!}{
    \begin{tabular}{cccccc|c}
    \toprule
    & PPGN  & 3-IGN & 3-GNN  & GSN & CWN & Ours\\
    \midrule
        Time & 
            $\mathcal O(n^3)$ & 
            $\mathcal O(n^3)$ & 
            $\mathcal O(n^4)$ & 
            $\mathcal O\left(n \Delta_{\mathrm{max}} \right)$ &
            $\mathcal O\left( \sum_{p=1}^{2} n_p ( B_p + 2 \binom{B_p}{2}) \right)$ &
            $\mathcal O\left(|S| n \Delta_{\mathrm{max}} \right)$  \\
        Space & 
            $\mathcal O(n^2)$ & 
            $\mathcal O(n^3)$ & 
            $\mathcal O(n^3)$ & 
            $\mathcal O\left(n + n\Delta_{\mathrm{max}} \right)$ &
            $\mathcal O\left( n + \sum_{p=1}^{2} n_p ( 1 + B_p + 2 \binom{B_p}{2}) \right)$ &
            $\mathcal O\left(|S| (n+ n\Delta_{\mathrm{max}}) \right)$ \\
     \bottomrule
    \end{tabular}}
    \label{tab:complexity}
\end{table}

Here, we analyze the complexity of our method, when using MPNNs as graph encoders, compared to other graph networks that are more expressive than 1-WL. In Table~\ref{tab:complexity} we summarize the results. We assume that the feature dimensionality is a constant.

For dense graphs, our method requires $\mc O(|S| n^2)$ time, as each of the $|S|$ subgraphs is processed using a MPNN of time complexity $\mc O(n^2)$. When $|S| = \mc O(n)$ as in the case of node-deleted and ego-net policies, the time complexity is $\mc O(n^3)$, just as in the case of PPGN \citep{maron2019provably} and 3-IGN \citep{maron2018invariant}. When $|S| = \mc O(n^2)$ as in the case of edge-deleted policies, the time complexity is $\mc O(n^4)$ like 3-GNN \citep{morris2019weisfeiler}.

More specifically, the time complexity of our method improves to $\mc O(|S| n \Delta_{\mathrm{max}})$ where $\Delta_{\mathrm{max}}$ is the maximum node degree, which is low for sparse graphs. This is a major benefit of our method, as PPGN, k-IGN do not have time complexity improvements for sparse graphs, as they perform global aggregations instead of just local neighborhood aggregations. For sparse graphs where $\Delta_{\mathrm{max}} = \mc O(1)$, we gain a factor of $\mc O(n)$ in time complexity over PPGN and 3-IGN and a factor of $\mc O(n^2)$ over 3-GNN.

Table~\ref{tab:complexity} also reports the time complexity of two provably expressive, \emph{sparse} architectures: GSN~\citep{bouritsas2020improving} and CWN~\citep{bodnar2021weisfeiler2}. The first method has the same complexity of standard GNNs as long as the number of considered subgraphs in the bank is a small constant. %
ESAN can match the complexity of GSN only in the case where $|S| = \mc O(1)$, which is only the case for  stochastic policies that we consider. The remarkable efficiency of GSN, however, is paid in terms of preprocessing complexity (discussed in the next subsection). 
As for CWN, we report the time complexity for a generic lifting procedure generating a $2$-dimensional cellular complex, with $n_p$ denoting the number of cells at dimension $p$ and $B_p$ the maximum considered boundary size. Here, $n_0 = n$, $n_1 = \mathcal{O}(n \Delta_{\mathrm{max}})$ refer to, respectively, the number of nodes and edges. In the case of ring-liftings with ring size upper-bounded a small constant, the complexity can be rewritten as $\mathcal{O}\left( n \Delta_{\mathrm{max}} + n_2 \right)$. In the case of molecules, the number of rings, $n_2$, is generally contained. However, the number of $2$-cells may grow even exponentially for general graph distributions. When employing policies ND, ED, EGO(+), on the contrary, our approach works with bags whose cardinality never exceeds the number of edges, irrespective of the graph distribution.

The space complexity of our method is $\mc O\left(|S| (n+ n\Delta_{\mathrm{max}}) \right)$, as we need to compute $n$ %
node features for each subgraph, while keeping the subgraph connectivity in memory. This evaluates to $\mc O(n^2 + n^2\Delta_{\mathrm{max}})$ for node-deleted and ego-net policies %
and $\mc O(n^2\Delta_{\mathrm{max}} + (n\Delta_{\mathrm{max}})^2)$ %
for the edge-deleted policy. %
We also note that the space complexity of our method can be improved to $\mc O(n + n\Delta_{\mathrm{max}})$ when using the DS-GNN architecture, as we may compute the GNN embedding of each subgraph online and accumulate them in one tensor that represents the sum over all subgraphs --- though this requires sacrificing parallelism across subgraphs. Also, note that certain policies that significantly reduce the size of subgraphs, such as low-depth ego-net policies, are even more efficient because the number of edges in the subgraphs are much lower.

\subsection{Preprocessing}

In order to perform message passing on subgraphs, it is first required to apply a selection policy and generate the bag. This amounts to a one-off computation that can be performed prior to training. Simple implementations of the ND and ED policies have complexities $\mathcal{O}(n m)$ and $\mathcal{O}(m^2)$, respectively, where $n$ is the number of nodes and $m$ is the number of edges. As noted above, $m$ can be upper-bounded by $n \Delta_{\mathrm{max}}$, with $\Delta_{\mathrm{max}}$ typically small in sparse graphs. EGO policies are easily implemented in $\mathcal{O}(n (n+m))$, where $\mathcal{O}(n+m)$ is the complexity of a generic Breadth First Search algorithm employed to recover the $k$-hop neighborhood of a source node. More sophisticated implementations of these selection policies may be possible, making the preprocessing step even more parsimonious. As pointed out above, the preprocessing complexity directly depends on the size and density of the graphs at hand; while practical run-times may be drastically reduced when working with sparse graphs, we remark that they can never exceed the aforementioned theoretical bounds, independently of the nature of graphs characterising the task being solved.

This is in contrast with the preprocessing complexity of GSN and CWN, which is generally $\mathcal{O}(n^k)$ for a generic substructure of size $k$. Preprocessing times are remarkably reduced in practice for certain families of graphs (e.g. planar graphs) and substructures for which specialised algorithms exists (e.g. triangles and rings). For example, these methods have been observed to attain tractable practical run-times on molecular datasets. Still, the worst-case exponential complexity may hinder their application in the presence of graphs of non-characterised distribution or when substructures of more efficient matching are not known to play a relevant role. 

Finally, we remark that all these preprocessing strategies are embarrassingly parallel w.r.t.\ the different graphs in a dataset and parallelisation techniques can be applied to dramatically reduce the empirical run-time.

\end{document}